\def\arxiv{1}
\def\litearxiv{0}
\definecolor{lightergray}{rgb}{0.9, 0.9, 0.9}
\definecolor{evenlightergray}{rgb}{0.95, 0.95, 0.95}
\definecolor{Darkblue}{rgb}{0,0,0.4}
\definecolor{Brown}{cmyk}{0,0.81,1.,0.60}
\definecolor{Purple}{cmyk}{0.45,0.86,0,0}
\newcommand\blfootnote[1]{%
  \begingroup
  \renewcommand\thefootnote{}\footnote{#1}%
  \addtocounter{footnote}{-1}%
  \endgroup
}
\let\citet\cite
\newcommand{\sattarget}{\tsv{\sat}{\text{target}}}
\newcommand{\satimp}{\tsv{\sat}{\text{imp}}}
\newcommand{\satrange}{\bsat}
\newcommand{\maxarm}{K}
\newcommand{\reward}{R}
\newcommand{\effect}{E}
\newcommand{\engagement}{f}
\newcommand{\variancebound}{K}
\newcommand{\sdep}{\tilde{f}}
\def\arms{\mathcal{I}}
\def\bsat{\mathbf{x}}
\def\sat{x}
\theoremstyle{plain}
\newtheorem{theorem}{Theorem}[section]
\newtheorem{lemma}[theorem]{Lemma}
\newtheorem{proposition}[theorem]{Proposition}
\newtheorem{fact}[theorem]{Fact}
\theoremstyle{definition}
\newtheorem{definition}[theorem]{Definition}
\newtheorem{example}[theorem]{Example}
\theoremstyle{remark}
\renewcommand{\epsilon}{\varepsilon}
\renewcommand{\hat}{\widehat}
\renewcommand{\tilde}{\widetilde}
\newcommand{\under}[1]{\underline{#1}}
\newcommand{\ov}[1]{\overline{#1}}
\newcommand{\para}[1]{\left(#1\right)}
\newcommand{\parantheses}[1]{\left(#1\right)}
\newcommand{\curlybrackets}[1]{\left\{#1\right\}}
\newcommand{\bset}[1]{\curlybrackets{#1}}
\newcommand{\bsetf}[1]{\{#1\}}
\newcommand{\abs}[1]{\left|#1\right|}
\newcommand{\setsize}[1]{\left| #1 \right|}
\DeclareMathOperator*{\Exp}{\mathbb{E}}
\newcommand{\E}{\Exp}
\newcommand{\EE}[1]{\Exp\left[#1\right]}
\newcommand{\EEs}[2]{\Exp_{#1}\left[#2\right]}
\newcommand{\EEc}[2]{\Exp\left[#1\left|#2\right.\right]}
\renewcommand{\cite}[1]{\citep{#1}}
\newcommand{\ceil}[1]{\left\lceil#1\right\rceil}
\newcommand{\floor}[1]{\left\lfloor#1\right\rfloor}
\newcommand{\bigO}[1]{O\parantheses{#1}}
\newcommand{\R}{\mathbb{R}}
\newcommand{\reals}{\mathbb{R}}
\newcommand{\naturals}{{\mathbb N}}
\newcommand{\integers}{\mathbb{Z}}
\DeclareMathOperator*{\argmax}{arg\,max}
\DeclareMathOperator*{\sgn}{sgn}
\newcommand{\assignequals}{\coloneqq}
\newcommand{\asseq}{\coloneqq}
\newcommand{\loss}{\ell}
\newcommand{\ite}[2]{#1^{\parantheses{#2}}}
\newcommand{\tsv}[2]{#1_{#2}}
\newcommand{\cA}{\mathcal{A}}
\newcommand{\cI}{\mathcal{I}}
\newcommand{\cP}{\mathcal{P}}
\newcommand{\cR}{\mathcal{R}}
\newcommand{\cS}{\mathcal{S}}
\newcommand{\bi}{{\mathbf{i}}}
\Crefname{fact}{Fact}{Facts}
\title{\textbf{Algorithmic Content Selection and the Impact of User Disengagement}}
\author[1]{Emilio Calvano}
\author[2]{Nika Haghtalab}
\author[3]{Ellen Vitercik}
\author[2]{Eric Zhao}
\affil[1]{University of Rome}
\affil[2]{Stanford University}
\affil[3]{University of California, Berkeley}
\date{}  %
\begin{document}
\maketitle

\addtocontents{toc}{\protect\setcounter{tocdepth}{-1}} 

\begin{abstract}
Digital services face a fundamental trade-off in content selection: they must balance the immediate revenue gained from high-reward content against the long-term benefits of maintaining user engagement. Traditional multi-armed bandit models assume that users remain perpetually engaged, failing to capture the possibility that users may disengage when dissatisfied, thereby reducing future revenue potential.

In this work, we introduce a model for the content selection problem that explicitly accounts for variable user engagement and disengagement. In our framework, content that maximizes immediate reward is not necessarily optimal in terms of fostering sustained user engagement. 

Our contributions are twofold. First, we develop computational and statistical methods for offline optimization and online learning of content selection policies. For users whose engagement patterns are defined by $k$ distinct levels, we design a dynamic programming algorithm that computes the exact optimal policy in $O(k^2)$ time. Moreover, we derive no-regret learning guarantees for an online learning setting in which the platform serves a series of users with unknown and potentially adversarial engagement patterns.

Second, we introduce the concept of \emph{modified demand elasticity} which captures how small changes in a user's overall satisfaction affect the platform's ability to secure long-term revenue. This notion generalizes classical demand elasticity by incorporating the dynamics of user re-engagement, thereby revealing key insights into the interplay between engagement and revenue. Notably, our analysis uncovers a counterintuitive phenomenon: although higher friction (i.e., a reduced likelihood of re-engagement) typically lowers overall revenue, it can simultaneously lead to higher user engagement under optimal content selection policies.
\end{abstract}
\blfootnote{Correspondence to:  \href{mailto:eric.zh@berkeley.edu}{eric.zh@berkeley.edu}.}

\setcounter{tocdepth}{2}
\addtocontents{toc}{\protect\setcounter{tocdepth}{2}} %

\section{Introduction}
\label{sec:intro}
Content selection is an important problem for online platforms, including social media apps that employ recommendation algorithms.
In the content selection problem, a system (the app) repeatedly selects pieces of content to provide to a user. The content is chosen from a diverse pool with the goal of maximizing long-term payoff, such as cumulative ad revenue.
Classical models, such as multi-armed bandits (MAB), abstract each piece of content as an arm and each user interaction as a reward-generating pull.
These models, however, assume that user engagement is fixed: given a captive user, the only objective is revenue maximization.
In reality, user engagement is dynamic and deeply influenced by the user's previous experiences with the app and patterns of engagement. For instance, if previous interactions have led to dissatisfaction, a user may open an app less frequently.\footnote{The meaning of `reduced engagement' depends on the application. It could be defined as uninstalling the app, becoming inactive, or merely closing an active session. These are equivalent for our purposes.}

User engagement, despite its critical role, has often been overlooked in algorithmic models of content selection, largely because incorporating engagement introduces intricate trade-offs between competing objectives---namely, revenue and user satisfaction---that traditional online and multi-objective optimization frameworks are not equipped to handle.
Moreover, user engagement is inherently \emph{stochastic}, with future interactions heavily shaped by previous interactions: if a user becomes dissatisfied with an app, the app has fewer opportunities to rebuild engagement.
Prior research on variable engagement in user interactions
~\cite{ben-porat_modeling_2022,cao_fatigue-aware_2020,yang_exploration_2022} relies on simplifying assumptions to bypass these challenges. These include assuming a direct alignment between engagement and revenue maximization, modeling users as only capable of permanent disengagement, or allowing platforms to retain influence over disengaged users.
Without such assumptions, cultivating user engagement is a \emph{stateful, noisy,} and \emph{long-horizon} problem---the type of adaptive decision process that is typically approached with reinforcement learning.

In this paper, we directly tackle these complexities by 
    \emph{introducing a content selection model that captures the trade-offs between engagement and revenue, the inherent stochasticity of user behavior, and the necessity of long-term planning.}
    This model allows us to examine a fundamental question:
\begin{quote} \emph{How should firms account for variable user engagement rates in their decision-making?}
\end{quote}
In our model, every time a user engages with the app, the app selects content to display, generating a pair of instantaneous stochastic rewards---one for revenue and one for user experience---that are not necessarily aligned. The user's overall satisfaction with the app naturally represents the system's \emph{state}.
When disengaged, the app cannot interact with the user, display content, generate revenue, or influence this state. 

The probability a user engages at a given time step naturally depends on their satisfaction with prior experiences and inherent inertia.
For users who are currently engaged, the likelihood of continued engagement follows a function $f$ of the current state---called the user \emph{engagement pattern} or \emph{demand function}, reflecting its economic roots.
For disengaged users, a multiplicative factor capturing ``friction'' reduces the probability of re-engagement.
Thus, once disengaged, only the user's state prior to disengagement---together with chance---impacts their future decision to return. 

These modeling choices highlight key aspects of content selection overlooked by existing models. Namely, our model captures the trade-off between investing in user experience and generating revenue by accommodating scenarios in which user experience is inversely correlated with instantaneous revenue gains.
Moreover, our model accounts for key real-world dynamics, including the platform's inability to influence disengaged users, the possibility of user re-engagement, and the role of \emph{friction} in making re-engagement more challenging.

Using this model, we make the following technical contributions:

\paragraph{Efficient algorithms for optimal policy computation and  learning.}
We characterize structural properties of the optimal content selection policy. When the user's demand function is  $k$-piecewise constant (reflecting 
$k$ distinct engagement probabilities and archetypes), this leads to a dynamic programming algorithm that computes the exact optimal policy in $O(k^2)$ time (\cref{theorem:runtime}).

We also consider a learning variant of our model, where a sequence of $N$ users---potentially selected adversarially---arrives with unknown demand functions.
The app selects a content selection policy for each user and observes the realized cumulative revenue.
We present an algorithm with a sublinear regret of $O(\sqrt N)$ in environments with limited content (\cref{theorem:trionlinelearning}) and a $\tfrac 12$-approximate regret of $O(\sqrt N)$ (i.e., regret to a competitive ratio) in general (\cref{theorem:onlinelearning}).
These results rely on a mild linearity condition on the content landscape: each piece of content exhibits a (potentially negative) linear relationship between its expected revenue and its effect on user satisfaction.

The key insight enabling these efficient algorithms is that the content selection problem with variable user engagement can be reformulated as one with a captive audience, where the discount rate dynamically adjusts based on the user's state (\cref{theorem:cputime}). 
This allows us to express the trade-off between revenue and engagement as a trade-off between each interaction's reward and discount factor---a problem whose convexity restricts the space of optimal policies to a small, well-structured set (\cref{lemma:piecewiseconstant}).

\paragraph{Modified demand elasticity.}
Building on this insight, we introduce \emph{modified demand elasticity,} an adaptation of the classical notion of the price elasticity of demand~\cite{friedman, marshall2009principles}, tailored to our model. 
In classical economics, demand elasticity measures how sensitive consumer demand is to price changes, abstracting away external market forces such as consumer preferences and competition. Modified demand elasticity plays a similar role by capturing how a user's overall satisfaction with an app drives engagement and, by extension, long-term revenue. We show that modified demand elasticity fully captures the impact of our model's primitives---namely, the discount factor, content landscape, and engagement pattern---on an app's optimal content selection strategy.
 
\paragraph{Using modified demand elasticity to analyze friction and alignment.}
We further leverage modified demand elasticity to examine the role of \emph{friction} on user engagement and to determine conditions under which app creators are incentivized to invest more in user satisfaction.
Friction, which quantifies the diminished likelihood that a user will return once they have disengaged from the app,  naturally reduces an app's overall payoff. However, we show that, under optimal content selection policies, overall user-app engagement can paradoxically increase with friction (\cref{ex:first_example}).
This counterintuitive effect occurs because friction amplifies the app's perception of the user's modified demand elasticity, making user demand appear more sensitive to changes in satisfaction. When modified demand elasticity is high, even modest investments in enhancing user satisfaction yield substantial gains in retention, prompting the app to strategically choose content that drives higher overall satisfaction despite immediate revenue penalties.

\subsection{Related work}

\paragraph{Bandits and content selection.}
Recent works in bandit learning have also explored decision-making in settings that balance immediate rewards with future engagement.
\citet{ben-porat_modeling_2022,cao_fatigue-aware_2020}, and \citet{yang_exploration_2022} propose models where every arm pull yields a reward but also carries a probability of ending the episode, a structure designed to capture engagement-aware algorithms for recommendation systems.
These models differ from our work in two fundamental ways. First, they assume that user disengagement is permanent. Second, they enforce a direct alignment between revenue and engagement---either by defining revenue as the total number of user engagements or by positing that arms with higher rewards are less likely to cause disengagement.
In contrast, our model allows for user re-engagement and makes explicit the potential conflict between revenue and engagement maximization.
Other works have studied bandit settings with more general notions of state that can be affected by prior arm pulls~\citep[e.g.,][]{kleinberg_recharging_2018,laforgue_last_2022,Liu21:Rebounding}, but where the user's state affects future arm payoffs rather than engagement or horizon length.

Beyond bandit learning,
\citet{zhang_efficient_2022} study a decision-making problem where the sole objective is revenue maximization, but subject to a hard constraint that users maintain a positive expected future utility.
This constraint-based approach does not capture partial disengagement risk and assumes that users have perfect foresight about the app's intentions, making engagement independent of prior experiences. As a result, the app's previous actions have no bearing on its future engagement opportunities.
\citet{pacchiano_stochastic_2021} study a similar online learning framework with linear constraints.
In contrast to the models proposed by \citet{ben-porat_modeling_2022,cao_fatigue-aware_2020}, and \citet{yang_exploration_2022}, which assume direct alignment between engagement and revenue, the models of \citet{zhang_efficient_2022} and \citet{pacchiano_stochastic_2021} allow for misalignment between satisfying engagement constraints and reward maximization.

\paragraph{Engagement optimization.}
Optimizing user engagement is standard practice for online content platforms \cite{evans2018death,ban2021personalized} and, increasingly, firms in other sectors \cite{rafieian2023optimizing}.
A wide body of recent work has studied the downstream effects of engagement-driven optimization.
For instance, \citet{zhou2024higher} found that prioritizing revenue maximization can explicitly reduce engagement---highlighting the trade-offs our model aims to capture.
Similarly, \citet{kleinberg_challenge_2022} analyze how optimizing for user engagement, when misaligned with user utility, can negatively impact user welfare. 
Other studies explore the downstream effects of engagement optimization, including the spread of polarizing content \cite{munn2020angry,rathje2021out}---often recognized by firms themselves \cite{levy2020facebook,horwitz2020facebook}---as well as clickbait \cite{10.1145/3589334.3645353}.
Engagement optimization has also been studied in the contexts of information retrieval \cite{OBRIEN2022107109,o2008user,OBRIEN2010344}, marketing \cite{liu2022implications}, and mechanism design \cite{viljoen2021design}.
Finally, friction in user engagement is a well-documented phenomenon that can be interpreted as a consequence of brand loyalty \cite{krishnamurthi1991empirical} or as a discrete form of demand \cite{hanemann1984discrete}.

\section{Modeling Content Selection with Engagement-Revenue Tradeoffs}
\label{section:model}
We present our content selection model, building on the classical model where an app selects a piece of content at each timestep from a set of options, each generating a certain amount of revenue.

\begin{enumerate}
\item 
A user engages with the app at the first timestep, $t = 1$.

\item 
If the user interacts with the app at timestep $t$, the app selects a piece of content $i_t$ from a set $\cI$ to display.
Each content $i \in \cI$ generates revenue, represented by the random variable $R_i$, and provides a user experience reward, represented by the random variable $E_i$, both supported on $\reals$.
We denote realized revenue and user experience by $ r_t \sim R_{i_t}$ and $e_t \sim E_{i_t}$, respectively.

\item If the user does not interact with the app at timestep $t$, the app takes no action, earns no revenue, and does not influence the user; that is, $i_t = \emptyset$, $r_t = 0$, and $e_t = 0$.
The binary variable $s_t = \mathbbm{1}[\text{User Interacts}]$ indicates whether the user interacts with the app at timestep $t$.

\item The user decides whether to engage with the app in the next timestep based on a summary of their previous app experiences, defined as $x_{t+1} = \phi(e_1, \dots, e_t)$. For clarity, we set $\phi(e_1, \dots, e_t) = \sum_{\tau=1}^t e_\tau$, though our results also extend to other choices of $\phi$ (see the paragraph titled \emph{Generalizations}).
If the user is currently engaged $(s_t = 1)$, they continue using the app with probability $f(x_t)$, where $f$ is a monotonically non-decreasing function. That is, $s_{t} \sim \text{Bernoulli}(f(x_t))$.
If the user is not already engaged, the probability is scaled down by a constant factor $1 - c$ (with $c \in [0,1])$. That is, $s_{t} \sim \text{Bernoulli}((1 - c) \cdot f(x_t))$.
\end{enumerate}
We refer to $f$ as the user's \emph{demand function} and the constant $c$ as the \emph{friction parameter}, with larger $c$ corresponding to greater friction.
The summary $x_t$ of the user's interaction history is called the \emph{user's state}. Since $f(x_t)$ increases with $x_t$, a higher state indicates a stronger tendency to remain engaged.
For technical reasons, we assume the sets $\bset{\EE{E_i}}_{i \in \cI}$ and $\bset{\EE{R_i}}_{i \in \cI}$ are compact.

\paragraph{Content selection policy. }
A content selection policy $\pi$ maps a transcript of prior interactions, $H_t = \bset{(s_\tau, r_\tau, e_\tau, i_\tau)}_{\tau \in [t]}$, to a distribution over content. In other words, the app selects content for the next timestep according to $i_{t+1} \sim \pi(H_t)$.
The app's objective is to maximize long-term payoff
\begin{equation}
    J(\pi) \asseq \EEs{\bset{(s_t, r_t, e_t, i_t)}_{t}}{\sum_{t=1}^\infty \gamma^{t-1} r_t}, \label{eq:platformdiscountedobjective}
\end{equation}
where $\gamma \in (0, 1)$ is the \emph{discount factor}. This objective captures the time-discounted sum of revenues earned, with the expectation taken over the stochastic transcript.

Policies that rely on the entire interaction history are unwieldy.
Fortunately, it is sufficient for apps to use \emph{simple policies}, which select content at time $t$ solely based on the user's state $x_{t-1}$.
Formally, a policy $\pi$ is simple if there exists a function $g: \reals \to \arms$ such that for every transcript $H = \bset{(s_t, r_t, e_t, i_t)}_{t \in [T]}$, we have $\pi(H) = g(\sum_{\tau=1}^{t} \tsv{e}{\tau})$.
The following lemma establishes this fact.
\begin{restatable}{lemma}{simpleoptimal}
\label{lemma:simpleoptimal}
If there is an optimal policy for the app creator, there is also a simple optimal policy.
\end{restatable}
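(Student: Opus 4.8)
The plan is to recognize the process as a discounted Markov decision process (MDP) in which the user's state is a sufficient statistic for the future, and then to extract a stationary (equivalently, \emph{simple}) optimal policy by the usual dynamic-programming argument. First I would verify the Markov property: fixing any continuation rule, the joint law of $\{(s_\tau,r_\tau,e_\tau,i_\tau)\}_{\tau>t}$ conditioned on the prefix $H_t$ depends on $H_t$ only through the pair (current cumulative experience $x_t=\sum_{\tau\le t-1}e_\tau$, current engagement status). Indeed, $R_i,E_i$ depend only on the content chosen, the state updates as $x_{\tau+1}=x_\tau+e_\tau$ (and is frozen whenever $s_\tau=0$), and the engagement coin at step $\tau$ is $\mathrm{Bernoulli}(f(x_\tau))$ or $\mathrm{Bernoulli}((1-c)f(x_\tau))$ according to $s_{\tau-1}$. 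Decisions are made only in engaged sub-states, so after collapsing each disengaged run --- which contributes zero revenue and does not move the state --- we obtain an MDP whose state is just $x\in\reals$, the only transition randomness being the draw $e\sim E_i$ and the random delay $\Delta\ge 1$ until re-engagement.

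Next I would introduce the optimal value function $V^{*}(x)$, the supremum over all (history-dependent, randomized) continuation policies of the expected discounted revenue starting from a timestep at which the user is engaged with state $x$; compactness of $\setb{\EE{R_i}}$ bounds per-step expected revenue, so $|V^{*}|\le \sup_i|\EE{R_i}|/(1-\gamma)<\infty$ and $V^{*}$ is well defined, with $J(\pi)$ being the value from the initial state $x=0$. Collapsing the disengaged run yields the Bellman optimality equation $V^{*}(x)=\sup_{i\in\arms}\setb{\EEs{r\sim R_i}{r}+\EEs{e\sim E_i}{\beta(x+e)\,V^{*}(x+e)}}$, where $\beta(y)=\EE{\gamma^{\Delta(y)}}\in[0,\gamma]$ is an explicit state-dependent ``effective discount'' determined by $f,c,\gamma$ (this is precisely the captive-audience reformulation the paper develops later). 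I would prove ``$\le$'' by conditioning on the first content choice and ``$\ge$'' by pasting a near-optimal continuation onto an arbitrary first choice; note $\beta\le\gamma<1$ makes the associated operators contractions.

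Finally, I would read a simple policy off of $\pi^{*}$. Since $\pi^{*}$ attains $J(\pi^{*})=V^{*}(0)$, the dynamic-programming principle forces the continuation of $\pi^{*}$ from $\pi^{*}$-almost every reached decision point to be optimal there, so at a decision point with state $x$ the content distribution $\pi^{*}$ uses is supported on the maximizers of the Bellman right-hand side; letting $g(x)$ be such a maximizer for every $x$ gives a simple policy, and the standard verification argument (the value of the stationary policy induced by $g$ is the fixed point of its policy-evaluation operator, which coincides with $V^{*}$ since $g$ is greedy with respect to $V^{*}$) shows $J(g)=V^{*}(0)=J(\pi^{*})$, so $g$ is simple and optimal. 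The two steps I expect to require the most care are (i) the bookkeeping that collapses the non-decision (disengaged) timesteps into a genuine transition kernel so that decision points form an MDP, and --- the real obstacle --- (ii) guaranteeing the Bellman maximizers are attained and selecting $g$ measurably when $\arms$ is infinite; this is where the compactness hypotheses on $\setb{\EE{R_i}}$ and $\setb{\EE{E_i}}$ enter, together with a measurable-selection theorem applied along the almost-surely reachable states.
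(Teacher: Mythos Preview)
Your approach is essentially the paper's: reduce the interaction to a discounted MDP for which the user's state is a sufficient statistic, then extract a stationary (hence simple) optimal policy via standard dynamic programming. The only cosmetic difference is that the paper keeps the full state $\reals\times\{0,1\}$ (user state plus engagement indicator) and cites Bertsekas as a black box for the stationary-policy step, whereas you collapse the disengaged runs first---anticipating \cref{theorem:cputime}---and spell out the Bellman/verification argument yourself; your explicit acknowledgment of the measurable-selection issue in (ii) is something the paper's citation quietly absorbs.
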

\noindent
We use $\ite{w}{t}$ to denote the value of a variable $w$ at the $t$-th user-app interaction. 
Notably, under any simple policy, the content selection $\ite{i}{t}$ depends only on the prior user state $\ite{x}{t-1}$ and does not otherwise depend on whether the user was engaged in the immediate past or their history of engagement.

\subsection{Discussion of Modeling Choices}
Our model naturally emerges from two key abstractions of user-platform interactions:
\begin{enumerate}[label=(\alph*)]
    \item The platform accrues revenue as a function of its content selections (arm pulls).
    \item A user's satisfaction with the platform, which governs their probability of continued engagement, accumulates over time as a function of the platform's content selections.
\end{enumerate}

Abstraction (a) is common in recommendation system models, where each arm pull yields direct monetary returns.
Our model extends this principle to user engagement (Abstraction (b)), where user satisfaction also accumulates over time.
This perspective---modeling engagement as the function of accumulated user experience---is motivated by prior work in operations research~\citep[e.g.,][]{baucells_satiation_2007} and recommendation systems~\citep[e.g.,][]{Liu21:Rebounding}. 
Related research on recommendation systems under participation constraints \cite{zhang_efficient_2022} and addiction \cite{kleinberg_challenge_2022} have similarly adopted this perspective. 

Additionally, the demand function $f$ in our model closely aligns with classical economic notions of demand in markets with prices.
Traditionally, consumer decision-making is modeled as a function mapping from the firm’s action (price) to the consumer’s action (amount to purchase).
Analogously, our model represents user decision-making as a demand function mapping satisfaction (determined by the platform) to user engagement probability; user satisfaction is analogous to the price set by a firm, while the user’s engagement probability corresponds to the quantity of goods purchased.
This analogy is especially clear in scenarios where, for instance, the platform is a grocery store app setting prices, and the user is a shopper deciding whether to return based on the prices they previously experienced---a scenario fully captured by our model.
More broadly, one can interpret the platform as offering an \emph{experience good}, where the user's state reflects their evolving perception of the good's value. 
From this perspective, our model is actually a simple extension of the classical model of the demand elasticity~\cite{marshall2013principles} that takes as its point of departure the modeling of demand as---rather than being a direct consequence of a firm's contemporaneous posted prices---evolving slowly as a function of the user's experience with the firm, including its prices, over a long time horizon.

\paragraph{Generalizations.}
Our model can be generalized in several ways.
First, user satisfaction can be aggregated differently---rather than the cumulative sum, it can be modeled as a discounted sum $\ite{x}{t+1} = \sum_{\tau=1}^t \gamma^{\tau} \ite{e}{\tau}$ or an average $\ite{x}{t+1} = \tfrac 1t \sum_{\tau=1}^t \ite{e}{\tau}$. For instance, if a user revisits a store based on the average price they have encountered at the business, this behavior naturally aligns with our model when satisfaction is defined as the (sign-flipped) mean of past prices.
Additionally, user experiences can be represented as multi-dimensional vectors, allowing for richer engagement dynamics. Specifically, we can define $\bset{E_i}_{i \in \arms}$ as vectors in $\reals^d$, with demand function $f: \reals^d \to [0, 1]$ that is monotonically non-decreasing in the product order of $\reals^d$.
For clarity, we present our results using the basic model, but our findings directly extend to these generalizations.

\paragraph{On friction.}
We model friction as a multiplicative factor
that adjusts engagement probabilities based on whether a user is currently engaged with the platform. Friction can vary significantly across platforms and depends on the environment in which the platform operates.
For example,
\begin{enumerate}
\item Friction may be rooted in the app design.
A native smartphone app capable of sending push notifications may experience lower friction than a web application without notification capabilities since the latter has limited ability to re-engage disengaged users.
\item Friction may be rooted in external factors, such as competition between apps. Between two competing services, if one is significantly more addictive, the less addictive one will experience higher friction, as users who switch to the more engaging platform are less likely to return. We explore this situation in more detail in \cref{ex:second_example}.
\item Friction may depend on the apps' usage patterns.
For example, a social network used primarily for work---where engagement is primarily driven by external obligations rather than personal desire---may be less susceptible to friction than a social network used for leisure, where engagement depends more on user preference. More generally, platforms with inelastic demand may experience lower friction than those with more elastic demand.
\end{enumerate}

While our basic model presents friction as a fixed multiplicative factor, our results readily generalize to other notions of friction, including additive models or cases where re-engagement probability depends on the time elapsed since the user was last on the platform. More broadly, friction can be represented as a family of demand functions $\bset{f_n}_{n \in \integers}$, where the probability of engagement is given by $f_n(x_t)$, conditioned on the user's last engagement occurring $n$ timesteps prior. All technical results we present directly extend to this generalization.

\subsection{From Variable Engagement to Variable Discount Rates}
\label{sec:var-rate}
To efficiently compute optimal policies, we make a key observation: from the app's perspective, content selection with variable user disengagement is equivalent to content selection without disengagement but with dynamically varying discount rates.
This equivalence arises because if a user does not interact with the app for $k$ timesteps, the app's discount rate effectively decreases from $\gamma$ to $\gamma^{k+1}$. Viewing the problem through this lens, we redefine the app creator's objective by allowing the discount factor to vary as a function $\tilde f$ of the user's state. Since the time between user-app interactions follows a geometric distribution, the following theorem leverages the moment-generating function to formally define these variable discount rates.
\if\arxiv0
The full proof is in \cref{section:cputimeproof}.
\fi

\begin{restatable}{theorem}{cputime}
\label{theorem:cputime}
For any simple policy $\pi$, the app creator's objective value can be written as $$J(\pi) = \EEs{\bset{(\ite{s}{t}, \ite{r}{t}, \ite{e}{t}, \ite{i}{t})}_t}{\sum_{t=1}^\infty \ite{r}{t} \prod_{\tau=2}^t \gamma \tilde f\left(\ite{x}{\tau}\right)},$$ 
where the variable discount rate $\gamma \tilde f(x)$ scales the original discount factor $\gamma$ by a modified demand function:
$$\tilde f(x) \asseq f(x) + \frac{1 - f(x)}{1 - \gamma (1 - (1-c) f(x))} (1-c) f(x) \gamma.$$
\end{restatable}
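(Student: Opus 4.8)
The plan is to reorganize the reward stream by user--app interaction rather than by timestep, collapsing each maximal block of disengaged timesteps into a single multiplicative discount factor whose conditional expectation turns out to be exactly $\gamma\tilde f$. Fix a simple policy $\pi$. Let $1=T_1<T_2<\cdots$ be the (random, possibly finite) sequence of timesteps at which the user interacts, write $\ite{r}{m}\asseq r_{T_m}$ for the reward at the $m$-th interaction (and $\ite{r}{m}\asseq 0$ if there is no $m$-th interaction, i.e. the user disengages forever), and let $\ite{x}{m}$ denote the associated state as in the statement. Since $r_t=0$ at every non-interaction timestep, $J(\pi)=\E\big[\sum_{m\ge1}\gamma^{T_m-1}\ite{r}{m}\big]$, and because $T_1=1$ the exponent telescopes: $T_m-1=\sum_{\tau=2}^m G_\tau$ with $G_\tau\asseq T_\tau-T_{\tau-1}\ge1$ the ``gap'' preceding the $\tau$-th interaction. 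Hence $\gamma^{T_m-1}=\prod_{\tau=2}^m\gamma^{G_\tau}$ and $J(\pi)=\E\big[\sum_{m\ge1}\ite{r}{m}\prod_{\tau=2}^m\gamma^{G_\tau}\big]$.

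Next I compute the law of a single gap, conditioned on the state $x$ that is in effect throughout it --- note the state is frozen during a disengaged stretch, since $e_t=0$ at those timesteps. Immediately after an interaction the user is engaged, so she re-engages at the very next step with probability $f(x)$, giving $G=1$; otherwise, with probability $1-f(x)$, she is disengaged and, at each subsequent step, returns independently with probability $(1-c)f(x)$, so $\PP{G=1+j}=(1-f(x))(1-(1-c)f(x))^{\,j-1}(1-c)f(x)$ for $j\ge1$. Summing the resulting geometric series gives
\[
\E\!\left[\gamma^{G}\mid x\right]=f(x)\gamma+(1-f(x))(1-c)f(x)\,\gamma^2\sum_{j\ge1}\big(\gamma(1-(1-c)f(x))\big)^{j-1}=\gamma\tilde f(x),
\]
which converges since $\gamma(1-(1-c)f(x))<1$ and which is exactly the stated modified demand function; the identity also degenerates correctly when $c=1$ or $f(x)=0$, where $G$ may be infinite and $\gamma^{G}=0$.

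Finally I substitute $\E[\gamma^{G_\tau}\mid\cdot]=\gamma\tilde f(\ite{x}{\tau})$ back into $J(\pi)$ term by term, which is where simplicity of $\pi$ does the real work. Build the probability space so that the reward/experience/content randomness and the engagement coin flips are separate sources of noise. Under a simple policy $\ite{i}{m}$ depends only on the cumulative experience, so the entire interaction sequence $(\ite{i}{m},\ite{r}{m},\ite{e}{m})_m$ --- and hence every state $\ite{x}{m}$ and reward $\ite{r}{m}$ --- is a function of the first source alone, while each $G_\tau$ is a function of a block of coin flips disjoint across $\tau$ whose law depends only on $\ite{x}{\tau}$. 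Thus, letting $\cG_m$ be the $\sigma$-algebra generated by $(\ite{i}{l},\ite{r}{l},\ite{e}{l})_{l\le m}$, the variables $G_2,\dots,G_m$ are conditionally independent given $\cG_m$ with $\E[\gamma^{G_\tau}\mid\cG_m]=\gamma\tilde f(\ite{x}{\tau})$, so
\[
\E\!\left[\ite{r}{m}\prod_{\tau=2}^m\gamma^{G_\tau}\right]=\E\!\left[\ite{r}{m}\,\E\!\big[\textstyle\prod_{\tau=2}^m\gamma^{G_\tau}\,\big|\,\cG_m\big]\right]=\E\!\left[\ite{r}{m}\prod_{\tau=2}^m\gamma\tilde f(\ite{x}{\tau})\right],
\]
and summing over $m$ (dominated convergence applies since $\prod_{\tau=2}^m\gamma^{G_\tau}\le\gamma^{m-1}$, under the conditions ensuring $J(\pi)$ is finite) gives the claim. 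The geometric-series computation in the second step is routine; the main obstacle is making the decoupling in this last step precise --- it is exactly the property that fails for history-dependent policies, and it is what licenses replacing the random discount $\gamma^{G_\tau}$ by its conditional mean $\gamma\tilde f(\ite{x}{\tau})$.
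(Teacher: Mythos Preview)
Your proposal is correct and follows essentially the same route as the paper: re-index the sum by interaction number, telescope the exponent into gaps $G_\tau$ between successive interactions, compute $\E[\gamma^{G_\tau}\mid \ite{x}{\tau}]$ via the non-homogeneous geometric law of the gap, and then use conditional independence of the gaps (which holds precisely because $\pi$ is simple) to replace each $\gamma^{G_\tau}$ by $\gamma\tilde f(\ite{x}{\tau})$. Your treatment of the decoupling step --- explicitly separating the reward/experience randomness from the engagement coin flips and arguing conditional independence given $\cG_m$ --- is somewhat more careful than the paper's one-line appeal to the law of total expectation, but the argument is the same in substance.
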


\if\arxiv1

\begin{proof}
    First, we define a function that maps each timestep to the number of user-app interactions that have occurred up to and including that timestep: $\mathsf{NumInteractions}(t) = \sum_{\tau=1}^t s_\tau$.
    We also use $\mathbb{T} = \bset{t \in \integers_+ \mid s_t = 1}$ to denote the set of timesteps where user-app interactions took place.
    Note that $\mathsf{NumInteractions}$ is bijective when restricted to $\mathbb{T}$.
    Since no revenue is generated on timesteps without user engagement (i.e. $r_t = 0$ when $t \not\in \mathbb{T}$), we can write \cref{eq:platformdiscountedobjective} as a sum over $\mathbb{T}$:
    \begin{align*}
        J(\pi) = \EEs{\bset{s_t, r_t, e_t, i_t}_t(\pi)}{\sum_{t=1}^\infty s_t \gamma^{t-1} r_t} = \EEs{\bset{s_t, r_t, e_t, i_t}_t(\pi)}{\sum_{t \in \mathbb{T}} \gamma^{t-1} r_t}.
    \end{align*}
    Since the mapping $\mathsf{NumInteractions}$ is bijective on $\mathbb{T}$, its inverse $\mathsf{NumInteractions}^{-1}$ is well-defined in the above series, and we have $\bset{\mathsf{NumInteractions}^{-1}(t) \mid t \in \mathbb{T}} = \integers_+$.    
    We can therefore re-index the sum over $\mathbb{T}$ in terms of the number of interactions rather than the original timesteps:
    \begin{align*}
        J(\pi) = \EEs{\bset{s_t, r_t, e_t, i_t}_t(\pi)}{\sum_{t \in \mathbb{T}} \gamma^{t-1} r_t} = \EEs{\bset{s_t, r_t, e_t, i_t}_t(\pi)}{\sum_{t = 1}^\infty \gamma^{\mathsf{NumInteractions}^{-1}(t) - 1} \ite{r}{t}},
    \end{align*}
    where $\ite{r}{t}$ denotes the realized revenue at the $t$-th interaction rather than the $t$-th timestep.
    By linearity, we can use telescoping to simplify further:
    \begin{align*}
        J(\pi) = \sum_{t = 1}^\infty \EEs{s_t, r_t, e_t, i_t}{\ite{r}{t} \gamma^{\mathsf{NumInteractions}^{-1}(t)-1}} = \sum_{t = 1}^\infty \EEs{s_t, r_t, e_t, i_t}{\ite{r}{t} \prod_{\tau=2}^t \gamma^{\ite{w}{\tau}} },
    \end{align*}
    where $\ite{w}{\tau} = \mathsf{NumInteractions}^{-1}(\tau) - \mathsf{NumInteractions}^{-1}(\tau - 1)$ represents the number of timesteps between the $(\tau-1)$-th and $\tau$-th ineteractions.
    Here, we have used the fact that the first timestep always corresponds to a user-app interaction, so $\mathsf{NumInteractions}^{-1}(1) = 1$.
    
    The random variable $\ite{w}{\tau}$ represents the amount of time that passes between the user's $(\tau-1)$th and $\tau$th interactions.
    Under a simple policy, $\ite{w}{\tau}$ is conditionally independent of $\ite{w}{\tau-1}$ given the user's state after the $(\tau - 1)$-th interaction, i.e. $\ite{x}{\tau }$.
    
    If there were no friction, $\ite{w}{\tau}$ would follow a standard geometric distribution with parameter $f(\ite{x}{\tau })$.
    Recall that a geometric distribution with parameter $p$ models the number of coin flips (with heads probability $p$) required to achieve the first head.
    When friction $c$ is present, $\ite{w}{\tau}$ follows a non-homogeneous geometric distribution with the probability mass function $\Pr(\ite{w}{\tau} = k) = (1-f_k(\ite x \tau)) \prod_{j=1}^{k-1} f_j(\ite x\tau)$ defined on the support $k \in \naturals$, where we define $f_1 = f$ and $f_n = c f$ for all $n > 1$.
    We denote this distribution by $\text{Geo}(p, c)$.
    One can compute the moment generating function of this non-homogeneous geometric distribution as
    $$\EEs{X \sim \text{Geo}(p, c)}{\exp(tX)} =  p \cdot \exp(t)  + \frac{1 - p}{1 -  \exp(t) (1 - (1-c) p)} (1-c) \cdot p \cdot \exp(t)^2$$
    for $t < - \ln(1 - (1-c)p)$. This result enables us to simplify the expectation $$\EEc{\gamma^{w_t}}{x_t} =   f(x_t) \gamma + \frac{1 - f(x_t)}{1 - \gamma (1 - (1-c) f(x_t))} (1-c) f(x_t) \gamma^2.$$
    Notice we can express this as $\EEc{\gamma^{w_t}}{x_t} = \gamma \cdot \tilde f(x_t)$.
    Finally, by the law of total expectation, we recover
    \begin{align*}
		  J(\pi) = \sum_{t = 1}^\infty \EEs{s_t, r_t, e_t, i_t}{\ite{r}{t} \prod_{\tau=2}^t \EEc{\gamma^{\ite{w}{\tau}}\, }{\, \ite{x}{\tau}} } = \EEs{\bset{s_t, r_t, e_t, i_t}_t}{\sum_{t = 1}^\infty  \ite{r}{t}{\prod_{\tau=2}^t \gamma \tilde f\left(\ite{x}{\tau}\right)}}.
    \end{align*}
\end{proof}
\else
\begin{proof}[Proof sketch]
    We re-index the app's objective to sum over the number of user-app interactions. Define $\mathsf{NumInteractions}(t) = \sum_{\tau=1}^t s_\tau$ as the number of interactions by time $t$ and $\ite{w}{\tau} = \mathsf{NumInteractions}^{-1}(\tau) - \mathsf{NumInteractions}^{-1}(\tau - 1)$ as the number of timesteps between interactions $\tau$ and $\tau-1$. Since revenue is only obtained when the user engages, we have $J(\pi) = \E\left[\gamma^{\mathsf{NumInteractions}^{-1}(t) - 1} \ite{r}{t}\right]$. Rewriting this sum, we can express the effective discount factor as $\gamma^{\ite{w}{\tau}}.$ We show that under a simple policy, $\ite{w}{\tau}$ follows a non-homogeneous geometric distribution whose moment generating function yields $\EEc{\gamma^{w_t}}{x_t} = \gamma \cdot \tilde f(x_t)$. The law of total expectation then implies the theorem.
\end{proof}
\fi

This theorem shows that incorporating a variable discount rate that reflects user state and friction simplifies the long-term revenue objective in a manner conducive to its optimization.

\section{Optimal Content Selection in Linear Settings}
\label{section:optimal}
Solving the content selection problem in our model involves tackling a long-term optimization problem. At each user-app interaction, the app must balance trade-offs between generating immediate revenue and fostering future engagement.
Moreover, user engagement is an adaptive stochastic process heavily influenced by content choices and user-app interactions.
These dependencies complicate optimizing and learning effective content selection policies.
Nonetheless, we show that if an app's content landscape exhibits a linear relationship between revenue and user experiences,
 the optimal content policy can be characterized in a surprisingly simple manner.

\paragraph{Linear setting.}
We say that a content decision problem is \emph{linear} if the following conditions hold:
\begin{itemize}
\item The set of available content is represented by an interval $\cI = [-K_1, K_2]$ for some $K_1, K_2 > 0$. We generally let $K \asseq K_1 = K_2$, though all results trivially extend to where $K_1 \neq K_2$.
\item Each content $i \in \cI$ has a deterministic impact on user experience, given by $E_i = C_E - i$, and generates deterministic revenue $R_i = C_R + i$ where $C_E \in [0, K)$ and $C_R \geq 0$ are constants representing drift.
\end{itemize}

Importantly, this assumption does not imply that the overall dynamics of the model are linear.
First, the user demand function can be arbitrarily complex. For instance, non-linear diminishing returns to increasing user satisfaction can be captured by a sigmoidal demand function $f$. Second, the linearity assumption aligns with classical settings such as when a platform sets direct prices for goods and users decide whether to visit the platform based on its historical prices.

We define a user's demand function to have \emph{complexity $k$} if it is piecewise constant with $k$ pieces. That is, there are breakpoints $b_1, \dots, b_{k} \in \reals$ such that $\engagement$ is constant on intervals $(-\infty, b_1), [b_1, b_2), \dots,$ $[b_{k-1}, b_{k}), [b_{k}, \infty)$.
In other words, the user's demand function has $k$ discrete levels, each corresponding to a distinct probability of engagement.

In linear settings, we show that under an optimal policy, the frequency of app-user interactions stabilizes quickly. Moreover, the app's investment in user engagement---and consequently its long-term behavior---is largely determined by the point at which user demand stabilizes, i.e., $f(x_\infty)$.
\begin{restatable}{lemma}{piecewiseconstant}
    \label{lemma:piecewiseconstant}
    In a linear setting where the user demand function has a complexity of $k < \infty$, there is a simple optimal policy that satisfies the following properties:
    \begin{itemize}[itemsep=0pt, topsep=0pt, left=2pt]
        \item The sequence of user states $x_1, x_2, \dots$ is monotonic.
        \item The limit $x_\infty = \lim_{t \to \infty} x_t$ exists and is either at a discontinuity of $f$ or negative infinity.
        \item If $x_\infty = -\infty$, the app always shows the highest-revenue content, i.e. $i = K$. 
        \item If $x_\infty$ is a discontinuity of $f$, the user state will reach $x_\infty$ within $k + n + 1$ interactions (i.e., $\tsv{x}{n+k+1} = x_\infty$), where $n$ is the minimal number of interactions required to reach $x_\infty$ under some policy (i.e.,
        there exists a policy where $\tsv{x}{n} = x_\infty$).
        Moreover, if $n=1$, then $x_2 = x_3 = \dots = x_\infty$.
    \end{itemize}
\end{restatable}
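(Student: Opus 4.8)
The plan is to first invoke \cref{theorem:cputime} to reduce the problem to a captive-audience problem with a state-dependent discount factor $\gamma \tilde f(x)$, and \cref{lemma:simpleoptimal} to restrict attention to simple policies $g:\reals\to\cI$. Under a simple policy, once the state enters a region where $f$ is constant (equivalently $\tilde f$ is constant — note $\tilde f$ is a function of $f$ alone), the dynamics become a homogeneous, stationary control problem: the same content $i$ is chosen every step (any mixing can be derandomized since the objective is linear in the per-step choice and the transition/reward structure on a constant piece is affine in $i$), so the state moves by a constant increment $C_E - i$ each interaction. Thus on any piece the optimal trajectory is either eventually absorbed at a breakpoint, drifts monotonically through the piece, or — on the leftmost piece $(-\infty, b_1)$ — can drift to $-\infty$.

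The key quantitative step is a \emph{one-piece lemma}: on a single constant piece with engagement probability $p$ (so effective discount $\beta := \gamma\tilde f(p) < 1$) and linear content, the value of committing to increment $\delta = C_E - i$ from the current state, until either leaving the piece or forever, is a convex function of $\delta$ (equivalently of $i$). Convexity follows because the per-step reward $C_R + i = C_R + C_E - \delta$ is affine in $\delta$ while the geometric-type weighting $\sum_t \beta^{t-1}$ truncated at the (deterministic, since $E_i$ is deterministic) hitting time is a convex, piecewise-affine function of the hitting time, and the hitting time is itself a convex function of $\delta$. Convexity means the optimum over $\delta\in[-(C_E-K), C_E+K]$ (the feasible increments on that piece, intersected with feasibility of staying/leaving) is attained at an extreme feasible choice — either the increment that drives the state fastest toward one breakpoint, or the one that drives it fastest toward the other / toward $-\infty$. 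This forces \emph{monotonicity of the state sequence} (the drift never changes sign along an optimal path within a piece, and a short argument across pieces — using that the value-to-go from a breakpoint is a fixed number — shows the sign cannot reverse globally) and forces $x_\infty$ to be either a breakpoint of $f$ or $-\infty$; if $x_\infty=-\infty$ the leftmost piece has the app pushing the increment as negative as possible, i.e. choosing $i=K$, the highest-revenue content.

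For the \textbf{rate of convergence}, suppose $x_\infty$ is a breakpoint and let $n$ be the minimum number of interactions to reach it under any policy. Because the state moves monotonically and each piece of $f$ is visited in order, an optimal path can "overshoot" the eventual target only within a bounded budget: on each of the at most $k$ pieces it crosses before settling, the one-piece convexity argument says the optimal within-piece play is an extreme drift, so the path either crosses the piece in the minimum number of steps dictated by the geometry or absorbs at its far endpoint. Combining the $n$ steps of "useful progress" with at most one extra step of slack per piece plus a constant gives the bound $x_{n+k+1}=x_\infty$. The $n=1$ case ($x_\infty$ reachable in one step) is special: reaching it immediately is feasible, and the one-piece lemma applied at $x_\infty$ itself (where the app is indifferent among remaining increments only if they keep the state at the breakpoint) shows staying put is optimal, giving $x_2=x_3=\cdots=x_\infty$.

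\textbf{The main obstacle} I anticipate is making the convexity / extreme-point argument fully rigorous across the interface between pieces: within a single constant piece convexity of the committed-increment value is clean, but an optimal policy is free to choose a \emph{different} increment on each piece, and one must argue that the value-to-go collected after the state leaves a piece does not destroy convexity of the earlier segment's contribution. The right way around this is backward induction on pieces — order the pieces by the direction of motion, define $V_j$ as the optimal value starting from the $j$-th breakpoint, treat $V_j$ as an exogenous constant when optimizing the drift through piece $j$, and then observe that the within-piece value (per-step reward times truncated geometric weight, plus $\beta^{(\text{hitting time})}V_j$) is still convex in the within-piece increment because $\beta^{(\text{hitting time})}$ is a convex function of the (convex-in-$\delta$) hitting time and $V_j\ge 0$. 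Getting the hitting-time convexity and the sign bookkeeping exactly right — including the degenerate cases where the feasible increment set on a piece is a single point or empty, and the case $C_R=0$ — is the fiddly part; everything else is assembling these pieces.
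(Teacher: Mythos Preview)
Your plan has the right intuition—linearity of the reward in $i$ and constancy of $\tilde f$ on a piece are exactly what make the problem tractable—but there are two genuine gaps that the paper's proof handles differently.

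\textbf{First gap: the reduction to constant-increment policies within a piece is unjustified.} You assert that on a region where $\tilde f$ is constant the problem is ``homogeneous, stationary'' and therefore the same content $i$ is chosen every step. But a simple policy is a function of the \emph{state}, not of the piece: the optimal $g$ can (and a priori might) assign different actions to different states within the same piece, because the boundary breakpoints break translation invariance. Your convexity argument then only optimizes over the restricted class of constant-$\delta$ commitments, and you never show the true optimum lies in that class. Moreover, the convexity itself is shaky: the hitting time to a breakpoint is $\lceil d/\delta\rceil$, so the committed-$\delta$ value is piecewise-linear in $\delta$ with jumps whose sign depends on the exit value $V_j$, and convexity can fail. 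The paper avoids all of this with a direct \emph{exchange argument} (\cref{lemma:trajectory}): if on a non-discontinuity stretch some action $i_t < K$ precedes some action $i_{t'} > -K$, you can shift $\epsilon$ of ``effort'' from time $t'$ to time $t$; since $\tilde f$ is constant on the stretch the future trajectory is unchanged, and the payoff strictly increases because discounting favors earning revenue earlier. This handles arbitrary (non-constant) policies in one stroke and yields the bang-bang structure directly.

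\textbf{Second gap: global monotonicity across pieces.} Your ``short argument across pieces—using that the value-to-go from a breakpoint is a fixed number'' does not suffice. Even granting bang-bang within each piece, nothing you have written prevents the trajectory from going down through one piece, hitting a breakpoint, and then going up through the next. The paper closes this with a separate and nontrivial \emph{non-crossing lemma} (\cref{lemma:crossing}): for the specific optimal policy $\pi^*$ that tie-breaks toward larger $i$, trajectories from two distinct initial states never cross. Monotonicity then follows because a direction reversal at step $t$ would make the trajectory from $x_{t-1}$ cross the trajectory from $x_t$. This tie-breaking construction and the crossing argument are the parts of the proof your proposal is missing entirely; backward induction on pieces with exogenous $V_j$ does not substitute for them.
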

This result highlights the structured nature of optimal policies in linear settings, showing that the user state either stabilizes at a demand function discontinuity or diminishes indefinitely.
A key step in proving \cref{lemma:piecewiseconstant} is showing that an optimal policy either consistently increases or maximally decreases user state in a structured manner before reaching a discontinuity.
\begin{restatable}{lemma}{trajectory}
    \label{lemma:trajectory}
    Consider a simple optimal policy $\pi$.
    For any subsequence of the user state trajectory $\ite{x}1, \ite{x}2, \dots, \ite{x}T$, where $\ite{x}2, \dots, \ite{x}T$ are all not discontinuities of $f$, the following hold:
    \begin{enumerate}
        \item If the policy $\pi$ does not maximally increase the user state at the last step of the subsequence, i.e., $\pi(\ite{x}T) > -K$, then all previous actions in the subsequence must maximally decrease the user state, i.e., $\pi(\ite{x}1) =  \dots =  \pi(\ite{x}{T-1}) = K$.
        \item Conversely, if the policy $\pi$ does not maximally decrease the user state at the first step, i.e., $\pi(\ite{x}1) < K$, then all later actions in the subsequence must maximally increase the user state, i.e., $\pi(\ite{x}2) =\dots = \pi(\ite{x}T) = -K$.
    \end{enumerate}
\end{restatable}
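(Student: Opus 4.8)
The plan is to reduce both parts of the lemma to a single \emph{local exchange inequality}: for every pair of consecutive indices $s,s+1$ inside the window---so that $\ite{x}{s+1}$ lies among $\ite{x}2,\dots,\ite{x}T$ and is therefore not a discontinuity of $f$---a simple optimal policy cannot both fail to maximally decrease the state at step $s$ and fail to maximally increase it at step $s+1$; that is, $\pi(\ite{x}{s})=K$ \emph{or} $\pi(\ite{x}{s+1})=-K$ for every $s\in\{1,\dots,T-1\}$. Granting this, Part~1 follows by backward induction: if $\pi(\ite{x}T)>-K$, the inequality at $s=T-1$ forces $\pi(\ite{x}{T-1})=K$; since $K>0$ we have $K\ne -K$, so the inequality at $s=T-2$ forces $\pi(\ite{x}{T-2})=K$, and so on down to $s=1$. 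Part~2 follows symmetrically by forward induction from $s=1$: if $\pi(\ite{x}1)<K$, the inequality at $s=1$ gives $\pi(\ite{x}2)=-K$, hence $\pi(\ite{x}2)\ne K$, so the inequality at $s=2$ gives $\pi(\ite{x}3)=-K$, and so on up to $s=T-1$.

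To establish the exchange inequality I would pass to the variable-discount objective of \cref{theorem:cputime}. In a linear setting a simple (deterministic) optimal policy $\pi$ induces a deterministic trajectory with $\ite{x}{s+1}=\ite{x}{s}+C_E-\ite{i}{s}$, $\ite{i}{s}=\pi(\ite{x}{s})\in[-K,K]$, and reward $\ite{r}{s}=C_R+\ite{i}{s}$. Since $\gamma\tilde f(x)=\EEc{\gamma^{w}}{x}\le\gamma<1$ for every $x$ (the interaction gap $w$ is at least $1$), the objective is a uniformly bounded discounted return, so by standard dynamic programming the optimal value function $V$ is well defined, satisfies $V(x)=\max_{i\in[-K,K]}\big[(C_R+i)+\gamma\tilde f(x+C_E-i)\,V(x+C_E-i)\big]$, and an optimal simple policy attains this maximum at every state it reaches. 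I would also record that, since $f$ is piecewise constant, $\tilde f$ is locally constant at every point that is not a discontinuity of $f$.

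The crux is then a one-line perturbation. Suppose toward a contradiction that $\pi(\ite{x}{s})<K$ and $\pi(\ite{x}{s+1})>-K$ for some $s\in\{1,\dots,T-1\}$, and consider the alternative plan that, from $\ite{x}{s}$, plays $\ite{i}{s}+\delta$ (feasible for small $\delta>0$ since $\ite{i}{s}<K$) to reach $\ite{x}{s+1}-\delta$, then plays $\ite{i}{s+1}-\delta$ (feasible since $\ite{i}{s+1}>-K$), which lands exactly at $\ite{x}{s+2}$, and thereafter follows $\pi$. Because $\ite{x}{s+1}$ is not a discontinuity of $f$, we have $\tilde f(\ite{x}{s+1}-\delta)=\tilde f(\ite{x}{s+1})$ for $\delta$ small, so every discount factor is unchanged and the move merely shifts a reward mass $\delta$ from interaction $s+1$ to interaction $s$. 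Expanding the value of the alternative plan and using the Bellman equations at $\ite{x}{s+1}$ and $\ite{x}{s}$ yields value $V(\ite{x}{s})+\delta\big(1-\gamma\tilde f(\ite{x}{s+1})\big)>V(\ite{x}{s})$, contradicting optimality of $V(\ite{x}{s})$; this proves the exchange inequality, and with it the lemma.

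The step I expect to be the main obstacle is exactly this perturbation---more precisely, recognizing that sliding an intermediate state within its $f$-constancy interval costs nothing in discounting, so the move is a pure reward reshuffling whose sign is dictated solely by the fact that later reward is discounted. The only points requiring care are keeping the bookkeeping of the variable-discount return consistent and justifying that an optimal simple policy is greedy with respect to $V$ on all reachable states; carrying out the perturbation through the value function rather than as a global trajectory swap is what lets me sidestep re-verifying that the perturbed trajectory is itself realizable by a simple policy.
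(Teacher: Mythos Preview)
Your argument is correct and rests on the same perturbation idea as the paper, but the organization is genuinely different. The paper proves each part with a \emph{single global} swap: assuming (for Part~1) that $\pi(\ite{x}T)>-K$ and some earlier $\pi(\ite{x}t)<K$, it shifts the entire block $\ite{x}{t+1},\dots,\ite{x}T$ down by $\delta$ and verifies that every intermediate discount $\tilde f(\ite{x}\tau)$ is unchanged because none of those states is a discontinuity. You instead prove a \emph{local} two-step exchange---that $\pi(\ite{x}{s})=K$ or $\pi(\ite{x}{s+1})=-K$ whenever $\ite{x}{s+1}$ is not a discontinuity---and then propagate by induction in each direction. Your route is a bit cleaner: each swap only needs the single state $\ite{x}{s+1}$ to sit in the interior of its constancy interval, and working through the Bellman equation at $\ite{x}{s}$ and $\ite{x}{s+1}$ (rather than explicitly rewriting the whole infinite sum, as the paper does) lets you avoid checking that the perturbed path is simple. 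The paper's single-swap version, on the other hand, makes the ``shift reward earlier'' intuition more visible at the level of the full trajectory and does not need the inductive chaining.
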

This lemma builds on the intuitive observation that, if engagement probabilities were fixed, an app would have a stronger incentive to prioritize user satisfaction later rather than sooner since future revenue is discounted. The lemma proves that when engagement probabilities vary, this intuition still holds.
In particular, if the app does not find it beneficial to fully cater to the user in the final interaction before reaching a discontinuity, then it is suboptimal to refrain from fully exploiting the user earlier---even when accounting for the negative impact of user dissatisfaction on engagement.
Conversely, if the app chooses not to fully exploit the user initially, then there is no reason to avoid prioritizing user satisfaction later.

\subsection{Computing Optimal Content Selection Policies}
\begin{algorithm}[t]
\caption{Dynamic programming algorithm for computing app policies (\cref{theorem:runtime}).}
\label{alg:linear}
\begin{algorithmic}[1]
    \State \textbf{Input:} demand function $f$, set of discontinuities $D \subset \reals$ of $f$;
    \State \textbf{Initialize:} Dictionaries $\text{Trajs}, V$ whose keys are $D \cup \{0\}$;\\
    \Comment{\textcolor{gray}{$\text{Trajs}[x]$ and $V[x]$ are the optimal trajectory and payoff that can be realized with an initial user state of $x$, assuming user state always moves in the $\sgn(x)$ direction.}}
    \For{discontinuity $d \in D \cup \bset{0}$ where $d \leq 0$, in ascending order}\label{line:negative}
        \State Compute discount and payoff of decreasing user state from $d$ past $\min D$: \[\hat v, \hat \gamma, \text{Traj} \leftarrow \mathrm{GetPayoff}\left(d, d - (K - C_E) \cdot \big(1 + \floor{\tfrac{d - \min D}{K - C_E}}\big), f\right)\]\label{line:below_D}
        \State $V[d] = \hat v + \tfrac{\hat \gamma (C_R + K)}{1 - \gamma \tilde f(-\infty)}$, $\text{Trajs}[d] =  [K]^\infty$; \Comment{\textcolor{gray}{Evaluate pure exploitation, i.e. $\ite it = K$ for all $t$}}
        \For{$d' \in D$ where $d' < d$, in ascending order}
            \Comment{\textcolor{gray}{Evaluate decreasing $d$ to $d'$}}
            \State Compute potential payoff  $v' = \hat v + \hat \gamma \cdot V[d']$ with $\hat v, \hat \gamma, \text{Traj} \leftarrow \mathrm{GetPayoff}(d, d', f)$;\label{line:evaluate_d'} %
            \State If $v' > V[d]$, update $V[d] = v'$ and $\text{Trajs}[d] = \text{Traj} + \text{Trajs}[d']$ ($+$ is concatenation);
        \EndFor
        \State If $\tfrac{C_R + C_E}{1 - \gamma \tilde f(d)} > V[d]$, let $V[d] =  \tfrac{C_R + C_E}{1 - \gamma \tilde f(d)}$ and $\text{Trajs}[d] = [C_E]^\infty$; \Comment{\textcolor{gray}{Evaluate staying at $d$}}\label{line:stay}
    \EndFor
    \For{discontinuity $d \in D \cup \{0\}$ where $d \geq 0$, in descending order}\label{line:positive}
        \State If $d \neq 0$, let $V[d] = \tfrac{C_R + C_E}{1 - \gamma \tilde f(d)}$ and $\text{Trajs}[d] = [C_E]^\infty$; \label{line:pos_stay} \Comment{\textcolor{gray}{Evaluate staying at $d$}}
        \For{$d' \in D$ where $d' > d$, in descending order} \Comment{\textcolor{gray}{Evaluate increasing $d$ to $d'$}}
            \State Compute potential payoff $v' = \hat v + \hat \gamma \cdot V[d']$ with $\hat v, \hat \gamma, \text{Traj} \leftarrow \mathrm{GetPayoff}(d, d', f)$;
            \State If $v' > V[d]$, let $V[d] = v'$ and $\text{Trajs}[d] = \text{Traj} + \text{Trajs}[d']$; \label{line:pos_d'}
        \EndFor
    \EndFor
    \State Return $\text{Trajs}[0]$;
\end{algorithmic}
\end{algorithm}
\begin{algorithm}[t]
\caption{Algorithm $\mathrm{GetPayoff}(x, x^*, f)$.}
\label{alg:payoff}
\begin{algorithmic}[1]
\State \textbf{Input:}  starting point $x$, ending point $x^*$, and demand function $f$;
\State \textbf{Initialize:}  payoff $v_0 = 0$, discount $\gamma_0 = 1$, and position $x_0 = x$;
\For{$t \in [T']$ where $T' = \ceil{\frac{\abs{x^* - x}}{K + C_E}}$ is the number of timesteps to reach $x^*$ from $x$}
    \If{$x^* < x$}
        \State Take action $i_t = \min \bset{K, x_{t-1} - x^* + C_E}$;
    \ElsIf{$x^* > x$}
        \State Take action $i_t = \begin{cases}\delta & \delta > 0 \wedge t =1 \\ -K & \text{otherwise} \end{cases}$ where $\delta = (x^* - x) \mathrm{\;mod\;} (K + C_E)$; 
    \EndIf
    \State Update $v_t = v_{t-1} + \gamma_{t-1} \cdot (C_R + i_t)$, $x_t = x_{t-1} + C_E - i_t$, and $\gamma_t \leftarrow \gamma_{t-1} \cdot \gamma \tilde f(x_t)$;
\EndFor
\State \Return Payoff $v_{T'}$, discount $\gamma_{T'}$, and trajectory $i_1, \dots, i_{T'}$;
\end{algorithmic}
\end{algorithm}
\cref{alg:linear} uses dynamic programming to efficiently compute the optimal content selection policy by leveraging the structure identified in \cref{lemma:piecewiseconstant}.
This lemma establishes that under an optimal policy, user satisfaction moves monotonically until it reaches a discontinuity in the demand function, or it declines indefinitely.
Along this trajectory, the policy always either maximally shifts user satisfaction or adjusts the user state precisely to a discontinuity.

In order to determine the best trajectory for transitioning between states, \cref{alg:linear} uses the $\mathrm{GetPayoff}(x, x^*, f)$ subroutine (\cref{alg:payoff}), which computes the highest-revenue sequence of actions that moves the user state from $x$ to $x^*$, assuming such a transition is optimal.
\cref{alg:payoff} uses the characterization of the optimal policy from \cref{lemma:piecewiseconstant} to identify this trajectory.
If $|x^* - x|$ is an integer multiple of $(K+C_E)$, the subroutine repeatedly chooses the action $\sgn(x- x^*)K$ until $x^*$ is reached.
Otherwise, it adjusts the first or the last step (depending on whether $x^* > x$) so that $x^*$ is reached exactly.
In addition to computing the optimal trajectory, $\mathrm{GetPayoff}(x, x^*, f)$ returns the total discounted revenue and the variable discount factor, as defined in \cref{theorem:cputime}.

\cref{alg:linear} takes as input the demand function $f$ and the set $D$ of its discontinuities. It initializes two dictionaries, $V$ and $\text{Trajs}$, whose keys are the discontinuities in $D$ as well as $0$, the initial neutral state. The entry $V[x]$ stores the highest achievable payoff when starting at a user state of $x$, assuming the optimal policy moves the user state in the direction indicated by the sign of $x$ (this assumption is justified by the monotonic behavior observed in \cref{lemma:piecewiseconstant}). Correspondingly, $\text{Trajs}[x]$ stores the sequence of content choices that yield this optimal payoff.

The algorithm first evaluates states where user satisfaction is negative $(d \leq 0)$, processing them in ascending order (\cref{line:negative}). For each discontinuity, the algorithm evaluates three possible strategies:
\begin{enumerate}
    \item First, in \cref{line:below_D}, the algorithm evaluates the payoff of moving the user state from $d$ down past the smallest discontinuity in $D$, using the $\mathrm{GetPayoff}$ subroutine. This scenario represents the case where the best action is to continually drive user satisfaction downward---denoted by the trajectory $[K]^\infty$, meaning that the maximum action $i = K$ is applied indefinitely.
    \item For each lower discontinuity $d' < d$, the algorithm evaluates the payoff of transitioning from $d$ to $d'$, and then maintaining user satisfaction at $d'$ indefinitely (\cref{line:evaluate_d'}).
    \item Finally, the algorithm evaluates the payoff of maintaining the user state at $d$ (\cref{line:stay}).
\end{enumerate}
For each $d \leq 0$, $V[d]$ and $\text{Trajs}[d]$ are updated according to the best of these three strategies.

The algorithm then evaluates states where user satisfaction is positive $(d\geq 0)$ in a symmetric manner (\cref{line:positive}). Importantly, by \cref{lemma:piecewiseconstant}, the maximum user satisfaction achievable under the optimal policy is $\max D$, so no further transitions upward beyond this point are considered.
{In this step, $V[0]$ and $\text{Trajs}[0]$ are updated if increasing the user state provides a higher payoff than decreasing or keeping the user state constant.}

After processing all discontinuities, the algorithm returns $\text{Trajs}[0]$, which represents the optimal sequence of content decisions when starting from a neutral user state of $0$.  This trajectory is constructed by assembling the best strategies computed for each discontinuity.

\begin{restatable}{theorem}{runtime}
    \label{theorem:runtime}
    In the linear setting where the user demand function has a complexity of $k < \infty$, \cref{alg:linear} computes an optimal app policy in time $O(k^2)$.
\end{restatable}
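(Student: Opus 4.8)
The plan is to prove correctness (\cref{alg:linear} outputs an optimal policy) and the $O(k^2)$ time bound in turn. Write $D$ for the set of discontinuities of $f$; since the modified demand function of \cref{theorem:cputime} has the form $\tilde f = \Psi\circ f$ for a fixed univariate function $\Psi$, it is piecewise constant with breakpoints contained in $D$, so the variable discount $\gamma\tilde f(\cdot)$ is constant on each of the $O(k)$ intervals cut out by $D$. Moreover, by \cref{theorem:cputime} the payoff of any simple policy equals $\sum_{t\ge 1} r^{(t)}\prod_{\tau=2}^{t}\gamma\tilde f(x^{(\tau)})$, which is precisely the quantity accumulated by $\mathrm{GetPayoff}$ (\cref{alg:payoff}) along a state path.

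For correctness I would first use \cref{lemma:simpleoptimal} to restrict attention to simple policies and \cref{lemma:piecewiseconstant} to pin down the shape of an optimal one: from the neutral initial state $x_1 = 0$, the state trajectory is monotone and reaches, in finitely many interactions, a limit $x_\infty\in D\cup\{-\infty\}$, after which the app either escapes to $-\infty$ playing $i=K$ forever or freezes the state at $x_\infty\in D$ by playing $i=C_E$ forever. Combining this with \cref{lemma:trajectory}: between two consecutive breakpoints that the trajectory lands on, the policy is bang-bang (all actions $K$ when descending, all $-K$ when ascending) with a single adjusted step (the last when descending, the first when ascending) so the next breakpoint is hit exactly — and this is exactly the segment that $\mathrm{GetPayoff}$, given an ordered pair of breakpoints, emits together with its revenue $\hat v$ and cumulative discount $\hat\gamma$. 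Hence every optimal simple policy is described by a direction, a finite monotone chain $0 = d_0, d_1, \dots, d_m$ of breakpoints it lands on, and a terminal action at $d_m$ (stay, or—only when descending—escape to $-\infty$).

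I would then show by induction on the order in which breakpoints are processed in the loops at \cref{line:negative,line:positive} that $V[d]$ is the best payoff over all such policies that begin in state $d$ and move in direction $\sgn(d)$, with $\text{Trajs}[d]$ a witnessing content sequence. For $d\le 0$ the three options the algorithm scores — escape to $-\infty$ (\cref{line:below_D}), descend to some $d'<d$ in $D$ and then continue optimally from $d'$ by adding $\hat\gamma\cdot V[d']$, which is already computed (\cref{line:evaluate_d'}), and stay at $d$ (\cref{line:stay}) — enumerate exactly the possible first ``moves'' of such a policy, and $\mathrm{GetPayoff}$ prices the descend move correctly by the remark above; the $d\ge 0$ loop is symmetric except that escaping upward is impossible ($x_\infty\le\max D$) and $V[0]$ keeps its negative-side value, so at the end it is maximized over both directions. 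Since every candidate the algorithm scores is a genuine simple policy, $V[0]\le\mathrm{OPT}$; since the breakpoint-chain description of the true optimum is among the scored candidates, $V[0]\ge\mathrm{OPT}$; hence $\text{Trajs}[0]$ is optimal. A few boundary cases — $0\in D$, $x_\infty=0$, and the $n=1$ clause of \cref{lemma:piecewiseconstant} (handled because $\mathrm{GetPayoff}(0,d,f)$ permits a one-step jump to $d$) — I would verify directly, as well as the (routine) fact that \cref{lemma:piecewiseconstant,lemma:trajectory} remain valid with an arbitrary initial state in place of $0$, which is what licenses interpreting $V[d]$ as the optimal continuation value from state $d$.

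For the runtime, both outer loops run over $D\cup\{0\}$ ($O(k)$ iterations) and each inner loop runs over $D$ ($O(k)$ iterations), so there are $O(k^2)$ DP transitions; it remains to amortize each transition — including its $\mathrm{GetPayoff}$ call and the trajectory bookkeeping — to $O(1)$. The key point is that $\mathrm{GetPayoff}$ need not iterate timestep by timestep as the pseudocode literally does: on each interval where $\gamma\tilde f$ is constant, a bang-bang run contributes equal per-step revenues discounted by a fixed ratio, i.e.\ a geometric series computable in $O(1)$; and since the pure-$K$ (resp.\ pure-$(-K)$) trajectory out of a fixed breakpoint $d$ is determined by $d$ alone, one may, when $d$ is first processed, walk this trajectory once across the $O(k)$ breakpoints on its side while tabulating the running discount product and running discounted revenue, for $O(k)$ work per $d$. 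Each later $\mathrm{GetPayoff}(d,d',f)$ then reduces to a table lookup plus the $O(1)$ adjusted-step correction, and storing trajectories in the run-length form already used by the algorithm (the $[\cdot]^\infty$ notation) makes each concatenation $O(1)$. Summing the $O(k)$ per-source preprocessing over $O(k)$ breakpoints and $O(1)$ over $O(k^2)$ transitions gives $O(k^2)$. The step I expect to be the main obstacle is exactly this reconciliation: converting the pseudocode's explicit $T'$-iteration loop (whose length has nothing to do with $k$) into $O(1)$ amortized arithmetic via the geometric-series-within-a-region identity and the per-source precomputation, while checking that the endpoint- and grid-alignment-dependent adjusted step does not break the accounting. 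The correctness half is comparatively routine once \cref{lemma:piecewiseconstant,lemma:trajectory} are in hand.
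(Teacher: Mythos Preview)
Your correctness argument follows the same route as the paper: both establish, by induction on the processing order of breakpoints, that $V[d]$ equals the optimal continuation value from state $d$ moving in direction $\sgn(d)$, invoking \cref{lemma:piecewiseconstant} and \cref{lemma:trajectory} to characterize the candidates. The paper packages this as two short lemmas (one each for $d\le 0$ and $d\ge 0$) proved in the appendix.

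Your runtime argument, however, goes beyond the paper's. The paper's analysis is a single sentence: each call to $\mathrm{GetPayoff}$ takes $O(1)$ time per unit distance $|x^*-x|/(K-C_E)$, and ``treating the gap between discontinuities as a constant'', each call runs in $O(1)$; two nested $O(k)$ loops then give $O(k^2)$. In other words, the paper simply absorbs the inter-breakpoint spacings into the big-$O$ constant and does not touch the pseudocode's step-by-step loop at all. Your geometric-series-plus-per-source-precomputation scheme is more careful: it achieves $O(k^2)$ even when the breakpoint positions are treated as genuine inputs rather than fixed constants, at the cost of the additional implementation detail you correctly flag as the main obstacle. Both readings of ``$O(k^2)$'' are defensible depending on what one calls a parameter, but yours is the stronger statement; the paper takes the shortcut.
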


\begin{proof}
We first note some technical lemmas about \cref{alg:linear}\if\litearxiv0{, which we prove in \cref{subsection:algorithmproofs}}\fi{}.

\begin{restatable}{lemma}{algorithmleft}
    \label{fact:algorithmleft}
For $d=0$ and every discontinuity $d \leq 0$ of $f$, if $\pi^*(d) \geq C_E$, then $\text{Trajs}[d]$ describes the trajectory  of the optimal policy $\pi^*$ of \cref{lemma:piecewiseconstant} starting at user state $d$.
\end{restatable}

\if\litearxiv1

\begin{proof}
Let $(\ite x1, \ite i1), (\ite x2, \ite i2), \dots$ denote the trajectory of $\pi^*$ starting at $d$.
We proceed inductively.
First consider the base case where $d = \min D$.
By \cref{lemma:piecewiseconstant}, $\ite xt$ must be monotonically non-increasing and, if $\ite it = C_E$, then $\ite xt \in D$.
By \cref{lemma:trajectory}, since there are no discontinuities below $D$, for all $t > 1$, either $\ite it = C_E$ for all $t \geq 1$ or $\ite it = K$ for all $t \geq 1$.
The payoff of the former trajectory is computed in \cref{line:evaluate_d'} and the payoff of the latter is computed in \cref{line:below_D} of \cref{alg:linear}, and the maximum is taken for $V[d]$ and $\text{Trajs}[d]$, tie-breaking in favor of \cref{line:below_D}.

For the inductive step, we fix a $d \in D$ where $d < 0$.
By \cref{lemma:piecewiseconstant}, $\ite xt$ must be monotonically non-increasing and, if $\ite it = C_E$, then $\ite xt \in D$.
Consider the set of discontinuities visited by the optimal policy $\pi^*$: $\bset{\ite xt \mid t \geq 2, \ite xt \in D}$.
If this set is empty, i.e. no discontinuities are visited, then by \cref{lemma:trajectory}, $\ite it = K$ for all $t \geq 1$ and \cref{line:below_D} computes the payoff of $\pi^*$.
If it is not, \cref{line:below_D} computes the payoff of a policy $\pi$ that does not visit any discontinuities.
If this set consists only of $d$, then $\ite it = C_E$ for all $t \geq 1$ and \cref{line:stay} computes the payoff of $\pi^*$.
If it does not, \cref{line:stay} computes the payoff of a policy $\pi$ that does stay at $d$.
If the smallest element of the set is $d' = \min \bset{\ite xt \mid t \geq 2, \ite xt \in D}$, suppose that $t'$ is the first timestep where $\ite x{t'} = d'$.
By \cref{lemma:piecewiseconstant}, $\ite it = \min \bset{K, x_{t-1} - x^* + C_E}$ for all $t < t'$.
Thus, $J_{d}(\pi^*) = \hat v + \hat \gamma \cdot J_{d'}(\pi^*)$ where $\hat v, \hat \gamma, \text{Traj} \leftarrow \mathrm{GetPayoff}(d, d', f)$. 
Since $d' \leq d$ and $\ite x{t'} \geq \ite x{t'+1}$, we have by inductive hypothesis that $J_{d'}(\pi^*) = V_{d'}^*$.
Thus, \cref{line:evaluate_d'} computes the payoff of $\pi^*$.

In any of the three possible cases, one of \cref{line:below_D}, \cref{line:evaluate_d'} or \cref{line:stay} must have computed $J_{d'}(\pi^*)$.
Moreover, they will do so before computing the payoffs of any other optimal policies.
Noting that \cref{line:below_D}, \cref{line:evaluate_d'} and \cref{line:stay} only ever compute the payoffs of valid policies, the optimality of $\pi^*$ implies that $\text{Trajs}[d]$ describes the action trajectory unrolled by $\pi^*$.
\end{proof}
\fi

\begin{restatable}{lemma}{algorithmright}
    \label{fact:algorithmright}
For $d=0$ and every discontinuity $d \geq  0$ of $f$, if $\pi^*(d) \leq C_E$, then $\text{Trajs}[d]$ describes the trajectory of the optimal policy $\pi^*$ of \cref{lemma:piecewiseconstant} starting at user state $d$.
\end{restatable}

\if\litearxiv1
\begin{proof}
Let $(\ite x1, \ite i1), (\ite x2, \ite i2), \dots$ denote the trajectory of $\pi^*$ starting at $d$.
We proceed inductively.
First consider the base case where $d = \max D$.
By \cref{lemma:piecewiseconstant}, $\ite xt$ must be monotonically non-increasing and, if $\ite it = C_E$, then $\ite xt \in D$.
Thus, $\ite it = C_E$ for all $t \geq 1$, the payoff of which is computed in \cref{line:pos_stay} of \cref{alg:linear}.

For the inductive step, we fix a $d \in D$ where $d > 0$.
By \cref{lemma:piecewiseconstant}, $\ite xt$ must be monotonically non-decreasing and, if $\ite it = C_E$, then $\ite xt \in D$.
Consider the set of discontinuities visited by the optimal policy $\pi^*$: $\bset{\ite xt \mid t \geq 2, \ite xt \in D}$.
This set cannot be empty, as by \cref{lemma:piecewiseconstant}, some discontinuity must be reached by $\pi^*$ in finite time.
If this set consists only of $d$, then $\ite it = C_E$ for all $t \geq 1$ and \cref{line:pos_stay} computes the payoff of $\pi^*$.
If it does not, \cref{line:pos_stay} computes the payoff of a policy $\pi$ that does stay at $d$.
If the largest element of the set is $d' = \max \bset{\ite xt \mid t \geq 2, \ite xt \in D}$, suppose that $t'$ is the first timestep where $\ite x{t'} = d'$.
By \cref{lemma:piecewiseconstant}, letting $\delta = (d' - d) \mathrm{\;mod\;} (K + C_E)$ for all $t < t'$, we know that action $i_t = \delta$ if $\delta > 0$ and $t = 1$, while $i_t = -K$ otherwise.
Thus, $J_{d}(\pi^*) = \hat v + \hat \gamma J_{d'}(\pi^*)$ where $\hat v, \hat \gamma, \text{Traj} \leftarrow \mathrm{GetPayoff}(d, d', f)$. 
Since $d' \geq d$ and $\ite x{t'} \geq \ite x{t'+1}$, we have by inductive hypothesis that $J_{d'}(\pi^*) = V[d']$.
Thus, \cref{line:pos_d'} computes the payoff of $\pi^*$.
In either of these cases, either \cref{line:pos_stay} or \cref{line:pos_d'} must have computed $J_{d'}(\pi^*)$.
Moreover, they will do so before computing the payoffs of any other optimal policies.
Noting that \cref{line:pos_stay} and \cref{line:pos_d'} only ever compute the payoffs of valid policies, the optimality of $\pi^*$ implies that $\text{Trajs}[d]$ describes the action trajectory unrolled by $\pi^*$.
\end{proof}
\fi

Consequently, whether $\pi^*$ initially increases or decreases the user state, \cref{alg:linear} ensures that $V[0] = J(\pi^*)$ and $\text{Traj}[0]$ is the trajectory unrolled by $\pi^*$.
This proves the correctness of \cref{alg:linear}.

To analyze runtime, note that each call to \cref{alg:payoff} takes $O(1)$ time per unit distance $\abs{x^* - x} / (K - C_E)$.
Since \cref{alg:linear} only calls \cref{alg:payoff} for transitions between discontinuities--and treating the gap between discontinuities as a constant---each call runs in $O(1)$ time.
The outer loops of \cref{alg:linear} iterate over at most $k+1$ discontinuities, while the inner loop iterates over at most $k$ discontinuities.
Therefore, the total runtime is $O(k^2)$.
\end{proof}

\subsection{Online Learning of Content Selection Policies}
We now consider an online learning setting in which an app interacts with a sequence of users one at a time and must choose a content selection policy for each.
Upon encountering a new user---whose demand function is unknown to the app and may be adversarially selected---the app commits to a content selection policy for the user.
The user's interactions with the app then unfold over a long time horizon of $T \to \infty$ timesteps.
In this section, we explore a more challenging setting than the original model (where the app receives feedback at every timestep): here, the app obtains only a single terminal feedback signal---the final episodic revenue---at the end of each user session. This limited feedback framework complicates learning, as the app must adjust its policy based solely on the overall outcome of an entire user interaction without access to incremental, per-timestep observations. Moreover, we impose no restrictions on the complexity of the user demand functions beyond assuming that they eventually stabilize, meaning that for each user, their demand function converges to a constant value outside a bounded domain $[-m, m]$. In other words, for $x \notin [-m ,m]$, the derivative of the demand function is zero.

Although we do not assume the demand functions are piecewise-constant, we leverage \cref{lemma:piecewiseconstant} and \cref{lemma:trajectory}, which show that under such assumptions, optimal content selection policies exhibit a structured form, significantly reducing their complexity. While these lemmas specifically apply to piecewise-constant demand functions, we use them to construct approximate coverings over the space of optimal policies, facilitating the direct application of no-regret learning algorithms.

\paragraph{Three content choices.}
We first examine a setting where an app selects from three distinct content options: one that increases user engagement, one that decreases user engagement, and one that has no effect.
In this scenario, \cref{lemma:piecewiseconstant} constrains the space of possible optimal policies to a small, finite set. This limited policy space allows for the application of the Exp3-IX \cite{neu15} bandit algorithm, which guarantees a sublinear regret bound. \if\litearxiv0 The proof is in \cref{subsection:trionlinelearningproof}.\fi
\begin{restatable}{theorem}{trionlinelearning}
    \label{theorem:trionlinelearning}
    Consider a linear content selection problem in an online learning setting where users arrive sequentially with unknown demand functions $f_1, \dots, f_N$ and the platform must fix a policy $\pi_j$ for each user $j \in [N]$ while only observing the realized episodic revenues $\hat J_{f_1}(\pi_1), \dots, \hat J_{f_{j-1}}(\pi_{j-1})$ of prior users.
    Suppose that $f_1, \dots, f_N$ take constant values outside the domain $[-m, m]$, the episodic revenue from each user is bounded by $U$,
    and that the available content options are limited to $\cI = \bset{2C_E-K, C_E, K}$.
    There is an online learning algorithm for choosing $\pi_1, \dots, \pi_N$ such that, with probability at least $1 - \delta$,
    \begin{align*}
        \sum_{j=1}^N \hat J_{f_j}(\pi_j)
        \geq \max_{\pi^* \in \Pi} \sum_{j=1}^N J_{f_j}(\pi^*) - O\bigg(U \cdot \sqrt{\tfrac{Nm }{K} \log\tfrac{m }{K\delta}}\bigg).
    \end{align*}
    Here, $\Pi$ denotes the set of simple policies.
\end{restatable}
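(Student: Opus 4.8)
The plan is to reduce the online learning problem to adversarial multi-armed bandits over a small, finite set of candidate policies, and then invoke the regret guarantee of Exp3-IX. First I would use Lemma~\ref{lemma:piecewiseconstant} and Lemma~\ref{lemma:trajectory} to argue that, when the content set is $\cI = \bset{2C_E-K, C_E, K}$ (i.e.\ the ``increase'' action $2C_E-K$ which shifts state by $+(K-C_E)$, the ``neutral'' action $C_E$ which keeps state fixed, and the ``exploit'' action $K$ which shifts state by $-(K-C_E)$), every optimal simple policy for every demand function $f_j$ is completely described by a \emph{target state} $x_\infty$ together with a monotone trajectory reaching it. Since the demand functions are constant outside $[-m,m]$ and state moves in increments of $K-C_E$, the only candidate limiting states that are discontinuities lie on the grid $\bset{i(K-C_E) : i \in \integers, |i| \le \lceil m/(K-C_E)\rceil}$, plus the degenerate case $x_\infty = -\infty$ (pure exploitation). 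By Lemma~\ref{lemma:piecewiseconstant}, once a candidate target $x_\infty$ on this grid is fixed, the optimal way to reach it is forced: monotonically take the ``increase'' (resp.\ ``exploit'') action until the grid point $x_\infty$ is hit, then play ``neutral'' forever. Hence there is a finite candidate set $\Pi_0$ of size $O(m/(K-C_E)) = O(m/K)$ (treating $K-C_E$ as $\Theta(K)$) that contains, for every possible $f_j$, an optimal policy: $\max_{\pi \in \Pi_0} J_{f_j}(\pi) = \max_{\pi^* \in \Pi} J_{f_j}(\pi^*)$.

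Next I would set up the adversarial bandit instance: arms are the $|\Pi_0| = O(m/K)$ candidate policies, the loss of arm $\pi$ on round $j$ is $U - \hat J_{f_j}(\pi)$ (a bounded, nonnegative random variable in $[0, 2U]$ or so, given the $U$ bound on episodic revenue), and on each round the learner observes only the realized loss of the chosen arm—exactly the bandit feedback model. Running Exp3-IX over these arms, its high-probability regret bound~\cite{neu15} gives, with probability at least $1-\delta$,
\[
\sum_{j=1}^N \hat J_{f_j}(\pi_j) \;\ge\; \max_{\pi \in \Pi_0} \sum_{j=1}^N \hat J_{f_j}(\pi) \;-\; O\!\left(U\sqrt{N |\Pi_0| \log(|\Pi_0|/\delta)}\right).
\]
Substituting $|\Pi_0| = O(m/K)$ turns the error term into $O\big(U\sqrt{(Nm/K)\log(m/(K\delta))}\big)$, matching the claimed bound.

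The remaining gap is that Exp3-IX controls regret against $\sum_j \hat J_{f_j}(\pi)$ (realized revenues of a \emph{fixed} policy), whereas the theorem states regret against $\sum_j J_{f_j}(\pi^*)$ (expected revenues). I would close this with a concentration argument: for any fixed $\pi \in \Pi_0$, the realized episodic revenues $\hat J_{f_1}(\pi), \dots, \hat J_{f_N}(\pi)$ are independent, bounded by $U$, so by Hoeffding's inequality and a union bound over the $O(m/K)$ policies in $\Pi_0$, $\sum_j \hat J_{f_j}(\pi) \ge \sum_j J_{f_j}(\pi) - O(U\sqrt{N\log(|\Pi_0|/\delta)})$ simultaneously for all $\pi \in \Pi_0$ with probability $1-\delta$; in particular this holds for the policy attaining $\max_{\pi^*\in\Pi}\sum_j J_{f_j}(\pi^*)$, since that max is attained within $\Pi_0$. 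Combining this with the Exp3-IX bound via a union bound (halving $\delta$) yields the statement.

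I expect the main obstacle to be the first step: rigorously pinning down that $\Pi_0$ as constructed actually contains an optimal policy for \emph{every} admissible $f_j$, and that its size is $O(m/K)$. The subtlety is that Lemma~\ref{lemma:piecewiseconstant} is stated for demand functions of finite complexity, while here $f_j$ is only assumed to be eventually constant (constant outside $[-m,m]$), so one must first argue that such $f_j$ can be approximated by—or that the relevant structural conclusions of Lemmas~\ref{lemma:piecewiseconstant} and~\ref{lemma:trajectory} still apply to—an eventually-constant function, and that the candidate limiting states can be restricted to the $(K-C_E)$-grid intersected with $[-m,m]$ (plus $-\infty$). Careful handling of the ``reach $x_\infty$ within $k+n+1$ interactions'' clause, and of how the forced increment size $K-C_E$ interacts with the grid of discontinuities, is where the real work lies; everything downstream is a routine reduction to a known bandit guarantee plus Hoeffding.
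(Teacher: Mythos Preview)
Your proposal is essentially correct and follows the same approach as the paper: restrict to a finite candidate policy set of size $O(m/K)$ using the structural lemmas, then run Exp3-IX. The paper resolves the obstacle you flagged (the structural lemmas assume finite-complexity demand functions) precisely via the observation you already made: since every action moves the state by a multiple of $K-C_E$, the state is always on the grid $(K-C_E)\integers$, so one can replace each $f_j$ by its ``rounded'' piecewise-constant version $f_j'(x) = f_j(\floor{x/(K-C_E)}(K-C_E))$ without changing any payoff, and then apply the lemmas to the $f_j'$.

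Two small points worth tightening. First, the benchmark is $\max_{\pi^*}\sum_j J_{f_j}(\pi^*)$, the best \emph{fixed} policy for the sum; showing that $\Pi_0$ contains an optimal policy for each $f_j$ individually is neither sufficient nor necessary. The paper instead invokes a multi-user version of \cref{lemma:piecewiseconstant} (applied to the aggregate objective $\sum_j J_{f'_j}$) to conclude that the sum-optimal policy itself lies in $\Pi_0$; your $\Pi_0$ is the right set, but the argument must be run for the sum. Second, rather than Exp3-IX followed by a separate Hoeffding step on counterfactual arms (which is delicate if the $f_j$ are adaptively chosen), the paper applies a stochastic-approximation variant of Exp3-IX that directly compares realized losses on the played arm against \emph{expected} losses on the best arm, yielding the stated bound in one shot.
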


The dependence on $U$ in \cref{theorem:trionlinelearning} arises from the need to control the noise introduced by only observing each user's episodic revenue.
A similar bound can be obtained by substituting $U$ with a term that depends on the variance of each user's episodic revenue.
{
We also note that the same upper bound holds if we were to replace the realized payoffs $\sum_{j=1}^N \hat J_{f_j}(\pi_j)$ with the in-expectation payoffs $\sum_{j=1}^N J_{f_j}(\pi_j)$.
}

\if\litearxiv1
\begin{proof}
For each $f_j$, we construct its \emph{rounded counterpart} $f'_j$ defined as follows:
\begin{equation}
f'_j(x)
\asseq
 f_j(\floor{x / (K - C_E)} (K - C_E))%
\label{eq:round}\end{equation}
This rounding process transforms each demand function into a piecewise constant function.
Since each demand function $f_j$ is a constant function on the domain $(-\infty, -m) \cup (m, \infty)$, %
we know that the discontinuities of its rounded counterpart $f'_j$ lie in the set $[-m, m]$.
Moreover, by construction, $f'_j(x) \leq f_j(x)$ for all $x \in \reals$, meaning that the rounded demand functions correspond to a user who is less likely to engage and, therefore, provides less value to the app.

Importantly, we can verify that the optimal payoff is not affected by this rounding. To see why, observe that $(C_E - i) \mod (K - C_E) = 0$ for all $i \in \{2C_E - K, C_E, K\}$.
As a result, the user state always lies on a multiple of $K - C_E$: $\ite x t \mod (K - C_E) = 0$.
Rounding thus does not impact the optimal payoff:
\begin{equation*}
    \max_{\pi^* \in \Pi} \sum_{j=1}^N J_{f_j}(\pi^*) = \max_{\pi^* \in \Pi} \sum_{j=1}^N J_{f_j'}(\pi^*).
\end{equation*}

The key advantage of working with rounded demand functions is that they are piecewise constant with strategically spaced discontinuities, as summarized by the following lemma:
\begin{restatable}{lemma}{otwo}
\label{lemma:o2}
   Given a set of rounded demand functions $f_1, \dots, f_N$, define the set of simple policies $\Pi' = \bset{\pi_{i, v} \mid v \in \pm 1, i \in [0, \dots, 2 m / K] \cup \bset{-\infty}}$ where each policy $\pi_{i,v}$ is defined by setting $\pi_{i, v}(i \cdot K - m) = C_E$ and $\pi_{i, v}(x) = C_E + v (K - C_E)$ for all $x \neq i \cdot K - m$.
   There is an optimal policy in this set, i.e. \[
\max_{\pi^* \in \Pi} \sum_{j=1}^N J_{f_j'}(\pi^*)
= \max_{\pi^* \in \Pi'} \sum_{j=1}^N J_{f_j'}(\pi^*),
\]
\end{restatable}

Since our reduced policy space $\Pi'$ from \cref{lemma:o2} is small with $\setsize{\Pi'} \in O(\frac m K)$, we can directly apply a bandit learning algorithm to select among the policies in $\Pi'$.
Moreover, since each user's episodic feedback is bounded by $U$, we can apply a folklore stochastic approximation argument~\cite{nemirovski2009robust} to the high-probability regret bound of Exp3-IX \cite{neu15}, as summarized by the following lemma.

\begin{lemma}
\label{lemma:exp3}
Consider an agent who repeatedly uses the (random) Exp3-IX algorithm \cite{neu15} to select an action $i_t$ from a finite set $\cA$.
An adversary, who observes $i_1, \dots, i_t$, chooses a loss $\loss_t \in [0, 1]^{\cA}$. However, the agent receives only a noisy, unbiased estimate $\hat \loss_t \in [0, 1]^{\cA}$, satisfying $\mathbb{E}[\hat \loss_t] = \loss_t$.
The agent then chooses its next action $i_{t+1}$ according to the Exp3-IX algorithm, which is computed from the history $\bsetf{(i_\tau, \hat \loss_\tau(i_\tau))}_{\tau \in [t]})$.
Then, with probability at least $1 - \delta$ in the randomness of Exp3-IX and the observed losses $\hat \loss$, the cumulative loss incurred by the agent satisfies
\[
\sum_{t=1}^T \hat \loss_t(i_t) \geq \max_{i^* \in \cA} \sum_{t=1}^T \loss_t(i^*) - O(\sqrt{T \setsize{A} \log( \setsize{A} /\delta)}).
\]
\end{lemma}
Applying Exp3-IX results in the regret bound
\begin{align*}
\sum_{j=1}^N \hat J_{f_j}(\pi_j)
&\geq \max_{\pi^* \in \Pi'} \sum_{j=1}^N J_{f_j}(\pi^*) - O(U \cdot \sqrt{({Nm }/K)  \log(m /K\delta)}) \\
&\geq \max_{\pi^* \in \Pi'} \sum_{j=1}^N J_{f_j'}(\pi^*) - O(U \cdot \sqrt{({Nm }/K) \log(m /K\delta)}) \\
&= \max_{\pi^* \in \Pi} \sum_{j=1}^N J_{f_j}(\pi^*) - O(U \cdot \sqrt{({Nm }/K) \log(m /K\delta)}).
\end{align*}
\end{proof}
\else
\begin{proof}[Proof sketch of \cref{theorem:trionlinelearning}]
For each user $j$, we construct a ``rounded'', piecewise-constant demand function $f_j'$ with discontinuities in $[-m,m]$, where
\begin{equation*}
f'_j(x)
\asseq
 f_j(\floor{x / (K - C_E)} (K - C_E))  %
\end{equation*}
This rounding process transforms each demand function into a piecewise constant function.
Since each demand function $f_j$ is a constant function on the domain $(-\infty, -m) \cup (m, \infty)$, we know that the discontinuities of its rounded counterpart $f'_j$ lie in the set $[-m, m]$.
Moreover, by construction, $f'_j(x) \leq f_j(x)$ for all $x \in \reals$, meaning that the rounded demand functions correspond to users who are less likely to engage and, therefore, provide less value to the app.

We also note that, for any sequence of actions $\ite i 1, \ite i2, \dots$, the user state is always a factor of $K- C_E$ away from origin; this is because $i = C_E$ keeps user state constant, $i = K$ decreases user state by $K - C_E$, and $i = 2C_E - K$ increases user state by $K - C_E$.
This means that the optimal payoff the app can attain on the rounded demand functions is the same as the optimal payoff on the original demand functions:
\begin{equation*}
    \max_{\pi^* \in \Pi} \sum_{j=1}^N J_{f_j}(\pi^*) = \max_{\pi^* \in \Pi} \sum_{j=1}^N J_{f_j'}(\pi^*).
\end{equation*}

Therefore, we can restrict our attention to a small set of simple policies $\Pi'$ of size $O(m/k)$, which we prove contains an optimal policy. With this reduced policy space, we apply Exp3-IX \cite{neu15} to select a policy for each user based solely on their episodic revenue, which is bounded by $U$. A standard stochastic approximation argument then yields the final regret bound.
\end{proof}
\fi

\paragraph{Spectrum of content choices.}
We next study online learning when the app can select from a spectrum of content choices.
Given the expanded decision space, we construct an \emph{approximate} covering of the space of optimal policies, which introduces an approximation factor of $\tfrac 12$.
\if\litearxiv0
The full proof is in \cref{subsection:onlinelearningproof}.
\fi

\begin{restatable}{theorem}{onlinelearning}
    \label{theorem:onlinelearning}
    Consider a linear content selection problem with content $\cI = [2C_E - K, K]$ in an online learning setting where users arrive sequentially with unknown demand functions $f_1, \dots, f_N$ and the platform must fix a policy $\pi_j$ for each user $j \in [N]$ while only observing the realized episodic revenues $\hat J_{f_1}(\pi_1), \dots, \hat J_{f_{j-1}}(\pi_{j-1})$ of prior users. 
    Suppose that $f_1, \dots, f_N$ take constant values outside the domain $[-m, m]$ and the episodic revenue from each user is bounded by $U$.
    There is an online learning algorithm for choosing $\pi_1, \dots, \pi_N$ such that,  with probability at least $1 - \delta$,
    \begin{align*}
        \sum_{j=1}^N \hat J_{f_j}(\pi_j)
        \geq  \max\left\{\tfrac {C_R + K}{C_R - C_E + 2K}, \tfrac {1}{2}  \right\} \cdot \max_{\pi^* \in \Pi} \sum_{j=1}^N J_{f_j}(\pi^*) - O\bigg(U \cdot \sqrt{\tfrac{Nm }{K} \log\tfrac {m }{K\delta}}\bigg).
    \end{align*}
\end{restatable}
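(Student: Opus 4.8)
The plan is to replace the continuum of content choices by a finite set $\hat\Pi$ of $O(m/K)$ ``canonical'' policies that approximately covers every optimal policy up to the multiplicative factor $\alpha\asseq\max\bigl\{\tfrac{C_R+K}{C_R-C_E+2K},\tfrac12\bigr\}$, then run a bandit algorithm over $\hat\Pi$ while correcting for the noisy terminal feedback. For each user $j$ define the rounded demand function $f_j'(x)\asseq f_j\bigl(\lfloor x/(K-C_E)\rfloor(K-C_E)\bigr)$: it is non-decreasing, piecewise constant, pointwise at most $f_j$ (by monotonicity of $f_j$), and all its discontinuities lie on the grid $G\asseq(K-C_E)\integers\cap[-m,m]$, so $f_j'$ has complexity $O(m/K)$. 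Applying \cref{lemma:piecewiseconstant} and \cref{lemma:trajectory} to $f_j'$, an optimal policy for $f_j'$ either plays $i=K$ forever (driving the state to $-\infty$) or moves monotonically toward some grid point $\hat x\in G$ using only the extreme actions $i\in\{2C_E-K,K\}$ and then plays $i=C_E$ forever to stay at $\hat x$; since the origin and every target lie on $G$, no fractional ``adjustment'' step is ever needed. Hence $\hat\Pi\asseq\{\pi_{\hat x}:\hat x\in G\}\cup\{\pi_{-\infty}\}$, where $\pi_{\hat x}$ is the ``move to $\hat x$ and stay'' policy and $\pi_{-\infty}$ always plays $i=K$, has size $O(m/K)$ and contains an optimal policy for every rounded demand function.

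\textbf{The approximation guarantee.}
I would next show that for every simple policy $\pi$ there is a $\hat\pi\in\hat\Pi$, depending only on $\pi$ and \emph{not} on the demand function, with $J_f(\hat\pi)\ge\alpha\,J_f(\pi)$ for all $f$ simultaneously; applying this to the best fixed comparator and summing over users then gives $\max_{\hat\pi\in\hat\Pi}\sum_j J_{f_j}(\hat\pi)\ge\alpha\max_{\pi^*\in\Pi}\sum_j J_{f_j}(\pi^*)$. Uniformity over $f$ is what makes a single fixed comparator across heterogeneous users tractable: in the linear setting a simple policy induces a deterministic state-and-revenue trajectory that is independent of $f$ (only the discount factors $\gamma\tilde f(\cdot)$ from \cref{theorem:cputime} depend on $f$), so the comparison reduces to comparing two trajectories. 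Using the exchange argument behind \cref{lemma:trajectory} I may take $\pi$'s trajectory to be monotone and convergent. If it descends to $-\infty$, then $\pi_{-\infty}$ is at least as good for every $f$ (it collects the maximal per-step revenue $C_R+K$, and \cref{lemma:trajectory} shows the best policy whose trajectory diverges to $-\infty$ is $\pi_{-\infty}$ itself). If it stabilizes at $x_\infty\ge0$ (the case $x_\infty<0$ being symmetric), I round $x_\infty$ up to the grid point $\hat x\in G$: this $\hat x$ is reached in the same number of steps as $x_\infty$, and its stationary value $\tfrac{C_R+C_E}{1-\gamma\tilde f(\hat x)}$ is at least that at $x_\infty$ since $\tilde f$ is non-decreasing, so the only deficit of $\pi_{\hat x}$ is the single extra maximal climbing step on which it forgoes at most $K-C_E$ of (discounted) revenue. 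Charging this loss against the revenue $\pi$ could itself have secured by abandoning its plan and descending instead --- at least $C_R+K$ times the discount already accumulated --- produces the factor $\tfrac{C_R+K}{C_R-C_E+2K}$, and the $\tfrac12$ branch of $\alpha$ absorbs the parameter regime where this charge is loosest.

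\textbf{No-regret learning under episodic feedback.}
Finally I run Exp3-IX~\cite{neu15} over the finite arm set $\hat\Pi$, using the rescaled negative episodic revenue as the loss. Because only a noisy unbiased estimate of $J_{f_j}(\pi_j)$ is revealed, I apply the standard stochastic-approximation version of the high-probability regret bound for Exp3-IX, which costs an extra factor of $U$ and gives, with probability at least $1-\delta$, $\sum_j\hat J_{f_j}(\pi_j)\ge\max_{\hat\pi\in\hat\Pi}\sum_j J_{f_j}(\hat\pi)-O\bigl(U\sqrt{|\hat\Pi|\,N\log(|\hat\Pi|/\delta)}\bigr)$; since $|\hat\Pi|=O(m/K)$, this error term is $O\bigl(U\sqrt{(Nm/K)\log\tfrac{m}{K\delta}}\bigr)$. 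Chaining this with the inequality of the previous step yields the theorem.

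\textbf{Main obstacle.}
The crux is the approximation step: quantifying exactly how much is lost when the comparator's stationary state falls strictly between grid points and the true demand function need not be piecewise constant. The difficulty is that the climbing phase of a covering policy can carry \emph{negative} per-step revenue (when $C_R+2C_E<K$), so one cannot simply invoke monotone domination; the trade-off between the discounted cost of over-climbing to the next grid point and the value of the always-available descending fallback must be bookkept carefully to land on the constant $\tfrac{C_R+K}{C_R-C_E+2K}$ rather than something weaker. A secondary point to handle cleanly is that the chosen covering policy must perform well for all $f_j$ at once, which is precisely what the $f$-independence of trajectories in the linear setting supplies.
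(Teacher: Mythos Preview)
Your overall plan and covering $\hat\Pi$ match the paper's, and the bandit step is identical. The gap is in the approximation paragraph. You write that, ``using the exchange argument behind \cref{lemma:trajectory}, I may take $\pi$'s trajectory to be monotone and convergent'' and then round $x_\infty$ to a grid point. But the improvement step in \cref{lemma:trajectory} crucially uses that $f$ is locally constant at the intermediate states (it picks $\delta>0$ with $f(x_\tau-\delta)=f(x_\tau)$); for general $f_j$ that are merely constant outside $[-m,m]$ this fails, so you cannot conclude that the best fixed comparator $\pi^*$ for $\sum_j J_{f_j}$ has a monotone trajectory, nor that it reaches its limit $x_\infty$ in finitely many steps. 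The uniform-in-$f$ claim ``$J_f(\hat\pi)\ge\alpha J_f(\pi)$ for all $f$'' therefore is not established by your sketch, and your ``same number of steps / single extra climbing step'' accounting presupposes exactly the structure that is missing.

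The paper sidesteps this by routing the comparison through the rounded $f'_j$ you already defined, rather than trying to monotonize $\pi^*$. Concretely, it takes \emph{any} simple $\pi^*$ (no structural assumptions), builds the non-stationary policy that first plays $i=2C_E-K$ once and then replays $\pi^*$'s action sequence verbatim, and observes that because the rounding satisfies $f'_j\bigl(x+(K-C_E)\bigr)\ge f_j(x)$, this shifted replay evaluated on $f'_j$ earns at least $J_{f_j}(\pi^*)-(K-C_E)$ (\cref{lemma:o4}). Charging the $K-C_E$ loss against the lower bound $\max_\Pi\sum_j J_{f'_j}\ge N(C_R+K)$ coming from $\pi_{-\infty}$ then gives $\max_\Pi\sum_j J_{f_j}\le\bigl(1+\tfrac{K-C_E}{C_R+K}\bigr)\max_\Pi\sum_j J_{f'_j}$ (\cref{lemma:o1}); combining with your first paragraph (which is the paper's \cref{lemma:o2}) and $f'_j\le f_j$ finishes. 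So your intuition for where the constant $\tfrac{C_R+K}{C_R-C_E+2K}$ comes from is correct, but the place to apply the ``charge $K-C_E$ against $C_R+K$'' bookkeeping is in comparing OPT on $f_j$ to OPT on $f'_j$, not in comparing $\pi_{\hat x}$ to a presumed-monotone $\pi^*$ on the original $f_j$.
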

\if\litearxiv1

\begin{proof}
For each $f_j$, we construct its rounded counterpart $f'_j$ as in Equation~\eqref{eq:round}.
We can verify that the optimal payoff is not significantly affected by this rounding. 
\begin{restatable}{lemma}{oone}
\label{lemma:o1}
   The optimal payoff that can be realized with the rounded demand functions $f'_1, \dots, f'_N$ is at least a constant fraction of that of the original demand functions $f_1, \dots, f_N$:
   \[\max_{\pi^* \in \Pi} \sum_{j=1}^N J_{f_j}(\pi^*) \leq  \left(1 + \tfrac{C_E + K}{C_R + K}\right) \max_{\pi^* \in \Pi} \sum_{j=1}^N J_{f_j'}(\pi^*).\]
\end{restatable}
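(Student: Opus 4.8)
The plan is to reduce the claim to a statement about the \emph{original} demand functions and then prove it by rounding an optimal policy. Write $g\asseq K-C_E$ for the grid spacing of the rounding $f'_j(x)=f_j\!\bigl(\floor{x/(K-C_E)}\,(K-C_E)\bigr)$. Two facts drive the reduction: (i) $f'_j$ agrees with $f_j$ at every grid point $kg$, so by \cref{theorem:cputime} any \emph{grid policy}---a simple policy whose per-interaction state sequence $x_1,x_2,\dots$ stays in $g\integers$---has $J_{f_j}(\pi)=J_{f'_j}(\pi)$, since the objective depends on the demand function only through $\tilde f$ along this sequence; and (ii) a grid policy is realizable with $\cI=[2C_E-K,K]$, because shifting the state by any element of $\{-g,0,g\}$ uses content in $\{K,C_E,2C_E-K\}\subseteq\cI$ and $0\in g\integers$. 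Hence it suffices to show
\[
\max_{\pi\in\Pi}\sum_j J_{f_j}(\pi)\;\le\;\Bigl(1+\tfrac{C_E+K}{C_R+K}\Bigr)\max_{\pi\text{ a grid policy}}\sum_j J_{f_j}(\pi),
\]
since grid policies lie in $\Pi$ and have the same value against $f'_j$ as against $f_j$.

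For this inequality, fix $\pi^*$ attaining $\max_{\pi\in\Pi}\sum_j J_{f_j}(\pi)$; in the deterministic linear setting its state--action trajectory $(x_t,i_t)_t$ is a single sequence (the same for all $j$), and I may assume this trajectory is monotone (the exchange arguments behind \cref{lemma:trajectory} and \cref{lemma:piecewiseconstant} go through termwise in $j$, all $f_j$ being monotone). Define the grid policy $\pi''$ with state $\bar x_t\asseq\ceil{x_t/g}\,g$ at interaction $t$: consecutive $x_t$ differ by at most $g$, so consecutive $\bar x_t$ differ by an element of $\{-g,0,g\}$ and $\pi''$ is realizable with $\bar x_1=0=x_1$ (deleting redundant ``stay'' steps, or invoking the structure of optimal policies for the piecewise-constant $f'_j$ from \cref{lemma:piecewiseconstant}, makes $\pi''$ a genuine simple policy). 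Two monotonicity facts follow. Since $\bar x_t\ge x_t$ and $\tilde f_j$ is non-decreasing---recall $\gamma\tilde f(x)=\E[\gamma^{w}]$ for a waiting time $w$ that stochastically shrinks as $f(x)$ grows---the cumulative discounts $\bar D^{(j)}_t\asseq\prod_{\tau=2}^t\gamma\tilde f'_j(\bar x_\tau)$ of $\pi''$ termwise dominate the cumulative discounts $D^{(j)}_t\asseq\prod_{\tau=2}^t\gamma\tilde f_j(x_\tau)$ of $\pi^*$. And since the per-step revenue $C_R+i$ is affine in the state change, a telescoping identity shows that $\pi''$'s undiscounted partial revenue sums stay within $g$ below $\pi^*$'s at every horizon, the gap at horizon $T$ being exactly $\bar x_{T+1}-x_{T+1}\in[0,g)$.

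Decompose $J_{f'_j}(\pi'')-J_{f_j}(\pi^*)=\sum_t(\bar r_t-r_t)\bar D^{(j)}_t+\sum_t r_t(\bar D^{(j)}_t-D^{(j)}_t)$. A summation by parts on the first term---using the partial-sum bound, monotonicity of $\bar D^{(j)}_t$ in $t$, and $\bar D^{(j)}_t\to0$ (as $\gamma\tilde f<1$)---shows it is a loss of at most $g$; the second term is nonnegative whenever $\pi^*$ never earns negative revenue, i.e.\ when $C_R\ge K-2C_E$. So in that regime $J_{f'_j}(\pi'')\ge J_{f_j}(\pi^*)-g$ for every $j$. Summing over $j$, and using that the all-$K$ policy is a grid policy (hence $\max_{\pi\text{ grid}}\sum_j J_{f_j}(\pi)\ge N(C_R+K)$) together with $g=K-C_E\le C_E+K$---so that $Ng\le\tfrac{C_E+K}{C_R+K}\max_{\pi\text{ grid}}\sum_j J_{f_j}(\pi)$---yields
\[
\max_{\pi\in\Pi}\sum_j J_{f_j}(\pi)\;\le\;\max_{\pi\text{ grid}}\sum_j J_{f_j}(\pi)+Ng\;\le\;\Bigl(1+\tfrac{C_E+K}{C_R+K}\Bigr)\max_{\pi\text{ grid}}\sum_j J_{f_j}(\pi).
\]

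The hard part is the complementary regime $C_R<K-2C_E$, where $\pi^*$ may use loss-leader (negative-revenue) content to build engagement: then the discount-gain term $\sum_t r_t(\bar D^{(j)}_t-D^{(j)}_t)$ can be negative, since $\pi''$'s inflated discounts amplify $\pi^*$'s negative rewards. The plan is to invoke \cref{lemma:trajectory} to confine the loss-leader interactions to a single initial monotone climb toward the first discontinuity $\pi^*$ reaches---along a grid-aligned maximal-increase climb the round-up is the identity, so the only genuinely perturbed step is the lone fractional ``approach'' step, which sits at cumulative discount $1$ and is already absorbed into the (bounded) revenue-perturbation term---and then to bound the residual discounted mass of the amplified negatives by at most a $\tfrac{C_E+K}{C_R+K}$ fraction of $\max_{\pi\in\Pi}\sum_j J_{f'_j}(\pi)$, charging it (as above) against $\max_{\pi\in\Pi}\sum_j J_{f'_j}(\pi)\ge N(C_R+K)$. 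Carrying out this bookkeeping uniformly over adversarial $(f_1,\dots,f_N)$ is the crux, and it is what replaces the clean constant $K-C_E$ by $C_E+K$ in the denominator of the lemma.
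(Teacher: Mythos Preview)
Your approach is genuinely different from the paper's, and the easy case ($C_R\ge K-2C_E$, so all revenues are nonnegative) is essentially correct: the ceiling-to-grid construction, the discount domination $\bar D^{(j)}_t\ge D^{(j)}_t$, and the Abel-summation bound on the revenue perturbation are all sound, and together they already give the stronger constant $1+\tfrac{K-C_E}{C_R+K}$.

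The paper does not split into cases. Instead of rounding the state trajectory up, it proves an auxiliary lemma (\cref{lemma:o4}) that \emph{shifts} the trajectory up by a fixed amount: from any optimal $\pi^*$ for the unrounded $f_j$'s one builds a policy $\pi'$ that front-loads at most $K-C_E$ of extra investment in user state and thereafter replays $\pi^*$'s actions verbatim. Because $f'_j(y)=f_j(\lfloor y/g\rfloor g)\ge f_j(y-g)$, the shifted trajectory satisfies $f'_j(\tilde x_t)\ge f_j(x_t)$ along its whole length, so $\pi'$'s discounts on the rounded functions dominate $\pi^*$'s on the originals, and the only loss is the one-time investment---bounded by $K-C_E$ regardless of how many loss-leader steps $\pi^*$ takes. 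This is exactly what your decomposition struggles with: your round-up inflates discounts multiplicatively along the whole trajectory, so negative $r_t$'s can be amplified at every step, whereas the paper's shift pays a single additive cost up front and then matches $\pi^*$ step-for-step.

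Your hard-case plan has a real gap. You invoke the structural lemmas (\cref{lemma:trajectory}, \cref{lemma:piecewiseconstant}) to force the optimal aggregate trajectory to be a grid-aligned maximal climb with a single fractional approach step; but those lemmas are proved only for piecewise-constant demand functions, via exchange arguments that exploit $f$ being locally constant, and you are applying them to the \emph{unrounded} $f_j$'s, which are arbitrary monotone functions. Even if one grants monotonicity of the trajectory, in the multi-user aggregate the climb can pass through discontinuities of several different $f_j$'s, each potentially introducing its own fractional step, so ``the only genuinely perturbed step is the lone fractional approach step'' is too optimistic. The paper's shift trick sidesteps all of this: it requires nothing of $\pi^*$ beyond simplicity.
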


\cref{lemma:o1} and \cref{lemma:o2} give
\begin{align*}
\max_{\pi^* \in \Pi} \sum_{j=1}^N J_{f_j}(\pi^*)
\leq \left(1 + \tfrac{C_E + K}{C_R + K}\right) \max_{\pi^* \in \Pi'} \sum_{j=1}^N J_{f_j'}(\pi^*) \leq \left(1 + \tfrac{C_E + K}{C_R + K}\right) \max_{\pi^* \in \Pi'} \sum_{j=1}^N J_{f_j}(\pi^*),
\end{align*}
where the second inequality follows from the fact that $f_j'(x) \leq f_j(x)$ for all $x \in \reals$.

As in our proof of \cref{theorem:trionlinelearning}, we can again apply a bandit learning algorithm to choose from $\Pi'$.
We can apply a standard stochastic approximation argument~\cite{nemirovski2009robust} to the high-probability regret bound of Exp3-IX \cite{neu15} on the reduced policy space $\Pi'$ to get:
\begin{align*}
\sum_{j=1}^N \hat J_{f_j}(\pi_j)
&\geq \max_{\pi^* \in \Pi'} \sum_{j=1}^N J_{f_j}(\pi^*) - O\left(U \cdot \sqrt{\tfrac{Nm \gamma}{K}  \log\tfrac {m }{K\delta}}\right) \\
&\geq \max_{\pi^* \in \Pi'} \sum_{j=1}^N J_{f_j'}(\pi^*) - O\left(U \cdot \sqrt{\tfrac{Nm \gamma}{K}  \log\tfrac {m }{K\delta}}\right) \\
&\geq \tfrac {C_R + K}{C_R + C_E + 2K} \max_{\pi^* \in \Pi} \sum_{j=1}^N J_{f_j}(\pi^*) - O\left(U \cdot \sqrt{\tfrac{Nm \gamma}{K}  \log\tfrac {m }{K\delta}}\right).
\end{align*}
Finally, since $C_E \leq K$ and $C_R \geq 0$, $ \tfrac {C_R + K}{C_R - C_E + 2K} \geq \tfrac 12$.
\end{proof}
\else
\begin{proof}[Proof sketch]
    As in \cref{theorem:trionlinelearning}, we first construct rounded demand functions $f_j'$ for each $f_j$. However, in this case, restricting the policy space to $\Pi'$ introduces a multiplicative loss of $\tfrac {C_R + K}{C_R - C_E + 2K} \leq \tfrac{1}{2}.$ Consequently, when we apply Exp3-IX \cite{neu15} over the reduced policy space, our regret guarantee holds up to a factor of $\tfrac{1}{3}$ relative to the optimal cumulative reward.
\end{proof}
\fi

Therefore, even with limited feedback---observing only the episodic revenue from each user---an online learning algorithm can achieve near-optimal performance.

\section{Analyzing User Disengagement with Modified Demand Elasticity}
\label{section:primitives}
In this section, we build on \cref{theorem:cputime} to introduce \emph{modified demand elasticity}, a key primitive that encapsulates how a user's demand function, friction, and the discount factor affect an app's optimal content selection policy. We demonstrate that modified demand elasticity 
serves as a powerful tool for analyzing how fundamental model primitives ---such as friction---affect optimal decision-making. In particular, we leverage this concept to reveal a counterintuitive phenomenon: increased friction can lead to higher user engagement while simultaneously reducing revenue.
We also show that greater modified demand elasticity implies better alignment between an app creator and their user's satisfaction.

\subsection{Modified Demand Elasticity}
Demand elasticity is a fundamental concept in economics that measures how sensitive consumer demand is to a firm's decisions, such as pricing. It plays a central role in shaping firm behavior---when demand is highly responsive to changes in the firm's actions, firms are incentivized to take actions that boost demand.
In the context of repeated app-user interactions, the natural analogue of demand elasticity is the sensitivity of user demand to the content an app provides.
To capture this idea, we introduce \emph{modified demand elasticity}, defined as:
$$\frac{\partial}{\partial x} \log \tilde f(x) = \frac{\partial}{\partial x} \log \para{f(x) + \frac{1 - f(x)}{1 - \gamma (1 - (1-c) f(x))} (1-c) f(x) \gamma}.$$
This quantity measures how responsive the modified demand function $\tilde f$ (as defined in \cref{theorem:cputime}) is to changes the user state $x$, which reflects variations in the content selection.
Moreover, modified demand elasticity captures key information about both the user's demand function $f$ and the friction parameter $c$, making it a crucial determinant of an app creator's long-term utility. 
As \cref{theorem:cputime} demonstrates, the friction parameter $c$ influences the app's objective solely through the modified demand function $\tilde f(x)$. In other words, the sensitivity of the effective discount factor to the user state---captured by modified demand elasticity---fully characterizes how both the user demand function $f(x)$ and friction $c$ together shape the app creator's long-term utility.

\subsection{User Engagement is Not Monotone in Friction}
\label{section:implicit}

We now show that while an app's optimal payoff decreases as user friction increases (since higher friction generally reduces long-term rewards), the relationship between friction and user engagement under the optimal policy is more nuanced. Counterintuitively, increasing friction---which lowers the probability that users return---can, in some cases, lead to higher user engagement.

\paragraph{A comparative statics analysis.}
This paradoxical phenomenon arises from app creators' strategic incentives, which we can analyze using comparative statics.
Consider a simplified model where (1) every piece of content $i \in \cI \subseteq \reals$ has a deterministic effect on user experiences, (2) the demand function $f$ is strictly increasing, and (3) the optimal policy begins by showing some specific content $i = \bi$ in the first interaction and holds the user's state steady thereafter by consistently showing content $i'$ so that $x_2 = x_3 = \cdots$.
In this setting, the app's investment in user engagement is captured by its {initial} choice of content  $\bi$. If the optimal $\bi$ increases with friction $c$, then higher friction encourages the app to select content that promotes greater engagement.

We can formalize this intuition by formulating the content selection problem as a Markov decision process (\cref{fact:edpmdp}) where the user's state $\tsv{x}{t}$ is the process state. 
Accordingly, we define the Q-function $Q: \cI \times \reals \to\reals$ where $$Q(i, x) = \EE{R_i + \gamma \tilde f(x + E_i) \max_{i \in \cI} Q(i, x+ E_i)}$$ represents the optimal discounted payoff achievable by selecting content $i$ when the user is in state $x$.
Since the app follows an optimal policy, we have $\bi \in \argmax_i Q(i, 0)$, implying that $Q(i,x)$ attains a local maximum at $(\bi, 0)$, i.e., $\smash{\tfrac{\partial}{\partial i} Q(\bi, 0) = 0}$ and $\smash{\frac{\partial^2}{\partial i^2 } Q(\bi, 0)}<0$.
We assume that $Q$ is twice differentiable and that the revenue function $i \mapsto R_i$ is differentiable.
To make the dependence on the friction parameter $c$ explicit, we denote the Q-function as $Q_c$.

To examine how the optimal content choice $\bi$ evolves with friction $c$, we apply the implicit function theorem~\citep{topkis1998supermodularity} to analyze how the solution to the first-order condition $\smash{\tfrac{d}{di} Q_c(\bi, 0) = 0}$ changes as friction $c$ varies.
This theorem yields:
\begin{align*}
    \frac{\partial \bi}{\partial c} = - \left[\frac{\partial^2}{\partial c \partial i} Q_c(\bi, 0)\right]\left[\frac{\partial^2}{\partial i^2 } Q_c(\bi, 0)\right]^{-1}.
\end{align*}
Since the second derivative $\frac{\partial^2}{\partial i^2 } Q_c(\bi, 0)$ is negative by assumption, the sign of $\tfrac{\partial \bi}{\partial c}$ depends on the cross-partial derivative $\frac{\partial^2}{\partial c \partial i} Q_c(\bi, 0)$.
If the cross-partial derivative is positive, increasing friction $c$ incentivizes the app to select a higher value of $\bi$, reflecting greater investment in engagement.

To better understand this cross-partial derivative, we express the Q-function explicitly in terms of friction by leveraging \cref{theorem:cputime}, which implies that
\begin{align*}
    Q_c(\bi, 0) %
    &= R_i + \gamma \tilde f(E_\bi) \sum_{t=1}^\infty \left(\gamma \tilde f (E_\bi)\right)^{t-1} R_{i'} = R_{\bi} + R_{i'} \tfrac{\gamma \tilde f_c(E_{\bi})}{1-\gamma \tilde f_c(E_{\bi})}.
\end{align*}
Since $\tilde f(E_\bi)$ is a function of $c$ and $f(E_\bi)$, differentiating with respect to $c$ and $i$ yields
\begin{align*}
    \frac{\partial^2}{\partial c \partial i} Q_c(\bi, 0)
    = 
    \frac{\partial^2}{\partial c \partial i} \para{   R_{\bi} + R_{i'} \tfrac{\gamma \tilde f_c(E_{\bi})}{1-\gamma \tilde f_c(E_{\bi})}}
    =
    \gamma^2 R_{i'} \cdot \frac{\partial f(E_{\bi})}{\partial i} \cdot \underbrace{\frac{ (2- c \gamma) f(E_{\bi}) - 1}{(1-\gamma)(1 - f(E_{\bi})\gamma c)^3}}_{A}.
\end{align*}
Note that $\tfrac {\partial}{\partial c} R_i = 0$ since an arm's payoff is independent of friction $c$.
For the second term, we expanded the definition of $\smash{\tilde f(E_\bi)}$ in terms of $c$ and $f(E_\bi)$ (see \cref{theorem:cputime}) and computed the derivative.
Since all terms except $A$ are strictly positive, $A$ determines the sign of $\tfrac{\partial^2}{\partial c \partial i} Q_c(\bi, x)$.
In particular, if $(2 - c \gamma) f(E_i) \geq 1$ (for example if $c \gamma = 0.5$ and $f(E_i) \geq 2/3$), then $A$ is positive.
This occurs when the app's optimal policy already {ensures a relatively} high probability of engagement.
In such cases, a marginal increase in friction incentivizes the app to select content that boosts engagement.
Next, we present a concrete example where this effect arises.
\begin{example}\label{ex:first_example}
Suppose a user's satisfaction with an app can be categorized into three levels:
\begin{itemize}[itemsep=0pt, topsep=0pt, left=2pt]
    \item If their cumulative satisfaction is below a threshold $a \in \R$, they dislike the app and stop using it.
    \item If their cumulative satisfaction falls within an interval $[a,b)$, they moderately enjoy the app, continuing to use it with a probability of 60\%.
    \item If their cumulative satisfaction exceeds $b$, they are enthusiastic about the app and remain engaged with a 99\% probability.
\end{itemize}
\noindent
Now, suppose the app creator has a discount factor of $\gamma=0.9$ and selects content from a linear landscape parameterized by  $i \in [-b, b]$. In this setting, displaying content $i$ yields revenue $R_i = 1 + i$ and affects user experience according to $E_i = -i$.

A key observation in this example is that increasing the friction parameter $c$---which reduces the probability of users returning---can paradoxically result in a strictly better user experience. In other words, when facing higher friction, the app is incentivized to provide content that enhances user satisfaction more than it would under lower friction.
\begin{restatable}{proposition}{basiceng}
\label{prop:basic_1}
In \cref{ex:first_example}, for appropriate choices of $a, b$, user satisfaction is strictly lower under the optimal policy when friction is lower compared to when friction is higher.
Formally, for any $c' > c$, let $x^c_t$ and $x^{c'}_t$ denote the user states at time $t$ under the optimal policies for friction parameters $c$  and $c'$, respectively. Then, for all $t$, $x^c_t \leq x^{c'}_t$, with strict inequality for $t \geq 2$. 
\end{restatable}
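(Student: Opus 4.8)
The plan is to instantiate the comparative-statics framework developed above in the concrete three-level example and exhibit explicit breakpoints $a,b$ for which the optimal policy's behavior changes qualitatively between two friction levels $c < c'$. First I would use \cref{lemma:piecewiseconstant}: since $f$ has complexity $k=3$ with discontinuities at $a$ and $b$, the optimal user-state trajectory is monotone and stabilizes at one of $\{a, b, -\infty\}$ (from the neutral start $x_1 = 0$). So the only candidate long-run behaviors are: (i) drive the state down forever, always playing $i=K=b$ (pure exploitation, $x_\infty = -\infty$); (ii) settle at the lower discontinuity $x_\infty = a$; or (iii) settle at the upper discontinuity $x_\infty = b$. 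For each, \cref{theorem:cputime} lets me write the payoff in closed form as a geometric series in the variable discount rate $\gamma\tilde f(x_\infty)$, where $\tilde f$ depends on $c$ through the formula in \cref{theorem:cputime}; by \cref{lemma:piecewiseconstant} reaching the discontinuity takes at most $k+n+1$ interactions, so each candidate payoff is a finite expression plus a geometric tail, all elementary in $(a,b,c,\gamma)$.

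Next I would compare these three payoff expressions as functions of $c$. The key quantitative fact to extract is that $\tilde f(x)$ is \emph{decreasing} in $c$ (more friction lowers the effective continuation value at any fixed engaged state), but the \emph{relative} advantage of sitting at the high-engagement level $f=0.99$ versus the moderate level $f=0.6$ or versus exploiting is amplified as $c$ grows — precisely the sign computation above, where the cross-partial term $A$ is positive because $f(E_{\bi}) = 0.99 \geq 1/(2-c\gamma)$ for the relevant $c,\gamma$. Concretely, I would choose $a$ moderately negative and $b$ small enough that, at the low friction level $c$, the optimal policy finds it worthwhile to keep the user only at the moderate level $a$ (strategy (ii)), paying modest revenue to hold there — or even just to exploit — whereas at the higher friction level $c'$, the cross-partial incentive tips the optimum toward strategy (iii), investing more up front (a larger initial experience-boosting action, i.e. a more negative $i_1$, hence higher $x_2$) to lock the user into the $99\%$-engagement region. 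Since in all three strategies the trajectory is monotone and, once at the target discontinuity, stays put, the termwise inequality $x^c_t \le x^{c'}_t$ follows: both start at $0$, and $x^{c'}$ climbs to (or toward) the higher stabilization point while $x^c$ does not, with the first divergence at $t=2$ because the very first action already differs (the $c'$-policy picks a strictly larger experience boost), giving strict inequality for $t \ge 2$.

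The main obstacle is the explicit optimization over the finitely many candidate trajectories: I need to verify that, for a \emph{common} choice of $a,b$, the argmax among strategies (i)--(iii) is genuinely (ii) or exploitation at friction $c$ and genuinely (iii) at friction $c'$, which amounts to checking a system of inequalities between geometric-series expressions in $\gamma=0.9$, $f \in \{0.6,0.99\}$, and the two friction values. This is routine algebra but must be done carefully — in particular, I must ensure the chosen $b$ is large enough that reaching $x_\infty=b$ from $0$ is feasible yet small enough (relative to $C_R=1$) that the revenue cost of the initial investment is not prohibitive at \emph{both} friction levels, and that no boundary effects from \cref{lemma:piecewiseconstant}'s ``$n$ interactions to reach $x_\infty$'' bookkeeping spoil the monotone comparison. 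A secondary point to handle cleanly is that the statement claims the conclusion for \emph{any} $c' > c$ once $a,b$ are fixed appropriately; I would either argue monotonicity of the relevant argmax in $c$ on the whole interval (using that $A > 0$ throughout, so the incentive to invest is monotone in $c$) or, more modestly, fix the two specific friction values in the example and note the argument extends by the same sign analysis. I expect the cleanest writeup to pin down two explicit numerical friction values and explicit $a,b$, then present the three payoff formulas and the two decisive inequalities, deferring the arithmetic to a short computation.
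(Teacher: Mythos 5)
Your plan follows the paper's proof almost exactly: restrict to the three candidate trajectories given by \cref{lemma:piecewiseconstant} (exploit forever, hold at the lower breakpoint, invest once and hold at the upper breakpoint), write each payoff in closed form via \cref{theorem:cputime}, and choose $a,b$ so that the argmax flips between $c$ and $c'$. The one step you defer as ``routine algebra'' --- and correctly flag as the main obstacle --- is closed in the paper by a clean device: set $a=0$ (so the neutral start state is itself the lower discontinuity, avoiding the extra bookkeeping your ``moderately negative $a$'' would require), define $g(c^*)=\tfrac{1}{1-\gamma\tilde f_{c^*}(b)}-\tfrac{1}{1-\gamma\tilde f_{c^*}(0)}$, the stationary-payoff gap between the two engagement levels, observe that $g$ is increasing in friction (this is the precise form of your ``relative advantage is amplified'' claim; the interior-optimum cross-partial analysis does not literally apply here since the optimum is over corner strategies), and pick $b$ strictly between $g(c)$ and $g(c')$ --- then policy (iii) minus policy (ii) equals exactly $g(\cdot)-b$, negative at $c$ and positive at $c'$ by construction, with pure exploitation ruled out by a one-line bound. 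Note also that the paper reads the quantifiers as ``for each pair $c<c'$ there is an appropriate $a,b$'' (its $b$ depends on both $c$ and $c'$), so the stronger uniform-in-$c'$ claim you were worried about never needs to be proved.
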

This result illustrates the counterintuitive effect of friction: although higher friction generally reduces the app’s long-term payoff, it can simultaneously compel the app to adopt a more engagement-focused content strategy, ultimately leading to improved user satisfaction.
\end{example}

\paragraph{The role of modified demand elasticity.}
    Modified demand elasticity offers a clear explanation for why friction incentivizes investment in user engagement.
    In particular, $\tfrac{\partial}{\partial x} \log \tilde f$ always increases in friction $c$, making user demand more sensitive to changes in user state and amplifying the impact of content choices on engagement. 
    Moreover, when comparing complete friction ($c=1$) and no friction ($c=0$), the ratio of modified demand elasticity $\tfrac{\partial}{\partial x} \log \tilde f$ grows linearly in user demand $f(x)$.
    This implies that the effect of increasing friction on demand elasticity is most pronounced when user demand is already high. In other words, when engagement levels are high, even small increases in friction can significantly heighten the sensitivity of demand to content selection, further motivating the app to optimize for retention.

\paragraph{A high-engagement regime for friction.}
 {Intuitively,}
when user engagement is already low, the app may struggle to increase engagement enough for friction to become a less significant concern. If users are already disengaged, small changes in content selection are unlikely to meaningfully alter their behavior. However, when engagement is moderately high, friction can play a more nuanced role, potentially encouraging the app to invest in strategies that further enhance user satisfaction.

We 
formalize this intuition by extending our construction in \cref{ex:first_example}.
To simplify the analysis, we compare two extreme cases: complete friction ($c=1$) and no friction ($c=0$). However, the results naturally extend to intermediate levels of friction.

\begin{example}[Generalization of \cref{ex:first_example}]
\label{ex:generalized_example}
Suppose a user's satisfaction with an app falls into one of three categories:
\begin{itemize}
    \item If their cumulative satisfaction is below a threshold $a$, they will stop using the app.
    \item If their cumulative satisfaction is within an interval $(a,b)$, they continue using the app with probability $p_1$.
    \item If their cumulative satisfaction exceeds $b$, they continue using the app with probability $p_2$.
\end{itemize}
Now consider an app with discount factor $\gamma$ and a linear content landscape, as in \cref{ex:first_example}. The app selects content $i \in [-b, b]$, where displaying content $i$ yields revenue $R_i = 1 + i$ and affects user experience according to $E_i = -i$. We compare the optimal policy and its effect on user experience under two scenarios: no friction $(c = 0)$ versus complete friction $(c = 1).$

\begin{restatable}{proposition}{regime}
\label{prop:regime_1}
In \cref{ex:generalized_example}, there is an appropriate choice of $a$ and $b$ such that user satisfaction is strictly lower when there is no friction ($c = 0$) compared to when there is complete friction ($c = 1$). This holds if the probabilities $p_1$ and $p_2$ satisfy the following conditions:
\begin{enumerate}
    \item The difference between $p_1$ and $p_2$ is not too large: $p_2 - p_1 < \min\bset{\tfrac{1-\gamma}{\gamma(1-\gamma p_1)}, \tfrac{1-\gamma}{2\gamma(1-\gamma p_2)}}$,
    \item The function $h(p) \asseq \frac{1}{1 - \gamma p} - \frac{\gamma}{1-\gamma}p$ satisfies $h(p_2)> h(p_1)$. This function is visualized in \cref{fig:regime}.
\end{enumerate}
\end{restatable}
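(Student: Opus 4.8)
The plan is to use \cref{theorem:cputime} to pass to the ``variable discount rate'' formulation, use \cref{lemma:piecewiseconstant} to cut the policy search down to a handful of explicit trajectories, and then choose the thresholds $a,b$ so that the optimal trajectory moves the state \emph{up} to $b$ when $c=1$ but \emph{down} to $a$ when $c=0$. Here $C_E=0$, $C_R=1$, and $K=b$, so if $p$ denotes the demand level at the current state then the effective per-interaction discount from \cref{theorem:cputime} is $\rho_c(p)=\gamma\tilde f$, which simplifies to $\rho_1(p)=\gamma p$ for $c=1$ and $\rho_0(p)=\gamma p/(1-\gamma(1-p))$ for $c=0$. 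In particular, the payoff of any policy that plays the revenue-$1$ content $i=0$ forever while the user sits at a fixed demand level $p$ is $\mathrm{stay}_c(p):=1/(1-\rho_c(p))$, i.e.\ $\mathrm{stay}_1(p)=1/(1-\gamma p)$ and $\mathrm{stay}_0(p)=1+\gamma p/(1-\gamma)$. A direct computation gives $h(p)=\mathrm{stay}_1(p)-\mathrm{stay}_0(p)+1$, so hypothesis (2), $h(p_2)>h(p_1)$, is precisely the statement that $\mathrm{stay}_1(p_2)-\mathrm{stay}_1(p_1)>\mathrm{stay}_0(p_2)-\mathrm{stay}_0(p_1)$: investing to reach the high engagement level $p_2$ earns strictly more continuation value under full friction than under no friction.

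Next I would apply \cref{lemma:piecewiseconstant} with $k=3$: the demand function has discontinuities $a<0<b$ (with $f$ equal to $0$, $p_1$, $p_2$ on $(-\infty,a)$, $[a,b)$, $[b,\infty)$), so an optimal trajectory starts at $x_1=0$, is monotone, and either rises to $b$, falls to $a$, or falls without bound. Choosing $0<|a|<b$ so that a single extremal action moves the state between $0$, $a$, $b$, and below $a$, \cref{lemma:piecewiseconstant,lemma:trajectory} force the candidate trajectories into a short explicit form, and their payoffs are: ``invest'' ($i_1=-b$, then $i=0$) has value $-b+\mathrm{stay}_c(p_2)$; ``exploit to $a$'' ($i_1=-a$, then $i=0$) has value $-a+\mathrm{stay}_c(p_1)$; and the two ``drop below $a$'' options (go straight down, or drop one step past $a$) have values $1+b$ and $(1-a)+(1+b)\rho_c(p_1)$ --- the $f\equiv 0$ region kills all later terms, so $p_2$ never enters these, and friction enters only through $\rho_c(p_1)$. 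The decisive comparison is therefore $\bigl(-b+\mathrm{stay}_c(p_2)\bigr)>\bigl(-a+\mathrm{stay}_c(p_1)\bigr)\iff\mathrm{stay}_c(p_2)-\mathrm{stay}_c(p_1)>b-a$.

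I would then fix the gap $b-a$ inside the open interval $\bigl(\mathrm{stay}_0(p_2)-\mathrm{stay}_0(p_1),\ \mathrm{stay}_1(p_2)-\mathrm{stay}_1(p_1)\bigr)$, which is nonempty exactly by hypothesis (2); this makes ``invest'' strictly beat ``exploit to $a$'' when $c=1$ and strictly lose to it when $c=0$. It remains to pick $b$ (and hence $a=b-(b-a)$) small enough that in each regime the intended winner also beats the two ``drop below $a$'' options and that $0<|a|<b$ holds. Each such requirement is an upper bound on $b$ of the form $b<\text{const}(\gamma,p_1,p_2)$ --- for instance ``invest'' $>1+b$ at $c=1$ is $b<\gamma p_2/(2(1-\gamma p_2))$, and ``exploit to $a$'' beating the drop options at $c=0$ reduces to $b<\gamma p_1/(1-\gamma)$ and a similar bound --- while $|a|<b$ forces the lower bound $b>(b-a)/2$. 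Hypothesis (1), the bound on $p_2-p_1$, is exactly what makes this system of upper bounds compatible with that lower bound. With $a,b$ so chosen, \cref{lemma:piecewiseconstant} together with the four-way value comparison shows the optimal policy is unique and drives the state to $b$ when $c=1$ and to $a$ when $c=0$; since $x_1=0$ in both cases and $a<0<b$, we conclude $x^{0}_t\le x^{1}_t$ for all $t$, with strict inequality for $t\ge2$ (indeed $x^{0}_t=a<b=x^{1}_t$), which is the claim.

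The main obstacle is the last step: verifying that conditions (1) and (2) really do make the whole list of inequalities simultaneously satisfiable. This is routine but delicate bookkeeping --- one must write down the payoff of every trajectory permitted by \cref{lemma:piecewiseconstant,lemma:trajectory} (including the multi-step descents that arise if one does not impose $|a|<b$), track the exact constant each contributes as an upper or lower bound on $b$ and on $b-a$, and check that condition (1) is chosen tightly enough to leave a nonempty window. Everything else --- the reformulation, the reduction to four candidates, and the sign of the decisive comparison --- follows directly from \cref{theorem:cputime} and \cref{lemma:piecewiseconstant}.
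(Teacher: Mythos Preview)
Your approach is essentially the paper's: reduce via \cref{theorem:cputime} and \cref{lemma:piecewiseconstant} to a short list of candidate trajectories, recognise hypothesis~(2) as exactly the statement that the ``invest'' vs.\ ``stay-low'' comparison flips sign between $c=0$ and $c=1$, and invoke hypothesis~(1) to keep the pure-exploitation ``drop'' policy out of contention. The one meaningful difference is that the paper takes $a=0$, so the initial state $x_1=0$ is itself a discontinuity. This collapses your ``exploit to $a$'' into the simpler ``stay at $0$'' (value exactly $\mathrm{stay}_c(p_1)$, no $-a$ offset), removes the free parameter $a$ and the constraint $|a|<b$, and lets the paper fix $b=\tfrac{\gamma}{1-\gamma}(p_2-p_1+\epsilon)$ explicitly --- i.e.\ $b$ just above the lower endpoint $\mathrm{stay}_0(p_2)-\mathrm{stay}_0(p_1)$ of the open interval you identify --- after which each of the three pairwise comparisons is a one-line computation and the ``delicate bookkeeping'' you flag simply disappears. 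Your plan is correct, but allowing $a<0$ makes the final feasibility step harder than it needs to be; in particular your assertion that hypothesis~(1) is ``exactly'' what reconciles the bounds is left unverified and is not obviously true in your parametrisation, whereas with $a=0$ hypothesis~(1) is used only once, for Invest $>$ Drop at $c=1$. (Also, your fourth candidate ``drop one step past $a$'' is already excluded by \cref{lemma:piecewiseconstant}: the $x_\infty=-\infty$ clause forces $i_t=K$ at every step, so only three candidates need to be compared.)
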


Intuitively, $h(p)$ quantifies the additional effective discounting introduced by complete friction---representing the additional loss in expected future revenue due to the greater risk of disengagement---relative to the uniform discounting seen in a frictionless environment at a given engagement level $p$. The specific form of $h$ and its implications are explored further later in this section. The proof of this proposition is in \cref{subsection:regimeproof}.

\begin{proof}[Proof sketch of \cref{prop:regime_1}]
    We begin by setting $a = 0$ and $b = \frac {\gamma}{1-\gamma} (\epsilon+p_2-p_1)$ for a small $\epsilon > 0$. By \cref{lemma:piecewiseconstant}, the optimal policy must follow one of three strategies: (i) continuously decrease the user state by repeatedly selecting content $i = b$, (ii) keep the user state at $0$ by always choosing $i = 0$, or (iii) initially increase the state (by selecting $i = -b)$ and then stabilize it at $x = b$ by switching to $i = 0$. In the absence of friction, we show that maintaining the state (policy (ii)) outperforms the mixed strategy (policy (iii)). However, under complete friction, the modified discounting causes the mixed strategy to yield a higher payoff, as the effective discount factor for future revenue increases when the user is more likely to disengage; this is captured by the condition $h(p_2) > h(p_1)$ (condition (2)) together with the requirement that $p_2 - p_1$ is not too large (condition (1)). Finally, we show that the continuously decreasing strategy (policy (i)) is always suboptimal under complete friction, thus establishing that when friction is present, the optimal policy results in a higher level of user satisfaction compared to the frictionless case.
\end{proof}

\begin{figure}
    \centering
    \includegraphics[width=0.55\linewidth]{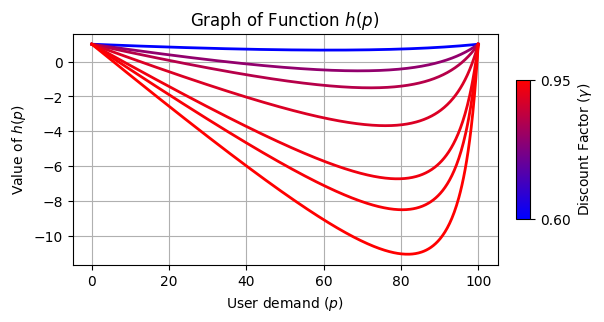}
    \caption{Plot of the function $h(p) \asseq \frac{1}{1 - \gamma p} - \frac{\gamma}{1-\gamma}p$ on the domain $p \in [0, 1]$ for various choices of $\gamma$.}
    \if\arxiv0
    \Description{Plot of the function $h(p) \asseq \frac{1}{1 - \gamma p} - \frac{\gamma}{1-\gamma}p$ on the domain $p \in [0, 1]$ for various choices of $\gamma$.}
    \fi
    \label{fig:regime}
\end{figure}
The second criterion of \cref{prop:regime_1} is typically the more restrictive of the two. \cref{fig:regime} illustrates the function $h$, which exhibits a characteristic U-shape.
This structure implies that 
condition 2 
holds---and thus, the app is incentivized to invest more in engagement under friction---when both $p_1$ and $p_2$ are close to 1, since $h(p)$ is increasing in this region. Conversely, if both $p_1$ and $p_2$ are relatively small (for example, if $p_2$ falls below the minimum of the curve for the given discount factor $\gamma$), the condition is not satisfied.

To provide further intuition, $h$ quantifies the difference between two modifications of the geometric series $\sum_{t=1}^\infty \gamma^{t-1}$. In one, the discount factor $\gamma$ is reduced by a factor of $p$, capturing the effect of disengagement under complete friction---in each timestep, there is a $1 - p$ chance of eliminating future revenue. In the second modification, the entire series is scaled by $p \gamma$, modeling the impact of disengagement with no friction---each timestep's payoff is uniformly reduced by a fixed probability.
Hence, if $h(p_2) > h(p_1)$, it indicates that at the higher demand level $p_2$, the app's utility is more sensitive to the shift from no friction $(c=0)$ to complete friction $(c=1)$.
\end{example}

\subsection{Attributing Friction Phenomena to Modified Demand Elasticity}
\label{section:attribution}
In the previous section, we showed that friction can unexpectedly boost user engagement, even as it typically reduces overall revenue.
Building on those findings, this section further analyzes \emph{how} friction shapes the user's equilibrium under the optimal policy. 
Indeed, {under the optimal policy, the user's state will quickly reach an equilibrium level $x_{\infty},$ as formalized in \cref{lemma:piecewiseconstant}.
This section uses modified demand elasticity to analyze how friction impacts $x_{\infty}.$ 

To maintain the user state at equilibrium, the app repeatedly displays a specific type of content. Given a user state $x$, let $U(x,c)$ be the asymptotic utility that the app derives from repeatedly showing the user content $i \in \cI$ when they are in state $x$. Formally,}
 $$U(x, c) \asseq \tfrac{\EE{R_i}}{1-\gamma \tilde f(x)},$$ where $\tilde f$ is the modified demand function from \cref{theorem:cputime}. {Said another way, $U(x, c)$ represents the long-term utility that awaits an app if it quickly drives the user's state to an equilibrium $x_\infty=x$.}
The difference in asymptotic utility between two states $x, x'$, $U(x, c) - U(x', c)$, characterizes how much revenue the app creator should be willing to sacrifice to shift the user state from $x'$ to $x$.

\begin{figure}
    \centering
    \includegraphics[width=0.75\textwidth]{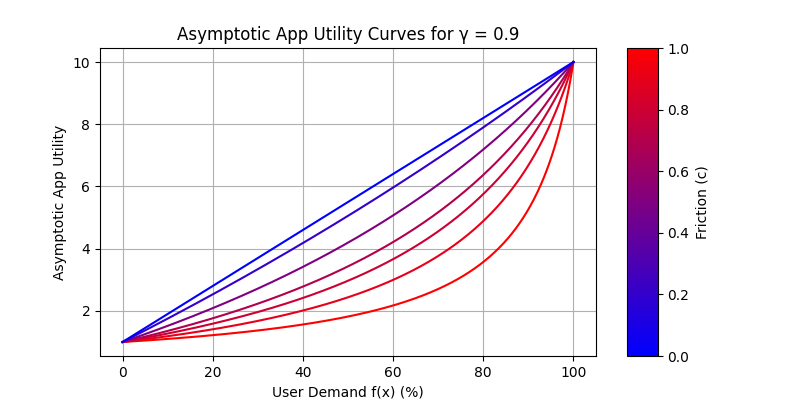}
    \caption{Asymptotic app utility at different levels of user demand and friction. Content revenue and the app creator's discount factor are fixed at $\EE{R_i} = 1$ and $\gamma = 0.9$ for clarity.}
    \if\arxiv0
    \Description{Asymptotic app utility at different levels of user demand and friction. Content revenue and the app creator's discount factor are fixed at $\EE{R_i} = 1$ and $\gamma = 0.9$ for clarity.}
    \fi
    \label{fig:asymp}
\end{figure}

\cref{fig:asymp} illustrates how asymptotic app utility varies with user demand $f(x)$ and friction $c$.
When user demand is high, i.e. $f(x) \to 1$, asymptotic app utility grows faster, with the rate of growth becoming steeper as friction increases.
To interpret this effect, consider the following example:
the difference between asymptotic utility between demand levels $f(x) = 0.9$ and $f(x) = 0.8$ is much larger when friction is high $(c = 1)$ than when friction is lower (e.g., $c = 0.5$). This implies that when friction is high, the app creator is willing to forgo more immediate revenue to increase user demand from $0.8$ to $0.9$ as the long-term benefits of retaining users become more pronounced.

To analyze the curvature of asymptotic app utility more formally, we examine its derivative, which we can decompose into three interpretable factors:
\begin{align}
\label{eq:factors}
    \frac{d}{dx} \frac{1}{1 - \gamma \tilde f(x)}
    = \underbrace{\para{\frac{1}{1-\gamma \tilde f(x)}}^2}_{{(A)}} \cdot \underbrace{\gamma \tilde f(x)}_{(B)}
    \cdot \underbrace{\frac{d}{dx} \log(\tilde f(x))}_{(C)},
\end{align}
Term $A$ is the asymptotic app utility squared.
Term $B$ is the app creator's effective discount factor, as defined in Section~\ref{sec:var-rate}.
Term $C$ is our definition of modified demand elasticity.
While terms $A$ and $B$ are strictly decreasing with friction, term $C$--modified demand elasticity--strictly increases in friction.
This leads to a key insight: \emph{the increase in modified demand elasticity is the sole mechanism behind friction-induced increases in user engagement}.

This decomposition also explains why friction only increases engagement when user demand is already high, i.e. when $f(x)$ is large.
\cref{fig:terms} plots the ratios of each term when there is complete friction ($c = 1$) compared to no friction ($c = 0$).
While the asymptotic app utility ratio is symmetric around $f(x) = 0.5$, the ratios of both the effective discount factor and asymptotic app utility grow linearly with demand, highlighting their direct dependence on user engagement levels.

\begin{figure}
    \centering
\includegraphics[width=0.6\textwidth]{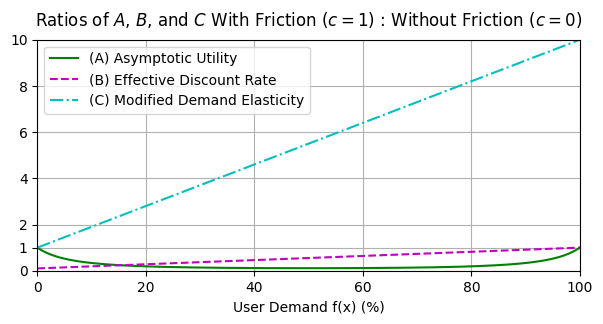}
    \caption{The ratio of the factors (A), (B), and (C) that compose $\tfrac \partial {\partial x} \tfrac 1 {1 - \gamma \tilde f(x)}$ as stated in \eqref{eq:factors}  when there is full friction ($c = 1$) against when there is no friction ($c = 0$). The app creator's discount factor is fixed at $\gamma = 0.9$.}
    \if\arxiv0
    \Description{The ratio of the factors (A), (B), and (C) that compose $\tfrac \partial {\partial x} \tfrac 1 {1 - \gamma \tilde f(x)}$ as stated in \eqref{eq:factors}  when there is full friction ($c = 1$) against when there is no friction ($c = 0$). The app creator's discount factor is fixed at $\gamma = 0.9$.}
    \fi
    \label{fig:terms}
\end{figure}

\subsection{Comparison to Classical Demand Elasticity}
\label{section:comparison}
Our model extends the classical supply-demand framework to a setting where demand evolves over repeated interactions rather than responding instantaneously to a firm's decisions.
In traditional economics, the demand function maps a firm's chosen prices to consumer demand, whereas in our model, demand is shaped by the cumulative effect of an app's content choices over time.
Given this parallel, one might naturally hope to analyze our model using the classical definition of demand elasticity, $\tfrac{\partial}{\partial x} \log f$, where $f$ is the demand function.
However, unlike modified demand elasticity, this classical measure does not account for two key factors: friction (which influences user re-engagement) and the discount factor (which shapes long-term optimization). As a result, classical demand elasticity alone is insufficient for understanding how engagement strategies evolve in dynamic environments with friction.

To better understand why friction must be accounted for in defining demand elasticity, we examine the ratio between modified demand elasticity and its classical counterpart, as shown in \cref{fig:elasticity}.
This ratio forms a convex curve, where the curvature decreases with friction $c$ and increases in the app creator's discount factor $\gamma$.
This pattern reflects an intuitive trade-off: when friction is low, and the app creator is patient ($\gamma$ is large), modified demand elasticity is significantly smaller than its classical counterpart.
In such cases, app creators are less concerned about friction preventing users from returning, as they can afford to patiently wait for users to return if they disengage.
In contrast, when friction is high, even small changes in user demand become more consequential, amplifying the impact of engagement strategies. This effect makes investments in increasing user demand more profitable as the risk of permanent disengagement grows. As a result, modified demand elasticity---unlike its classical counterpart---correctly captures the heightened sensitivity of demand in environments where user retention is more fragile.

This distinction also explains why modified demand elasticity is central to our model: an app creator's behavior is driven by perceived user demand rather than actual user demand.
In classical supply-demand curves, these two align since consumers respond to direct, instantaneous pricing changes.
In our setting, the gap between perceived demand $\tilde f(x)$ and actual demand $f(x)$ does not stem from uncertainty about user behavior but rather the dynamic nature of repeated interactions---where engagement evolves over time---and the adjustments app creators make to maximize long-term utility. Thus, modified demand elasticity better captures how engagement strategies should be optimized in environments where friction and delayed user responses matter.

\begin{figure}
    \centering
    \begin{subfigure}[b]{0.48\textwidth}
        \centering
        \includegraphics[width=\textwidth]{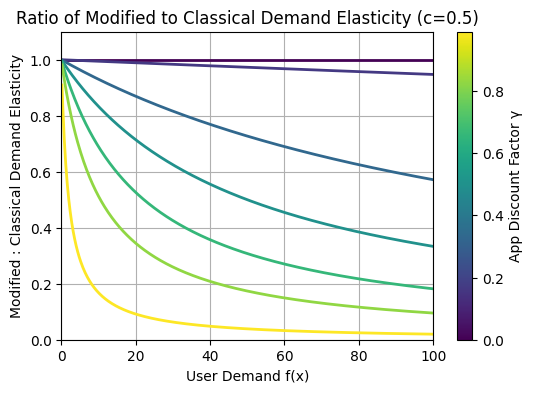}
    \end{subfigure}
    \hfill
    \begin{subfigure}[b]{0.48\textwidth}
        \centering
        \includegraphics[width=\textwidth]{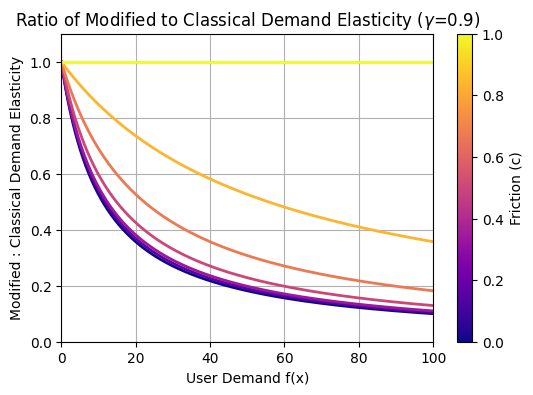}
    \end{subfigure}
    \caption{Ratio of modified demand elasticity $\frac{d}{dx} \log \tilde f(x)$ to classical demand elasticity $\frac {d}{dx} \log f(x)$.}
    \if\arxiv0
    \Description{Ratio of modified demand elasticity $\frac{d}{dx} \log \tilde f(x)$ to classical demand elasticity $\frac {d}{dx} \log f(x)$.}
    \fi
    \label{fig:elasticity}
\end{figure}

\subsection{Demand Elasticity and Alignment}
Modified demand elasticity also serves as a key tool for deriving general insights about app behavior in our model.
For example, it allows us to prove an asymptotic lower bound on the user demand an app should strive to maintain.
In particular, we analyze a form of partial alignment between user welfare and app revenue, quantified by determining the minimum revenue a user must generate to incentivize the app to actively invest in improving their experience.

\cref{theorem:simplealignment} establishes a fundamental relationship between user engagement, lifetime customer value, and modified demand elasticity. Specifically, for any given threshold $x^*$ and $\delta$, it shows that if the lifetime customer value exceeds $x^*$, the app's optimal policy must ensure that the user's state remains above $x^*$ for at least $1 - \delta$ of the time.
A key insight from this result is that the required lifetime customer value is inversely related to the user's modified demand elasticity.
Intuitively, while a user's value to the app represents the ``reward'' associated with increasing engagement, modified demand elasticity reflects the cost-efficiency of doing so. 
When demand elasticity is high, small investments in engagement yield substantial increases in retention, reducing the minimum revenue threshold required for the app to prioritize user experience improvements.

\begin{restatable}{theorem}{alignmenttt}
    \label{theorem:simplealignment}
    Consider any threshold user state $x^* \in \reals$ and small constant $\delta$. Suppose an app expects an achievable payoff of at least $$\max_{\pi} J(\pi) \in \Omega\para{\frac{\log(1/\gamma)}{\delta} \cdot \para{ \frac {\partial}{\partial x} \log \tilde f(x) \mid_{x = x^*}}^{-1}}.$$
    Then, any app policy that allows the user's state to remain below $\sat^*$ for at least a $\delta$-fraction of interactions---formally, $\frac 1 {T} \sum_{t=1}^{T} 1[\ite{\sat}t \leq \sat^*] \geq \delta$ for all large $T$---is suboptimal.
    Here, $\Omega$ treats content attributes $\bset{E_i}_{i \in \arms}$ and $\bset{R_i}_{i \in \arms}$ as constants.
\end{restatable}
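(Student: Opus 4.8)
The plan is to reason about the contrapositive: I would show that any policy under which the user state stays below $\sat^*$ for at least a $\delta$-fraction of interactions has payoff strictly smaller than the stated lower bound on $\max_\pi J(\pi)$, hence cannot be optimal. The key object is the variable-discount reformulation from \cref{theorem:cputime}, which writes $J(\pi) = \E\big[\sum_t \ite{r}{t} \prod_{\tau=2}^t \gamma \tilde f(\ite x\tau)\big]$. The rough idea is that the product of effective discount factors $\prod_{\tau=2}^t \gamma\tilde f(\ite x\tau)$ decays geometrically, and if the state is frequently below $\sat^*$ then a positive fraction of the factors $\gamma\tilde f(\ite x\tau)$ are bounded above by $\gamma \tilde f(\sat^*)$ (using monotonicity of $\tilde f$, which inherits monotonicity from $f$). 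Comparing $\gamma\tilde f(\sat^*)$ to the ``best possible'' effective discount $\gamma\tilde f(x)$ as $x \to \infty$, the multiplicative gap per below-threshold step is controlled by the modified demand elasticity $\frac{\partial}{\partial x}\log\tilde f$ evaluated near $\sat^*$: a first-order estimate gives $\log\tilde f(\infty) - \log\tilde f(\sat^*) \gtrsim (\text{something}) \cdot \frac{\partial}{\partial x}\log\tilde f\big|_{\sat^*}$, so each below-threshold interaction costs a factor roughly $\exp(-c_0 \cdot \frac{\partial}{\partial x}\log\tilde f|_{\sat^*})$ relative to the optimum.

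Concretely I would proceed as follows. First, bound the per-step revenue: since the content attributes $\{E_i\}, \{R_i\}$ are treated as constants, there is a finite $R_{\max}$ with $\ite r t \le R_{\max}$ almost surely, so $J(\pi) \le R_{\max}\sum_{t\ge 1}\prod_{\tau=2}^t \gamma\tilde f(\ite x\tau)$. Second, over any horizon $T$, split the interactions into those with $\ite x t \le \sat^*$ (a $\delta$-fraction, by hypothesis) and the rest; on the former, $\gamma\tilde f(\ite x t) \le \gamma\tilde f(\sat^*)$, and on the latter we can only use the trivial bound $\gamma\tilde f(\ite x t) \le \gamma\tilde f(\infty) \le 1$ — actually $<1$ since $\gamma<1$ and $\tilde f \le$ a constant. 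So $\prod_{\tau=2}^t \gamma\tilde f(\ite x\tau) \le (\gamma\tilde f(\sat^*))^{\delta t}\cdot(\gamma\tilde f(\infty))^{(1-\delta)t}$ up to lower-order terms, giving a geometric series with ratio $\rho \asseq (\gamma\tilde f(\sat^*))^{\delta}(\gamma\tilde f(\infty))^{1-\delta} < 1$, hence $J(\pi) \le R_{\max}/(1-\rho)$. Third, estimate $1-\rho$ from below. Writing $\gamma\tilde f(\infty) = \gamma\tilde f(\sat^*)\cdot e^{\Delta}$ with $\Delta = \log\tilde f(\infty) - \log\tilde f(\sat^*) \ge 0$, we get $\rho = \gamma\tilde f(\sat^*)\, e^{(1-\delta)\Delta} \le \gamma\tilde f(\infty)\, e^{-\delta\Delta}$, and since $\gamma\tilde f(\infty) < 1$ with a gap of order $\log(1/\gamma)$, and $\Delta$ is of order $\frac{\partial}{\partial x}\log\tilde f|_{\sat^*}$ (bounding the integral of the elasticity from $\sat^*$ to $\infty$ below by one ``unit'' of it — this is where the monotone structure of $\tilde f$ and the eventual stabilization would be invoked), we obtain $1-\rho \in \Omega\big(\min\{\log(1/\gamma),\ \delta \cdot \frac{\partial}{\partial x}\log\tilde f|_{\sat^*}\}\big)$. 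Combining, $J(\pi) \in O\big(\delta^{-1}\log(1/\gamma)^{-1}\,(\frac{\partial}{\partial x}\log\tilde f|_{\sat^*})^{-1}\big)$ after the appropriate case analysis, which is strictly below the assumed $\Omega(\cdot)$ lower bound on $\max_\pi J(\pi)$, so $\pi$ is suboptimal.

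The main obstacle I anticipate is making the step ``$\Delta \gtrsim \frac{\partial}{\partial x}\log\tilde f|_{\sat^*}$'' rigorous and correctly scaled: the theorem statement involves the elasticity \emph{at the single point} $\sat^*$, not an integral over $[\sat^*,\infty)$, so I need either a convexity/monotonicity property of $\log\tilde f$ (so that the derivative at $\sat^*$ lower-bounds an increment of fixed length) or I need the conclusion to only require a fixed-length increment $[\sat^*, \sat^*+1]$ whose contribution is $\ge \frac{\partial}{\partial x}\log\tilde f|_{\sat^*}$ by some monotonicity of the elasticity itself. I would check whether $\tilde f$'s explicit formula from \cref{theorem:cputime} yields log-concavity or a monotone elasticity; if so, the bound $\log\tilde f(\sat^*+1) - \log\tilde f(\sat^*) \ge \frac{\partial}{\partial x}\log\tilde f|_{\sat^*+1}$ or a mean-value-type inequality closes the gap. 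A secondary subtlety is handling the ``for all large $T$'' quantifier cleanly — I would take $T\to\infty$ in the finite-horizon truncation and argue the tail of $J(\pi)$ is dominated by the geometric bound, which is routine given boundedness of rewards.
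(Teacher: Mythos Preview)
Your direct upper-bound strategy has a real gap at exactly the place you flag as the ``main obstacle,'' and the paper does \emph{not} resolve it the way you suggest. Your bound $J(\pi)\le R_{\max}/(1-\rho)$ with $\rho=(\gamma\tilde f(\sat^*))^{\delta}(\gamma\tilde f(\infty))^{1-\delta}$ is essentially the content of the paper's \cref{lemma:minimum_alignment}: it controls $J(\pi)$ in terms of the \emph{value} $\tilde f(\sat^*)$, not the elasticity $\tfrac{\partial}{\partial x}\log\tilde f|_{\sat^*}$. The quantity that shows up is $\Delta=\log\tilde f(\infty)-\log\tilde f(\sat^*)$, an integral of the elasticity over $[\sat^*,\infty)$; there is no general relation between $\Delta$ and the pointwise derivative at $\sat^*$, and $\tilde f$ is not assumed log-concave (nor does the explicit formula in \cref{theorem:cputime} force it). So the step ``$\Delta\gtrsim \tfrac{\partial}{\partial x}\log\tilde f|_{\sat^*}$'' cannot be justified, and without it your final $O(\cdot)$ claim does not follow. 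Even granting it, your ``case analysis'' from $1-\rho\in\Omega(\min\{\log(1/\gamma),\,\delta\cdot\text{elasticity}\})$ to $J(\pi)\in O(\delta^{-1}\log(1/\gamma)^{-1}\,\text{elasticity}^{-1})$ is not an algebraic identity: $1/\min\{A,B\}=\max\{1/A,1/B\}$, not $A/B$.

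The paper's proof takes a genuinely different route: it does \emph{not} try to upper-bound $J(\pi)$ outright, but instead exhibits a strictly better policy. Concretely, it first splits the below-$\sat^*$ policies into those that drop very low (handled by the geometric bound you wrote, \cref{lemma:minimum_alignment}) and those whose state lies in a bounded window $[\sat',\sat^*]$ for a $\delta/2$-fraction of time. For the latter, \cref{lemma:movingup} constructs $\pi'$ that first invests $O(\satimp-\sat')$ interactions in pushing the state up to some $\satimp>\sat^*$ and then simulates $\pi$; the boost multiplies all future discount factors by at least $\tilde f(\satimp)/\tilde f(\sat^*)$ on a $\delta$-fraction of steps, so the payoff gets multiplied by roughly $(\tilde f(\satimp)/\tilde f(\sat^*))^{\delta k}$ at cost $O(\satimp-\sat')$. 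Balancing cost against gain is what makes the elasticity appear: for $\satimp$ close to $\sat^*$, $\log(\tilde f(\satimp)/\tilde f(\sat^*))\approx(\satimp-\sat^*)\cdot\tfrac{\partial}{\partial x}\log\tilde f|_{\sat^*}$, and the improvement is strictly positive precisely when $J(\pi)$ exceeds the stated $\Omega(\cdot)$ threshold. The elasticity thus enters as an exchange rate in a cost--benefit tradeoff, not as a parameter of a geometric decay rate; this is the idea your proposal is missing.
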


This theorem highlights a key structural property of the app’s optimal strategy: when user demand is sufficiently elastic, even moderate lifetime customer value is enough to justify sustained investment in engagement. Conversely, when demand elasticity is low, the app must extract significantly higher revenue from users to make it worthwhile to invest in improving user engagement, reinforcing the critical role of modified demand elasticity in shaping long-term platform incentives.

\section{Discussion}
\label{section:discussion}
This paper developed a model for the algorithmic problem of content selection, capturing the trade-off between maximizing immediate revenue and increasing long-term user engagement.
To demonstrate the model's tractability, we showed that approximate no-regret learning guarantees can be achieved with bandit algorithms and that optimal policies are well-structured and efficiently computable.
We further applied our model as a microfoundation for recommendation systems, analyzing how content selection primitives affect alignment between platforms and users.
In particular, we identified modified demand elasticity as the key primitive determining whether a platform prioritizes engagement over revenue and used this primitive to demonstrate that increased friction can, counterintuitively, boost engagement.
Several open questions remain.
While we established a sufficient condition for app-user alignment (\cref{theorem:simplealignment}), deriving tight alignment guarantees appears challenging.
Tightly characterizing when apps are incentivized to invest in engagement would provide important insights into the strategic behavior of social media and other online platforms.
Moreover, it remains an open question under what conditions the counterintuitive friction phenomenon we observed (\cref{section:implicit}) arises in competitive multi-app settings.

\section{Acknowledgements}
This work was supported in part by the National Science Foundation under grants CCF-2145898 and CCF-2338226, by the Office of Naval Research under grant N00014-24-1-2159,
a C3.AI Digital Transformation Institute grant, the Mathematical Data Science program of the Office of Naval Research, and Alfred P. Sloan fellowship, and a Schmidt Science AI2050 fellowship.
This material is based upon work also supported by the National Science Foundation Graduate Research Fellowship Program under Grant No. DGE 2146752. Any opinions, findings, and conclusions or recommendations expressed in this material are those of the author(s) and do not necessarily reflect the views of the National Science Foundation.
Calvano acknowledges financial support from the ERC-ADV grant 101098332 and PRIN 2022 CUP E53D23006420001.
\bibliographystyle{alpha}
\newcommand{\etalchar}[1]{$^{#1}$}
\newcommand{\nips}[1]{Advances in Neural Information Processing Systems #1}

\newpage
\tableofcontents
\newpage
\appendix
\section{Omitted Proofs and Additional Results for \cref{section:model}}

\subsection{Proof of \cref{lemma:simpleoptimal}}

\cref{lemma:simpleoptimal} is a consequence of the fact that most reasonable Markov decision processes admit an optimal policy that is deterministic and stationary.

\simpleoptimal*

\begin{proof}
We will first prove that discounted payoff is well-defined (\cref{fact:discountedpayoffexists}).
We then write our model as a Markov decision process and demonstrate a bijection between simple policies of our model and stationary policies of the MDP (\cref{fact:edpmdp}).
Finally, we recall that there always exists an optimal stationary policy for an MDP with an optimal policy (\cref{fact:stationaryoptimal}).

We first can verify that discounted payoff \eqref{eq:platformdiscountedobjective} is well-defined (see, e.g., \cref{proposition:averagereward}).
\begin{fact}
\label{fact:discountedpayoffexists}
For any app policy $\pi$, its discounted payoff $J(\pi)$ as defined in \eqref{eq:platformdiscountedobjective} is finite for valid discount factors $\gamma \in (0, 1)$.
\end{fact}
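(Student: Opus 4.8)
The plan is to dominate the discounted payoff by a geometric series, using compactness of $\{\EE{R_i}\}_{i\in\cI}$ to obtain a uniform bound on the expected per-step revenue. First I would record the only two inputs the argument needs: since $\{\EE{R_i}\}_{i\in\cI}$ is a compact subset of $\reals$ it is bounded, so there is $M<\infty$ with $|\EE{R_i}|\le M$ for all $i\in\cI$; and the model's regularity assumptions likewise furnish $\sup_{i\in\cI}\EE{|R_i|}=:M'<\infty$.

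Next, fix a policy $\pi$, and for each $t$ let $\mathcal{G}_t$ be the $\sigma$-algebra generated by the transcript $H_{t-1}$ together with the engagement indicator $s_t$ and the content choice $i_t$. Since $i_t$ is selected before $r_t$ is realized, $r_t$ is a fresh draw from $R_{i_t}$ on $\{s_t=1\}$ and $r_t=0$ on $\{s_t=0\}$, so $\mathbb{E}[r_t\mid\mathcal{G}_t]=s_t\,\EE{R_{i_t}}$ and $\mathbb{E}[|r_t|\mid\mathcal{G}_t]=s_t\,\EE{|R_{i_t}|}$; taking expectations gives $|\EE{r_t}|\le M$ and $\EE{|r_t|}\le M'$ for every $t\ge 1$. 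Hence, by Tonelli's theorem, $\EE{\sum_{t=1}^\infty \gamma^{t-1}|r_t|}=\sum_{t=1}^\infty\gamma^{t-1}\EE{|r_t|}\le \tfrac{M'}{1-\gamma}<\infty$ because $\gamma\in(0,1)$. Thus $\sum_{t=1}^\infty\gamma^{t-1}|r_t|$ is integrable and almost surely finite, so the series $\sum_{t=1}^\infty\gamma^{t-1}r_t$ converges absolutely almost surely and is dominated by an integrable random variable; dominated convergence then yields $J(\pi)=\EE{\sum_{t=1}^\infty\gamma^{t-1}r_t}=\lim_{T\to\infty}\sum_{t=1}^T\gamma^{t-1}\EE{r_t}=\sum_{t=1}^\infty\gamma^{t-1}\EE{r_t}$, and $|J(\pi)|\le\sum_{t=1}^\infty\gamma^{t-1}|\EE{r_t}|\le\tfrac{M}{1-\gamma}<\infty$, which is the claim.

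The only delicate point is the measure-theoretic bookkeeping: justifying the identity $\mathbb{E}[r_t\mid\mathcal{G}_t]=s_t\EE{R_{i_t}}$ from the generative description of the model, and securing the uniform integrability bound $\sup_i\EE{|R_i|}<\infty$, which is precisely the role of the compactness assumption; everything else is the elementary geometric estimate. An essentially equivalent route, should one prefer to sidestep dominated convergence, is to interpret $J(\pi)$ as the limit of its truncations $J_T(\pi):=\sum_{t=1}^T\gamma^{t-1}\EE{r_t}$ and observe directly that this real sequence is Cauchy, since its tail beyond index $T$ has absolute value at most $\gamma^{T}M/(1-\gamma)\to 0$.
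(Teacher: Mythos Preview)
Your argument is correct and follows essentially the same route as the paper: use compactness of $\{\EE{R_i}\}_{i\in\cI}$ to obtain a uniform bound on the per-step expected revenue, then dominate by the geometric series $M/(1-\gamma)$. The paper's proof is a two-line version of yours that omits the Tonelli/dominated-convergence bookkeeping you spell out; your treatment is more careful about justifying the interchange of sum and expectation, but the underlying idea is identical.
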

\begin{proof}[{Proof of \cref{fact:discountedpayoffexists}}]
    Since we assume that the set of content revenue means $\bset{\EE{R_i}}_{i \in \cI}$ is a compact set, there exists a constant $K \in \reals$ that upper bounds $\abs{\EE{R_i}} \leq K$ for every content $i \in I$.
    Thus, we can upper bound the payoff by the series $\abs{J(\pi)} \leq \sum_{t=1}^\infty \gamma^{t-1} \abs{r_t} \leq \frac{K}{1 - \gamma}$.
\end{proof}

We next observe that there always exists a stationary Markov decision process (MDP) that is equivalent to our model.
Fix a user with demand function $f$ and friction $c$, and an app with content $\cI$ that returns revenues $\bset{R_i}_{i \in \cI}$ and user experiences $\bset{E_i}_{i \in \cI}$.
Let us construct an MDP $(\cS, \cA, \cP, \cR)$ with state space $\cS \asseq \reals \times \bset{0, 1}$, actions $\cA \asseq \cI \cup \bset{\emptyset}$, transition function $\cP$ and reward function $\cR$.
In the following, we will use $S_t$ to denote the MDP state and $a_t$ to denote the MDP action at timestep $t$.

At every state in the set $\reals \times \bset{0} \subset \cS$, only the actions in the set $\cI$ are available.
Each of these states, $(x, 0)$, corresponds to a timestep in which the user is interacting with the app and has a user state of $x$.
At every state in the set $\reals \times \bset{1} \subset \cS$, only a trivial action $\emptyset$ is available.
Each of these states, $(x, 1)$, corresponds to a timestep where the user declined to interact with the app.
We accordingly define the MDP's initial state to be $(0, 0)$.
We define the reward function as $\cR(a=\emptyset, S=(x, 1)) = 0$ and $\cR(a=i, S=(x, 0)) = R_i$ for all $x \in \reals, i \in \cI$.

We now define transition probabilities.
First, for all $x \in \reals$, we define $\mathbb P(S_{t+1} = (x, 0) \mid a_t = \emptyset, S_t = (x, 1)) = (1-c) f(x)$ and $\mathbb P(S_{t+1} = (x, 1) \mid a_t = \emptyset, S_t = (x, 1)) = 1 - (1-c) f(x)$ according to the probability that a user returns to an app after having left it.
Similarly, for all $x, y \in \reals$ and $i \in \cI$, we define $\mathbb P(S_{t+1} = (y, 0) \mid a_t = i, S_t = (x, 0)) = \Pr(E_i = y - x) f(y)$ and $\mathbb P(S_{t+1} = (y, 1) \mid a_t = i, S_t = (x, 0)) = \Pr(E_i = y - x) (1 - f(y))$ according to the probability that a user continues use of an app.
If $E_i$ is continuously valued, we can instead define the cumulative density function $\mathbb P(S_{t+1} \in \bset{ (y', 0),  y' \leq y} \mid a_t = i, S_t = (x, 0)) = \Pr(E_i \leq y - x) \EEc{f(y')}{y' \leq y}$ and $P(S_{t+1}  \in \bset{ (y', 1),  y' \leq y}  \mid  a_t = i, S_t = (x, 0)) = \Pr(E_i \leq y - x) \EEc{1 - f(y')}{y' \leq y}$.

There is a bijection between MDP policies and app policies that preserves their discounted payoff.
\begin{fact}
\label{fact:edpmdp}
There exists a bijective mapping $\phi$ from the MDP's policy space, i.e. functions of form $\pi: (\cS \times \cA \times \reals)^* \to \cA$, to the space of app policies in our model, i.e. functions of form $\pi: \reals^4 \to \cI$.
This map $\phi$ guarantees that for every MDP policy $\pi$, the MDP's discounted payoff $J'$ matches that of its counterpart in our model: $J'(\pi) = J(\phi(\pi))$.
$\phi$ also maps stationary MDP policies only to simple app policies in our model.
\end{fact}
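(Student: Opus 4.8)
The plan is to build $\phi$ from an explicit dictionary between MDP trajectories and app transcripts, and then to verify the three claimed properties --- bijectivity, payoff preservation, and ``stationary $\mapsto$ simple'' --- by chasing that dictionary. I would write an MDP trajectory prefix as $(S_1, a_1, \rho_1, \dots, S_{t-1}, a_{t-1}, \rho_{t-1}, S_t)$ with $S_\tau = (y_\tau, b_\tau) \in \reals \times \{0,1\}$, recalling that $S_1 = (0,0)$ is the initial state and that $\emptyset$ is the only action available at any state with $b_\tau = 1$. The key observation is that each coordinate of such a trajectory is determined by the corresponding app transcript and conversely: passing from a trajectory to a transcript one sets $s_\tau \asseq 1 - b_\tau$, $r_\tau \asseq \rho_\tau$, $i_\tau \asseq a_\tau$, and $e_\tau \asseq y_{\tau+1} - y_\tau$; passing back one sets $b_\tau \asseq 1 - s_\tau$, $\rho_\tau \asseq r_\tau$, $a_\tau \asseq i_\tau$, and $y_\tau \asseq \sum_{\sigma < \tau} e_\sigma$ (so $y_1 = 0$). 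These conversions are mutually inverse, and since $\emptyset$-actions occur exactly at states with $b_\tau = 1$, which correspond exactly to timesteps with $s_\tau = 0$, the action coordinates match up even though $\cA$ carries the extra symbol $\emptyset$. This gives a bijection between MDP trajectory prefixes ending in an interacting state $(y_t, 0)$ --- the only histories at which a policy makes a genuine choice --- and app transcripts $H_{t-1}$.

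Given this dictionary, I would define $\phi(\pi)$, for a deterministic MDP policy $\pi$, by letting $\phi(\pi)(H_{t-1})$ be the action $\pi$ prescribes at the MDP history whose transcript is $H_{t-1}$ and whose final state $S_t = (x_t, 0)$ is interacting; since that state is interacting, $\pi$ returns a content in $\cI$, so $\phi(\pi)$ is a legitimate app policy. The inverse map sends an app policy $\pi'$ to the MDP policy that answers $\emptyset$ at every history ending in a state with $b_\tau = 1$ (the only legal response there) and answers $\pi'(H_{t-1})$ at every history ending in an interacting state, with $H_{t-1}$ the associated transcript. Because the dictionary on histories is a bijection and the $\emptyset$-responses carry no information, $\phi$ is a bijection between the two policy spaces.

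For payoff preservation, the plan is to fix a deterministic $\pi$ and compare, through the dictionary, the law of the MDP trajectory generated by $\pi$ with the law of the app transcript generated by $\phi(\pi)$. The MDP's transition kernel and reward function were set up precisely to mirror the content-selection dynamics: from an interacting state $(x,0)$ under content $i$, the process collects a reward distributed as $R_i$ and the user-coordinate moves by an $E_i$-distributed increment, after which the user is again interacting with probability $f$ of the new user-coordinate; from a non-interacting state $(x,1)$ the reward is $0$ and the user becomes interacting with probability $(1-c)f(x)$. An induction on $t$ then shows that the joint law of the encoded sequence $(s_t, r_t, e_t, i_t)_{t \ge 1}$ induced by the MDP under $\pi$ coincides with the joint law of $(s_t, r_t, e_t, i_t)_{t \ge 1}$ induced by the content-selection model under $\phi(\pi)$; in particular the reward sequences have the same law. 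Since both discounted payoffs converge absolutely (by \cref{fact:discountedpayoffexists} and the identical bound for the MDP) and each is the expectation $\EE{\sum_{t \ge 1} \gamma^{t-1} r_t}$ taken under these (equal) laws, $J'(\pi) = J(\phi(\pi))$.

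Finally, if $\pi$ is stationary then $\pi(h)$ depends only on the last state of $h$, so at a history ending in $(x_t, 0)$ we have $\pi(h) = g(x_t)$ with $g(x) \asseq \pi((x,0))$; through the dictionary, $x_t = \sum_{\tau=1}^{t-1} e_\tau$ is precisely the user's state after the transcript $H_{t-1}$, so $\phi(\pi)(H_{t-1}) = g\!\left(\sum_{\tau=1}^{t-1} e_\tau\right)$, which is exactly the definition of a simple policy. I do not expect any step to be conceptually hard; the one place that will need care is the bookkeeping between the MDP's clock and the content-selection model's clock --- in particular, keeping straight which state, and which transcript prefix, a policy conditions on when it selects the content for timestep $t$ --- together with noticing that the MDP's state-dependent action set, trivial at the non-interacting states, is exactly what reconciles the codomains $\cA$ and $\cI$.
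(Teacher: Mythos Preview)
Your proposal is correct and follows essentially the same approach as the paper: build an explicit bijection between MDP histories and app transcripts (the paper calls the transcript-to-history direction $\psi^{-1}$ and sets $\phi(\pi)=\pi\circ\psi^{-1}$, using the same coordinate dictionary $s_\tau=1-b_\tau$, $e_\tau=y_{\tau+1}-y_\tau$, etc.), then read off bijectivity, payoff preservation, and the stationary-to-simple implication from that dictionary. Your write-up is in fact more careful than the paper's on two points---the role of the forced $\emptyset$-action in reconciling $\cA$ with $\cI$, and the induction showing the trajectory laws coincide---both of which the paper leaves as ``by construction.''
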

\begin{proof}[Proof of \cref{fact:edpmdp}]
Fix a policy for the MDP: $\pi: (\cS \times \cA \times \reals)^* \to \cA$.
We can express $\pi$ as taking a transcript as input, which we can denote as $H = [(S_1, a_1, r_1), \dots, (S_{T-1}, a_{T-1}, r_{T-1})] \in (\cS \times \cA \times \reals)^*$.
We now define a mapping $\psi$ of MDP transcripts to transcripts in our model.
Fix any MDP transcript $H$, and define $(x_{t+1}, 1-s_t) = S_t$ for every $t \in [T]$.
We construct our mapping $\psi$ as $\psi(H) = \bset{(s_t, r_{t}, x_{t+1} - x_{t}, a_{t})}_{t \in [T-1]}$.

We can define our bijection $\phi$ as $\phi(\pi) = \pi(\psi^{-1}(\cdot))$; in other words, for every possible transcript of user-app interactions $H$, the policy $\phi(\pi)$ returns the action $\pi(\psi^{-1}(H))$.
Writing $J'$ to denote the discounted objective of the MDP and using $J$ as in \eqref{eq:platformdiscountedobjective}, by construction, we have $J'(\phi(\pi)) = J(\pi)$ for every policy $\pi$.
Similarly, we have $J(\phi^{-1}(\pi)) = J'(\pi)$ for every MDP policy $\pi$.

We can also confirm that stationary policies in the MDP indeed map to simple app policies.
This is because every policy in the MDP only plays non-trivial actions, i.e., $a_t \neq \emptyset$, at states that belong in the set $\reals \times \bset{0}$.
In those states, since a stationary policy's action at a timestep $t$ depends only on the current state $S_{t-1} = (x_t, 0)$, the stationary policy's action---and by extension its corresponding app policy's action---only depends on the user's state $x_t$.
\end{proof}

We thus have that the existence of an optimal app policy for in our model implies the existence of an optimal policy for the MDP.
Moreover, any deterministic and stationary policy for the MDP implies a simple app policy attaining the same objective value.
The following statement concerning policy iteration concludes our proof.

\begin{fact}[\citet{Bertsekas}]
    \label{fact:stationaryoptimal}
    In a stationary MDP, for any (potentially history dependent and non-deterministic) policy, there exists a stationary and deterministic policy with at least as high an expected discounted payoff, on every initial state distribution.
\end{fact}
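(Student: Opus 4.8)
The plan is to prove this by the standard contraction‑mapping analysis of discounted MDPs (see \citet{Bertsekas,puterman}). Fix a stationary MDP with state space $\cS$, action set $\cA$, transition kernel $\cP$, and bounded reward $\cR$ with $\abs{\cR} \le R_{\max}$. Since $\gamma \in (0,1)$, the value $V^\pi(S) = \Exp\left[\sum_{t \ge 1} \gamma^{t-1} \cR(a_t, S_t) \mid S_1 = S\right]$ of any history‑dependent randomized policy $\pi$ is well defined and bounded by $R_{\max}/(1-\gamma)$ (as in \cref{fact:discountedpayoffexists}), so $V^*(S) \asseq \sup_\pi V^\pi(S)$ is finite for every $S \in \cS$.

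First I would show that $V^*$ solves the Bellman optimality equation $V^* = \cT V^*$, where $(\cT V)(S) \asseq \sup_{a \in \cA} \bset{\cR(a, S) + \gamma\, \Exp_{S' \sim \cP(\cdot \mid S, a)}[V(S')]}$. For the ``$\le$'' inequality, given any $\pi$ and $S$ I condition on $\pi$'s first action $a$: the continuation of $\pi$ is again a valid history‑dependent policy, so its value from the successor state $S'$ is at most $V^*(S')$, which gives $V^\pi(S) \le (\cT V^*)(S)$; taking the supremum over $\pi$ yields $V^* \le \cT V^*$. For the ``$\ge$'' inequality, I fix $a \in \cA$ and $\eps > 0$, form the policy that plays $a$ and then follows an $\eps$‑optimal policy for whatever successor $S'$ arises, and obtain $V^*(S) \ge \cR(a, S) + \gamma\big(\Exp_{S'}[V^*(S')] - \eps\big)$; supremizing over $a$ and letting $\eps \to 0$ gives $V^* \ge \cT V^*$.

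Next, since $\cT$ is a $\gamma$‑contraction in the sup norm over bounded functions, by the Banach fixed‑point theorem it has a unique fixed point, which is therefore $V^*$. I would then select a decision rule $g: \cS \to \cA$ with $g(S)$ attaining the supremum defining $(\cT V^*)(S)$; this supremum is in fact a maximum in the MDPs we construct (\cref{fact:edpmdp}), since the nontrivial action set $\cI$ is compact, $\bset{\EE{R_i}}_{i \in \cI}$ and $\bset{\EE{E_i}}_{i \in \cI}$ are compact, and the objective is continuous, while in a general MDP one invokes a measurable‑selection theorem. Let $\pi^* = (g, g, \dots)$. Its value obeys the policy‑evaluation fixed‑point equation $V^{\pi^*} = \cT^{\pi^*} V^{\pi^*}$ with $(\cT^{\pi^*} V)(S) = \cR(g(S), S) + \gamma\, \Exp_{S' \sim \cP(\cdot \mid S, g(S))}[V(S')]$, which is again a $\gamma$‑contraction and so has a unique fixed point; but $V^*$ is also a fixed point of $\cT^{\pi^*}$, because $g(S)$ attains the max in $\cT V^* = V^*$. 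Hence $V^{\pi^*} = V^*$, so $V^{\pi^*}(S) = V^*(S) \ge V^\pi(S)$ for every state $S$ and every policy $\pi$; integrating against any initial distribution $\mu$ gives $\Exp_{S_0 \sim \mu}[V^{\pi^*}(S_0)] \ge \Exp_{S_0 \sim \mu}[V^\pi(S_0)]$, which is the claim.

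The step I expect to be the main obstacle is the ``first action plus continuation policy'' decomposition used in the ``$\le$'' direction: rigorously turning a general history‑dependent randomized policy into a well‑defined family of continuation policies needs a careful measure‑theoretic construction, or else the standard reduction (e.g., \citet{puterman}, Theorem 5.5.1) showing that any history‑dependent policy admits a Markov policy with identical state--action occupancy measures and hence identical value, after which the recursion is immediate. The remaining pieces --- contraction, fixed‑point uniqueness, and the greedy‑selection argument --- are routine.
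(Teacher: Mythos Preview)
The paper does not supply its own proof of this fact: it is stated as a citation to \citet{Bertsekas} and used as a black box inside the proof of \cref{lemma:simpleoptimal}. Your proposal is precisely the standard contraction-mapping argument from \citet{Bertsekas,puterman} that the citation points to, and it is correct as sketched; the measurable-selection caveat you raise is the only genuinely delicate point, and you have identified it accurately.
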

\end{proof}

\if\arxiv0
\subsection{Proof of \cref{theorem:cputime}}
\label{section:cputimeproof}
\cputime*
\begin{proof}
    First, we define a function that maps each timestep to the number of user-app interactions that have occurred up to and including that timestep: $\mathsf{NumInteractions}(t) = \sum_{\tau=1}^t s_\tau$.
    We also use $\mathbb{T} = \bset{t \in \integers_+ \mid s_t = 1}$ to denote the set of timesteps where user-app interactions took place.
    Note that $\mathsf{NumInteractions}$ is bijective when restricted to $\mathbb{T}$.
    Since no revenue is generated on timesteps without user engagement (i.e. $r_t = 0$ when $t \not\in \mathbb{T}$), we can write \cref{eq:platformdiscountedobjective} as a sum over $\mathbb{T}$:
    \begin{align*}
        J(\pi) = \EEs{\bset{s_t, r_t, e_t, i_t}_t(\pi)}{\sum_{t=1}^\infty s_t \gamma^{t-1} r_t} = \EEs{\bset{s_t, r_t, e_t, i_t}_t(\pi)}{\sum_{t \in \mathbb{T}} \gamma^{t-1} r_t}.
    \end{align*}
    Since the mapping $\mathsf{NumInteractions}$ is bijective on $\mathbb{T}$, its inverse $\mathsf{NumInteractions}^{-1}$ is well-defined in the above series, and we have $\bset{\mathsf{NumInteractions}^{-1}(t) \mid t \in \mathbb{T}} = \integers_+$.    
    We can therefore re-index the sum over $\mathbb{T}$ in terms of the number of interactions rather than the original timesteps:
    \begin{align*}
        J(\pi) = \EEs{\bset{s_t, r_t, e_t, i_t}_t(\pi)}{\sum_{t \in \mathbb{T}} \gamma^{t-1} r_t} = \EEs{\bset{s_t, r_t, e_t, i_t}_t(\pi)}{\sum_{t = 1}^\infty \gamma^{\mathsf{NumInteractions}^{-1}(t) - 1} \ite{r}{t}},
    \end{align*}
    where $\ite{r}{t}$ denotes the realized revenue at the $t$-th interaction rather than the $t$-th timestep.
    By linearity, we can use telescoping to simplify further:
    \begin{align*}
        J(\pi) = \sum_{t = 1}^\infty \EEs{s_t, r_t, e_t, i_t}{\ite{r}{t} \gamma^{\mathsf{NumInteractions}^{-1}(t)-1}} = \sum_{t = 1}^\infty \EEs{s_t, r_t, e_t, i_t}{\ite{r}{t} \prod_{\tau=2}^t \gamma^{\ite{w}{\tau}} },
    \end{align*}
    where $\ite{w}{\tau} = \mathsf{NumInteractions}^{-1}(\tau) - \mathsf{NumInteractions}^{-1}(\tau - 1)$ represents the number of timesteps between the $(\tau-1)$-th and $\tau$-th ineteractions.
    Here, we have used the fact that the first timestep always corresponds to a user-app interaction, so $\mathsf{NumInteractions}^{-1}(1) = 1$.
    
    The random variable $\ite{w}{\tau}$ represents the amount of time that passes between the user's $(\tau-1)$th and $\tau$th interactions.
    Under a simple policy, $\ite{w}{\tau}$ is conditionally independent of $\ite{w}{\tau-1}$ given the user's state after the $(\tau - 1)$-th interaction, i.e. $\ite{x}{\tau }$.
    
    If there were no friction, $\ite{w}{\tau}$ would follow a standard geometric distribution with parameter $f(\ite{x}{\tau })$.
    Recall that a geometric distribution with parameter $p$ models the number of coin flips (with heads probability $p$) required to achieve the first head.
    When friction $c$ is present, $\ite{w}{\tau}$ follows a non-homogeneous geometric distribution with the probability mass function $\Pr(\ite{w}{\tau} = k) = (1-f_k(\ite x \tau)) \prod_{j=1}^{k-1} f_j(\ite x\tau)$ defined on the support $k \in \naturals$, where we define $f_1 = f$ and $f_n = c f$ for all $n > 1$.
    We denote this distribution by $\text{Geo}(p, c)$.
    One can compute the moment generating function of this non-homogeneous geometric distribution as
    $$\EEs{X \sim \text{Geo}(p, c)}{\exp(tX)} =  p \cdot \exp(t)  + \frac{1 - p}{1 -  \exp(t) (1 - (1-c) p)} (1-c) \cdot p \cdot \exp(t)^2$$
    for $t < - \ln(1 - (1-c)p)$. This result enables us to simplify the expectation $$\EEc{\gamma^{w_t}}{x_t} =   f(x_t) \gamma + \frac{1 - f(x_t)}{1 - \gamma (1 - (1-c) f(x_t))} (1-c) f(x_t) \gamma^2.$$
    Notice we can express this as $\EEc{\gamma^{w_t}}{x_t} = \gamma \cdot \tilde f(x_t)$.
    Finally, by the law of total expectation, we recover
    \begin{align*}
		  J(\pi) = \sum_{t = 1}^\infty \EEs{s_t, r_t, e_t, i_t}{\ite{r}{t} \prod_{\tau=2}^t \EEc{\gamma^{\ite{w}{\tau}}\, }{\, \ite{x}{\tau}} } = \EEs{\bset{s_t, r_t, e_t, i_t}_t}{\sum_{t = 1}^\infty  \ite{r}{t}{\prod_{\tau=2}^t \gamma \tilde f\left(\ite{x}{\tau}\right)}}.
    \end{align*}
\end{proof}
\fi

\subsection{Existence of Simple Optimal Policies}
An optimal app policy can be shown to exist under standard and mild assumptions inherited from the theory of Markov decision processes.
\begin{proposition}
\label{proposition:optimalexistence}
There is an optimal simple app policy in our model if at least one of the following conditions hold:
\begin{enumerate}
    \item All contents provide deterministic user experiences, i.e. $\bset{E_i}_{i \in \cI}$ are deterministic.
    \item The app chooses from a finite set of content.
    \item The set of app content is compact and 
    the CDF of content user experiences, $\Pr(E_i \leq z)$, is continuous in the content $i \in \cI$.
\end{enumerate}
\end{proposition}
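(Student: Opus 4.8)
The plan is to reduce, using the machinery already set up for \cref{lemma:simpleoptimal}, to the existence of an optimal policy in the MDP $\mathcal M=(\cS,\cA,\cP,\cR)$ of \cref{fact:edpmdp}, and then to invoke a classical existence theorem separately in each of the three cases. Concretely: by \cref{lemma:simpleoptimal} it is enough to exhibit \emph{any} optimal app policy; by \cref{fact:edpmdp} and its payoff-preserving bijection $\phi$ this is equivalent to $\mathcal M$ admitting an optimal policy; and \cref{fact:stationaryoptimal} then upgrades it to a stationary deterministic policy, whose image under $\phi$ is simple. Since the discounted objective depends on content only through the means $\E[R_i]$, I may treat rewards as deterministic, and by \cref{fact:discountedpayoffexists} they are uniformly bounded, so $\mathcal M$ is a bounded $\gamma$-discounted MDP whose Bellman optimality operator $\mathcal T$ is a contraction with a unique bounded Borel fixed point $V^*$. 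It will be convenient to first integrate out the (choice-free) disengaged states via \cref{theorem:cputime}, obtaining a reduced MDP on state space $\reals$ in which the continuation value enters through the integrand $\tilde f(\cdot)\,V^*(\cdot)$ against the law of $x+E_i$, with state-dependent discount $\gamma\tilde f(x)\le\gamma<1$. Everything then comes down to showing that the per-state supremum defining $\mathcal T V^*$ is attained by a measurably selectable action, after which \cref{fact:stationaryoptimal} finishes.

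In Case~2, $\cA=\cI\cup\{\emptyset\}$ is finite, the supremum is a maximum, and a fixed tie-break gives a Borel $\argmax$ selector, which is the classical finite-action existence theorem for discounted MDPs \cite{Bertsekas,puterman}. In Case~3, I place $\mathcal M$ in the standard ``weakly continuous'' framework: reparametrizing content by $(\E[R_i],\operatorname{law}(E_i))$, consistently with the standing compactness assumptions, makes $\cI$ a compact metric space with reward continuous in the action, and pointwise-in-$z$ continuity of $i\mapsto\Pr(E_i\le z)$ is exactly weak continuity of $i\mapsto\operatorname{law}(x+E_i)$. Using that $\tilde f$ is a continuous, non-decreasing transform of $f$ (so a right-continuous version of $f$ makes $\tilde f$ upper semicontinuous) and that $V^*$ can be taken upper semicontinuous---value iteration on the reduced MDP maps bounded u.s.c.\ functions to bounded u.s.c.\ functions, since $(x,i)\mapsto\E[\tilde f(x+E_i)\,V(x+E_i)]$ is u.s.c.\ for bounded u.s.c.\ $V\ge 0$ (shift $V$ by a constant) under weak continuity, and a supremum over the compact action set of something u.s.c.\ in (state, action) is u.s.c.\ in the state and attained---the standard compact-action, weakly-continuous-kernel theorem \cite{Bertsekas,puterman} yields an optimal stationary deterministic policy. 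In Case~1 the user-state transition is deterministic, $x\mapsto x+E_i$, so $\mathcal M$ is a deterministic discounted control problem and the effective action set $\mathcal B=\{(E_i,\E[R_i]):i\in\cI\}$ is a bounded subset of $\reals^2$; since $V^*(\cdot,0)$ is non-decreasing in the user state ($f$ being monotone, the app at a higher state can always mimic its play from a lower one), an optimal policy never uses Pareto-dominated content, and, using closedness of $\{E_i\}_i$ and $\{\E[R_i]\}_i$, one checks that restricting $\mathcal B$ to its Pareto-undominated part (augmented by forced limit points if necessary) gives a compact action set with the same optimal value, reducing Case~1 to the argument of Case~3.

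The crux is Case~3 (and the compactification step in Case~1): converting the scalar hypothesis ``$\Pr(E_i\le z)$ continuous in $i$'' into genuine weak continuity of the transition kernel and then into u.s.c.\ of $i\mapsto\int\tilde f\,V^*\,d\cP$, given that a priori $V^*$ is only bounded and Borel and $\tilde f$ may jump wherever $f$ does. The remedy is precisely the $\tilde f$-reformulation of \cref{theorem:cputime}---so that the relevant integrand is a product of the u.s.c.\ function $\tilde f$ with the u.s.c.\ value function, rather than involving the lower-semicontinuous factor $1-f$---together with running value iteration within bounded u.s.c.\ functions; Cases~1 and~2 are then routine once this template is in place.
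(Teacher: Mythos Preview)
Your scaffolding---reduce to the MDP of \cref{fact:edpmdp}, obtain a Bellman fixed point by contraction, argue attainment of the per-state supremum, then invoke \cref{fact:stationaryoptimal}---is exactly the paper's. The substantive difference is the invariant class you run value iteration through. You work inside bounded \emph{upper-semicontinuous} functions and lean on weak continuity of $i\mapsto\operatorname{law}(E_i)$ plus a Berge-type attainment argument; the paper instead restricts to \emph{monotonically non-decreasing, non-negative} functions and shows the Bellman operator preserves that class (using that $f$, hence the waiting-time law and $\tilde f$, are monotone in the post-action state). The monotonicity route makes Case~1 very short: once $W$ is non-decreasing and $E_i$ is deterministic, the continuation term $\E[\gamma^n]\,W(x+E_i)$ is non-decreasing in the scalar $E_i$, and compactness of $\{E_i\}_{i\in\cI}$ (which coincides with the standing compact set $\{\E[E_i]\}$) is invoked directly, with no Pareto step and no reduction to Case~3. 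In Case~3 the paper simply asserts continuity of the expectation in $i$ from the CDF hypothesis and applies the extreme value theorem, again leaning on $W$ being monotone rather than u.s.c. Your u.s.c.\ framework is closer to textbook weakly-continuous MDP existence theory and is arguably more careful about what ``continuity of the expectation'' actually requires, at the cost of extra regularity bookkeeping (e.g.\ needing a right-continuous version of $f$ to get $\tilde f$ u.s.c.).

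Your Case~1, however, has a gap. Even with $\{E_i\}_i$ and $\{\E[R_i]\}_i$ individually compact, the joint image $\mathcal B\subset\reals^2$ need not be closed, and the Pareto-undominated part of a bounded planar set is not compact in general; ``augmenting by forced limit points'' can introduce pairs not realized by any $i\in\cI$, so preservation of the optimal value is not automatic. Nor does the restricted problem obviously satisfy the weak-continuity hypothesis you rely on for Case~3: deterministic $E_i$ with no continuity-in-$i$ assumption yields Dirac kernels that need not vary weakly continuously. The paper sidesteps all of this by exploiting monotonicity of the value function so that only the scalar $E_i$ matters for the continuation term; that is the idea your Case~1 is missing.
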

\begin{proof}
    The MDP is trivial on states in the set $\reals \times \bset{1}$.
    It therefore suffices for us to define a Bellman operator exclusively on the state space $\reals \times \bset{0}$, which we will map onto the reals for convenience.
    For any function $W: \reals \to \reals$, we define the Bellman operator $T$ as
	\begin{align*}
		(TW)(\sat) & = \max_{i \in \arms} \EEs{e \sim E_i}{\EEs{\substack{n \sim N(\sat, e) \\ r \sim \cR(a=i, S=(x, 0))}}{{r + \gamma^n W(\sat + e)}}},
	\end{align*}
    where we define $N(x, y)$ as the random variable that is the number of timesteps that the MDP takes to reach the state $(x + y, 0)$ from the state $(x, 0)$, conditioned on the MDP reaching either the state $(x + y, 0)$ or $(x + y, 1)$ immediately after state $(x, 0)$.
    \cref{fact:nondecreasingbellman} and \cref{fact:inductivebellman} prove two important properties of this operator.
    \begin{fact}
    \label{fact:nondecreasingbellman}
    Under any of the listed assumptions, $TW(x)$ is well-defined for every monotonically non-decreasing $W$ and for every $x \in \reals$.
    \end{fact}
    \begin{proof}
    In Case 1, when user experiences are deterministic, the operator can be written as 
	\begin{align*}
		(TW)(\sat) & = \max_{i \in \arms} \EEs{\substack{n \sim N(\sat, E_i) \\ r \sim \cR(a=i, S=(x, 0))}}{{r + \gamma^n W(\sat + E_i)}},
	\end{align*}
    where we note that $N(\sat, E_i)$ is monotonically non-increasing in $E_i$ as the user demand functions are monotonically non-decreasing and $\gamma^n$ is monotonically non-increasing in $n$ as $\gamma < 1$.
    We also have that $W(\sat + E_i)$ is monotonically non-decreasing in $E_i$ by assumption.
    It thus follows that the expectation inside the maximum, namely $$\EEs{\substack{n \sim N(\sat, E_i) \\ r \sim \cR(a=i, S=(x, 0))}}{{r + \gamma^n W(\sat + E_i)}},$$ is non-decreasing in $E_i$.
    Since $\bset{E_i}_{i \in \arms}$ is compact, the maximum exists and hence $TW(x)$ exists.

    In Case 2, it is also obvious that $TW(x)$ exists when contents are finite.

    In Case 3, the below expectation is necessarily continuous $$g(i) = \EEs{e \sim E_i}{\EEs{\substack{n \sim N(\sat, e) \\ r \sim \cR(a=i, S=(x, 0))}}{{r + \gamma^n W(\sat + e)}}}.$$
    Since $\arms$ is a compact set, the existence of $TW(x)$ follows by the extreme value theorem.
    \end{proof}
    
    \begin{fact}
    \label{fact:inductivebellman}
    If $W$ is monotonically non-decreasing and bounded below by zero, then $TW$ is also monotonically non-decreasing and bounded below by zero.
    \end{fact}
    \begin{proof}
        By assumption $f(x)$ is monotonically non-decreasing in $x$.
        Thus, $N(x, E_i)$ is weakly stochastically dominated by $N(x', E_i)$ if $x > x'$.
        Since $\gamma^n$ is non-increasing in $n$ and the function $W \geq 0$ is bounded below by zero for all arguments, it also follows that $TW$ is monotonically non-decreasing.
        Since there always exists a content with non-negative revenue, i.e. $\exists i \in I$ such that $\EE{R_i} \geq 0$, it is also true that $TW$ remains bounded below by zero.
    \end{proof}

    By the previous facts, if we repeatedly apply the operator $T$ to a  monotonically non-decreasing function $W$ bounded below by zero, say the all zeros function $W(x) = 0$, we will always obtain a well-defined monotonically non-decreasing function bounded below by zero.
    A standard argument of the $\gamma$-contractiveness of $T$ and appeal to Brouwer's fixed point theorem \cite{puterman} then directly implies the existence of a monotonically non-decreasing non-negative function $W^*$ such that $TW^* = W^*$.
    There must therefore exist an optimal stationary policy $\pi$  defined as $$\pi(x) = \argmax_{i \in \arms} \EEs{e \sim E_i}{\EEs{\substack{n \sim N(\sat, e) \\ r \sim \cR(a=i, S=(x, 0))}}{{r + \gamma^n W(\sat + e)}}}.$$
    By \cref{lemma:simpleoptimal}, there therefore also exists a simple app policy that is optimal.
\end{proof}

\subsection{Non-Existence of Average Reward Objective}
A common alternative to studying discounted reward objectives is studying the average reward objective.
In our model, however, the average reward objective may not be well-defined for some stationary policies.
\begin{proposition}
    \label{proposition:averagereward}
    There exists an instance of our model in which a simple policy does not have a well-defined expected average reward:
    $$\lim_{T \to \infty} \EEs{\bset{s_t, r_t, e_t, i_t}_t}{\frac{1}{T} \sum_{t=1}^T r_t}.$$
\end{proposition}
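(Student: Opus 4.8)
The plan is to build a \emph{deterministic} instance of the model together with a simple policy whose empirical revenue average oscillates as $T\to\infty$. Since the instance is deterministic, $\EE{\tfrac1T\sum_{t=1}^T r_t}$ equals the pathwise average $\tfrac1T\sum_{t=1}^T r_t$, so it suffices to exhibit a pathwise oscillation. The role of the state in a simple policy will be to act as a clock, letting the policy dictate an arbitrary revenue schedule, and I will schedule revenue so that the Cesàro averages fail to converge.

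Concretely, I would take $\cI=\{A,B\}$ with deterministic content effects $E_A=E_B=1$, deterministic revenues $R_A=1$ and $R_B=0$, and the constant demand function $f\equiv 1$ (so the user never disengages and the friction parameter is irrelevant). The sets $\{\EE{E_i}\}$ and $\{\EE{R_i}\}$ are finite, hence compact, and $f$ is monotone nondecreasing into $[0,1]$, so this is a legal instance. Under any simple policy $g\colon\reals\to\cI$, since $f\equiv 1$ forces $s_t=1$ for every $t$ and every content has effect $1$, the user state is $x_t=t-1$ for all $t$; consequently the content shown at timestep $t$ is $g(x_t)$ and the realized revenue is the deterministic quantity $r_t=R_{g(t-1)}$ (the precise shift in the index is immaterial). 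I would then choose $g$ with $g(x)=A$ exactly on $\bigcup_{k\ge 0}[4^{2k},4^{2k+1})$ and $g(x)=B$ elsewhere, so that $r_t=1$ precisely when $t-1$ lies in one of these geometrically lengthening intervals.

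The remaining step is to evaluate the Cesàro averages along two subsequences of times. Just after the $m$-th ``one'' block ends, i.e.\ at $T=4^{2m+1}$, the number of timesteps $t\le T$ with $r_t=1$ is $\sum_{k=0}^m\bigl(4^{2k+1}-4^{2k}\bigr)=\tfrac{16^{m+1}-1}{5}$, so $\tfrac1T\sum_{t\le T}r_t\to\tfrac45$ along this subsequence. Just after the following ``zero'' block ends, i.e.\ at $T=4^{2m+2}$, that count is unchanged while $T$ has quadrupled, so $\tfrac1T\sum_{t\le T}r_t\to\tfrac15$. Hence $\limsup_{T}\tfrac1T\sum_{t\le T}r_t\ge\tfrac45>\tfrac15\ge\liminf_{T}\tfrac1T\sum_{t\le T}r_t$, the limit does not exist, and by determinism the same holds for $\EE{\tfrac1T\sum_{t=1}^Tr_t}$, which proves the proposition.

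I do not expect a real obstacle here; the construction is essentially routine, and the only points needing care are the off-by-one bookkeeping (where the state crosses each threshold $4^j$, and the shift $t\mapsto t-1$) and confirming the two limiting constants $\tfrac45$ and $\tfrac15$. It is worth noting that unboundedness of the state space is genuinely used: over any finite state space the Cesàro averages of a Markov chain always converge. Finally, if one prefers an instance in which disengagement actually occurs, one can instead set $f$ to a constant $p\in(0,1)$ and rerun the argument, at the cost of a routine concentration estimate for the number of interactions by time $T$; the deterministic version above sidesteps this.
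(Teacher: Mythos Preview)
Your proposal is correct and takes essentially the same approach as the paper: make the dynamics deterministic by setting $f\equiv 1$ and $E_i\equiv 1$, so the state becomes a clock, and then have the simple policy schedule content in geometrically lengthening blocks to force the Cesàro averages to oscillate. The only differences are cosmetic---the paper uses rewards $\{-1,+1\}$ with base $3$ (yielding $\liminf\le -1/2$ and $\limsup\ge 1/2$), whereas you use $\{0,1\}$ with base $4$ (yielding limits $1/5$ and $4/5$)---but the underlying construction and argument are the same.
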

\begin{proof}
Consider an instance of our model where the app creator chooses from two pieces of content $I = \bset{a, b}$, each of which give deterministic user experiences $E_a = E_b = 1$ and revenues $R_a = -1$ , $R_b = 1$.
Let $f(0) = 1$ for simplicity.
Consider the following simple policy $\pi: \reals \to \bset{a, b}$ where for each $q \in \integers$ and every $x \in (3^q, 3^{q+1}]$, we set $\pi(x) = a$ if $q$ is odd and $\pi(x) = b$ if $q$ is even.
That is, the policy alternates between showing content $a$ for increasing long periods to showing content $b$ for increasing long periods, such that the empirical content distribution has no limit.

The deterministic trajectory of this policy will be user states $1, 2, \dots$ and revenues $$-1, +1, +1, -1, -1, -1, -1, -1, -1, +1, \dots.$$
Direct computation gives that the average reward series alternates between values of $-0.5$ and $0.5$ with $\liminf_{T \rightarrow \infty} \frac{1}{T} \sum_{t=1}^T r_t \leq -0.5$ and $\limsup_{T \rightarrow \infty} \frac{1}{T} \sum_{t=1}^T r_t \geq 0.5$.
The limit that is expected average reward therefore does not exist.
\end{proof}

\section{Omitted Proofs and Additional Results for \cref{section:optimal}}
\subsection{Greedy Revenue Optimization}
Consider a user that already has a user state $x_t$ that is at an extreme value, whether it is extremely high (user demand is near maximum) or extremely low (user demand is near minimum).
We will show that app creators are always incentivized to act greedily with such users and show the highest revenue-earning content.

\begin{proposition}
    \label{proposition:extremesexploit}
    Suppose the user experiences provided by app content are of finite variance. 
    For any $\epsilon > 0$, there is a threshold $x^* \in \reals$ where for all higher user states $\ite{x}{t} \geq x^*$, the app creator has an $\epsilon$-optimal simple policy that shows the highest revenue earning content in the next timestep, i.e. $\ite{i}{t+1} = \argmax_{i \in \cI} \EE{R_i}$. 
    Similarly, there exists a threshold $x^* \in \reals$ where for all lower user states $\ite{x}{t} \leq x^*$, the app creator is again $\epsilon$-incentivized to show the highest revenue earning content.
\end{proposition}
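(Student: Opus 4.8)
The plan is to argue in the Markov-decision-process formulation behind \cref{lemma:simpleoptimal} and to reduce the statement to a one-step comparison plus a discounted contraction, in the style of standard ``greedy is near-optimal on flat regions'' arguments. Fix $i^* \in \argmax_{i\in\cI}\EE{R_i}$, which exists because $\{\EE{R_i}\}_{i\in\cI}$ is compact. Let $W^*$ be the optimal value function, measured from an interaction in user state $x$; by \cref{theorem:cputime} the effective discount from an interaction at state $x+e$ to the next one is $\gamma\tilde f(x+e)$, so $W^*$ satisfies $W^*(x)=\max_{i\in\cI}\big(\EE{R_i}+\gamma\,\EEs{e\sim E_i}{\tilde f(x+e)\,W^*(x+e)}\big)$. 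By \cref{fact:discountedpayoffexists} every policy's value, and in particular $W^*$, is bounded in absolute value by $M:=\sup_i|\EE{R_i}|/(1-\gamma)$, and by the monotonicity argument in \cref{fact:inductivebellman} $W^*$ is non-decreasing. I will show that, for a suitable threshold $x^*$, the simple policy $\tilde\pi$ which plays $i^*$ whenever the user state is at least $x^*$ and otherwise plays an (approximately) $W^*$-greedy action is $\epsilon$-optimal.

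\textbf{Flattening at infinity (the key step).} Set $g(y):=\tilde f(y)\,W^*(y)$, so $|g|\le M$ since $\tilde f\in[0,1]$. As $f$ is monotone and bounded it has a limit at $+\infty$; since $\tilde f(y)$ is a fixed continuous function of $f(y)$ (\cref{theorem:cputime}) and $W^*$ is monotone and bounded, $g(y)\to g_\infty$ for some $g_\infty$ as $y\to+\infty$. The claim I will establish is: for every $\delta>0$ there is $x_\delta$ such that $\big|\EEs{e\sim E_i}{g(x+e)}-g_\infty\big|\le\delta$ for all $x\ge x_\delta$, \emph{uniformly over $i\in\cI$}. To prove it, fix $y_0$ with $|g(y)-g_\infty|\le\delta/2$ on $[y_0,\infty)$, split $\EEs{e\sim E_i}{g(x+e)}$ according to whether $x+e\ge y_0$, and bound the remaining part by $2M\,\Pr(e<y_0-x)\le 2M\,\mathrm{Var}(E_i)/(x-y_0-|\EE{E_i}|)^2$ via Chebyshev; the hypothesis that content experiences have (uniformly) bounded variance, together with compactness of $\{\EE{E_i}\}$, makes the resulting $x_\delta$ independent of $i$.

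\textbf{The contraction.} Given $\epsilon>0$, put $\delta:=(1-\gamma)\epsilon/(2\gamma)$ and $x^*:=x_\delta$, and let $\Delta:=W^*-W^{\tilde\pi}\ge 0$; since every value function is bounded by $M$, $\sup\Delta\le 2M<\infty$. For $x<x^*$, $\tilde\pi$ is $W^*$-greedy, so $\Delta(x)=\gamma\,\EEs{e\sim E_{\tilde\pi(x)}}{\tilde f(x+e)\Delta(x+e)}\le\gamma\sup\Delta$. For $x\ge x^*$, the Bellman equation and the flattening estimate (and the fact that $i^*$ maximizes $\EE{R_i}$) give $W^*(x)\le\EE{R_{i^*}}+\gamma g_\infty+\gamma\delta$, while the definition of $\tilde\pi$ gives $W^{\tilde\pi}(x)=\EE{R_{i^*}}+\gamma\,\EEs{e\sim E_{i^*}}{g(x+e)}-\gamma\,\EEs{e\sim E_{i^*}}{\tilde f(x+e)\Delta(x+e)}\ge\EE{R_{i^*}}+\gamma g_\infty-\gamma\delta-\gamma\sup\Delta$; subtracting, $\Delta(x)\le 2\gamma\delta+\gamma\sup\Delta$. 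Taking the supremum over all $x$ yields $\sup\Delta\le 2\gamma\delta/(1-\gamma)=\epsilon$, hence $\max_\pi J(\pi)-J(\tilde\pi)=\Delta(0)\le\epsilon$, so $\tilde\pi$ is $\epsilon$-optimal and plays $i^*$ at every state $\ge x^*$. The low-state claim is proved by the mirror argument: $f$, $\tilde f$, $W^*$ have limits at $-\infty$ as well, Chebyshev now controls $\Pr(x+e>y_0)$ as $x\to-\infty$, and $\tilde\pi$ is taken to play $i^*$ on $\{x\le x^*\}$.

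\textbf{Main obstacle.} The only delicate step is the uniform-in-$i$ flattening estimate: it is where both the monotonicity and boundedness of $W^*$ (so that $g$ genuinely flattens far out) and the finite-variance assumption (so that no content's experience shock can, with non-negligible probability, pull $x+e$ back into the region where $g$ is not yet flat) are used. Once that estimate is in place, the rest is a standard $\gamma$-contraction bound on the value gap.
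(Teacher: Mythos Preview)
Your proof is correct, but it takes a different route from the paper's. The paper argues a one-step $Q$-value comparison: it shows directly that for large $x$,
\[
Q(i_{\mathrm{greedy}},x)\ \ge\ \max_{i}Q(i,x)-\epsilon,
\]
by writing the gap as $\EE{V(x+E_{i_{\mathrm{opt}}})-V(x+E_{i_{\mathrm{greedy}}})}$ and using that a bounded monotone $V$ satisfies $V(x)-V(x-\delta)\le\epsilon$ for large $x$, together with Chebyshev on $E_{i_{\mathrm{opt}}}$ and $E_{i_{\mathrm{opt}}}-E_{i_{\mathrm{greedy}}}$. There is no policy construction and no contraction; the conclusion is purely local (at each large state, the greedy action is $\epsilon$-optimal in the $Q$-sense). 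You instead build a single global policy $\tilde\pi$ (greedy above $x^*$, $W^*$-greedy below) and run a $\gamma$-contraction on $\Delta=W^*-W^{\tilde\pi}$, powered by the uniform flattening of $g=\tilde f\cdot W^*$. Both arguments ultimately rest on the same two ingredients---monotone bounded value function flattening at infinity, and Chebyshev tail control on the experience shocks---but your packaging gives a slightly stronger output (one policy that is $\epsilon$-optimal from \emph{every} starting state and plays $i^*$ at all states $\ge x^*$), at the cost of a longer argument. The paper's one-step comparison is shorter and suffices for the statement as written. One small point worth making explicit in either approach: the Chebyshev step needs the variances $\mathrm{Var}(E_i)$ to be bounded \emph{uniformly} in $i$ (you flag this; the paper uses it implicitly since $i_{\mathrm{opt}}$ varies with $x$).
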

\begin{proof}
We want to show that for any $\epsilon > 0$, for sufficiently large or small $x$,
$$Q(\argmax_{i \in \cI} \EE{R_i}, x) \geq \max_{i \in \cI} Q(i, x) - \epsilon.$$

First, we recall that, by the compactness of the revenue expectations $\bset{\EE{R_i}}_{i\in\arms}$, we can upper bound the amount of revenue that the app creator can attain in a single interaction by some constant value $K \in \reals$.
It therefore follows that the value function at any user state $x \in \reals$ is upper bounded by the geometric series $V(x) \leq \frac{K \gamma}{1 - \gamma}$.
We also know the value function is non-negative and monotonically non-decreasing at all user states $x \in \reals$.
Since the domain of the value function $V$ is $\reals$ yet the range of $V$ is bounded, we know that for every $\delta > 0$ and $\epsilon > 0$ there exists a threshold $x^*$ such that for all higher user states $x > x^*$, we have
\begin{align*}
    V(x) - V(x - \delta) \leq \epsilon.
\end{align*}
For convenience, we now define $i_{\mathrm{greedy}} \asseq \argmax_{i \in \arms} \EE{R_i}$ to be the content that greedily maximizes expected revenue, which must exist by the compactness of $\bset{\EE{R_i}}_{i \in \arms}$.
We now similarly define $i_{\mathrm{opt}} \asseq \argmax_{i \in \arms} Q(i, x)$ to be the content that the app shows if it continues executing some optimal policy.
We can write the difference in the Q-function values of showing the optimal content $i_\mathrm{opt}$ and showing the greedy content $i_\mathrm{greedy}$ as
\begin{align*}
    &Q(i_{\mathrm{opt}}, x) - Q(i_{\mathrm{greedy}}, x) \\
    &= \tilde f(x) \gamma \para{{\EE{R_{i_{\mathrm{opt}}}} -  \EE{R_{i_{\mathrm{greedy}}}} + \EE{V(x + E_{i_{\mathrm{opt}}})} - \EE{V(x + E_{i_{\mathrm{greedy}}})}}} \\
    &\leq \tilde f(x) \gamma \para{\EE{V(x + E_{i_{\mathrm{opt}}}) - V(x + E_{i_{\mathrm{opt}}} - (E_{i_{\mathrm{opt}}} - E_{i_{\mathrm{greedy}}}))}}\\
    &\leq {\EE{V(x + E_{i_{\mathrm{opt}}}) - V(x + E_{i_{\mathrm{opt}}} - (E_{i_{\mathrm{opt}}} - E_{i_{\mathrm{greedy}}}))}}.
\end{align*}
In the final inequality, we use the fact that $Q(i_{\mathrm{opt}}, x) \geq Q(i_{\mathrm{greedy}}, x)$ by definition of $i_\mathrm{opt}$ and the fact that $\tilde{f}(x) \leq 1$ and $\gamma \leq 1$.

We now use the fact that $E_{i_{\mathrm{opt}}}$ and $E_{i_{\mathrm{greedy}}}$ each have bounded variance and finite means to invoke Chebyshev's inequality.
Chebyshev's inequality gives that, for any probability $p \in (0, 1)$, there exists a $\delta'$ such that $\abs{E_{i_{\mathrm{opt}}}} \leq \delta'$ with probability at least $1-p/2$.
We can similarly apply Chebyshev's inequality to the random variable $E_{i_{\mathrm{opt}}} - E_{i_{\mathrm{greedy}}}$, which is also of bounded variance and finite mean.
By Chebyshev's inequality, for any $p > 0$, there exists a $\delta$ such that $\abs{E_{i_{\mathrm{opt}}} - E_{i_{\mathrm{greedy}}}} \leq \delta$ with probability at least $1-p/2$.
Suppose that, for some fixed choice of $\delta, \delta'$, we condition on the event $\abs{E_{i_{\mathrm{opt}}} - E_{i_{\mathrm{greedy}}}} \leq \delta$  and $\abs{E_{i_{\mathrm{opt}}}} \leq \delta'$.
It then immediately follows that there exists some user state $x^*$ such that for all greater user states $x > x^*$,
\begin{align*}
    V(x + E_{i_{\mathrm{opt}}}) - V(x + E_{i_{\mathrm{opt}}} - (E_{i_{\mathrm{opt}}} - E_{i_{\mathrm{greedy}}}))
    \leq V(x + E_{i_{\mathrm{opt}}}) - V(x + E_{i_{\mathrm{opt}}} - \delta)
    \leq \epsilon / 2.
\end{align*}
Applying a union bound, we know that there is some choice of $\delta, \delta'$ such that the aforementioned event $\abs{E_{i_{\mathrm{opt}}} - E_{i_{\mathrm{greedy}}}} \leq \delta$  and $\abs{E_{i_{\mathrm{opt}}}} \leq \delta'$ occurs with probability at least $1-p$.
When this event does not occur, we can still apply a deterministic upper bound to the value function of $$V(x + E_{i_{\mathrm{opt}}}) - V(x + E_{i_{\mathrm{opt}}} - (E_{i_{\mathrm{opt}}} - E_{i_{\mathrm{greedy}}})) \leq \frac{K \gamma}{1 - \gamma}.$$
Thus, if we choose the probability $p = \frac{\epsilon}{2} \frac{1 - \gamma}{K \gamma}$, we have that there exists a user state $x^*$ such that for all $x \geq x^*$
\begin{align*}
    Q(i_{\mathrm{opt}}, x) - Q(i_{\mathrm{greedy}}, x) &\leq {\EE{V(x + E_{i_{\mathrm{opt}}}) - V(x + E_{i_{\mathrm{opt}}} - (E_{i_{\mathrm{opt}}} - E_{i_{\mathrm{greedy}}}))}} \\
    &\leq (1-p) \frac{\epsilon}{2} + p \frac{K \gamma}{1-\gamma} \\
    &\leq \epsilon,
\end{align*}
where the second inequality applies the law of total expectation.

The second half of the claim---the existence of a threshold $x^*$ such that for all lower user states $x \leq x^*$ apps are also at least $\epsilon$-incentivized to show the highest revenue content---follows identically.
\end{proof}

\subsection{Proof of \cref{lemma:piecewiseconstant}}
\piecewiseconstant*

\begin{proof}
We refer to the sequence of user states $\ite{x}{1}, \ite{x}{2}, \dots$ that result from executing a simple policy $\pi$ as the \emph{user state trajectory} of $\pi$.
In the linear setting, this trajectory is deterministic.
We similarly refer to the sequence of content $\ite{i}{1}, \ite{i}{2}, \dots$ that result from executing a simple policy $\pi$ as the \emph{action trajectory} of $\pi$. 
We will use the shorthand $D$ to denote the set of discontinuities of the demand function $f$.
Our proof consists of three steps.
We will first establish that the set of content maximizing the Q-function at each state is compact.
We then accordingly construct a specific instance of a simple optimal policy $\pi^*$.
We conclude by proving our four claims.

We begin by remarking that, since the demand function $f$ is piecewise linear and right-continuous with finite discontinuities, $\tilde f$ is also piecewise linear and right-continuous with finite discontinuities.
\begin{fact}
    The function $\tilde f$ is piecewise linear and right-continuous with $\setsize{D} < \infty$ discontinuities.
\end{fact}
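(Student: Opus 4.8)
The plan is to recognize that $\tilde f$ factors through $f$: by \cref{theorem:cputime}, $\tilde f(x)$ depends on $x$ only via the value $f(x)$, so $\tilde f = \psi \circ f$, where $\psi \colon [0,1] \to \reals$ is the fixed single-variable rational function
\[
\psi(p) \asseq p + \frac{1-p}{1-\gamma\para{1-(1-c)p}}\,(1-c)\,p\,\gamma .
\]
Before transferring structure, I would check that $\psi$ is well-defined and continuous on $[0,1]$: its denominator equals $1 - \gamma\para{1-(1-c)p} = (1-\gamma) + \gamma(1-c)p$, which for $p \in [0,1]$, $c \in [0,1]$, $\gamma \in (0,1)$ is bounded below by $1-\gamma > 0$. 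Hence $\psi$ is a quotient of polynomials whose denominator never vanishes on $[0,1]$, so it is continuous there (indeed real-analytic).

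Next I would push the three properties of $f$ through $\psi$. Since the demand function has complexity $k < \infty$, it is piecewise constant with breakpoints $D = \bset{b_1, \dots, b_m}$, $m \le k$, and takes a constant value in $[0,1]$ on each of the finitely many intervals these breakpoints determine. Composing with $\psi$, the function $\tilde f = \psi \circ f$ is constant on each of those intervals; in particular it is piecewise linear (the pieces being constants), and its set of discontinuities is contained in $D$, so $\setsize{D} \le k < \infty$. For right-continuity, fix $x_0 \in \reals$; by right-continuity of $f$ together with continuity of $\psi$ we get $\lim_{x \downarrow x_0} \tilde f(x) = \psi\para{\lim_{x \downarrow x_0} f(x)} = \psi\para{f(x_0)} = \tilde f(x_0)$.

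I do not anticipate a genuine obstacle here; the statement is essentially a ``composition preserves good structure'' observation. The only point that needs care is the denominator bound above, which is what guarantees $\psi$ is continuous (not merely rational) on the range of $f$, and hence that composition introduces no new discontinuities. It is also worth noting that ``piecewise linear'' is being used in the sense that subsumes piecewise constant: the claim really does rely on $f$ having finite complexity, since for a genuinely non-constant piecewise-linear $f$ the composition $\psi \circ f$ would typically be only piecewise real-analytic rather than piecewise linear.
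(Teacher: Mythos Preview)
Your proposal is correct and matches the paper's approach, which in fact treats this fact as an immediate remark without any detailed argument (it simply states that since $f$ is piecewise constant and right-continuous with finitely many discontinuities, so is $\tilde f$). Your write-up spells out the composition $\tilde f = \psi \circ f$ and the denominator bound that the paper leaves implicit, which is a welcome addition of rigor rather than a different route.
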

We next observe that any simple optimal policy should only ``hold'' user state constant at user states that correspond to discontinuities of the user demand function $f$.
In the sequel, we will use $V_\pi(x)$ to denote the discounted payoff that results from executing a simply policy $\pi$ on a user with initial state $x$.
Observe that the value function can be written as $V(x) = \max_{\pi} V_\pi(x)$.
We will also write $Q_\pi(i, x)$ to denote the discounted payoff that results from showing a content $i$ to a user with initial state $x$ and thereafter executing a simple policy $\pi$.
\begin{lemma}
    \label{lemma:holdingdiscontinuity}
    A simple optimal policy $\pi$ should only keep user states constant at discontinuities of $f$: that is, if for some $x \in \reals$ and simple policy $\pi$, the equality $\pi(x) = C_E$ holds and $\pi$ is optimal, then $x \in D$.
\end{lemma}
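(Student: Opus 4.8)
The plan is to argue by contradiction. Suppose $\pi$ is a simple optimal policy with $\pi(x) = C_E$ but $x \notin D$. Since content $i = C_E$ has zero effect on user experience (in the linear setting $E_{C_E} = C_E - C_E = 0$), the deterministic trajectory of $\pi$ started at state $x$ never leaves $x$; by \cref{theorem:cputime} its discounted payoff is the geometric sum $V_\pi(x) = \frac{C_R + C_E}{1 - \gamma \tilde f(x)}$, which, because $\pi$ is optimal, equals the optimal value $V(x)$. Writing $p = \tilde f(x)$, this rearranges to the Bellman-type identity $(C_R + C_E) + \gamma p\, V_\pi(x) = V_\pi(x)$, which I will use twice.

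\textbf{Key step.} Next I would use the hypothesis $x \notin D$. Since $\tilde f$ is a fixed scalar function of $f(x)$ (see the formula in \cref{theorem:cputime}), it is piecewise constant with discontinuities contained in $D$, so $x \notin D$ forces $\tilde f \equiv p$ on a two-sided neighborhood $(x - \epsilon, x + \epsilon)$. I then exhibit a strictly better policy $\pi'$. Fix $\delta \in (0,\, \min\{\epsilon,\, K - C_E\})$, which guarantees both $C_E + \delta$ and $C_E - \delta$ lie in $\cI = [-K, K]$. Let $\pi'$ first show content $C_E + \delta$ at $x$ (revenue $C_R + C_E + \delta$, effect $-\delta$, so the state moves to $x - \delta$), then show content $C_E - \delta$ at $x - \delta$ (revenue $C_R + C_E - \delta$, effect $+\delta$, so the state returns to $x$), and from $x$ onward follow $\pi$, i.e.\ play $C_E$ forever. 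Unrolling the objective of \cref{theorem:cputime} and using $\tilde f(x - \delta) = \tilde f(x) = p$ together with the identity above, a short calculation gives the value from $x - \delta$ onward under $\pi'$ as $V_\pi(x) - \delta$, and hence $V_{\pi'}(x) = (C_R + C_E + \delta) + \gamma p\,(V_\pi(x) - \delta) = V_\pi(x) + \delta(1 - \gamma p)$. Since $\tilde f \le 1$ we have $\gamma p \le \gamma < 1$, so $V_{\pi'}(x) > V_\pi(x) = V(x)$, contradicting optimality. Therefore $x \in D$.

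\textbf{Main obstacle.} The delicate point is that the bookkeeping in the last step must come out with the correct sign: a priori, depressing the user's state by $\delta$ to pocket $\delta$ of extra revenue now could reduce future revenue, and one must see that this loss is exactly cancelled. It is, precisely because $\tilde f$ is locally flat (so depressing the state costs nothing in the effective discount rate) and because the corrective second step restores the state for free in discount terms — leaving a clean net gain of $\delta(1-\gamma p) > 0$. Equivalently, the argument can be cast as two bounds on $V(x) - V(x - \delta)$: optimality of $C_E$ at $x$ forces $V(x) - V(x - \delta) \ge \delta/(\gamma p)$, while the one-step ``come back'' policy forces $V(x) - V(x - \delta) \le \delta$, and $\delta/(\gamma p) > \delta$ is an immediate contradiction. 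A minor point to state explicitly is why optimality of $\pi$ licenses comparing $V_\pi(x)$ with $V_{\pi'}(x)$ at this particular $x$: by \cref{lemma:simpleoptimal} and the MDP reformulation, a simple optimal policy corresponds to a stationary policy that is optimal from every initial state, so $V_\pi(x) = V(x) \ge V_{\pi'}(x)$.
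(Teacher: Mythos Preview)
Your proof is correct and takes essentially the same route as the paper: both argue by contradiction, using the local flatness of $\tilde f$ near a non-discontinuity $x$ to exhibit a deviation that earns strictly more. The only difference is cosmetic --- the paper's witness policy takes a single step down to some $x' < x$ with $\tilde f(x') = \tilde f(x)$ and stays there forever (net gain $x - x' > 0$), whereas yours performs a two-step down-and-back detour for a net gain of $\delta(1 - \gamma p)$.
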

\begin{proof}
    Fix a user state $x \in \reals$ and suppose to the contrary that it is not a discontinuity: $x \notin D$.
    Since there are only finite discontinuities, there must exist a lower level of user state $x' < x$ such that user engagement is unchanged: $f(x) = f(x')$.
    Moreover, we can choose $x'$ so that we also have that $x'$ is reachable from $x$ within one timestep: $x' \geq x + C_E - K$.
    For example, we can define $x' = \max \bset{d \in D \mid d < x} \cup \bset{x + C_E - K}$.
    We will use $i' \in (0, K]$ to denote the content that one shows to lower user state from $x$ to $x'$ in a single timestep.
    
    Let $\pi$ be a simple policy that holds user state constant at $x$, i.e. set $\pi(x) = C_E$ as $E_{C_E} = 0$.
    We will design a policy $\pi'$ to be a witness to the suboptimality of $\pi$ by attaining a strictly larger discounted objective value at an initial user state of $\ite{x}0 = x$.
    We define this policy $\pi'$ as first showing the content $i'$ to lower user state from $x$ to $x'$, and then holding user state constant into perpetuity by repeatedly showing the content $i_{\mathrm{hold}} = C_E$.
    This is a valid construction as $i' \in [-K, K]$ by construction.
    Using the geometric series identity, we can explicitly write
    $$V_{\pi}(x) = \frac{C_R + C_E}{1 - \tilde{f}(x)\gamma }, \quad V_{\pi'}(x) =\frac{C_R + C_E}{1 - \tilde{f}(x)\gamma } +x - x',$$
  where we use the fact that $\tilde{f}(x') = \tilde{f}(x)$.
  Since $x > x'$ by construction, we have as desired that $V_{\pi'}(x) > V_{\pi}(x)$ and a witness to the suboptimality of $\pi$.
\end{proof}

The next lemma constrains the possible trajectories of simple optimal policies when the user state trajectory does not coincide with the discontinuities $D$.
\trajectory*
\begin{proof}
    We begin by proving the first claim.
    Suppose to the contrary that for some intermediate user state $\ite{x}0 \in \reals$ (which we re-index to timestep 0 for notational convenience), applying the policy $\pi$ results in a user state trajectory with the subsequence $\ite{x}1, \ite{x}2, \dots, \ite{x}T$ where the following conditions simultaneously hold:
    \begin{enumerate}
        \item The user states $\ite{x}2, \dots, \ite{x}T$ do not coincide with any of the discontinuities of $f$.
        \item $\pi$ does not maximally increase user state at the end of the subsequence: $\pi(\ite{x}T) > -K$.
        \item At some $t \in [T-1]$, the app does not maximally decrease user state: $\pi(\ite{x}t) < K$.
    \end{enumerate}
    There must exist some $\delta > 0$ so that $\pi(\ite{x}t) + \delta \leq K$, $\pi(\ite{x}T) - \delta \geq -K$ and $f(\ite{x}{\tau} - \delta) = f(\ite{x}{\tau})$ for all $\tau \in [t-1, T]$.
    In fact, we can simply choose $$\delta \asseq \min \bset{K - \pi(\ite{x}t), \pi(\ite{x}T) + K} \cup \bset{0.5 \cdot |{d - \ite{x}\tau}| \mid \tau \in [t+1, T], d \in D}.$$
    This $\delta$ is bounded away from zero since we assume that $\ite{x}2, \dots, \ite{x}T$ are not discontinuities of $f$ and well-defined since there are only finite discontinuities in $D$.

    We now construct a policy $\pi'$ which yields the trajectory $\ite{x}1, \dots, \ite{x}t, \ite{x}{t+1} - \delta, \dots, \ite{x}T - \delta$ and resumes the trajectory of $\pi$ at the $(T+1)$th timestep.
    This policy $\pi'$ is valid because we chose $\delta$ so as to ensure that $\pi(\ite{x}t) + \delta \leq K$ and thus $\pi'(\ite{x}t) \leq K$.
    Similarly, we chose $\delta$ so that $\pi(\ite{x}T) - \delta \geq -K$  and by extension $\pi'(\ite{x}T) \geq -K$.
    Thus, at the only two timesteps in which the actions of policy $\pi$ differ from $\pi'$, timesteps $t$ and $T$, we are still guaranteed the new actions are legal: $\pi'(\ite{x}t) \in [-K, K]$ and $\pi'(\ite{x}T) \in [-K, K]$.
    
    We can verify this policy $\pi'$ guarantees a strictly higher discounted payoff than $\pi$, leading to a contradiction with the optimality of $\pi$.
    Let us write the user state trajectory of the original policy $\pi$ as $\ite{x}1, \ite{x}2, \dots$ and that of the modified policy $\pi'$ as $\ite{x}1, \dots, \ite{x}{t-1}, \ite{\hat x}{t}, \dots, \ite{\hat x}{t}, \ite{x}{T+1}, \dots$.
    Let us also write the original actions trajectory of policy $\pi$ as $i_1, i_2, \dots$.
    Then we can express each of the payoffs of $\pi$ and $\pi'$ as
    \begin{align*}
        V_{\pi}(\tsv x0) &= \sum_{w=1}^{\infty} \para{\prod_{\tau=2}^w \gamma \tilde f(\ite{x}\tau)} (i_w + C_R),  \\
         V_{\pi'}(\tsv x0) &= \sum_{w=1}^{t-1} \para{\prod_{\tau=2}^w \gamma \tilde f(\ite{x}\tau)} (i_w + C_R)  + \para{\prod_{\tau=2}^{t} \gamma \tilde f(\ite{x}\tau)}  (\ite{i}t + \delta + C_R) \\
       & +  \sum_{w=t+1}^{T-1} \para{\prod_{\tau=2}^{t} \gamma \tilde f(\ite{x}\tau)} \para{\prod_{\tau=t+1}^w \gamma \tilde f(\ite{\hat x}{\tau})} (\ite{i}w + C_R)\\ 
        & + \para{\prod_{\tau=2}^{t} \gamma \tilde f(\ite{x}\tau)}  \para{\prod_{\tau=t+1}^T \gamma \tilde f(\ite{\hat x}{\tau})}  (\ite{i}T - \delta + C_R) \\
    &    +  \sum_{w=T+1}^{\infty} \para{\prod_{\tau=2}^t \gamma \tilde f(\ite{x}\tau)} \para{\prod_{\tau={t+1}}^T \gamma \tilde f(\ite{\hat x}{\tau})} \para{\prod_{\tau=T+1}^t \gamma \tilde f(\ite{x}\tau)} (\ite{i}w + C_R).
    \end{align*}

    Since we chose $\delta$ to guarantee that $\tilde{f}(\ite{\hat x}{\tau}) = \tilde{f}(\ite{x}\tau)$ at all timesteps $\tau \in [t, T]$, we can thus simplify the payoff of policy $\pi'$ as
    \begin{align*}
        V_{\pi'}(\tsv x0) &= \sum_{w=1}^{\infty} \para{\prod_{\tau=2}^w \gamma \tilde f(\ite{x}\tau)} (\ite{i}w + C_R) +  \delta\para{\prod_{\tau=2}^t \gamma \tilde f(\ite{x}\tau)} \para{1 - \para{\prod_{\tau \in [t+1, T]} \gamma \tilde f(\ite{x}\tau)}}  \\
        & =  V_{\pi}(\tsv x0) +  \delta\para{\prod_{\tau=2}^t \gamma \tilde f(\ite{x}\tau)} \para{1 - \para{\prod_{\tau \in [t+1, T]} \gamma \tilde f(\ite{x}\tau)}} \\
        & >  V_{\pi}(\tsv x0).
    \end{align*}
    Thus, $\pi'$ is thus a witness to the suboptimality of $\pi$.

    We can very similarly prove the second claim.
    Suppose to the contrary that for some choice of initial user state, applying the policy $\pi$ results in a user state trajectory with the initial subsequence $\ite{x}1, \ite{x}2, \dots, \ite{x}T$ where the following conditions simultaneously hold:
    \begin{enumerate}
        \item The user states $\ite{x}2, \dots, \ite{x}T$ do not coincide with any of the discontinuities of $f$.
        \item The policy $\pi$ does not maximally decrease user state as its first action: $\pi(\ite{x}1) < K$.
        \item At some $t \in [2, T]$, the app does not maximally increase user state: $\pi(\ite{x}t) > -K$.
    \end{enumerate}
    As before, there must exist some small value $\delta > 0$ so that $\pi(\ite{x}t) - \delta \geq -K$, $\pi(\ite{x}1) + \delta \leq K$ and $f(\ite{x}{2} - \delta) = f(\ite{x}{2}), \dots f(\ite{x}{t} - \delta) = f(\ite{x}{t})$.
    This time, we construct a (non-simple) policy $\pi'$ which executes the trajectory $\ite{x}1, \ite{x}2 - \delta, \dots, \ite{x}t - \delta, \ite{x}{t+1}, \dots, \ite{x}T$ and then resumes the trajectory of $\pi$ starting at the $(t+1)$st timestep.
    Again, we note that this policy is valid, since we chose $\delta$ so as to ensure that $\pi(\ite{x}1) + \delta \leq K$ and $\pi(\ite{x}t) - \delta \geq -K$.
    
    We also observe this policy $\pi'$ guarantees a strictly higher objective value than $\pi$, leading to a contradiction with the optimality of $\pi$.
    Let us write the user state trajectory of the original policy $\pi$ as $\ite{x}1, \ite{x}2, \dots$ and that of the modified policy $\pi'$ as $\ite{x}1, \ite{x}2, \dots, \ite{\hat x}{t}, \ite{x}{t+1}, \dots$.
    Let us also write the actions trajectory of policy $\pi$ as $\ite{i}1, \ite{i}2, \dots$.
    Then we can express each of the objectives of $\pi$ and $\pi'$ as
    \begin{align*}
        V_{\pi}(\tsv x0) &= \sum_{w=1}^{\infty} \para{\prod_{\tau=2}^w \gamma \tilde f(\ite{x}\tau)} (\ite{i}w + C_R),  \\
        V_{\pi'}(\tsv x0) &= 
        \ite{i}1 + \delta + C_R \\
        & + \sum_{w=2}^{t-1} \para{\prod_{\tau=2}^w \gamma \tilde f(\ite{x}{\tau})} (\ite{i}w + C_R) \\
        & +   
 \para{\prod_{\tau=2}^{t} \gamma \tilde f(\ite{x}{\tau})} (\ite{i}t - \delta + C_R) \\
       & +  \sum_{w=t+1}^{\infty}  \para{\prod_{\tau=2}^{t} \gamma \tilde f(\ite{x}{\tau})} \para{\prod_{\tau=t+1}^w \gamma \tilde f(\ite{x}\tau)} (\ite{i}w + C_R).
    \end{align*}
    We chose $\delta$ so that $f(\ite{x}\tau - \delta) = f(\ite{x}\tau)$ for all timesteps $\tau \in [2, t]$, meaning we can simplify the payoff of policy $\pi'$ as
    \begin{align*}
       V_{\pi'}(\tsv x0) &= \sum_{w=1}^{\infty} \para{\prod_{\tau=2}^w \gamma \tilde f(\ite{x}\tau)} (\ite{i}w + C_R) +\delta \para{1 - \para{\prod_{\tau \in [2, t]} \gamma \tilde f(\ite{x}\tau)}}  \\
        & = V_{\pi}(\tsv x0) +  \delta \para{1 - \para{\prod_{\tau \in [2, t]} \gamma \tilde f(\ite{x}\tau)}}  \\
        & > V_{\pi}(\tsv x0).
    \end{align*}
\end{proof}
In the next lemma, we confirm that there is no simple optimal policy that can result in an action trajectory that infinitely boosts user state.
\begin{lemma}
    \label{lemma:suboptimalmaximalup}
    If there is a simple policy $\pi$ and initial user state $x \in \reals$ such that the action trajectory of $\pi$ has a limit of maximally boosting user state, i.e. $\lim_{ t\to \infty} \ite{i}t = -K$, $\pi$ cannot be an optimal policy.
\end{lemma}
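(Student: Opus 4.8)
\emph{Proof plan.} The plan is to exhibit, along $\pi$'s own state trajectory, a state at which $\pi$'s continuation value is strictly below the value obtained by the trivial policy that freezes the user's state forever — and since the latter is a lower bound on the optimal value function, this shows $\pi$ is not optimal. First I would establish that the user state runs off to $+\infty$: in the linear setting $\ite et = E_{\ite it} = C_E - \ite it$, so $\ite it \to -K$ forces $E_{\ite it} \to C_E + K$, which is strictly positive since $C_E \ge 0$ and $K > 0$. Hence there are a threshold $T$ and a constant $\eta > 0$ with $E_{\ite it} \ge \eta$ (equivalently $\ite it \le C_E - \eta$) for all $t \ge T$; since then $\ite xt \ge \ite xT + (t - T)\eta$, after enlarging $T$ if necessary I may also assume $\ite xt > \max D$ for every $t \ge T$, where $D$ is the finite set of discontinuities of $f$. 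Because $f$, and therefore $\tilde f$, is constant on $(\max D, \infty)$, I set $\bar p \asseq \tilde f(x)$ for any $x > \max D$, noting that $\gamma \bar p < 1$ since $\gamma < 1$ and $\tilde f \le 1$.

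Next I would run $\pi$ starting from the state $\ite xT$: its induced state sequence $\ite xT, \ite x{T+1}, \dots$ lies entirely in $(\max D, \infty)$, so by \cref{theorem:cputime} every effective discount factor equals $\gamma \bar p$, giving $V_\pi(\ite xT) = \sum_{j \ge 1}(C_R + \ite i{T+j-1})(\gamma \bar p)^{j-1} \le \sum_{j \ge 1}(C_R + C_E - \eta)(\gamma \bar p)^{j-1} = \tfrac{C_R + C_E - \eta}{1 - \gamma \bar p}$, using $\ite i{T+j-1} \le C_E - \eta$ and positivity of the weights. On the other hand, always displaying the content $i = C_E$ — admissible since $C_E \in [0,K) \subseteq \cI$ — freezes the state at $\ite xT > \max D$ and realizes exactly $\tfrac{C_R + C_E}{1 - \gamma \bar p}$, so $V(\ite xT) \ge \tfrac{C_R + C_E}{1 - \gamma \bar p} > \tfrac{C_R + C_E - \eta}{1 - \gamma \bar p} \ge V_\pi(\ite xT)$. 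Since an optimal policy must attain the optimal value at every user state — or, equivalently, since $\ite xT$ is a state $\pi$ visits along its run and splicing in the optimal continuation there strictly improves the payoff — $\pi$ is not optimal.

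The only part that needs real care is the first step: one must ensure the state is eventually \emph{trapped} above $\max D$, not merely crossing it infinitely often, which is why one bounds $E_{\ite it}$ below by a fixed positive constant for large $t$ rather than only invoking $E_{\ite it} \to C_E + K$; after that the argument is a one-line geometric-series estimate through \cref{theorem:cputime}. The degenerate case $f \equiv 0$ (so $\bar p = 0$) needs no separate treatment, since then $\gamma\bar p = 0$ and the displays above specialize to $V_\pi(\ite xT) = C_R + \ite iT \le C_R + C_E - \eta < C_R + C_E \le V(\ite xT)$. (Alternatively, \cref{lemma:trajectory} forces $\ite it = -K$ exactly for all large $t$, after which one repeats the freezing comparison, but this detour is unnecessary.)
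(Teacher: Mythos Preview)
Your proposal is correct and follows essentially the same approach as the paper: both arguments show the user state is eventually trapped above $\max D$ and then compare $\pi$'s continuation payoff there (a geometric series with per-step revenue strictly below $C_R + C_E$) against the freezing policy $i = C_E$, which achieves exactly $\tfrac{C_R + C_E}{1-\gamma\tilde f(\cdot)}$. Your handling of the trapping step and the degenerate $\bar p = 0$ case is, if anything, more careful than the paper's own write-up.
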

\begin{proof}
    By definition, for any $\delta > 0$, there is some finite time $T$ past which, for all timesteps $t \geq T$, the actions played are almost maximally boosting user state: $\ite{i}t \leq -K + \delta$.
    Since there are finite discontinuities, it follows that past some finite time $T'$, all future timesteps $t > T'$ result in user states $\ite{x}{t}$ exceeding the largest discontinuity: $\ite{x}{t} \geq \max D$.
    We thus have that past the timestep $\max \bset{T', T}$, the policy $\pi$ will always play an action $\ite{i}t$ where $\ite{i}t < C_E$, despite user state already resulting in the maximum engagement probability, i.e. $\ite{x}t \geq \max D$.
    Observe that $\ite{x}{\max \bset{T', T}}$ is a deterministic user state.
    
    We have that the payoff of the policy $\pi$ with initial user state $\ite{x}{\max \bset{T', T}}$ is strictly upper bounded by that of any policy $\pi'$ which keeps user state constant: $\pi'(\ite{x}{\max \bset{T', T}}) = C_E$.
    \begin{align*}
     V_\pi({\ite{x}{\max\bset{T', T}}})
     & \leq (C_R - K + \delta)\frac{1}{1 - \gamma \tilde{f}(\ite{x}{\max \bset{T', T}})} \\
     & <  (C_R - K)\frac{1}{1 - \gamma \tilde{f}(\ite{x}{\max \bset{T', T}})} \\
     & = 
     V_{\pi'}({\ite{x}{\max\bset{T', T}}}).
    \end{align*}
    Thus, $\pi$ is suboptimal.
\end{proof}

We can use the previous lemmas to then prove our main intermediate step, establishing the compactness of the set of optimal actions at every user state.
\begin{lemma}
    \label{lemma:compactactions}
At every level of user state $x \in \reals$, the set of optimal actions optimizing the Q function is finite. That is, $\argmax_{i \in [-K, K]} Q(i, x)$ is finite.
\end{lemma}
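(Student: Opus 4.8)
\textbf{Reduction to a one–dimensional problem.} In the linear setting $R_i=C_R+i$ and $E_i=C_E-i$ are deterministic, so $Q(i,x)=(C_R+i)+\gamma\tilde f(x+C_E-i)\,V(x+C_E-i)$, where $V(x)=\max_\pi V_\pi(x)$ is the value function. The substitution $y=x+C_E-i$ is a bijection between $i\in[-K,K]$ and $y\in I_x:=[x+C_E-K,\,x+C_E+K]$, and gives $Q(i,x)=(C_R+x+C_E)+\psi(y)$ with
\[
\psi(y)\;:=\;\gamma\tilde f(y)\,V(y)-y .
\]
Hence $\argmax_{i\in[-K,K]}Q(i,x)$ is exactly the image of $\argmax_{y\in I_x}\psi(y)$ under $y\mapsto x+C_E-y$, and it suffices to prove that $\psi$ has finitely many maximizers on the compact interval $I_x$.

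\textbf{Structure of the value function.} Since all user experiences are deterministic, \cref{proposition:optimalexistence}(1) together with the MDP correspondence (\cref{fact:edpmdp}, \cref{fact:stationaryoptimal}) yields a simple policy that is optimal from every initial state. I would then combine \cref{lemma:holdingdiscontinuity} (a simple optimal policy holds the state constant only at discontinuities of $f$), \cref{lemma:trajectory} (within a discontinuity-free stretch an optimal trajectory is monotone, with every ``full'' move maximal and at most one fractional adjustment step at the appropriate end), and \cref{lemma:suboptimalmaximalup} (no optimal policy boosts the state maximally forever) to show that, from any state $y$, an optimal continuation is one of a \emph{finite} family of canonical strategies: for each discontinuity $d$ of $f$, march monotonically from $y$ toward $d$ (playing $i=\pm K$ except for at most one adjustment step, exactly as in \cref{alg:payoff}) and then hold at $d$ forever; plus the single strategy that decreases the state forever (eventually always playing $i=K$). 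Consequently $V(y)$ equals the pointwise maximum of this finite family. Welding the three lemmas into this clean canonical description — in particular excluding degenerate behavior such as oscillating around a discontinuity — is the main obstacle.

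\textbf{Piecewise-affine estimate and conclusion.} Because $\tilde f$ is a fixed function of $f(x)$ (see \cref{theorem:cputime}) and $f$ is piecewise constant with $k$ pieces, $\tilde f$ is piecewise constant with the same discontinuity set $D$, and $\gamma\tilde f(y)\le\gamma<1$ for all $y$ (indeed $\gamma\tilde f(y)=\E[\gamma^{w}\mid x=y]\le\gamma$ since the inter-interaction time $w\ge 1$). Restricted to the compact interval $I_x$, each canonical strategy's value is piecewise affine in $y$, with breakpoints drawn only from the finitely many points in $D\cap I_x$ and the grid points $\{\,d+j(K-C_E):d\in D,\ j\in\integers\,\}\cap I_x$; on each such piece it depends on $y$ at most affinely — through the reward $C_R+i$ of at most one adjustment step, whose coefficient is a product of discount factors $\gamma\tilde f(\cdot)\in[0,1]$ — so its slope lies in $[0,1]$. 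Hence $V$, being their pointwise maximum, is piecewise affine on $I_x$ with slope in $[0,1]$ on each of finitely many pieces. On any one piece $\tilde f$ equals a constant $\phi\le 1$, so $\psi=\gamma\phi\,V-y$ is (sub-)piecewise affine there with slope at most $\gamma\phi-1<0$; thus $\psi$ is strictly decreasing on each of finitely many pieces. Its maximum over $I_x$ is therefore attained only among the finitely many piece boundaries, so $\argmax_{y\in I_x}\psi(y)$ is finite, and hence so is $\argmax_{i\in[-K,K]}Q(i,x)$.
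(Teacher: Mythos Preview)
Your plan takes a genuinely different route from the paper's, and the route has a real circularity problem.

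\textbf{What the paper does.} The paper's proof of \cref{lemma:compactactions} is short and direct: fix $x$, suppose $i\in\argmax Q(\cdot,x)$ with $i\notin\{-K,K\}$ and such that $x+C_E-i$ is not congruent (mod $K+C_E$) to any $d\in D$. Then the next state $x+C_E-i$ is not a discontinuity, and since the first action $i<K$, \cref{lemma:trajectory} (part 2) forces every subsequent action to be $-K$ until a discontinuity is hit. By the modular assumption no discontinuity is ever hit, so the trajectory plays $-K$ forever, contradicting \cref{lemma:suboptimalmaximalup}. Hence every optimal $i$ is one of $\pm K$ or one of the finitely many solutions of $x+C_E-i\equiv d\pmod{K+C_E}$, $d\in D$, inside $[-K,K]$.

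\textbf{Where your plan loops back on itself.} Your Step~2 asks for the ``canonical description'': from every $y$, an optimal continuation is a \emph{monotone} march to some $d\in D$ (or to $-\infty$). But in the paper that monotonicity is established only \emph{after} \cref{lemma:compactactions}: compactness of the argmax is what lets the paper define the tie-broken policy $\pi^*(x)=\max\argmax_i Q(i,x)$, and $\pi^*$ is the specific policy used in the proof of \cref{lemma:crossing}, which in turn yields monotonicity (\cref{fact:monotonictrajectories}). The three lemmas you cite (\cref{lemma:holdingdiscontinuity}, \cref{lemma:trajectory}, \cref{lemma:suboptimalmaximalup}) control what happens \emph{within} a discontinuity-free stretch; they do not by themselves exclude an optimal policy that, upon reaching a discontinuity, reverses direction. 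So the ``main obstacle'' you flag is not just a matter of welding --- it is exactly the content that the paper extracts from \cref{lemma:compactactions}, and proving it first would require a different argument for monotonicity than the paper's.

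\textbf{Smaller points, if you did push this through.} Even granting the canonical description, two details need care. First, your breakpoint grid $\{d+j(K-C_E)\}$ covers only the march-down case; march-up trajectories (after the adjusted first step) sit on the grid $\{d-j(K+C_E)\}$, so both spacings appear. Second, the slope-$\le 1$ claim is what you actually need, but note that $\psi$ jumps \emph{upward} at each $d\in D$ (by $\gamma(\tilde f(d^+)-\tilde f(d^-))V(d)>0$), so ``$\psi$ strictly decreasing on each piece'' does not imply ``$\psi$ globally non-increasing''; your finiteness conclusion still follows from piecewise strict monotonicity, but any attempt to bootstrap $V'\le 1$ from a global monotonicity of $\psi$ would fail.

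In short: the paper gets finiteness in two lines from \cref{lemma:trajectory} and \cref{lemma:suboptimalmaximalup} via a modular obstruction; your plan tries to first prove the global piecewise-affine structure of $V$, which is essentially the content of \cref{lemma:piecewiseconstant} and, in the paper's logical order, sits downstream of the lemma you're proving.
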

\begin{proof}
    We will fix any user state $x \in \reals$ and denote the set of optimal actions at $x$ with the shorthand $S \asseq \argmax_{i \in [-K, K]} Q(i, x)$.
    We argue that every action $i \in S$ must either maximally increase or decrease user state, i.e. $i = K$ or $i = -K$, or satisfy the following inequality for at least one choice of discontinuity $d \in D$: $0 = \abs{x + C_E + i - d} \pmod K$.
    Note that since there are finitely many discontinuities, the above assertion directly implies the finiteness of $S$.

    Suppose to the contrary that there is an $i \in S$ that does not satisfy any of the above conditions.
    Then the user state $x + C_E - i$ cannot be a discontinuity of $f$ and the content $i < K$ cannot be maximally decreasing user state.
    Our trajectory lemma (\cref{lemma:trajectory}) therefore says that there must exist an optimal policy $\pi$ where $\pi(x + C_E - i) = -K$.
    By recursive application of \cref{lemma:trajectory}, the action trajectory of the policy $\pi$ at an initial user state of $x + C_E - i$ must repeat the action $i=-K$ until user state trajectory happens to land on a discontinuity of $f$.
    However, since we have assumed that there is no $d \in D$ for which $0 = \abs{x + C_E + i - d} \pmod K$, the user state trajectory will never land on a discontinuity, and the action trajectory must be an infinite repetition of the action $-K$.
    We reach a contradiction by \cref{lemma:suboptimalmaximalup}, as $\pi$ therefore cannot be optimal.
\end{proof}

We now let $\pi^*$ denote the simple optimal policy that, for every user state, chooses the content that maximizes the Q function while tie-breaking in favor of actions that decreases user state, that is $\pi(x) = \max \argmax_{i \in [-K, K]} Q(i, x)$.
\cref{lemma:compactactions} ensures that $\pi^*$ is well-defined.

Next, we prove the final technical lemma of this proof, which states that, regardless of which pair of initial user states one chooses, the user state trajectories of $\pi^*$ will never ``cross''.
\begin{lemma}
    \label{lemma:crossing}
    Let $\bset{\ite{x}t}_{t \in \naturals}$ denote the user state trajectory of the policy $\pi^*$ starting at an initial user state of $\ite{x}0$ and let $\bset{\ite{\hat{x}}{t}}_{t \in \naturals}$ denote the user state trajectory of $\pi^*$ starting at an initial user state of $\ite{\hat{x}}{0}$.
    The two trajectories will never cross.
    That is, if $\ite{x}0 < \ite{\hat{x}}{0}$, then for all $t \in \naturals$, $\ite{x}t \leq \ite{\hat{x}}{t}$, and if $\ite{x}0 > \ite{\hat{x}}{0}$, then for all $t \in \naturals$, $\ite{x}t \geq \ite{\hat{x}}{t}$.
\end{lemma}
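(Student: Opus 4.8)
The plan is to reduce the statement to a single monotonicity fact: that the map $g$ sending a user state to its successor under $\pi^*$ is monotone non-decreasing. In the linear setting the dynamics are deterministic, so $g$ is a well-defined function $\reals \to \reals$ given by $g(x) = x + C_E - \pi^*(x)$, and the lemma is precisely the statement that iterating $g$ preserves order, which follows from monotonicity of $g$ by a one-line induction: if $x^{(t)} \le \hat x^{(t)}$, then $x^{(t+1)} = g(x^{(t)}) \le g(\hat x^{(t)}) = \hat x^{(t+1)}$. (The two displayed cases of the lemma are the same statement with the roles of the two trajectories swapped, and both need only weak monotonicity of $g$.)

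To prove $g$ is monotone, the key step is to reparametrize the Bellman optimality equation by the \emph{successor} state instead of the content shown. Setting $y = x + C_E - i$ and using $R_i = C_R + i$, $E_i = C_E - i$ together with Theorem~\ref{theorem:cputime}, the $Q$-function becomes
\[
Q(i,x) = (C_R + i) + \gamma \tilde f(x + C_E - i)\, V(x + C_E - i) = x + \psi(y),
\qquad
\psi(y) := C_R + C_E - y + \gamma \tilde f(y)\, V(y),
\]
and, as $i$ ranges over $\cI = [-K,K]$, the successor $y$ ranges over the window $I_x := [\,x + C_E - K,\ x + C_E + K\,]$, with larger $i$ corresponding to smaller $y$. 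The point is that the current state $x$ enters $Q(\cdot, x)$ only as an additive constant, so maximizing $Q(\cdot,x)$ over content is the same as maximizing the \emph{fixed} function $\psi$ over the window $I_x$. Because $\pi^*$ breaks ties toward the largest content, $g(x)$ equals the \emph{smallest} maximizer of $\psi$ on $I_x$; this is well defined since $\argmax_{y \in I_x} \psi(y)$ is finite by Lemma~\ref{lemma:compactactions}.

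Now suppose for contradiction that $x < x'$ but $y := g(x) > g(x') =: y'$. The window $I_{x'}$ is obtained from $I_x$ by shifting right by $x' - x > 0$. A quick boundary check gives $y' \in I_x$ — indeed $y' \ge x' + C_E - K > x + C_E - K$ and $y' < y \le x + C_E + K$ — and, symmetrically, $y \in I_{x'}$. Since $y$ is the smallest maximizer of $\psi$ over $I_x$ and $y' < y$ also lies in $I_x$, we must have $\psi(y) > \psi(y')$ strictly. But $y \in I_{x'}$, so this contradicts $y'$ being a maximizer of $\psi$ over $I_{x'}$. Hence $g$ is monotone non-decreasing, and the lemma follows.

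I expect the monotonicity argument above to be essentially all of the work; the rest is bookkeeping. The only points that need care are (i) verifying that the reparametrized $\argmax$ is attained — handled by the finiteness guaranteed in Lemma~\ref{lemma:compactactions} — and (ii) keeping the tie-breaking consistent, so that "$\pi^*$ picks the largest content" really does translate to "$g$ picks the smallest successor state", which is what makes the strict inequality $\psi(y) > \psi(y')$ go through.
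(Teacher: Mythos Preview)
Your proof is correct and is essentially the same argument as the paper's: both show the one-step successor map preserves order by checking that the two putative crossing successors are each reachable from both starting states, then invoking the tie-breaking rule for $\pi^*$ to derive a contradiction. Your explicit reparametrization $Q(i,x) = x + \psi(y)$ makes the sliding-window structure transparent and lets you get the strict inequality $\psi(y) > \psi(y')$ in one step, whereas the paper first establishes $\psi(y) = \psi(y')$ via two optimality inequalities before appealing to tie-breaking---but the content is the same.
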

\begin{proof}
    Without loss of generality, assume that $\ite{x}0 < \ite{\hat{x}}{0}$.
    Suppose to the contrary that there exists a timestep $t \in \naturals$ at which $\ite{x}t > \ite{\hat{x}}{t}$.
    There must exist some initial timestep $t \in \naturals$ where the trajectories crossed; that is, where $\ite{x}{t-1} \leq \ite{x}{t-1}$ and $\ite{x}t > \ite{\hat x}{t}$.
    Since $\pi^*$ is constructed to be a simple policy, we can directly infer that $\ite{x}{t-1} < \ite{x}{t-1}$.
    For notational convenience and without loss of generality, we will assume $t=2$.

    We proceed by observing that the user state $\ite{\hat{x}}{2}$ must be reachable from $\ite{x}1$, i.e. $\ite{\hat{x}}{2} \in [C_E - \maxarm + \ite{\hat{x}}{1}, C_E + \maxarm + \ite{\hat{x}}{1}]$.
    To see this, recall that we can explicitly write the action $\ite{i}1$ that the policy $\pi^*$ takes to get from user state $\ite{x}1$ to $\ite{x}2$ is given by the equality $\ite{i}1 = \ite{x}1 - \ite{x}2 + C_E$.
    Since $\ite{x}1 = \ite{x}0 < \ite{\hat{x}}{0} = \ite{\hat{x}}{1}$, it follows that $\ite{x}1 - \ite{\hat{x}}{2} - C_E < \ite{\hat{x}}{1} - \ite{\hat{x}}{2} - C_E \leq K$.
    Similarly, we know that $\ite{x}1 - \ite{\hat{x}}{2} - C_E \geq -K$; otherwise, we would reach a contradiction due to $\ite{\hat{x}}{2} > \ite{x}1 + K - C_E$ and $\ite{x}2 > \ite{\hat{x}}{2}$ implying that $\ite{x}2 > \ite{x}1 + K - C_E$ despite $\ite{x}2$ being reachable from $\ite{x}1$.
    
    Similarly, the user state $\ite{x}2$ must be reachable from $\ite{\hat{x}}{1}$, i.e. $\ite{x}2 \in [C_E - \maxarm + \ite{x}1, C_E + \maxarm + \ite{x}1]$.
    This is because since $\ite{x}2$ is reachable from $\ite{x}1$, $\ite{\hat{x}}{1} - \ite{x}2 - C_E > \ite{x}1 - \ite{x}2 - C_E \geq -K$.
    Since $\ite{\hat{x}}{2}$ is reachable from $\ite{\hat{x}}{1}$, $\ite{\hat{x}}{1} - \ite{x}2 - C_E < \ite{\hat{x}}{1} - \ite{\hat{x}}{2} - C_E \leq K$.

    Since $\ite{x}1$ can reach both $\ite{x}2$ and $\ite{\hat{x}}{2}$, the optimality of the simple policy $\pi^*$ gives that
    \begin{align*}
       C_E + C_R + \ite{x}1 - \ite{x}2 + \gamma  \cdot \tilde{f}(\ite{x}2) \cdot V(\ite{x}2) \geq  C_E + C_R + \ite{x}1 - \ite{\hat{x}}{2} +   \gamma \cdot \tilde{f}(\ite{\hat x}2) \cdot V(\ite{\hat{x}}{2}).
    \end{align*}
	The optimality of $\pi^*$ also implies that
	\begin{align*}
		C_E + C_R + \ite{\hat{x}}{1} - \ite{\hat{x}}{2} + \gamma 
 \cdot \tilde{f}(\ite{\hat{x}}{2}) \cdot V(\ite{\hat{x}}{2}) \geq  C_E + C_R + \ite{\hat{x}}{1} - \ite{x}2 + \gamma  \cdot \tilde{f}(\ite{{x}}{2}) \cdot V(\ite{x}2),
	\end{align*}
	Thus, $\gamma  \cdot \tilde{f}(\ite{{x}}{2}) \cdot V(\ite{x}2) - \ite{x}2 = \gamma \cdot \tilde{f}(\ite{\hat{x}}{2}) \cdot   V(\ite{\hat x}2) - \ite{\hat{x}}{2}$.

        We now compare the optimality of following the action suggested by the policy $\pi^*$ at the user state $\ite{\hat{x}}{1}$ with the optimality of showing the content $i = \ite{x}2 - \ite{\hat{x}}{1} + C_E$, finding
	\begin{align*}
		Q(\pi^*(\ite{\hat{x}}{1}), \ite{\hat{x}}{1}) & = \ite{\hat{x}}{2} - \ite{\hat{x}}{1} + C_E + C_R + \gamma  \cdot \tilde{f}(\ite{\hat{x}}{2}) \cdot V(\ite{\hat{x}}{2})       \\
		             & = \ite{x}2 - \ite{\hat{x}}{1} + C_E + C_R +  \gamma  \cdot \tilde{f}(\ite{{x}}{2}) \cdot V(\ite{x}2)       \\
		             & = Q(\ite{x}2 - \ite{\hat{x}}{1} + C_E, \ite{\hat{x}}{1}).
	\end{align*}
        This implies that $\ite{x}2 - \ite{\hat{x}}{1} + C_E \in \argmax_{i \in [-K, K]} Q(i, \ite{\hat x}{1})$, which is a contradiction, since $x_2 - \ite{\hat x}{1} + C_E > \ite{\hat x}{2} - \ite{\hat x}{1} + C_E$ but $\pi^*(\ite{\hat x}{1})$ is defined to be the most exploitative of all of the optimal arms: $\pi^*(\ite{\hat x}{1}) = \max \argmax_{i \in [-K, K]} Q(i, \ite{\hat x}{1})$.
\end{proof}

We now prove the facts that compose \cref{lemma:piecewiseconstant}.
\begin{fact}
\label{fact:monotonictrajectories}
Any trajectory of the policy $\pi^*$ is either monotonically non-decreasing or monotonically non-increasing in the user's state.
\end{fact}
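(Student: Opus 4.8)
The plan is to obtain \cref{fact:monotonictrajectories} as a short corollary of the non-crossing property proved in \cref{lemma:crossing}, using the fact that $\pi^*$ is a simple policy and that, in the linear setting, the induced dynamics are deterministic. First I would record the elementary observation underlying everything: since $\pi^*$ chooses content as a function of the user state alone and each content $i$ changes the state deterministically to $x + C_E - i$, the state evolves under the fixed map $x \mapsto x + C_E - \pi^*(x)$. Hence the trajectory of $\pi^*$ from any initial state is uniquely determined, and the trajectory initialized at the state occupying position $s$ of some other $\pi^*$-trajectory is exactly the tail of that trajectory from index $s$ onward. (Well-definedness of $\pi^*$ is already guaranteed by \cref{lemma:compactactions}.)

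Next I would fix an arbitrary $\pi^*$-trajectory $\ite{x}1, \ite{x}2, \dots$ and case on the sign of $\ite{x}2 - \ite{x}1$. If $\ite{x}1 = \ite{x}2$, the fixed-point map argument forces $\ite{x}1 = \ite{x}2 = \ite{x}3 = \cdots$, which is simultaneously non-decreasing and non-increasing. If $\ite{x}1 < \ite{x}2$, I would apply \cref{lemma:crossing} to the two initial states $\ite{x}1 < \ite{x}2$: the trajectory from $\ite{x}1$ is $\ite{x}1, \ite{x}2, \ite{x}3, \dots$, while by the tail observation the trajectory from $\ite{x}2$ is $\ite{x}2, \ite{x}3, \ite{x}4, \dots$; non-crossing then gives $\ite{x}t \le \ite{x}{t+1}$ for every $t$, so the trajectory is monotonically non-decreasing. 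The case $\ite{x}1 > \ite{x}2$ is symmetric (apply \cref{lemma:crossing} with the roles reversed) and yields a monotonically non-increasing trajectory. Since every $\pi^*$-trajectory arises this way, this exhausts all cases.

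I do not anticipate a genuinely hard step: the entire argument is a direct consequence of \cref{lemma:crossing} plus determinism of the linear dynamics. The one point I would take care to state precisely is the self-loop case $\ite{x}1 = \ite{x}2$, where \cref{lemma:crossing} cannot be invoked directly because it requires a strict inequality between the two initial states; this case is instead handled by observing that a state fixed by the transition map remains fixed forever.
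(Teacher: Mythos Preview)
Your proposal is correct and takes essentially the same approach as the paper: both derive the fact directly from \cref{lemma:crossing} together with the observation that, under a simple policy in the linear setting, the dynamics are deterministic so tails of trajectories are themselves trajectories. The paper phrases it as a contradiction at a hypothetical turning point (two consecutive steps in strictly opposite directions force the trajectories from $\ite{x}{t-1}$ and $\ite{x}{t}$ to cross), whereas you apply non-crossing once to the trajectory and its one-step shift to obtain monotonicity directly; your handling of the fixed-point case $\ite{x}1 = \ite{x}2$ is a nice extra touch of care, but the underlying idea is the same.
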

\begin{proof}
Suppose the policy $\pi^*$ unrolls a user state trajectory $\ite{x}1, \dots, \ite{x}t, \dots$ where $\pi^*(\ite{x}t) > C_E$ and $\pi^*(\ite{x}{t-1}) < C_E$ or $\pi^*(\ite{x}t) < C_E$ and $\pi^*(\ite{x}{t-1}) > C_E$.
Then, since $\pi^*$ is a simple policy, the user state trajectory of $\pi^*$ on the initial user state $\ite{x}{t-1}$ crosses the user state trajectory of $\pi^*$ on the initial user state $\ite{x}t$.
This contradicts \cref{lemma:crossing}.
\end{proof}

\begin{fact}
    \label{fact:fastup}
    In any trajectory of the policy $\pi^*$ where user states increase, there can be at most $k+1$ timesteps in which the step is neither a fixed point, i.e. $i= C_E$, nor a full step upwards, i.e. $i=-K$.
\end{fact}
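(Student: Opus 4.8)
The plan is to bound directly the number of ``partial'' steps --- those with $\ite it \notin \{C_E,-K\}$ --- by reading off constraints from \cref{lemma:trajectory}. First I would fix an increasing trajectory $\ite x1,\ite x2,\dots$ of $\pi^*$ with actions $\ite it = \pi^*(\ite xt)$; by \cref{fact:monotonictrajectories} this trajectory is monotone, hence (since its states increase) non-decreasing, and since $\ite x{t+1} = \ite xt + C_E - \ite it$ in the linear setting, non-decreasingness forces $\ite it \le C_E < K$ at every step. Thus at each timestep the step is of exactly one of three types: a fixed point ($\ite it = C_E$), a full step upwards ($\ite it = -K$), or a partial step upwards ($\ite it \in (-K,C_E)$); it is precisely the last type we must count. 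I would set $S = \{t \ge 1 : \ite it \notin \{C_E,-K\}\}$ and let $D$ be the finite set of discontinuities of $f$, noting $\setsize D \le k$ since $f$ has complexity $k$.

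The crux is the claim that every $t \in S$ with $t \ge 2$ satisfies $\ite xt \in D$. To prove it I would apply the second claim of \cref{lemma:trajectory} to the length-two window $\ite x{t-1},\ite xt$ of the trajectory: if $\ite xt \notin D$ the hypotheses are met (the ``interior'' element $\ite xt$ is not a discontinuity, and $\pi^*(\ite x{t-1}) = \ite i{t-1} \le C_E < K$, so the policy does not maximally decrease at the first step), so the lemma forces $\pi^*(\ite xt) = -K$, i.e. $\ite it = -K$, contradicting $t\in S$. Intuitively, away from discontinuities the policy is pinned: once it has declined to maximally exploit at the first step, \cref{lemma:trajectory} says every later step must be maximal catering, so a partial step can occur only where a discontinuity makes it worthwhile to land exactly on a level boundary.

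To finish I would show $t \mapsto \ite xt$ is injective on $\{t \in S : t \ge 2\}$: if $\ite xt = \ite x{t'}$ with $t < t'$, monotonicity gives $\ite xt = \ite x{t+1} = \cdots = \ite x{t'}$, hence $\ite it = \ite xt + C_E - \ite x{t+1} = C_E$, contradicting $t \in S$ (this uses that $\pi^*$ is simple, so holding at a state is a fixed decision, consistent with \cref{lemma:holdingdiscontinuity}). Combining with the previous claim, $t \mapsto \ite xt$ is an injection from $\{t \in S : t \ge 2\}$ into $D$, so $\setsize{\{t \in S : t \ge 2\}} \le \setsize D \le k$; adding the single remaining index $t=1$ yields $\setsize S \le k+1$.

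The main obstacle --- really the only delicate point --- is the bookkeeping at the two ``exceptional'' locations not directly controlled by \cref{lemma:trajectory}: the first timestep (which contributes the ``$+1$'') and timesteps whose state lands exactly on a discontinuity. I would need to confirm, via monotonicity together with simplicity of $\pi^*$, that each discontinuity can absorb at most one partial step and that the trajectory cannot stall at a non-discontinuity, so that the map into $D$ is genuinely injective; the rest is the routine three-way case split above.
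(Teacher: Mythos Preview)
Your argument is correct. Both you and the paper derive the bound from \cref{lemma:trajectory}, but you invoke it differently. The paper's proof takes any two partial-step times $t<t'$ and applies the second clause of \cref{lemma:trajectory} to the full window $\ite xt,\dots,\ite x{t'}$, concluding that some state among $\ite x{t+1},\dots,\ite x{t'}$ must lie in $D$; the bound $k+1$ then follows (with monotonicity implicitly ensuring that the discontinuities witnessed by consecutive pairs of partial steps are distinct). You instead apply the same clause to the minimal window $\ite x{t-1},\ite xt$ and conclude that $\ite xt$ itself lies in $D$ for every partial step $t\ge 2$, then exhibit an explicit injection $t\mapsto \ite xt$ into $D$ via monotonicity. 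Your route is more local and makes the counting fully explicit; it also yields the slightly sharper statement that every partial step after the first occurs \emph{at} a discontinuity rather than merely having one somewhere before the next partial step. The paper's route is terser to state but leaves the distinctness bookkeeping to the reader.
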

\begin{proof}
    Suppose that the action trajectory of the policy $\pi^*$ simultaneously satisfies  $\ite{i}t \in (-K, C_E)$ and $\ite{i}{t'} \in (-K, C_E)$.
    Without loss of generality, suppose $t < t'$.
    \cref{lemma:trajectory} directly implies that, in at least one timestep, the user state trajectory subsequence $\ite{x}{t+1}, \dots, \ite{x}{t'}$ is a discontinuity; that is $\exists \tau \in [t+1, t']$ where $\ite{x}\tau \in D$.
    Since there are only $k$ discontinuities of the function $f$, there can only be $k+1$ timesteps with incomplete steps $i \in (-K, C_E)$.
\end{proof}

\begin{fact}
    \label{fact:fastdown}
    In any trajectory of the policy $\pi^*$ where user states decrease, there can be at most $k+1$ timesteps in which the step is neither a fixed point, i.e. $i= C_E$, nor a full step downwards, i.e. $i=K$.
\end{fact}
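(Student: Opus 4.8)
The plan is to transcribe the proof of \cref{fact:fastup} under the up/down reflection, invoking the second part of \cref{lemma:trajectory} in place of the first. I would first note that, along any trajectory of $\pi^*$ in which user states decrease, \cref{fact:monotonictrajectories} forces $\pi^*(\ite{x}{t}) \geq C_E$ at every $t$ (otherwise the state would increase at that step), so the steps that are ``neither a fixed point nor a full step downwards'' are exactly the timesteps $t$ with $\ite{i}{t} = \pi^*(\ite{x}{t}) \in (C_E, K)$; I will call these the \emph{partial} timesteps and aim to bound their number by $k+1$.

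The core step is to show that any two partial timesteps are separated by a visit to a discontinuity of $f$. Suppose $t < t'$ are partial, and consider the subsequence $\ite{x}{t}, \ite{x}{t+1}, \dots, \ite{x}{t'}$ of the user-state trajectory. If none of the interior states $\ite{x}{t+1}, \dots, \ite{x}{t'}$ were a discontinuity, then the second part of \cref{lemma:trajectory} would apply to this subsequence: its first action $\pi^*(\ite{x}{t}) = \ite{i}{t}$ satisfies $\ite{i}{t} < K$ (so the first step is not a maximal decrease), and the lemma would then force every subsequent action to maximally increase the user state, in particular $\pi^*(\ite{x}{t'}) = -K$. But $\ite{i}{t'} \in (C_E, K)$ and $C_E \geq 0 > -K$, a contradiction. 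Hence some $\ite{x}{\tau}$ with $\tau \in \{t+1, \dots, t'\}$ lies in $D$.

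Finally I would convert this separation property into the bound. Enumerate the partial timesteps as $t_1 < \dots < t_m$, and for each $j < m$ pick a discontinuity $d_j$ among $\ite{x}{t_j+1}, \dots, \ite{x}{t_{j+1}}$. Monotonicity of the trajectory gives $d_j \geq \ite{x}{t_{j+1}}$ and $d_{j+1} \leq \ite{x}{t_{j+1}+1}$, while a partial step strictly decreases the state ($\ite{i}{t_{j+1}} > C_E$ implies $\ite{x}{t_{j+1}+1} < \ite{x}{t_{j+1}}$), so $d_{j+1} < d_j$; thus $d_1, \dots, d_{m-1}$ are pairwise distinct discontinuities of $f$, and since $f$ has only $k$ of them, $m \leq k+1$.

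The only real subtlety --- identical to the one in \cref{fact:fastup} --- is to apply \cref{lemma:trajectory} to the correct subsequence, so that it is the \emph{interior} states $\ite{x}{t+1}, \dots, \ite{x}{t'}$ (and not the endpoint $\ite{x}{t}$) that are assumed to avoid $D$, and then to keep track of strictness in the final counting so that the discontinuities extracted for different consecutive pairs are genuinely distinct. Both points are routine once the reflection of the \cref{fact:fastup} argument is set up.
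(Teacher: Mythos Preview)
Your proposal is correct and follows essentially the same approach as the paper: both argue that between any two ``partial'' timesteps $t<t'$ some $\ite{x}{\tau}$ with $\tau\in[t+1,t']$ must hit a discontinuity (via \cref{lemma:trajectory}), and then use $|D|=k$ to bound the number of partial steps by $k+1$. Your version is in fact slightly more careful than the paper's, which omits the distinctness argument for the extracted discontinuities that you spell out via monotonicity and the strict decrease at partial steps.
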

\begin{proof}
    As in the proof of \cref{fact:fastup}, suppose that the action trajectory of the policy $\pi^*$ simultaneously satisfies  $\ite{i}t \in (C_E, K)$ and $\ite{i}{t'} \in (C_E, K)$ where $t < t'$.
    \cref{lemma:trajectory} directly implies that, in at least one timestep, the user state trajectory subsequence $\ite{x}{t+1}, \dots, \ite{x}{t'}$ is a discontinuity.
    Since there are only $k$ discontinuities of the function $f$, there can only be $k+1$ timesteps with incomplete steps $i \in (C_E, K)$.
\end{proof}

\begin{fact}
    \label{fact:trajectorylimit}
    The action trajectory of the policy $\pi^*$ always has a limit which exists, i.e. $\lim_{t \to \infty} \ite{i}t$ exists, and the limit must either be maximal exploitation, i.e. $\lim_{t \to \infty} \ite{i}t = K$, or maintaining user state constant.
    In the latter case, the limit of the user state trajectory also exists and is one of the two neighboring discontinuities $d$ at which $\pi^*$ keeps user state with $\pi^*(d) =C_E$.
\end{fact}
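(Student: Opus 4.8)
The plan is to argue case by case according to the monotonicity dichotomy already established in \cref{fact:monotonictrajectories}: every trajectory of $\pi^*$ is either non-decreasing or non-increasing in the user state. In both cases I will show that the action trajectory is \emph{eventually constant}, which immediately yields the existence of $\lim_{t\to\infty}\ite{i}t$, and then I will identify the only admissible constant values. The one structural observation driving everything is that $i = C_E$ is an \emph{absorbing} action for $\pi^*$: since $E_{C_E} = C_E - C_E = 0$, playing $C_E$ at a state $x$ leaves the state at $x$, so on the next interaction the user is again in state $x$ and, because $\pi^*$ is simple, it plays $\pi^*(x) = C_E$ again; by induction it plays $C_E$ forever thereafter.

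First I would treat the non-decreasing case. By \cref{fact:fastup}, there are at most $k+1$ timesteps at which $\ite{i}t \notin \{C_E, -K\}$, so past some finite time $T$ every action is either the fixed point $i = C_E$ or the full upward step $i = -K$. If $C_E$ is ever played after time $T$, the absorbing argument above shows the action trajectory is constantly $C_E$ from that point on; otherwise it is constantly $-K$ after $T$. The constantly-$(-K)$ alternative would give $\lim_{t\to\infty}\ite{i}t = -K$, contradicting \cref{lemma:suboptimalmaximalup}. Hence $\lim_{t\to\infty}\ite{i}t = C_E$, and from some time onward the user state is pinned at a single value $d$; that value lies in $D$ by \cref{lemma:holdingdiscontinuity} and satisfies $\pi^*(d) = C_E$, so $x_\infty = \lim_{t\to\infty}\ite{x}t = d$ exists and is a discontinuity of $f$ at which $\pi^*$ holds the state constant.

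The non-increasing case is symmetric, using \cref{fact:fastdown} in place of \cref{fact:fastup}: past some finite time every action is either the fixed point $i = C_E$ or the full downward step $i = K$, and the same absorbing argument shows the trajectory is eventually constant $C_E$ or eventually constant $K$. If it is eventually $K$, then $\lim_{t\to\infty}\ite{i}t = K$ (maximal exploitation) and nothing further is claimed about the user-state limit. If it is eventually $C_E$, then exactly as above the user state is eventually pinned at some $d \in D$ with $\pi^*(d) = C_E$, so $x_\infty = d$. Combining the two cases, $\lim_{t\to\infty}\ite{i}t$ always exists and equals either $K$ or $C_E$, and in the $C_E$ case $x_\infty$ exists and is a discontinuity at which $\pi^*$ keeps the state fixed.

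The only genuinely delicate point is the absorbing step — verifying that a single $C_E$ action forces $C_E$ forever — but this rests on two facts already in hand: $i = C_E$ leaves the state unchanged (immediate from $E_i = C_E - i$), and $\pi^*$ is a simple (state-only) policy. Everything else is bookkeeping on top of \cref{fact:monotonictrajectories,fact:fastup,fact:fastdown} together with \cref{lemma:holdingdiscontinuity} and \cref{lemma:suboptimalmaximalup}. I would also note in passing that the phrase ``one of the two neighboring discontinuities'' merely records that the limiting state is whichever discontinuity the monotone trajectory first reaches from its starting side and then never leaves, which needs no argument beyond monotonicity plus \cref{lemma:holdingdiscontinuity}.
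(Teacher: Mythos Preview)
Your argument is correct and follows essentially the same route as the paper: both proofs combine \cref{fact:monotonictrajectories} with \cref{fact:fastup,fact:fastdown} to reduce the eventual action set to $\{C_E,\pm K\}$, use the absorbing property of $C_E$ (simple policy plus $E_{C_E}=0$) to force eventual constancy, invoke \cref{lemma:suboptimalmaximalup} to eliminate $-K$, and appeal to \cref{lemma:holdingdiscontinuity} to place the fixed point in $D$.

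The one place you are lighter than the paper is the ``neighboring discontinuities'' clause. You assert this follows from monotonicity plus \cref{lemma:holdingdiscontinuity}, but those alone do not rule out the trajectory passing through (or jumping over) the nearest discontinuity $d_1$ and settling at a farther one $d_2$; nothing so far forces $\pi^*(d_1)=C_E$. The paper closes this with \cref{lemma:crossing}: the trajectory from $x_0$ is sandwiched between the constant trajectory at $x_\infty$ and any trajectory starting strictly between them, which pins $x_\infty$ to the first discontinuity on the relevant side. You should either invoke \cref{lemma:crossing} here as the paper does, or note explicitly that the claim is about the nearest discontinuity \emph{at which $\pi^*$ holds}, in which case your absorbing argument already suffices.
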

\begin{proof}
    \cref{fact:monotonictrajectories} states that the user state trajectory of the policy $\pi^*$ is monotonically non-decreasing or monotonically non-increasing.
    Since $\pi^*$ is a simple policy, once it reaches a user state at which it keeps user state constant, i.e. $i=C_E$, it will do so perpetually.
    We also know that the policy $\pi^*$ will only play actions in the intervals $(C_E, K)$ and $(-K, C_E)$ a finite number of times.
    Thus, the limit of the action trajectory of policy $\pi^*$ must always either be $C_E$, $-K$ or $K$.

    We can rule out the limit of the policy being the action that maximally boosts user state, i.e. $\lim_{t \to \infty} \ite{i}t =-K$, by \cref{lemma:suboptimalmaximalup}.
    In the case that the limit is maintaining user state at a constant level, i.e. $\lim_{t \to \infty} \ite{i}t = C_E$, we recall that the limit must occur on a discontinuity of $f$.
    In this case, the user state trajectory must also have a limit, where either $\lim_{t \to \infty} \ite{x}t = \min \bset{d \in D \mid d > \ite{x}0}$ or $\lim_{t \to \infty} \ite{x}t = \max \bset{d \in D \mid d < \ite{x}0}$.
    This is because we know, by \cref{lemma:holdingdiscontinuity}, that user state can only be kept constant on a discontinuity, i.e. $\lim_{t \to \infty} \ite{x}t \in D$.
    Moreover, by \cref{lemma:crossing}, it is impossible for the limiting user state to satisfy $\lim_{t \to \infty} \ite{x}t > \min \bset{d \in D \mid d > \ite{x}0}$ as the trajectory of $\pi^*$ would cross the trajectory of $\pi^*$ starting at the user state $\lim_{t \to \infty} \ite{x}t$.
    Similarly, \cref{lemma:crossing} guarantees $\lim_{t \to \infty} \ite{x}t \geq \max \bset{d \in D \mid d < \ite{x}0}$.
\end{proof}
\end{proof}

\cref{lemma:trajectory} and \cref{lemma:piecewiseconstant} trivially extend to optimal policies for multiple users and where $K_1 \neq K_2$; we restate these lemmas for multiple users below.
\begin{restatable}{lemma}{piecewiseconstantmodified}
    \label{lemma:piecewiseconstantmodified}
    In the linear setting where a set of user demand functions $f_1, \dots, f_T$ each has a complexity of $k < \infty$, there is a simple optimal policy $\pi \in \argmax_{\pi^* \in \Pi} \sum_{t=1}^T J_{f_t}(\pi^*)$ for the app creator that satisfies all of the following characteristics:
    \begin{itemize}
        \item The sequence $x_1, x_2, \dots$ of user states is monotonic. %
        \item The limit $x_\infty = \lim_{t \to \infty} x_t$ exists and is either at a discontinuity of $f$ or negative infinity.
        \item (When $x_\infty = -\infty$) The app always shows the highest-revenue content, i.e. $i = K_2$. 
        \item (When $x_\infty$ is a discontinuity of $f$)
        The user state $x_\infty$ will be reached within $k + n + 1$ interactions ({in other words} $\tsv{x}{n+k+1} = x_\infty$) where $n$ is the smallest number of interactions in which $x_\infty$ can be reached {in some policy (that is, if}
        there exists an app policy for which $\tsv{x}{n} = x_\infty$).
        Moreover, if $n=1$, $x_2 = x_3 = \dots = x_\infty$.
    \end{itemize}
\end{restatable}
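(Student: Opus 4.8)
The plan is to show that every step in the proofs of \cref{lemma:piecewiseconstant} and \cref{lemma:trajectory} survives when we pass to the summed objective $F(\pi) \assequals \sum_{t=1}^T J_{f_t}(\pi)$, once three essentially cosmetic replacements are made: (i) the single discontinuity set is replaced by the union $D \assequals \bigcup_{t=1}^T D_t$ of the discontinuity sets of $f_1,\dots,f_T$, so that ``$k$'' in the statement is read as the complexity (number of pieces) of the coarsest common refinement of $f_1,\dots,f_T$; (ii) by \cref{theorem:cputime} each summand is rewritten with its own variable discount rate $\gamma \tilde f_t$; and (iii) ``maximal increase'' means content $i=-K_1$ and ``maximal decrease / highest-revenue content'' means $i=K_2$, a purely notational change when $K_1\neq K_2$. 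A convenient feature of the linear setting is that, since $E_i = C_E - i$ is demand-independent, a simple policy $\pi$ induces a single deterministic state trajectory $x_1=0,x_2,\dots$ common to all users; only the accumulated discount weights $\lambda_t \assequals \prod_{\tau} \gamma \tilde f_t(x_\tau)$ differ across $t$.

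First I would re-run the local perturbation arguments underlying \cref{lemma:holdingdiscontinuity}, \cref{lemma:trajectory}, and \cref{lemma:suboptimalmaximalup}. Each of these perturbs the state trajectory by a small $\delta$ chosen so that no perturbed state crosses a discontinuity of \emph{any} $f_t$ --- possible because $D$ is finite --- so that every $\tilde f_t$ is unchanged on the perturbed states. The computation in the single-user proof then applies to each $t$ separately and shows $J_{f_t}(\pi')-J_{f_t}(\pi)$ equals $\delta$ times a nonnegative quantity, hence $F(\pi')-F(\pi)>0$ (strict because at least one $\tilde f_t$ is positive along the trajectory, as $\tilde f_t \geq f_t \geq 0$). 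This establishes that an $F$-optimal simple policy holds the state constant only on $D$, is ``bang-bang'' except for at most one fractional step within each discontinuity-free window, and cannot have an action trajectory that increases the state forever. Finiteness of the optimal action set at each state (the analogue of \cref{lemma:compactactions}) follows from these exactly as before.

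Next I would establish monotonicity and the limit structure. I would take $\pi^*$ to be the $F$-optimal simple policy that tie-breaks toward lower user state and re-prove the no-crossing property (\cref{lemma:crossing}) for $F$: for a \emph{simple} policy, $F$-optimality from $0$ forces the continuation from any state $x$ visited along the trajectory to maximize the weighted value $\sum_t \lambda_t V_{f_t}(x)$, and the two optimality inequalities obtained from the pre-crossing states $x_1<\hat x_1$ combine, as in the single-user argument, to show the more exploitative action is also optimal at $\hat x_1$ --- contradicting the tie-break. No-crossing yields monotonicity of the trajectory; combined with the bang-bang structure this forces the trajectory either to stabilize at a point of $D$, reached within $k+n+1$ interactions where $n$ is the fewest interactions in which $x_\infty$ is reachable under some policy, or to decrease forever, in which case \cref{lemma:suboptimalmaximalup} rules out a ``maximal increase'' limit and \cref{lemma:trajectory} forces $i=K_2$ at every step.

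The main obstacle is exactly this crossing argument. In the single-user setting it rests on a clean value-function exchange, but for $F$ the continuation value from a state $x$ depends on the vector $(\lambda_1,\dots,\lambda_T)$ of accumulated discount weights, not on $x$ alone, so one must argue carefully that a simple $F$-optimal policy nevertheless induces, along its own trajectory, continuations optimal for the relevant $\lambda$-weighted single-user value functions before the exchange inequalities can be invoked. Everything else --- the perturbation lemmas, compactness of the optimal action set, and the limit analysis --- transfers essentially verbatim once $D$ is replaced by $\bigcup_t D_t$ and $K_1=K_2$ is dropped.
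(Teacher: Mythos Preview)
Your proposal is correct and matches what the paper does---which is essentially nothing: the paper gives no proof, only the one-line assertion that \cref{lemma:trajectory} and \cref{lemma:piecewiseconstant} ``trivially extend to optimal policies for multiple users and where $K_1\neq K_2$,'' and then restates the two lemmas. Your plan (replace $D$ by the union $\bigcup_t D_t$, observe that the deterministic state trajectory is shared across users so the local perturbation arguments in \cref{lemma:holdingdiscontinuity}, \cref{lemma:trajectory}, and \cref{lemma:suboptimalmaximalup} improve each $J_{f_t}$ term-by-term and hence $F$, and then rerun the crossing/monotonicity step) is exactly the intended extension, spelled out far more carefully than the paper does.

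Your flag on the crossing argument is a genuine subtlety the paper glosses over. You are right that the continuation of a simple $F$-optimal policy from a visited state $x_t$ is only optimal for the $\lambda^{(t)}$-weighted objective, so the two optimality inequalities used in \cref{lemma:crossing} come with different weight vectors and do not combine as cleanly as in the single-user case. One clean way to close this is to work instead with $Q^F(i,x) \assignequals T(C_R+i) + W(x+C_E-i)$ where $W(y)=\sup_\pi \sum_j \gamma\tilde f_j(y)\,V_{\pi,f_j}(y)$; one checks that $V^F(x)=\max_i Q^F(i,x)$, and the exchange identity $W(x_2)-T x_2 = W(\hat x_2)-T\hat x_2$ then follows exactly as in the single-user proof, yielding the same tie-breaking contradiction. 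The remaining steps---compactness of the optimal action set, monotonicity, and the limit analysis---transfer verbatim, as you say.
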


\begin{lemma}
    \label{lemma:trajectorymodified}
    Consider a simple policy $\pi$ that is optimal for a set of user demand functions $f_1, \dots, f_T$, i.e. $\pi \in \argmax_{\pi^* \in \Pi} \sum_{t=1}^T J_{f_t}(\pi^*)$.
    For any subsequence of the policy's user state trajectory, $\ite{x}1, \ite{x}2, \dots, \ite{x}T$, where $\ite{x}2, \dots, \ite{x}T$ are all not discontinuities for any demand function in $f_1, \dots, f_T$:
    \begin{enumerate}
        \item If the policy $\pi$ is not maximally increasing user state at the last step of the subsequence, i.e. $\pi(\ite{x}T) > -K_1$, then all previous actions in the subsequence should be maximally decreasing user state $\pi(\ite{x}1) = \pi(\ite{x}{T-1}) = K_2$.
        \item If the policy $\pi$ is not maximally decreasing user state at the first step, that is $\pi(\ite{x}1) < K_2$, then all later actions in the subsequence should be maximally increasing user state $\pi(\ite{x}2) = \pi(\ite{x}T) = -K_1$.
    \end{enumerate}
\end{lemma}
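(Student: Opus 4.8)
The plan is to run the perturbation argument from the proof of \cref{lemma:trajectory} almost verbatim; the only two new features are that the objective is now a \emph{finite sum} of discounted payoffs, $\sum_{j=1}^{T} J_{f_j}(\pi)$, rather than a single one, and that the action interval $[-K_1,K_2]$ need not be symmetric. The first step is to invoke \cref{theorem:cputime} on each $f_j$ separately. Because $\pi$ is simple and, in the linear setting, each effect $E_i = C_E - i$ is deterministic, the interaction-indexed state trajectory $\ite x1, \ite x2, \dots$ and action trajectory $\ite i1, \ite i2, \dots$ are deterministic and do \emph{not} depend on $j$ --- the demand function governs only the waiting times between interactions, not how the state evolves across interactions. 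Hence $J_{f_j}(\pi) = \sum_{t=1}^{\infty}(C_R + \ite it)\prod_{\tau=2}^{t}\gamma\tilde f_j(\ite x\tau)$, where $\tilde f_j$ is the modified demand function of $f_j$, each factor satisfies $0 \le \gamma\tilde f_j(\cdot) = \Exp[\gamma^{w}\mid\cdot\,]\le\gamma<1$, and $\tilde f_j$ has exactly the discontinuities of $f_j$.

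For the first claim I would argue by contradiction, supposing $\pi(\ite xT) > -K_1$ while $\pi(\ite xt) < K_2$ for some $t \le T-1$. Let $D$ be the union of the (finitely many) discontinuities of $f_1,\dots,f_T$; since none of $\ite x2,\dots,\ite xT$ lies in $D$, one can fix $\delta > 0$ small enough that $\delta\le K_2-\pi(\ite xt)$, $\delta\le\pi(\ite xT)+K_1$, and $\delta<\tfrac12\min\{\,|\ite x\tau-d| : \tau\in[t+1,T],\ d\in D\,\}$. Exactly as in \cref{lemma:trajectory}, let $\pi'$ follow $\pi$ through state $\ite xt$, then take action $\pi(\ite xt)+\delta$ (still $\le K_2$), replay the shifted states $\ite x{t+1}-\delta,\dots,\ite xT-\delta$ using the actions $\pi$ would use there, take action $\pi(\ite xT)-\delta$ (still $\ge -K_1$) to return to $\ite x{T+1}$, and agree with $\pi$ afterwards. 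Since no shifted state crosses a discontinuity of any $f_j$, $\tilde f_j(\ite x\tau-\delta)=\tilde f_j(\ite x\tau)$ for all $j$ and all $\tau\in[t+1,T]$, so the same telescoping as in \cref{lemma:trajectory} gives, for every $j$, $J_{f_j}(\pi')-J_{f_j}(\pi)=\delta\bigl(\prod_{\tau=2}^{t}\gamma\tilde f_j(\ite x\tau)\bigr)\bigl(1-\prod_{\tau=t+1}^{T}\gamma\tilde f_j(\ite x\tau)\bigr)\ge 0$, the last bracket being strictly positive as a product of $T-t\ge1$ factors each below $\gamma<1$. Summing over $j$, with strict positivity for at least one $j$ --- e.g.\ any $j$ whose engagement probability is positive at each of $\ite x2,\dots,\ite xt$, and \emph{all} $j$ when $t=1$, since then the first product is empty --- gives $\sum_j J_{f_j}(\pi')>\sum_j J_{f_j}(\pi)$, contradicting optimality. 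The second claim follows from the mirror-image perturbation on steps $1,\dots,t$ (take action $\pi(\ite x1)+\delta$ and, at step $t$, action $\pi(\ite xt)-\delta$), which by the identical computation improves \emph{every} $J_{f_j}$ by $\delta\bigl(1-\prod_{\tau=2}^{t}\gamma\tilde f_j(\ite x\tau)\bigr)>0$.

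I do not expect any genuinely new obstacle: the whole content is checking that the single-user perturbation survives the two generalizations, and the two points that need a little care both already occur in (and are handled the same way by) the single-user proof. First, $\pi'$ need not be simple, which is harmless: it suffices to contradict optimality among \emph{all} policies, and, as in the single-user case, an optimal simple policy is optimal among all policies (cf.\ \cref{lemma:simpleoptimal} together with the existence of an optimal policy). Second, in the first claim each per-user gain is only guaranteed to be nonnegative, so strictness must be extracted from the sum; the sole degenerate case --- every user having zero engagement probability at some state among $\ite x2,\dots,\ite xt$ --- is precisely the case in which no user reaches the $(t+1)$-st interaction with positive probability, so the policy's actions on that suffix do not affect the objective and the conclusion holds there vacuously. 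The finiteness of $D$ is the only place the complexity-$k$ hypothesis on the $f_j$ enters, and it is what keeps $\delta$ bounded away from $0$.
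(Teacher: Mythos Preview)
Your proposal is correct and matches the paper's approach: the paper does not give a standalone proof of this lemma but simply states that \cref{lemma:trajectory} and \cref{lemma:piecewiseconstant} ``trivially extend to optimal policies for multiple users and where $K_1 \neq K_2$,'' and your argument is precisely that extension---rerunning the $\delta$-perturbation from the proof of \cref{lemma:trajectory} with $D$ taken as the union of all discontinuities and the per-user gains summed. Your handling of the two wrinkles (non-simple $\pi'$ and the degenerate zero-engagement case) is more explicit than what the paper writes, but the underlying idea is identical.
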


\if\litearxiv0
\subsection{Proofs of \cref{fact:algorithmleft} and \cref{fact:algorithmright}}
\label{subsection:algorithmproofs}
\algorithmleft*

\begin{proof}
Let $(\ite x1, \ite i1), (\ite x2, \ite i2), \dots$ denote the trajectory of $\pi^*$ starting at $d$.
We proceed inductively.
First consider the base case where $d = \min D$.
By \cref{lemma:piecewiseconstant}, $\ite xt$ must be monotonically non-increasing and, if $\ite it = C_E$, then $\ite xt \in D$.
By \cref{lemma:trajectory}, since there are no discontinuities below $D$, for all $t > 1$, either $\ite it = C_E$ for all $t \geq 1$ or $\ite it = K$ for all $t \geq 1$.
The payoff of the former trajectory is computed in \cref{line:evaluate_d'} and the payoff of the latter is computed in \cref{line:below_D} of \cref{alg:linear}, and the maximum is taken for $V[d]$ and $\text{Trajs}[d]$, tie-breaking in favor of \cref{line:below_D}.

For the inductive step, we fix a $d \in D$ where $d < 0$.
By \cref{lemma:piecewiseconstant}, $\ite xt$ must be monotonically non-increasing and, if $\ite it = C_E$, then $\ite xt \in D$.
Consider the set of discontinuities visited by the optimal policy $\pi^*$: $\bset{\ite xt \mid t \geq 2, \ite xt \in D}$.
If this set is empty, i.e. no discontinuities are visited, then by \cref{lemma:trajectory}, $\ite it = K$ for all $t \geq 1$ and \cref{line:below_D} computes the payoff of $\pi^*$.
If it is not, \cref{line:below_D} computes the payoff of a policy $\pi$ that does not visit any discontinuities.
If this set consists only of $d$, then $\ite it = C_E$ for all $t \geq 1$ and \cref{line:stay} computes the payoff of $\pi^*$.
If it does not, \cref{line:stay} computes the payoff of a policy $\pi$ that does stay at $d$.
If the smallest element of the set is $d' = \min \bset{\ite xt \mid t \geq 2, \ite xt \in D}$, suppose that $t'$ is the first timestep where $\ite x{t'} = d'$.
By \cref{lemma:piecewiseconstant}, $\ite it = \min \bset{K, x_{t-1} - x^* + C_E}$ for all $t < t'$.
Thus, $J_{d}(\pi^*) = \hat v + \hat \gamma \cdot J_{d'}(\pi^*)$ where $\hat v, \hat \gamma, \text{Traj} \leftarrow \mathrm{GetPayoff}(d, d', f)$. 
Since $d' \leq d$ and $\ite x{t'} \geq \ite x{t'+1}$, we have by inductive hypothesis that $J_{d'}(\pi^*) = V_{d'}^*$.
Thus, \cref{line:evaluate_d'} computes the payoff of $\pi^*$.

In any of the three possible cases, one of \cref{line:below_D}, \cref{line:evaluate_d'} or \cref{line:stay} must have computed $J_{d'}(\pi^*)$.
Moreover, they will do so before computing the payoffs of any other optimal policies.
Noting that \cref{line:below_D}, \cref{line:evaluate_d'} and \cref{line:stay} only ever compute the payoffs of valid policies, the optimality of $\pi^*$ implies that $\text{Trajs}[d]$ describes the action trajectory unrolled by $\pi^*$.
\end{proof}

\algorithmright*
\begin{proof}
Let $(\ite x1, \ite i1), (\ite x2, \ite i2), \dots$ denote the trajectory of $\pi^*$ starting at $d$.
We proceed inductively.
First consider the base case where $d = \max D$.
By \cref{lemma:piecewiseconstant}, $\ite xt$ must be monotonically non-increasing and, if $\ite it = C_E$, then $\ite xt \in D$.
Thus, $\ite it = C_E$ for all $t \geq 1$, the payoff of which is computed in \cref{line:pos_stay} of \cref{alg:linear}.

For the inductive step, we fix a $d \in D$ where $d > 0$.
By \cref{lemma:piecewiseconstant}, $\ite xt$ must be monotonically non-decreasing and, if $\ite it = C_E$, then $\ite xt \in D$.
Consider the set of discontinuities visited by the optimal policy $\pi^*$: $\bset{\ite xt \mid t \geq 2, \ite xt \in D}$.
This set cannot be empty, as by \cref{lemma:piecewiseconstant}, some discontinuity must be reached by $\pi^*$ in finite time.
If this set consists only of $d$, then $\ite it = C_E$ for all $t \geq 1$ and \cref{line:pos_stay} computes the payoff of $\pi^*$.
If it does not, \cref{line:pos_stay} computes the payoff of a policy $\pi$ that does stay at $d$.
If the largest element of the set is $d' = \max \bset{\ite xt \mid t \geq 2, \ite xt \in D}$, suppose that $t'$ is the first timestep where $\ite x{t'} = d'$.
By \cref{lemma:piecewiseconstant}, letting $\delta = (d' - d) \mathrm{\;mod\;} (K + C_E)$ for all $t < t'$, we know that action $i_t = \delta$ if $\delta > 0$ and $t = 1$, while $i_t = -K$ otherwise.
Thus, $J_{d}(\pi^*) = \hat v + \hat \gamma J_{d'}(\pi^*)$ where $\hat v, \hat \gamma, \text{Traj} \leftarrow \mathrm{GetPayoff}(d, d', f)$. 
Since $d' \geq d$ and $\ite x{t'} \geq \ite x{t'+1}$, we have by inductive hypothesis that $J_{d'}(\pi^*) = V[d']$.
Thus, \cref{line:pos_d'} computes the payoff of $\pi^*$.
In either of these cases, either \cref{line:pos_stay} or \cref{line:pos_d'} must have computed $J_{d'}(\pi^*)$.
Moreover, they will do so before computing the payoffs of any other optimal policies.
Noting that \cref{line:pos_stay} and \cref{line:pos_d'} only ever compute the payoffs of valid policies, the optimality of $\pi^*$ implies that $\text{Trajs}[d]$ describes the action trajectory unrolled by $\pi^*$.
\end{proof}
\fi

\subsection{Proof of \cref{theorem:trionlinelearning}}
\label{subsection:trionlinelearningproof}
\if\litearxiv0
\trionlinelearning*
\begin{proof}
For each $f_j$, we construct its \emph{rounded counterpart} $f'_j$ defined as follows:
\begin{equation}
f'_j(x)
\asseq
 f_j(\floor{x / (K - C_E)} (K - C_E)) %
\label{eq:round}\end{equation}
This rounding process transforms each demand function into a piecewise constant function.
Since each demand function $f_j$ is a constant function on the domain $(-\infty, -m) \cup (m, \infty)$, %
we know that the discontinuities of its rounded counterpart $f'_j$ lie in the set $[-m, m]$.
Moreover, by construction, $f'_j(x) \leq f_j(x)$ for all $x \in \reals$, meaning that the rounded demand functions correspond to a user who is less likely to engage and, therefore, provides less value to the app.

Importantly, we can verify that the optimal payoff is not affected by this rounding. To see why, observe that $(C_E - i) \mod (K - C_E) = 0$ for all $i \in \{2C_E - K, C_E, K\}$.
As a result, the user state always lies on a multiple of $K - C_E$: $\ite x t \mod (K - C_E) = 0$.
Rounding thus does not impact the optimal payoff:
\begin{equation*}
    \max_{\pi^* \in \Pi} \sum_{j=1}^N J_{f_j}(\pi^*) = \max_{\pi^* \in \Pi} \sum_{j=1}^N J_{f_j'}(\pi^*).
\end{equation*}

The key advantage of working with rounded demand functions is that they are piecewise constant with strategically spaced discontinuities, as summarized by the following lemma:
\begin{restatable}{lemma}{otwo}
\label{lemma:o2}
   Given a set of rounded demand functions $f_1, \dots, f_N$, define the set of simple policies $\Pi' = \bset{\pi_{i, v} \mid v \in \pm 1, i \in [0, \dots, 2 m / K] \cup \bset{-\infty}}$ where each policy $\pi_{i,v}$ is defined by setting $\pi_{i, v}(i \cdot K - m) = C_E$ and $\pi_{i, v}(x) = C_E + v (K - C_E)$ for all $x \neq i \cdot K - m$.
   There is an optimal policy in this set, i.e. \[
\max_{\pi^* \in \Pi} \sum_{j=1}^N J_{f_j'}(\pi^*)
= \max_{\pi^* \in \Pi'} \sum_{j=1}^N J_{f_j'}(\pi^*),
\]
\end{restatable}

Since our reduced policy space $\Pi'$ from \cref{lemma:o2} is small with $\setsize{\Pi'} \in O(\frac m K)$, we can directly apply a bandit learning algorithm to select among the policies in $\Pi'$.
Moreover, since each user's episodic feedback is bounded by $U$, we can apply a folklore stochastic approximation argument~\cite{nemirovski2009robust} to the high-probability regret bound of Exp3-IX \cite{neu15}, as summarized by the following lemma.

\begin{lemma}
\label{lemma:exp3}
Consider an agent who repeatedly uses the (random) Exp3-IX algorithm \cite{neu15} to select an action $i_t$ from a finite set $\cA$.
An adversary, who observes $i_1, \dots, i_t$, chooses a loss $\loss_t \in [0, 1]^{\cA}$. However, the agent receives only a noisy, unbiased estimate $\hat \loss_t \in [0, 1]^{\cA}$, satisfying $\mathbb{E}[\hat \loss_t] = \loss_t$.
The agent then chooses its next action $i_{t+1}$ according to the Exp3-IX algorithm, which is computed from the history $\bsetf{(i_\tau, \hat \loss_\tau(i_\tau))}_{\tau \in [t]})$.
Then, with probability at least $1 - \delta$ in the randomness of Exp3-IX and the observed losses $\hat \loss$, the cumulative loss incurred by the agent satisfies
\[
\sum_{t=1}^T \hat \loss_t(i_t) \geq \max_{i^* \in \cA} \sum_{t=1}^T \loss_t(i^*) - O(\sqrt{T \setsize{A} \log( \setsize{A} /\delta)}).
\]
\end{lemma}
Applying Exp3-IX results in the regret bound
\begin{align*}
\sum_{j=1}^N \hat J_{f_j}(\pi_j)
&\geq \max_{\pi^* \in \Pi'} \sum_{j=1}^N J_{f_j}(\pi^*) - O(U \cdot \sqrt{({Nm }/K)  \log(m /K\delta)}) \\
&\geq \max_{\pi^* \in \Pi'} \sum_{j=1}^N J_{f_j'}(\pi^*) - O(U \cdot \sqrt{({Nm }/K) \log(m /K\delta)}) \\
&= \max_{\pi^* \in \Pi} \sum_{j=1}^N J_{f_j}(\pi^*) - O(U \cdot \sqrt{({Nm }/K) \log(m /K\delta)}).
\end{align*}
\end{proof}
\fi

\otwo*
\begin{proof}
\cref{lemma:piecewiseconstantmodified} (\cref{lemma:piecewiseconstant}) establishes that the user state trajectory of an optimal policy must be either stationary, monotonically increasing or monotonically decreasing.
We will consider this optimal policy and the three possible types of trajectories it can take, and prove that in all three situations, the optimal policy must exist in $\Pi'$.

The stationary case is trivial as there exists a policy $\pi_{1, m/K} \in \Pi'$ that always plays $\ite it = C_E$ and thus keeps user state constant.

Now consider the monotonically decreasing case.
Suppose, for the sake of contradiction, that the optimal policy is not always fully decreasing user state, i.e. it plays $\ite it < K$ at some timestep $t$.
This means that $\ite xt \; \mathrm{mod} \; (K - C_E) \neq 0$ at some timestep $t$.
Then \cref{lemma:trajectorymodified} states that the user will keep fully decreasing user state until it reaches a discontinuity of the demand function.
Every discontinuity $x$ of the demand function satisfies $x \; \mathrm{mod} \; (K - C_E) = 0$; this means that $\ite x{t'} \; \mathrm{mod} \; (K - C_E) = 0$ at some timestep $t' > t$; $t'$ must be finite as all discontinuities of $f'_t$ lie in $[-m, m]$.
However, fully decreasing user state decreases user state by $K - C_E$, meaning that there must exist some finite $j$ such that $\ite xt - j (K - C_E) \; \mathrm{mod} \; (K - C_E) = \ite xt \; \mathrm{mod} \; (K - C_E) =  0$, which is a contradiction.
Since the demand function is constant below $-m$ and above $m$ and thus all discontinuities must lie in the set $\bset{iK -m \mid i \in [0, \dots, 2m/K]}$, we have that $\pi_{1, i}$ must be optimal for some choice $i \in [0, \dots, 2m/K] \cup \bset{-\infty}$.

The monotonically increasing case follows similarly.
\cref{lemma:trajectorymodified} gives that the user will fully increase user state until it reaches a discontinuity of the demand function.
Since all discontinuities lie a factor of $K-C_E$ above origin and fully increasing user state increases user state by $K - C_E$, the optimal policy must fully increase user state before stopping on a discontinuity, which must lie in the set $\bset{iK -m \mid i \in [0, \dots, 2m/K]}$. Thus $\pi_{-1, i}$ must be optimal for some choice $i \in [0, \dots, 2m/K]$.
\end{proof}

\subsection{Proof of \cref{theorem:onlinelearning}}
\label{subsection:onlinelearningproof}
\if\litearxiv0
\onlinelearning*
\begin{proof}
For each $f_j$, we construct its rounded counterpart $f'_j$ as in Equation~\eqref{eq:round}.
We can verify that the optimal payoff is not significantly affected by this rounding. 
\begin{restatable}{lemma}{oone}
\label{lemma:o1}
   The optimal payoff that can be realized with the rounded demand functions $f'_1, \dots, f'_N$ is at least a constant fraction of that of the original demand functions $f_1, \dots, f_N$:
   \[\max_{\pi^* \in \Pi} \sum_{j=1}^N J_{f_j}(\pi^*) \leq  \left(1 + \tfrac{K - C_E}{C_R + K}\right) \max_{\pi^* \in \Pi} \sum_{j=1}^N J_{f_j'}(\pi^*).\]
\end{restatable}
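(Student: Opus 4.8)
\emph{Overview and reduction.} Write $V^* \asseq \max_{\pi\in\Pi}\sum_{j=1}^N J_{f_j}(\pi)$ and $V' \asseq \max_{\pi\in\Pi}\sum_{j=1}^N J_{f_j'}(\pi)$; the goal is $V^* \le (1 + \tfrac{K-C_E}{C_R+K})V'$. The plan is to fix a structured combined-optimal policy $\pi^*$ for $f_1,\dots,f_N$ via \cref{lemma:piecewiseconstantmodified}, construct from it a single policy $\pi'$, prove the per-user additive bound $J_{f_j}(\pi^*) - J_{f_j'}(\pi') \le K - C_E$ for every $j$, and then combine this with the trivial lower bound $V' \ge N(C_R+K)$ (attained by always playing $i=K$, since each such step earns $C_R+K>0$ and the accumulated discount factors are nonnegative). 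Given the per-user bound, summing yields $V' \ge V^* - N(K-C_E)$, so $V' \ge \max\{V^* - N(K-C_E),\, N(C_R+K)\}$; using $V^* \ge N(C_R+K)$ together with $C_R \ge 0$ and $0\le C_E < K$, a two-line case check (according to whether $V^* \le N(C_R+2K-C_E)$ or not) shows this maximum is at least $\tfrac{C_R+K}{C_R+2K-C_E}V^*$, which rearranges to exactly the claimed inequality.

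\emph{Ingredients.} Two observations drive the construction. First, in the linear setting the recursion $x_t = x_{t-1} + C_E - i_t$ does not involve the demand function, so a simple policy produces the \emph{same} deterministic trajectory $(x_t)_t,(i_t)_t$ for all users; \cref{lemma:piecewiseconstantmodified} guarantees that for $\pi^*$ this trajectory is monotone, reaches its limit $x_\infty$ (a discontinuity of some $f_j$, or $-\infty$) within finitely many ``travel'' steps, and thereafter holds by playing $i = C_E$. Second, $f_j'(x) = f_j(\floor{x/(K-C_E)}(K-C_E)) \ge f_j(x-(K-C_E))$, hence $\tilde f_j'(x) \ge \tilde f_j(x-(K-C_E))$ by monotonicity of $\tilde f$ in $f$ (immediate from the formula in \cref{theorem:cputime}); in particular, since $\tilde f_j$ is non-decreasing, a grid point evaluates $\tilde f_j' $ at least as high as any state below it.

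\emph{Construction and the per-user bound.} If $x_\infty = -\infty$ then $\pi^*$ plays $i=K$ forever and its trajectory $0, -(K-C_E), -2(K-C_E),\dots$ lies on the grid, where $f_j' = f_j$; taking $\pi' = \pi^*$ incurs no loss. Otherwise $\pi^*$ eventually holds at $d \asseq x_\infty$, and I take $\pi'$ to follow the ``rounded-up'' trajectory $y_\tau \asseq \ceil{x_\tau/(K-C_E)}(K-C_E)$: consecutive $y_\tau$ differ by $0$ or $\pm(K-C_E)$, so $\pi'$ is realizable using only the actions $\{K, C_E, 2C_E-K\}$, and it ends by holding at $\bar d \asseq \ceil{d/(K-C_E)}(K-C_E) \ge d$. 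Since $y_\tau$ is a grid point lying in $[x_\tau, x_\tau + (K-C_E))$, we have $\tilde f_j'(y_\tau) = \tilde f_j(y_\tau) \ge \tilde f_j(x_\tau)$, so $\pi'$'s cumulative discount $D_\tau'$ weakly dominates $\pi^*$'s $D_\tau^*$ at every step. Writing $g(z) \asseq \ceil{z/(K-C_E)}(K-C_E) - z \in [0, K-C_E)$, the per-step revenue gap is $i_\tau^* - i_\tau' = (y_\tau - x_\tau) - (y_{\tau-1} - x_{\tau-1}) = g(x_\tau) - g(x_{\tau-1})$, so summation by parts (using $g(x_0)=g(0)=0$, $D_\tau^*\to 0$, $D_1^*=1$) gives $\sum_\tau (i_\tau^* - i_\tau')D_\tau^* = \sum_\tau (D_\tau^* - D_{\tau+1}^*)\,g(x_\tau) \le (K-C_E)\sum_\tau (D_\tau^* - D_{\tau+1}^*) \le K-C_E$. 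When $d \le 0$ (a descending trajectory), $\pi'$ only ever holds or descends, so $i_\tau' \in \{C_E, K\}$ and $C_R + i_\tau' \ge 0$; hence $\sum_\tau (C_R + i_\tau')(D_\tau^* - D_\tau') \le 0$, and adding the two sums yields $J_{f_j}(\pi^*) - J_{f_j'}(\pi') \le K-C_E$.

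\emph{Main obstacle.} The delicate case is an \emph{upward} travel ($x_\infty > 0$) in an environment where the highest-revenue content interval contains negative-revenue content ($C_R + 2C_E - K < 0$): there $\pi'$'s rounded-up trajectory uses full up-steps $i = 2C_E-K$ whose negative revenue is \emph{amplified} (not damped) by $\pi'$'s larger discount factors, so the clean step ``$\sum_\tau(C_R+i_\tau')(D_\tau^* - D_\tau') \le 0$'' fails. Resolving this is where essentially all the work lies. The route I would pursue is to have $\pi'$ replicate $\pi^*$'s travel trajectory verbatim and round up only near the hold point --- so $\pi'$ pays at most one extra up-step, of revenue $\ge C_R + 2C_E - K$, and then enjoys a hold value with $\tilde f_j'(\bar d) = \tilde f_j(\bar d) \ge \tilde f_j(d)$ --- while controlling the resulting discount mismatch during travel by exploiting that engagement, hence the discount factors, must be small during the early low-state portion of any upward travel, which bounds the accumulated negative revenue there; together with the analogous treatment of the final step landing on $\bar d$, this should still give a per-user loss of at most $K-C_E$. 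Everything else --- the reduction above, the monotonicity of $\tilde f$, the Abel-summation bound, and the trivial bounds $V^*, V' \ge N(C_R+K)$ --- is routine.
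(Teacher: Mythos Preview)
Your overall strategy --- prove a per-user additive bound $J_{f_j}(\pi^*)-J_{f_j'}(\pi')\le K-C_E$ for some constructed $\pi'$, sum to get $V'\ge V^*-N(K-C_E)$, combine with the trivial $V'\ge N(C_R+K)$, and finish via the two-case arithmetic you wrote out --- is exactly the paper's, and your case split is correct. The gap is the one you name yourself: for an upward-travelling $\pi^*$ with $C_R+2C_E-K<0$, your rounded policy $\pi'$ must take full up-steps carrying negative revenue, and because $D_\tau'\ge D_\tau^*$ these negative terms are amplified rather than damped, so the step $\sum_\tau(C_R+i_\tau')(D_\tau^*-D_\tau')\le 0$ genuinely fails. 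Your sketched repair (mirror $\pi^*$ verbatim during travel, round only at the hold point) inverts the discount comparison during travel --- now $\pi'$ sits on non-grid states where $f_j'\le f_j$ and hence $D_\tau'\le D_\tau^*$ --- and the vague appeal to ``small early discounts'' is not an argument; you have not bounded the resulting error by $K-C_E$.

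The paper does not round state by state. Its auxiliary \cref{lemma:o4} instead \emph{uniformly shifts} the trajectory: $\pi'$ first plays a single step $i_1=2C_E-K$ from $0$ to land on the grid point $K-C_E$, and then replays $\pi^*$'s action sequence verbatim. This keeps $\pi'$'s state exactly $K-C_E$ above $\pi^*$'s at every subsequent step, so $f_j'(\tilde x_\tau)=f_j\bigl(\lfloor\tilde x_\tau/(K-C_E)\rfloor(K-C_E)\bigr)\ge f_j(x_\tau)$ holds in both travel directions and the discounts dominate uniformly, with no Abel summation. Because the actions (hence per-step revenues) after step one are \emph{identical} to $\pi^*$'s, the only explicit revenue difference is the single initial step, which the paper folds into the $N(K-C_E)$ slack via $(C_R+\pi^*(0))+(C_E-\pi^*(0))=(C_R+2C_E-K)+(K-C_E)$. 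This uniform-shift device is the idea your argument is missing; it treats ascent and descent symmetrically and removes the telescoping that your grid-rounding forces. (The paper's justification of the residual inequality in \cref{lemma:o4} is itself terse on exactly the sign point you worry about, but the construction is the right one to adopt.)
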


\cref{lemma:o1} and \cref{lemma:o2} give
\begin{align*}
\max_{\pi^* \in \Pi} \sum_{j=1}^N J_{f_j}(\pi^*)
\leq \left(1 + \tfrac{K - C_E}{C_R + K}\right) \max_{\pi^* \in \Pi'} \sum_{j=1}^N J_{f_j'}(\pi^*) \leq \left(1 + \tfrac{ K-C_E}{C_R + K}\right) \max_{\pi^* \in \Pi'} \sum_{j=1}^N J_{f_j}(\pi^*),
\end{align*}
where the second inequality follows from the fact that $f_j'(x) \leq f_j(x)$ for all $x \in \reals$.

As in our proof of \cref{theorem:trionlinelearning}, we can again apply a bandit learning algorithm to choose from $\Pi'$.
We can apply a standard stochastic approximation argument~\cite{nemirovski2009robust} to the high-probability regret bound of Exp3-IX \cite{neu15} on the reduced policy space $\Pi'$ to get:
\begin{align*}
\sum_{j=1}^N \hat J_{f_j}(\pi_j)
&\geq \max_{\pi^* \in \Pi'} \sum_{j=1}^N J_{f_j}(\pi^*) - O\left(U \cdot \sqrt{\tfrac{Nm \gamma}{K}  \log\tfrac {m }{K\delta}}\right) \\
&\geq \max_{\pi^* \in \Pi'} \sum_{j=1}^N J_{f_j'}(\pi^*) - O\left(U \cdot \sqrt{\tfrac{Nm \gamma}{K}  \log\tfrac {m }{K\delta}}\right) \\
&\geq \tfrac {C_R + K}{C_R - C_E + 2K} \max_{\pi^* \in \Pi} \sum_{j=1}^N J_{f_j}(\pi^*) - O\left(U \cdot \sqrt{\tfrac{Nm \gamma}{K}  \log\tfrac {m }{K\delta}}\right).
\end{align*}
Finally, since $C_E \geq 0$ and $C_R \geq 0$, $ \tfrac {C_R + K}{C_R - C_E + 2K} \geq \tfrac 12$.
\end{proof}
\fi

\oone*
\begin{proof}
    Now consider a $\pi^*$ belonging to $\argmax_{\pi^* \in \Pi} \sum_{j=1}^N J_{f_j}(\pi^*)$.
    By the variable discount rate formula (\cref{theorem:cputime}), we know that
    \[
    \sum_{j=1}^N J_{f_j}(\pi^*)  = N (C_R + i) + \sum_{j=1}^N \gamma \tilde f_j(C_E - i) J_{f_j}(\pi^*, C_E - i)
    \]
    where we use $J_f(\pi, x_0)$ to denote the app creator's objective value under policy $\pi$ for demand function $f$ when user state is initialized at $x_0$.
    By \cref{lemma:o4}, we can upper bound
    \begin{align*}
    \sum_{j=1}^N \gamma \tilde f_j(C_E -  \pi^*(0)) J_{f_j}(\pi^*, C_E -  \pi^*(0))
    & \leq \sum_{j=1}^N \gamma \tilde f_j'(K - C_E) J_{f_j}(\pi^*, C_E -  \pi^*(0)) \\
    & \leq \sum_{j=1}^N \gamma \tilde f_j'(K - C_E) (C_E -  \pi^*(0) + J_{f_j'}(\pi^*, K - C_E)) \\
    & \leq N(C_E - \pi^*(0)) + \sum_{j=1}^N \gamma \tilde f_j'(K - C_E) (J_{f_j'}(\pi^*, K - C_E)).
    \end{align*}
    We then have that
    \begin{align*}
    \sum_{j=1}^N J_{f_j}(\pi^*) & \leq N (C_R + C_E) + \sum_{j=1}^N \gamma \tilde f_j'(K - C_E) (J_{f_j'}(\pi^*, K - C_E))  \\
    & = N (C_R + C_E + K - 2 C_E - C_R) + \sum_{j=1}^N (2C_E + C_R - K) + \gamma \tilde f_j'(K - C_E) (J_{f_j'}(\pi^*, K - C_E))  \\
    & \leq N(K - C_E) + \max_{\pi^* \in \Pi} \sum_{j=1}^N J_{f_j'}(\pi^*)  \\
    & \leq (1 + \tfrac{K - C_E}{C_R + K})\max_{\pi^* \in \Pi} \sum_{j=1}^N J_{f_j'}(\pi^*).
    \end{align*}
    The second inequality follows from executing a policy that chooses $i_1 = 2C_E-K$ and then executes the \cref{lemma:o4} policy starting from user state $K - C_E$.
    The inequality then follows from  $\max_{\pi^* \in \Pi} \sum_{j=1}^N J_{f_j'}(\pi^*) \geq N (K + C_R)$ as we can set $\pi^*(0) = K$. 
\end{proof}

\begin{restatable}{lemma}{oonee}
\label{lemma:o4}
   The optimal payoff that can be realized with a rounded demand functions $f'_1, \dots, f'_N$ at some initial state $x'$ is within $K-C_E-x'$ that which can be realized with the original demand functions $f_1, \dots, f_N$ at an initial state of $0$:
   \[\max_{\pi^* \in \Pi} \sum_{j=1}^N J_{f_j}(\pi^*, 0) \leq N(K - C_E - x') + \max_{\pi^* \in \Pi} \sum_{j=1}^N J_{f_j'}(\pi^*, x')\]
   where we use $J_j(\pi, x_0)$ to denote the app creator's objective value under policy $\pi$ for demand function $f$ when user state is initialized at $x_0$.
\end{restatable}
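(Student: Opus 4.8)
The plan is to deduce the inequality from a ``shadowing'' construction: given any policy $\pi^\star$ that jointly maximizes $\sum_{j=1}^N J_{f_j}(\cdot,0)$, I will build a single policy $\pi'$ for the rounded demand functions, started at $x'$, with $\sum_{j=1}^N J_{f'_j}(\pi',x')\ge \sum_{j=1}^N J_{f_j}(\pi^\star,0)-N(K-C_E-x')$; since $\max_{\pi\in\Pi}\sum_j J_{f'_j}(\pi,x')\ge\sum_j J_{f'_j}(\pi',x')$, this is exactly the claim. Throughout I use the variable-discount form of \cref{theorem:cputime}: in the linear setting a simple policy unrolls a deterministic state trajectory $x^{(1)},x^{(2)},\dots$ and content trajectory $i^{(1)},i^{(2)},\dots$ with $x^{(\tau+1)}=x^{(\tau)}+C_E-i^{(\tau)}$, and $J_f(\pi,x^{(1)})=\sum_{t\ge1}(C_R+i^{(t)})\prod_{\tau=2}^{t}\gamma\tilde f(x^{(\tau)})$, so only the revenues $C_R+i^{(t)}$ and the factors $\gamma\tilde f(x^{(\tau)})$ matter. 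By \cref{lemma:piecewiseconstantmodified} I may take $\pi^\star$ simple with a monotone trajectory.

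First I record the two rounding inequalities. Writing $z=x/(K-C_E)$, the rounded state $\lfloor z\rfloor(K-C_E)$ lies in $(x-(K-C_E),\,x]$, so monotonicity of $f_j$ gives $f'_j(x)\le f_j(x)$ and $f'_j\big(x+(K-C_E)\big)\ge f_j(x)$ for every $x$; and since $\gamma\tilde f(x)=\EEc{\gamma^{w}}{x}$ for $w\sim\Geo(f(x),c)$, which is increasing in the parameter $f(x)$ (a larger engagement probability stochastically decreases $w$, hence increases $\E[\gamma^{w}]$ as $\gamma<1$), the same two inequalities transfer to $\tilde f$. Next I define $\pi'$: starting from $x'$, let $\pi'$ play first content $i'^{(1)}:=i^{(1)}+x'-(K-C_E)$ and thereafter copy $\pi^\star$, i.e.\ $i'^{(t)}:=i^{(t)}$ for $t\ge2$; then $\pi'$ has trajectory $x'^{(1)}=x'$ with $x'^{(t)}=x^{(t)}+(K-C_E)$ for all $t\ge2$. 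In the regime we need, $x'=K-C_E$, this is simply $i'^{(1)}=i^{(1)}$, and $\pi'$ is a genuine simple policy because the monotone trajectory of $\pi^\star$ never revisits a state; the general $x'<K-C_E$ case is handled by prepending a short maximal-increase segment. Comparing per user, $\pi'$'s first-step revenue is $(C_R+i^{(1)})-(K-C_E-x')$, while for $t\ge2$ its step-$t$ revenue equals that of $\pi^\star$ and its accumulated discount $\prod_{\tau=2}^{t}\gamma\tilde f'_j\big(x^{(\tau)}+(K-C_E)\big)$ dominates $\prod_{\tau=2}^{t}\gamma\tilde f_j(x^{(\tau)})$ termwise. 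Hence $J_{f'_j}(\pi',x')\ge J_{f_j}(\pi^\star,0)-(K-C_E-x')$, and summing over $j$ closes the argument.

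The step that needs care — and the main obstacle — is this termwise discount comparison when some revenue $C_R+i^{(t)}$ is negative, which can occur for the maximal-increase content $i=2C_E-K$ when $C_R+2C_E-K<0$: there, replacing $\tilde f_j$ by the larger $\tilde f'_j(\cdot+(K-C_E))$ makes that term \emph{more} negative. This is resolved using the structure of the optimal trajectory from \cref{lemma:trajectory,lemma:piecewiseconstant}. If $\pi^\star$'s trajectory is monotone non-increasing (or tends to $-\infty$), every content it plays is $i=K$ or $i=C_E$, whose revenues $C_R+K$ and $C_R+C_E$ are nonnegative, so the comparison holds verbatim. If it is monotone increasing, the only negative-revenue steps are the finitely many climbing steps before the trajectory first reaches a discontinuity; on that finite prefix one instead chooses for $\pi'$ the translate that keeps the climbing states on the grid $(K-C_E)\integers$ — where $f'_j$ coincides with $f_j$ since rounding fixes multiples of $K-C_E$ — and reaches the same discontinuity one step earlier, so that a short re-indexing identity of the form $J_{f_j}(\pi^\star,y)=(C_R+i^{(1)})+\gamma\tilde f_j\big(x^{(2)}\big)\,J_{f'_j}(\pi',x')$, together with $\gamma\tilde f_j(x^{(2)})\le1$ and $C_R+i^{(1)}$ being exactly the negative revenue $C_R+2C_E-K$ when it is negative, absorbs these steps into the slack. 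The remaining checks — validity of $i'^{(1)}\in\cI=[2C_E-K,K]$, which is precisely where $x'\le K-C_E$ (and, in the application, $x'=K-C_E$) enters, and the final step $\max_{\pi\in\Pi}\sum_j J_{f'_j}(\pi,x')\ge\sum_j J_{f'_j}(\pi',x')$ — are routine.
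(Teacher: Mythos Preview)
Your shadowing construction is the same idea as the paper's: shift the optimal trajectory up by $K-C_E$, use $f'_j(x+K-C_E)\ge f_j(x)$ to make the rounded discounts dominate the original ones, and charge the at-most-$(K-C_E-x')$ revenue lost in the shift. The paper spreads the shift over several steps via $\tilde i_t=\max\{2C_E-K,\,i_t-\text{(remaining boost)}\}$, which keeps every modified action in $\cI$ for general $x'\le K-C_E$ without your one-step restriction or the auxiliary ``prepend a maximal-increase segment'' patch.

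You are right to flag the negative-revenue obstacle: when $C_R+i^{(t)}<0$, ``larger discount $\Rightarrow$ larger payoff'' can fail term by term, and indeed the paper's one-line ``It follows that $J_{f_j}(\pi^*)-J_{f_j}(\pi')\le\sum_t(i_t-\tilde i_t)$'' skips exactly this point. But your fix has a genuine gap. You invoke \cref{lemma:piecewiseconstantmodified} and \cref{lemma:trajectory} to pin down the trajectory shape of $\pi^\star$, yet those lemmas assume piecewise-constant (finite-complexity) demand, whereas $\pi^\star$ here optimizes the \emph{original} $f_1,\dots,f_N$, which in this section are only assumed eventually constant outside $[-m,m]$, not piecewise constant. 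So the dichotomy ``non-increasing (hence actions in $\{C_E,K\}$, revenues nonnegative) vs.\ monotone increasing to a discontinuity'' is not available for $\pi^\star$. The increasing-case patch --- keep climbing states on the grid $(K-C_E)\integers$ and ``reach the same discontinuity one step earlier'' --- is correspondingly unjustified: for a general $f_j$ there is no well-defined target discontinuity, and you have not shown that the re-indexing identity you write actually absorbs all the negative-revenue contributions into the $K-C_E-x'$ slack.
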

\begin{proof}
    Let $\pi^* = \argmax_{\pi^* \in \Pi} \sum_{j=1}^N J_{f_j}(\pi^*)$ denote the optimal policy for the original demand functions.
    Note that a user's demand function does not influence the per-interaction trajectory of policy $\pi^*$, only the amount of time between each interaction.
    Thus, its (deterministic) trajectory $(\ite x1, \ite i1), (\ite x2, \ite i2), \dots$ on every user will be the same.

    We can now consider the non-stationary but Markov policy $\pi'$ that executes the action trajectory \[\ite {\tilde i}t = \max\bsetf{2C_E-K, \ite it - \max \bsetf{0, K - C_E - x' - (\sum_{\tau < t} \ite i{\tau} - \ite {\tilde i} {\tau})}}.\]
    That is, $\pi'$ executes the same action trajectory as the optimal policy $\pi^*$ except that, initially, $\pi'$ maximally invests in boosting the user state until the user state is $K - C_E - x'$ higher than what it would have been under the policy $\pi^*$.
    In doing so, $\pi'$ guarantees that when started from $x'$ it produces as much demand on the rounded demand functions as $\pi^*$ does when started from $0$ on the original demand functions, i.e. \[f'_j(\ite {\tilde x} t) \geq f'_j(\ceil{\ite {x} t / (K - C_E)} (K - C_E)) \geq f_j(\ite x t)\]
    for all timesteps $t$.
    Moreover, the only difference in the actions taken by $\pi'$ and $\pi^*$ is that $\pi'$ invests $K - C_E - x'$ more into user state.  
    It follows that \[J_{f_j}(\pi^*) - J_{f_j}(\pi') \leq \sum_{t=1}^T \ite it - \ite {\tilde i} t \leq K - C_E - x'.\]
    Since there must exist a stationary Markov policy with as high a payoff as $\pi'$, we can upper bound the left-hand side of our claim by \[N(K-C_E - x') + \sum_{j=1}^N J_{f_j}(\pi') \leq N(K-C_E - x') + \max_{\pi^* \in \Pi} \sum_{j=1}^N J_{f_j}(\pi^*).\]
\end{proof}

\section{Omitted Proofs and Additional Results for \cref{section:primitives}}

\subsection{Proof of \cref{prop:regime_1}}
\label{subsection:regimeproof}

\regime*
\begin{proof}
Let us set $a = 0$ and $b = \frac {\gamma}{1-\gamma} (\epsilon+p_2-p_1)$ for some sufficiently small choice of $\epsilon > 0$.
With these choices, we can define the user demand function as
\begin{align*}
    f(x) = \begin{cases}
        0 & x < 0 \\
        p_1 & 0 \leq x < \frac {\gamma}{1-\gamma} (\epsilon+p_2-p_1) \\
        p_2 & \frac {\gamma}{1-\gamma} (\epsilon+p_2-p_1) \leq x.
    \end{cases}
\end{align*}
By \cref{lemma:piecewiseconstant}, the optimal app policy must fall into one of three categories:
\begin{enumerate}
    \item The app maximally decreases the user state at each interaction by repeatedly showing content $\tsv it = b$.
    \item The app maintains the user state at $x = 0$ by always showing content $i = 0$.
    \item The app initially shows content $\tsv i1 = -b$ and the consistently shows content $\tsv it = 0$ to stabilize the user state at $x = b$.
\end{enumerate}
In the first case, the user immediately disengages, yielding a utility of $$1 + b = 1+ \frac{\gamma}{1-\gamma}(\epsilon+p_2-p_1).$$

When there is no friction, the payoff of the second policy is
$\tfrac{\gamma}{1-\gamma} p_1 + 1$
and the payoff of the third policy is
$\tfrac{\gamma}{1-\gamma} (p_1 - \epsilon) + 1.$
By construction, the app prefers the second policy over the third in the frictionless scenario.

In contrast, when there is complete friction, the payoff of the second policy is
$\tfrac{1}{1 - \gamma p_1}$
and the payoff of the third policy is
${\tfrac{1}{1 - \gamma p_2} - \tfrac {\gamma}{1-\gamma} (\epsilon + p_2 - p_1)}.$
By criterion (2), we have
\begin{align*}
    \tfrac 1 {1-\gamma p_2} - \tfrac \gamma {1-\gamma} p_2 > \tfrac 1 {1-\gamma p_1} - \tfrac \gamma {1-\gamma} p_1.
\end{align*}
Rearranging this inequality shows that, for sufficiently small $\epsilon$,
\begin{align*}
    {\tfrac{1}{1 - \gamma p_2} - \tfrac {\gamma}{1-\gamma} (\epsilon + p_2 - p_1)} > \tfrac{1}{1 - \gamma p_1}.
\end{align*}
Thus, under complete friction, the third policy outperforms the second.

Finally, it remains to show that the first policy is always suboptimal.
By assumption, $p_2 - p_1 < \frac{1-\gamma}{\gamma(1-\gamma p_1)}$ and $p_2 - p_1 < \frac{1-\gamma}{2\gamma(1-\gamma p_2)}$.
Let $\epsilon$ be sufficiently small that $p_2 - p_1 \leq \tfrac{1-\gamma}{\gamma(1-\gamma p_1)} - \epsilon$ and $p_2 - p_1 \leq \tfrac{1-\gamma}{2\gamma(1-\gamma p_2)} - \epsilon$.
With this choice, one can show that
\begin{align*}
    {\tfrac{1}{1 - \gamma p_2}  > 1 + 2 \tfrac {\gamma}{1-\gamma} (\epsilon + p_2 - p_1)}.
\end{align*}
Therefore, under complete friction, the payoff of the third policy always exceeds the friction-independent payoff of the first policy, establishing the suboptimality of the first policy.
\end{proof}
\subsection{Proof of \cref{prop:basic_1}}

\basiceng*
\begin{proof}
We will write $\tilde f$ with subscripts $\tilde f_c$ and $\tilde f_{c'}$ to denote the operative friction constant.
Let us fix $a = 0$.
We can write our user's demand function as $f(x) = 0.6 \cdot 1[0 \leq x < b] + 0.99 \cdot 1[x \geq b]$.
It has two discontinuities: one at $x = 0$ and one at $x = 4$.
Both are reachable with a single interaction by showing content $i = 0$ and $i = b$ respectively.
We now set $b$ as follows.
Consider the function
\begin{align*}
    g(c^*) &= \tfrac 1 {1 - \gamma \tilde f_{c^*}(b)} - \tfrac 1 {1 - \gamma \tilde f_{c^*}(0)} \\
    &= \frac{1}{1 - 0.9  (0.99 + \tfrac{1-0.99}{1-0.9  (1-(1-c^*) 0.99)}(1-c^*) \cdot 0.99 \cdot 0.9)} \\
    & \quad \quad - \frac{1}{1 - 0.9  (0.6 + \tfrac{1-0.6}{1-0.9  (1-(1-c^*) 0.6)}(1-c^*) \cdot 0.6 \cdot 0.9)}.
\end{align*}
$g$ is positive and monotonically increasing for $c^* \in [0, 1]$.
We can therefore let $$\epsilon \asseq \min \bset{\tfrac 12 (g(c') - g(c)), 0.1} > 0$$ and $b = g(c) + \epsilon$.
By \cref{lemma:piecewiseconstant}, we know that the optimal app policy must be one of three:
\begin{enumerate}
    \item Maximally decrease user state at each interaction by repeatedly showing content $\tsv it = b$.
    \item Show content $\tsv it = 0$ in perpetuity to maintain the user state at $x = 0$.
    \item Show content $\tsv i1 = -b$ and subsequently show content $\tsv it = 0$ in perpetuity to maintain the user state at $x = b$.
\end{enumerate}
The payoff of the first policy is simply $1 + b$, as the user will stop interacting immediately.

Now suppose the friction constant is $c$.
By \cref{theorem:cputime}, the payoff of the second  policy is $\tfrac{1}{1 - \gamma \tilde{f}_{c}(0)}$ while the payoff of the third policy is $\frac{1}{1 - \gamma \tilde{f}_{c}(b)} - b
=
\frac{1}{1 - \gamma \tilde{f}_{c}(0)} - \epsilon$.
This means the app will always prefer the second policy over the third.
If instead the friction constant is $c'$, by \cref{theorem:cputime}, the payoff of the second  policy is $\tfrac{1}{1 - \gamma \tilde{f}_{c'}(0)}$ while the payoff of the third policy is 
\begin{align*}
\frac{1}{1 - \gamma \tilde{f}_{c'}(b)} - b
&=
\frac{1}{1 - \gamma \tilde{f}_{c'}(0)} +
g(c') - g(c) - \epsilon > \frac{1}{1 - \gamma \tilde{f}_{c'}(0)},
\end{align*}
where we use that $g(c) < g(c')$.
In this case, we have that the app prefers the third policy over the second.

Since the utility of the first policy is friction-independent and the other two policy utilities are decreasing in friction, e can observe that the first policy is always suboptimal as
$$
\frac{1}{1 - \gamma \tilde{f}_{c'}(b)} - g(c)
\geq 1 + g(c) + 0.5$$
for all $c \leq 0.3$
and so
$$
\frac{1}{1 - \gamma \tilde{f}_{c'}(b)} - b
\geq 1 + b.$$

Thus, under friction $c'$, the app's optimal policy is policy 3, resulting in the user state trajectory $x_t = b$ for all $t \geq 2$.
Similarly, under friction $c$, the app's optimal policy is policy 2, resulting in the user state trajectory $x_t = 0$ for all $t \geq 2$.
It follows that $x_t^c < x_t^{c'}$ for all $t \geq 2$.
\end{proof}

\subsection{Analog of \cref{prop:basic_1} for Absent/Complete Friction}
The app's increased investment in user engagement makes it possible for usage of an app to increase when friction is increased.
If the user gains utility from its interactions with the app, the user's discounted utility may also be higher.
This holds true even in the extreme case where we are comparing the absence of friction ($c = 0$) with the complete friction ($c=1$) of a user being entirely banned from rejoining apps.
\begin{restatable}{proposition}{basicutility}
    \label{prop:basic_2}
    In \cref{ex:first_example}, the number of expected app-user interactions in the first $N=100$ timesteps is lower when there is less friction than when there is more friction.
    Similarly, if the user gains utility upon every app interaction, its cumulative utility (for discount factor $\gamma' \in (0, 0.98)$) is strictly greater when there is more friction.
\end{restatable}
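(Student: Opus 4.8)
The plan is to leverage \cref{prop:basic_1}, which already determines the app's optimal policies at both extremes of friction. Applying \cref{prop:basic_1} with $c = 0$ and $c' = 1$ (using the same constants $a,b$ it constructs, whose hypotheses hold since $c = 0 \le 0.3$), under no friction the optimal policy keeps the user state at $0$ by always showing content $i = 0$, whereas under complete friction the optimal policy shows $i = -b$ at the first interaction and $i = 0$ thereafter, so the user state equals $b$ from interaction $2$ onward. Once the optimal policies are fixed, both claims reduce to comparing two explicit engagement processes $\{s_t\}_t$.

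First I would identify these processes. Under $c = 0$ the re-engagement probability $(1-c)f(x)$ equals the continued-engagement probability $f(x)$, and since the state is held at $0$, the indicators $\{s_t\}_{t\ge 2}$ are i.i.d.\ $\Bernoulli(f(0)) = \Bernoulli(0.6)$ with $s_1 = 1$; hence $\EE{s_t} = 0.6$ for $t \ge 2$. Under $c = 1$ a disengaged user never re-engages, and since the state equals $b$ from interaction $2$ on with $f(b) = 0.99$, the process is a pure death process with $\PP{s_t = 1} = 0.99^{t-1}$ for every $t \ge 1$.

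Next I would carry out the two elementary comparisons. For the first claim, the expected number of interactions among the first $100$ timesteps is $\sum_{t=1}^{100}\EE{s_t}$, which equals $1 + 99\cdot 0.6 = 60.4$ under no friction and $\sum_{t=1}^{100}0.99^{t-1} = \tfrac{1-0.99^{100}}{0.01} = 100(1-0.99^{100})$ under complete friction; since $0.99 < e^{-0.01}$ gives $0.99^{100} < e^{-1} < 0.368$, the latter exceeds $63.2 > 60.4$. For the second claim, normalizing the user's per-interaction utility to $1$, the user's expected $\gamma'$-discounted cumulative utility is $\sum_{t=1}^\infty (\gamma')^{t-1}\EE{s_t}$, equal to $1 + \tfrac{0.6\gamma'}{1-\gamma'} = \tfrac{1-0.4\gamma'}{1-\gamma'}$ under no friction and $\sum_{t=1}^\infty (0.99\gamma')^{t-1} = \tfrac{1}{1-0.99\gamma'}$ under complete friction. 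Cross-multiplying the target inequality $\tfrac{1}{1-0.99\gamma'} > \tfrac{1-0.4\gamma'}{1-\gamma'}$ (both denominators positive for $\gamma' < 1$) and simplifying reduces it to $0.39 > 0.396\,\gamma'$, i.e.\ $\gamma' < 65/66 \approx 0.985$, which holds throughout $(0,0.98)$.

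The only delicate point — more a bookkeeping issue than a genuine obstacle — is invoking \cref{prop:basic_1} correctly so that the two optimal policies are pinned down, combined with the structural observation that zero friction makes the engagement indicators i.i.d.\ while complete friction turns them into a death process; the remaining inequalities are routine arithmetic. If one instead lets the user's per-interaction utility include the content experience $E_i$, the comparison only tilts further toward the complete-friction case, since the frictionless policy yields $E_0 = 0$ at every step while the complete-friction policy additionally yields $E_{-b} = b > 0$ at the first interaction.
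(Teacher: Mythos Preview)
Your proposal is correct and follows essentially the same approach as the paper: invoke \cref{prop:basic_1} to pin down the two optimal policies (hold at $0$ under low friction, jump to $b$ then hold under high friction), then compare the resulting engagement processes directly. Your treatment is in fact slightly more careful than the paper's---you correctly account for $s_1=1$ in the discounted-utility computation (yielding $\tfrac{1-0.4\gamma'}{1-\gamma'}$ rather than the paper's $\tfrac{0.6}{1-\gamma'}$) and you carry out the algebra to verify the inequality over the full range $\gamma'\in(0,0.98)$ rather than plugging in a single value.
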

By replacing \cref{ex:first_example}'s demand levels of 60\% and 99\% with appropriate substitutes, \cref{prop:basic_2} can hold for arbitrary choices of $N \in \integers$ and $\gamma' \in (0, 1)$.
We can also extend \cref{ex:first_example} to generalizations of our basic model of user-app interactions.
For example, the preceding \cref{prop:basic_1} and \cref{prop:basic_2} both hold if instead of the user's satisfaction being the sum of previous experiences, the user's satisfaction corresponds to the average or arbitrarily discounted sum of their previous experiences.
\begin{proof}[Proof of \cref{prop:basic_2}]
    Let us resume our construction from the proof of \cref{prop:basic_1}.
    In this construction, when there is less friction, the app's policy results in a user state trajectory such that $f(x_t) = 0.6$ for all $t \geq 2$.
    When there is more friction, the app's policy results in $f(x_t) = 0.99$ for all $t \geq 2$.

    Let us first upper bound the expected number of interactions that occur in the first $N=100$ timesteps under less friction.
    For this upper bound, we can assume a frictionless setting, in which case the expected number of times the user engages the app is $1 + 0.6 (N-1) = 60.4$; here, the plus-1 reflects the guaranteed interaction at the first timestep.
    To lower bound the expected number of interactions that occur under more friction, we can consider the opposite extreme and suppose complete friction: in this case, the expected number of interactions is $\sum_{i=1}^N b^{n-1} = \tfrac{1-0.99^N}{1-0.99} = 63.4$.
    Thus, there is always a greater expected number of interactions in the first $N=100$ timesteps when there is more friction.

    Now suppose the user receives a reward of $R > 0$ if it interacts with the app and no reward otherwise.
    To upper bound the user's utility when there is less friction, assume a frictionless setting and observe that the user's utility is then $\sum_{t=1}^{\infty} (\gamma')^{t-1} \para{0.6 R} = \tfrac{0.6 R}{1-\gamma'} = 30$.
    To lower bound the user's utility when there is more friction, assume a full friction setting and observe the user's utility is  $\sum_{t=1}^\infty (\gamma' 0.99)^{n-1} R = \tfrac{R}{1-\gamma' 0.99} = 33.5$.
    Thus, the user's utility is also higher under more friction.
\end{proof}

\subsection{A Second Example of Friction}

 \begin{example}\label{ex:second_example}
Suppose a user is repeatedly choosing an app to use from an ecosystem of competitors, which we will treat as a mean-field.
We zoom in on the point-of-view of a specific app, which we will say is Instagram.

\paragraph{Original scenario:} Suppose the user's interest in Instagram can be represented as a numerical score that falls into one of three levels:
\begin{itemize}
    \item The user is entirely disinterested if their interest in Instagram is below a threshold of $0$. In this case, the user will stop interacting with Instagram.
    \item The user exercises healthy usage of Instagram if their interest in Instagram falls within the interval $[0, 6)$. In this case, the user has a 60\% chance of using Instagram on any given day.
    \item The user is addicted to Instagram if their interest in Instagram is larger $6$. In this case, the user has a 90\% chance of using Instagram on any given day.
\end{itemize}
\noindent
Further suppose Instagram creator has a $\gamma=0.95$ discount factor and a linear content landscape parameterized by $i \in [-6, 6]$ such that displaying content $i$ yields $R_i = 1 + i$ revenue for Instagram and a $E_i = -i$ effect on user interest.

\paragraph{Alternate scenario: }
Let us consider again the original scenario, but imagine Instagram's competitors begin serving increasingly addictive content.
To reflect that Instagram's users might get addicted to a competitor while they're not using Instagram, if a user does not open Instagram for a day, the probability they use it the next day is reduced by 50\%.
Let us also suppose the competitors' new use of addictive content has a side-effect of disinclining users from starting new sessions with them, increasing the retention rate on Instagram by some $w\%$ (can set $w=0$).
As a result, the competition has gotten stronger at increasing session length but also weaker at attracting new sessions.
The engagement probabilities in this alternate scenario can thus be summarized as
\begin{itemize}
    \item Interest below $0$: users interact with 0\% chance.
    \item Interest in $[0, 6)$: The user has a 60\% + w\% chance of staying on Instagram if they are already on it, but a 30\% chance of using Instagram if they are not already.
    \item Interest above $6$: The user has a 90\% + w\% chance of staying on Instagram if they are already on it, but a 45\% chance of using Instagram if they are not already.
\end{itemize}

If we compare Instagram's optimal policy and the resulting engagement frequency, we see that the alternate scenario (where competitors become more addictive) changes the optimal policy of Instagram to be more addictive and ends up increasing the number of user-app interactions.

\begin{restatable}{proposition}{altengs}
\label{prop:basic_3}
In \cref{ex:second_example}, for all $w \in [0\%, 5\%]$, the user has a higher interest in Instagram in the alternate scenario when competitors are addictive. 
Formally, let $x_t$ and $x'_t$ denote the user's interest in Instagram at time step $t$ in the original and alternate scenario respectively. Then, for all $t$, $x_t \leq x'_t$ and this inequality is strict for $t \geq 2$.
Moreover, for any time period, the expected number of days that the user spends on Instagram is strictly higher in the alternate scenario.
\end{restatable}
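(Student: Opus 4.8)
The plan is to pin down the optimal policy explicitly in each scenario using \cref{lemma:piecewiseconstant} and \cref{theorem:cputime}, read off the (deterministic) user-state trajectories, and then analyze the induced engagement processes. In \cref{ex:second_example} the linear-setting constants are $C_E = 0$, $C_R = 1$, $K = 6$, $\gamma = 0.95$, and in both scenarios the demand function has exactly two discontinuities, at $x = 0$ and $x = 6$. Since $6 = K - C_E$, from the initial state $x_1 = 0$ one reaches either discontinuity in a single interaction, so by \cref{lemma:piecewiseconstant} (the case $n = 1$) there is an optimal policy whose trajectory takes one of three forms: (A) repeatedly show $i = K = 6$, pushing the state below $0$ after one step and permanently driving the user away; (B) show $i = C_E = 0$ at every interaction, holding the state at $0$; (C) show $i = -K = -6$ once, moving the state to $6$, and show $i = 0$ forever after. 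Writing $\hat f(x) = f_1(x) + \tfrac{(1 - f_1(x))\, f_2(x)\,\gamma}{1 - (1 - f_2(x))\,\gamma}$ for the modified demand --- the family-of-demand-functions form of $\tilde f$ in \cref{theorem:cputime}, where $f_1$ is the engaged-to-engaged and $f_2$ the disengaged-to-engaged probability, with $f_1 = f_2 = f$ in the original scenario --- \cref{theorem:cputime} gives $V_{\mathrm A} = C_R + K = 7$, $V_{\mathrm B} = \tfrac{1}{1 - \gamma\hat f(0)}$, and $V_{\mathrm C} = -5 + \tfrac{\gamma\hat f(6)}{1 - \gamma\hat f(6)} = \tfrac{1}{1 - \gamma\hat f(6)} - 6$.

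Next I would evaluate these. In the original scenario $\hat f(0) = 0.6 + \tfrac{0.4\cdot 0.6\cdot 0.95}{1 - 0.4\cdot 0.95}$ and $\hat f(6) = 0.9 + \tfrac{0.1\cdot 0.9\cdot 0.95}{1 - 0.1\cdot 0.95}$, which give $V_{\mathrm B}\approx 12.40 > V_{\mathrm C}\approx 12.12 > V_{\mathrm A} = 7$; hence B is the unique optimum and the trajectory is $x_t = 0$ for all $t$. In the alternate scenario, writing $w \in [0, 0.05]$ for the retention boost as a fraction, the state-$0$ level has $(f_1, f_2) = (0.6 + w,\, 0.3)$ and the state-$6$ level has $(f_1, f_2) = (0.9 + w,\, 0.45)$, so $\gamma\hat f(0) = 0.8933 + 0.1418\,w$ and $\gamma\hat f(6) = 0.9401 + 0.0995\,w$ are affine increasing in $w$ and $V_{\mathrm B}(w), V_{\mathrm C}(w)$ are increasing with $V_{\mathrm B}(0)\approx 9.37$, $V_{\mathrm C}(0)\approx 10.68$. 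I would then show $V_{\mathrm C}(w) > V_{\mathrm B}(w) > 7$ for all $w \in [0, 0.05]$, so C is the unique optimum throughout; by \cref{lemma:piecewiseconstant} its trajectory is $x'_1 = 0$, $x'_t = 6$ for all $t \ge 2$. Comparing the two scenarios, $x_t \le x'_t$ for all $t$ with strict inequality for $t \ge 2$, which is the first claim.

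For the engagement count, note both trajectories above are deterministic, so $s_1 = 1$ and for $t \ge 2$ the indicator $s_t$ depends only on $s_{t-1}$. In the original scenario there is no friction at the operative state $x = 0$ ($f_2 = f_1 = 0.6$), so $\mathbb{P}(s_t = 1 \mid s_{t-1}) = 0.6$ irrespective of $s_{t-1}$ and $\EE{s_t} = 0.6$ for all $t \ge 2$. In the alternate scenario the state equals $6$ for every $t \ge 2$, so $(s_t)_{t\ge 2}$ is a two-state Markov chain with $\mathbb{P}(s_t = 1 \mid s_{t-1} = 1) = 0.9 + w$ and $\mathbb{P}(s_t = 1 \mid s_{t-1} = 0) = 0.45$; with $p_t = \EE{s_t}$ we get $p_2 = 0.9 + w > 0.6$ and $p_{t+1} = 0.45 + (0.45 + w)\,p_t$, so $p_t \ge 0.6$ forces $p_{t+1} \ge 0.45 + (0.45 + w)(0.6) \ge 0.72 > 0.6$, and induction gives $\EE{s_t} > 0.6$ for all $t \ge 2$. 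Since the expected number of Instagram days over any window $[t_1, t_2]$ is $\sum_{t = t_1}^{t_2}\EE{s_t}$ and the alternate-scenario summands match the original's at $t = 1$ and strictly exceed them at every $t \ge 2$, any window containing a timestep $\ge 2$ yields a strictly larger expected count in the alternate scenario --- the second claim.

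The main obstacle is turning the comparisons ``$V_{\mathrm B} > V_{\mathrm C} > V_{\mathrm A}$ in the original scenario'' and ``$V_{\mathrm C}(w) > V_{\mathrm B}(w) > V_{\mathrm A}$ for all $w \in [0, 0.05]$'' into rigorous inequalities rather than checks at sample points: the margins are small ($V_{\mathrm B} - V_{\mathrm C} \approx 0.3$ in the original, $V_{\mathrm C}(0) - V_{\mathrm B}(0) \approx 1.3$). The original case is a finite computation. For the interval I would use monotonicity: $V_{\mathrm B}(w)$ and $V_{\mathrm C}(w)$ are each of the form $\tfrac{1}{\text{affine in }w}$ with denominator bounded away from $0$ on $[0, 0.05]$, so on that interval $\tfrac{d}{dw}V_{\mathrm C} \ge \tfrac{0.0995}{(0.0599)^2} > 27$ while $\tfrac{d}{dw}V_{\mathrm B} \le \tfrac{0.1418}{(0.0996)^2} < 15$, whence $V_{\mathrm C} - V_{\mathrm B}$ is strictly increasing; combined with $V_{\mathrm C}(0) > V_{\mathrm B}(0) > 7$ and the monotonicity of $V_{\mathrm B}$, this settles all $w$. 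Everything else is a direct appeal to \cref{lemma:piecewiseconstant} and \cref{theorem:cputime} (whose family-of-demand-functions generalization the paper states carries over) plus elementary arithmetic.
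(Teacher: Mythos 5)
Your proposal is correct and follows essentially the same route as the paper's proof: reduce to the three candidate policies from \cref{lemma:piecewiseconstant}, compute their payoffs via the variable-discount formula of \cref{theorem:cputime} (obtaining the same values $12.4$ vs.\ $12.1$ and $\tfrac{1}{0.1067-0.1418w}$ vs.\ $\tfrac{1}{0.05995-0.09948w}-6$), and then bound the engagement probabilities of the resulting two-state Markov chain by induction. The only difference is cosmetic: you make the uniformity over $w\in[0,0.05]$ explicit via a derivative/monotonicity bound where the paper simply asserts the inequality, and your induction threshold is $0.6$ rather than the paper's $0.8$.
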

\begin{proof}
First, we can observe that both scenarios correspond to an instance of our model where the app creator's content decision problem is linear and the user demand function has two discontinuities.
By \cref{lemma:piecewiseconstant}, we know the optimal app policy in either scenario must be one of three possibilities:
\begin{enumerate}
    \item $\pi_1$: Maximally decrease user interest by showing content $\ite i1 = 6$. No interactions occur after.
    \item $\pi_2$: Show content $\ite it = 0$ at all timesteps $t$, keeping interest at $\ite xt = 0$.
    \item $\pi_3$: First show content $\ite i1 = -6$ then show $\ite it = 0$ thereafter, keeping interest at $\ite xt = 6$.
\end{enumerate}
The first policy only obtains revenue in the first timestep, giving a payoff of $7$ in both scenarios: $J(\pi_1) = J'(\pi_1) = 7$, where we use $J'$ to denote the payoff in the alternate scenario.
We can manually compute the payoff of the remaining two policies using \cref{theorem:cputime}.

In the original scenario, the payoffs of the second policy and third policy are, respectively,
\begin{align*}
    J(\pi_2) &= \sum_{t=1}^\infty \para{\gamma \cdot 0.6 +  \gamma^2\frac{1 - 0.6}{1 - \gamma (1 - 0.6)}\cdot 0.6}^{t-1} = 12.4, \\
    J(\pi_3) &= -6 + \sum_{t=1}^\infty \para{\gamma \cdot 0.9 +  \gamma^2\frac{1 - 0.9}{1 - \gamma (1 - 0.9)}\cdot 0.9}^{t-1} = 12.1,
\end{align*}
meaning that the second policy is optimal.
Thus, the user states---and equivalently user interest in Instagram---are $\tsv xt = 0$ for all $t \geq 2$ under the original scenario.

In the alternate scenario, the payoffs of the second policy and third policy are, respectively,
\begin{align*}
    J'(\pi_2) &= \sum_{t=1}^\infty \para{\gamma (0.6 + w) +  \gamma^2\frac{1 - (0.6+w)}{1 - \gamma (1 - 0.3)} \cdot 0.3}^{t-1} &&\approx \frac 1{0.1067-0.1418w},\\
    J'(\pi_3) &= -6 +  \sum_{t=1}^\infty \para{\gamma (0.9 + w) +  \gamma^2\frac{1 - (0.9+w)}{1 - \gamma (1 - 0.45)} \cdot 0.45}^{t-1}  &&\approx \frac{1}{0.05995 -  0.09948w} - 6.
\end{align*}
Since $\tfrac{1}{0.05995 -  0.09948w} - 6 > \tfrac 1{0.1067-0.1418w} > 7$ for all $w \in [0, 0.05]$, the third policy is optimal.
Thus, the user states---and equivalently user interest in Instagram---are $\tsv xt = 6$ for all $t \geq 2$ under the original scenario.

We now turn to the second claim.
Fix any timestep $t \geq 2$.
In Scenario 1, the probability that the user interacts with Instagram is 60\%, i.e. $\Pr(s_t = 1) = f(0) = 0.6$.
In Scenario 2, we will lower bound the probability of interaction by 80\%, i.e. $\Pr(s_t = 1) > 0.8$.
To show this, suppose to the contrary that there is a timestep $t' \geq 2$ where $\Pr(s_{t'} = 1) \leq 0.8$.
Letting $t'$ be the smallest such timestep, the observation that $\Pr(s_{t'-1} = 1) > 0.8$ directly leads to a contradiction:
$$\Pr(s_{t'}=1) = (0.9 + w) \cdot \Pr(s_{t'-1} = 1) + 0.45 \cdot \Pr(s_{t'-1} = 0) \geq 0.9 \cdot 0.8 + 0.45 \cdot 0.2 = 0.81,$$
confirming $\Pr(s_t = 1) > 0.8$ for all $t \geq 2$.
The second claim then follows by linearity of expectation.
\end{proof}

As an illustration of what this example demonstrates, suppose Instagram and a competitor app both implement a recommendation algorithm that prioritizes interesting but not addictive content.
If the competitor app suddenly improves the quality of its recommendation algorithm, standard models of competition tell us that Instagram may need to show fewer ads to become more competitive for user engagement.

However, suppose instead that the competitor app does not improve its algorithm.
Rather, it switches to recommending extremely addictive content that results in longer user sessions but also {repels} users, resulting in fewer new user sessions.
Our model provides a formal treatment of this scenario, which is not as easily captured by standard models of competition.
\cref{prop:basic_3} says that even though the competitor has become less effective at attracting users, by becoming more addictive, the competitor might still incentivize Instagram to show fewer ads to compete for user engagement.
In fact, \cref{prop:basic_3} says something stronger: Instagram may end up sacrificing so much of its profit to increase user engagement that it sees more usage than it did prior.
\end{example}

\subsection{Proof of \cref{theorem:simplealignment}}

We will first define the following notions for convenience.
\begin{definition}
\label{definition:demand_range}
    Given a simple policy $\pi$ and some $\delta > 0$, we say that the user state is $(\sat^*, \delta)$ bounded if there exists a constant $T$ such that for all $T' \geq T$, $\frac 1 {T'} \sum_{t=1}^{T'} 1[\ite{\sat}t \leq \sat^*] \geq \delta$ where $\ite{\sat}t$ is the user state  in the $t^{th}$ interaction.
\end{definition}

\begin{definition}\label{def:satrange}
	Given a simple policy $\pi$, and some $\delta > 0$, we say that user state \emph{lies in range $\satrange \subseteq \R$ for a $\delta$-fraction of time} if there exists a constant $T$ such that for all $T' \geq T$, $\frac 1 {T'} \sum_{t=1}^{T'} 1[\ite{\sat}t \in \satrange] \geq \delta$.
\end{definition}

The following is a more general statement of \cref{theorem:simplealignment}.
\begin{restatable}{theorem}{alignment}
    \label{theorem:alignment}
    If an app sees in its user an achievable payoff of at least $$\max_{\pi} J(\pi) \in \Omega\para{ \min_{\epsilon > 0}\left\{{\frac{\epsilon \cdot \log(1 / \gamma) }{\delta(\log\tilde f(x + \epsilon) - \log \tilde f(x))}+ \frac{1}{1-\tilde f(x)^{\delta / 2} \cdot \gamma}}\right\}},$$
    any app policy where user state is $(x, \delta)$ bounded is suboptimal.
    Here, $\Omega$ treats content attributes (i.e., $\bset{E_i}_{i \in \arms}, \bset{R_i}_{i \in \arms}$) as constants.
\end{restatable}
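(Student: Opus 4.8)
The plan is to bound the payoff of any $(x^*,\delta)$-bounded simple policy directly from the variable-discount representation of \cref{theorem:cputime}, and then to observe that the hypothesis forces the optimal value to exceed this bound. Throughout I write $\tilde f$ for the modified demand function of \cref{theorem:cputime} and use two of its properties: $0\le\tilde f\le 1$ (since $\gamma\tilde f(x)=\EE{\gamma^{w}\mid x}$ for a geometric-type interaction gap $w\ge 1$), and $\tilde f$ is monotone non-decreasing (because $f$ is, the gap $w\mid x$ is stochastically non-increasing in $f(x)$, and $\gamma^{w}$ is non-increasing in $w$; this is implicit in Section~\ref{sec:var-rate}). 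Since $(x^*,\delta)$-boundedness is only defined for simple policies (\cref{definition:demand_range}) and, by \cref{lemma:simpleoptimal}, an optimum—when one exists—is attained by a simple policy, it suffices to prove $J(\pi)<\max_{\pi'}J(\pi')$ for every simple $(x^*,\delta)$-bounded $\pi$.

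First I would control the effective discounts. Fix such a $\pi$ and let $T$ be the constant from \cref{definition:demand_range}, so that for every $t\ge T$ at least $\delta t$ of the first $t$ interactions land at a state $\le x^*$, hence at least $\delta t-1$ of $x^{(2)},\dots,x^{(t)}$ do. By monotonicity each such factor satisfies $\tilde f(x^{(\tau)})\le\tilde f(x^*)$, and all remaining factors are $\le 1$, so for all $t\ge t_0:=\max\{T,\lceil 2/\delta\rceil\}$ one obtains $\prod_{\tau=2}^{t}\gamma\tilde f(x^{(\tau)})\le\gamma^{t-1}\tilde f(x^*)^{\delta t-1}\le\bigl(\gamma\,\tilde f(x^*)^{\delta/2}\bigr)^{t-1}$, using $\delta t-1\ge\delta(t-1)/2$ and $\tilde f(x^*)^{\delta/2}\le 1$; for $t<t_0$ I use the crude bound $\prod_{\tau=2}^{t}\gamma\tilde f(x^{(\tau)})\le\gamma^{t-1}$.

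Next I would sum. Setting $M:=1+\max\{0,\sup_{i\in\cI}\EE{R_i}\}$, which is finite by compactness of $\{\EE{R_i}\}_{i\in\cI}$ and which $\Omega$ may treat as a constant, \cref{theorem:cputime} together with the two bounds above and two geometric-series evaluations yields
\[
J(\pi)\;\le\; M\Bigl(\sum_{t<t_0}\gamma^{t-1}+\sum_{t\ge 1}\bigl(\gamma\tilde f(x^*)^{\delta/2}\bigr)^{t-1}\Bigr)\;\le\; M\Bigl(\tfrac{1}{1-\gamma}+\tfrac{1}{1-\gamma\tilde f(x^*)^{\delta/2}}\Bigr)\;\le\;\tfrac{2M}{1-\gamma\,\tilde f(x^*)^{\delta/2}},
\]
the last step using $\tilde f(x^*)^{\delta/2}\le 1$. (If the content effects $E_i$ are random, the same inequalities hold per realization and I would take expectations, reading the $\delta$-fraction condition as holding almost surely.) To conclude, note that since $\log(1/\gamma)>0$ and $\log\tilde f(x^*+\epsilon)-\log\tilde f(x^*)\ge 0$ by monotonicity, every term inside the $\min_{\epsilon>0}$ in the hypothesis is nonnegative, so that minimum is at least $\tfrac{1}{1-\gamma\tilde f(x^*)^{\delta/2}}$; choosing the constant concealed by $\Omega$ to be $3M$ gives $\max_{\pi'}J(\pi')\ge\tfrac{3M}{1-\gamma\tilde f(x^*)^{\delta/2}}>\tfrac{2M}{1-\gamma\tilde f(x^*)^{\delta/2}}\ge J(\pi)$, so $\pi$ is suboptimal. (When that minimum equals $+\infty$—i.e.\ $\tilde f$ is locally constant to the right of $x^*$—the hypothesis is unsatisfiable because $\max_{\pi'}J(\pi')\le M/(1-\gamma)$, so the statement is vacuous; the elasticity term is precisely what makes the claim bite in the regime where pushing the state above $x^*$ improves retention, and taking $\epsilon\to 0$ recovers the differential form of \cref{theorem:simplealignment}.)

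The only genuinely delicate point is Step 1: splitting the series at $t_0$ is sound even though $t_0$ depends on the unknown, policy-dependent constant $T$, because the prefix $\sum_{t<t_0}\gamma^{t-1}$ is bounded by $1/(1-\gamma)$ regardless of $T$; and pushing the count $\delta t-1$ down to $\delta(t-1)/2$ is exactly what produces the exponent $\delta/2$ (rather than $\delta$) in the bound, hence in the $\tfrac{1}{1-\gamma\tilde f(x^*)^{\delta/2}}$ term of the theorem. Everything else is routine geometric-series bookkeeping together with the monotonicity and boundedness of $\tilde f$.
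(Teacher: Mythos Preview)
Your argument contains an inequality error that breaks the proof. You claim
\[
\frac{1}{1-\gamma}\;\le\;\frac{1}{1-\gamma\,\tilde f(x)^{\delta/2}},
\]
justifying it by ``$\tilde f(x)^{\delta/2}\le 1$''. But that inequality goes the other way: $\tilde f(x)^{\delta/2}\le 1$ gives $\gamma\tilde f(x)^{\delta/2}\le\gamma$, hence $1-\gamma\tilde f(x)^{\delta/2}\ge 1-\gamma$, hence $\tfrac{1}{1-\gamma\tilde f(x)^{\delta/2}}\le\tfrac{1}{1-\gamma}$. So what you actually establish is only $J(\pi)\le\tfrac{2M}{1-\gamma}$, the trivial bound valid for \emph{every} policy, from which nothing follows. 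No choice of the hidden $\Omega$ constant $C$ repairs this: to force $C\cdot\tfrac{1}{1-\gamma\tilde f(x)^{\delta/2}}>\tfrac{M}{1-\gamma}+\tfrac{M}{1-\gamma\tilde f(x)^{\delta/2}}$ you would need $C>M\bigl(1+\tfrac{1-\gamma\tilde f(x)^{\delta/2}}{1-\gamma}\bigr)$, and (for instance under complete friction, where $\tilde f=f$) that ratio diverges as $\gamma\to 1$ whenever $f(x)<1$, so $C$ cannot be taken independent of $\gamma$.

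This is not merely a slip; it shows why the elasticity term in the hypothesis is not redundant, contrary to what your argument implicitly assumes. The paper's proof splits $(x,\delta)$-bounded policies into two regimes. For policies that spend a positive fraction of time \emph{well below} $x$, it uses a direct payoff upper bound of your kind, which is where the $\tfrac{1}{1-\gamma\tilde f(\cdot)^{\delta/2}}$ term enters. For policies whose state lingers in a band just below $x$, the direct bound is too weak---the prefix contribution cannot be absorbed into $\tfrac{1}{1-\gamma\tilde f(x)^{\delta/2}}$, exactly the inequality you reversed. There the paper instead runs a \emph{policy-improvement} argument: if such a $\pi$ had large payoff, one could strictly improve it by first investing a few rounds to push the state above $x$ and then simulating $\pi$; the investment cost and discount penalty of that lift are precisely what the elasticity term $\tfrac{\epsilon\log(1/\gamma)}{\delta(\log\tilde f(x+\epsilon)-\log\tilde f(x))}$ controls. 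Your proof omits this improvement step entirely, which is why the elasticity term looked disposable to you.
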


Due to the constants in the $\Omega$, this sufficient condition is non-trivial when there is a user state $x^*$ that results in very low user demand and is a small constant distance below the user state $x$.
Please see \eqref{eq:non-trivial} for explicit constants.

\begin{proof}[Proof of \cref{theorem:alignment}]
    In this proof, we consider all policies that result in users not having a user state of at least $x$ a constant $\delta$ fraction of the time and examine whether they can be optimal.
    Our proof is roughly as follows.
    We can immediately rule out policies that drive user state to such a low level that---even if the app maximizes revenue at every interaction---user demand is so low that the policies must be suboptimal (\cref{lemma:minimum_alignment}).
    All remaining policies must keep user state within a bounded range for a constant fraction of the time.
    Suppose to the contrary that one of these policies $\pi$ is optimal and that the optimal payoff---which $\pi$ must realize---is sufficiently large.
    Then the large payoff of $\pi$ can be further increased by first raising user state to a higher level (\cref{lemma:movingup}), contradicting the optimality of $\pi$.

\begin{lemma}
    \label{lemma:minimum_alignment}
    Suppose that for some finite level of user state $\sat$ and $\delta > 0$$$\max_\pi J(\pi) \in \Omega \para{\frac{\max_{i \in \arms} \EE{R_i}}{1 - \tilde f(x)^\delta \cdot \gamma} }.$$
Then, any policy where user state is less than $\sat$ a $\delta$-fraction of the time is suboptimal.
\end{lemma}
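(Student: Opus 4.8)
The plan is to give an \emph{a priori} upper bound on $J(\pi)$ for every policy $\pi$ whose user state stays below $\sat$ for at least a $\delta$-fraction of the interactions, and then to note that under the stated hypothesis this bound is strictly beaten by $\max_\pi J(\pi)$. The engine is \cref{theorem:cputime}: for any simple $\pi$,
$$J(\pi)=\EEs{\bset{(\ite{s}{t}, \ite{r}{t}, \ite{e}{t}, \ite{i}{t})}_t}{\sum_{t=1}^\infty \ite{r}{t}\prod_{\tau=2}^t \gamma\tilde f\left(\ite{x}{\tau}\right)},$$
so it suffices to control the effective discount $\prod_{\tau=2}^t\gamma\tilde f(\ite{x}{\tau})$. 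Two facts about $\tilde f$ do all the work: it is monotonically non-decreasing (immediate from its formula, or from the interpretation in \cref{theorem:cputime}), and $\gamma\tilde f(x)=\EEc{\gamma^{\ite{w}{t}}}{\ite{x}{t}=x}\le\gamma<1$ because the inter-interaction gap $\ite{w}{t}$ is at least $1$; in particular $\tilde f\le 1$ everywhere. Hence along any trajectory each interaction $\tau$ with $\ite{x}{\tau}\le\sat$ multiplies the effective discount by at most $\gamma\tilde f(\sat)$ and every other interaction by at most $\gamma$, so, writing $m_t$ for the number of $\tau\in\{2,\dots,t\}$ with $\ite{x}{\tau}\le\sat$, we obtain the pathwise estimate $\prod_{\tau=2}^t\gamma\tilde f(\ite{x}{\tau})\le\gamma^{t-1}\tilde f(\sat)^{m_t}$.

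The next step is to convert the hypothesis that the state is below $\sat$ a $\delta$-fraction of the time (\cref{definition:demand_range}) into a counting bound: there is a burn-in time $T$ with $m_t\ge\delta t-1$ for all $t\ge T$, so that $\gamma^{t-1}\tilde f(\sat)^{m_t}=O\!\left((\gamma\tilde f(\sat)^{\delta})^{t}\right)$ for $t\ge T$, a geometric sequence of ratio $\gamma\tilde f(\sat)^{\delta}<1$. Each realized revenue is bounded in expectation by $\bar R:=\max_{i\in\arms}\EE{R_i}$, which is non-negative since some content has non-negative expected revenue; this replacement is legitimate because a simple policy's content $\ite{i}{t}$ is a function of $\ite{x}{t}$ alone, so conditioning on the state trajectory lets $\EE{R_{\ite{i}{t}}}\le\bar R$ factor out of the non-negative effective discount. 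Summing the resulting geometric series — and folding the finitely many pre-$T$ terms, each at most $\bar R\,\gamma^{t-1}$, into the constant — gives $J(\pi)\le O\!\left(\bar R/(1-\gamma\tilde f(\sat)^{\delta})\right)$. Since the hypothesis guarantees $\max_\pi J(\pi)\in\Omega\!\left(\bar R/(1-\gamma\tilde f(\sat)^{\delta})\right)$ with the absolute constant taken large enough, we conclude $J(\pi)<\max_\pi J(\pi)$, i.e.\ $\pi$ is suboptimal.

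The step I expect to be the main obstacle is handling the pre-$T$ (burn-in) interactions cleanly: the time $T$ at which the $\delta$-fraction estimate takes effect is policy-dependent, so one must argue that $\sum_{t<T}\bar R\,\gamma^{t-1}$ cannot swamp the bound. This is exactly what the constants hidden in the $\Omega$ absorb, and it is the reason the sharper \cref{theorem:alignment} carries $\tilde f(\sat)^{\delta/2}$ in place of $\tilde f(\sat)^{\delta}$: one can afford to discard roughly the first $2T/\delta$ interactions and still have at least a $\delta/2$-fraction of low-state interactions in every later prefix, which produces a clean geometric tail of ratio $\gamma\tilde f(\sat)^{\delta/2}$ with no residual burn-in dependence. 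A secondary, purely bookkeeping point in the fully stochastic setting is to justify the conditioning step above, which is valid because simple policies are Markov in the user state.
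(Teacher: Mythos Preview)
Your proposal is correct and follows essentially the same approach as the paper's proof: both invoke the variable-discount representation of \cref{theorem:cputime}, split the sum at the burn-in time $T$ from \cref{definition:demand_range}, bound each factor $\gamma\tilde f(\ite{x}{\tau})$ by $\gamma\tilde f(\sat)$ on the low-state interactions and by $\gamma$ otherwise, replace revenues by $\bar R=\max_{i\in\arms}\EE{R_i}$, and absorb the $O(T\bar R)$ burn-in contribution into the $\Omega$-constant before summing the resulting geometric series of ratio $\gamma\tilde f(\sat)^{\delta}$. Your explicit justification of the conditioning step via the Markov property of simple policies and your identification of the burn-in absorption as the only delicate point are exactly right.
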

\begin{proof}
    Let $K \asseq \max_{i \in \arms} \EE{R_i}$, which exists by compactness of $\bset{\EE{R_i}}_{i \in \arms}$.
	Fix a policy where users are less than $\sat^*$ satisfied for at least a $\delta$-fraction of the time.
	By definition, there exists a constant $T$ such that for all $T' \geq T$, $\mathrm{Avg}(\curlybrackets{1[\ite{\sat}t \leq \sat^*]}_{t \leq T'}) \geq \delta$.
	We can thus write $J(\pi)$ as
	\begin{align*}
		J(\pi)
		&= \sum_{t=1}^{T-1} \para{\prod_{\tau \in [2, t]} \sdep(\ite{\sat}{\tau}) \gamma} r_t + \sum_{t=T}^{\infty} \para{\prod_{\tau \in [2, t]} \sdep(\ite{\sat}{\tau}) \gamma} r_t\\
&		\leq \sum_{t=1}^{T-1} \para{\prod_{\tau \in [2, t]} \sdep(\ite{\sat}{\tau}) \gamma} K + \sum_{t=T}^{\infty} \para{\prod_{\tau \in [2, t]} \sdep(\ite{\sat}{\tau}) \gamma} K.
	\end{align*}
	Let $S_t = \bset{\tau \in [2, t] : \ite{\sat}{\tau} \leq \sat}$ denote the subset of the first $t$ timesteps where the user state is no more than $\sat$.
	Since $\sdep \leq 1$ and $\sdep$ is monotonically non-decreasing, we can further simplify
	\begin{align*}
		J(\pi) & \leq   \sum_{t=1}^{T-1} \para{\prod_{\tau \in [2, t]} \sdep(\ite{\sat}{\tau}) \gamma} \variancebound  +  \sum_{t=T}^{\infty} \para{\prod_{\tau \in S_t} \sdep(\sat^*)} \gamma^{t-1} \variancebound\; & \text{($\sdep$ is monotonically non-decreasing)}                                                                                      \\
		       & =  \sum_{t=1}^{T-1} \para{\prod_{\tau \in [2, t]} \sdep(\ite{\sat}{\tau}) \gamma} \variancebound  +  \sum_{t=T}^{\infty} \sdep(\sat^*)^{|S_t|} \gamma^{t-1} \variancebound \;                        & \text{($\delta$-fraction user state assumption)}                                                                                    \\
		       & \leq   \sum_{t=1}^{T-1} \para{\prod_{\tau \in [2, t]} \sdep(\ite{\sat}{\tau}) \gamma} \variancebound  +  \sum_{t=T}^{\infty} \sdep(\sat^*)^{(\delta + o(1)) (t-1)} \gamma^{t-1} \variancebound \;        & \para{\frac 1t \sum_{\tau=1}^t 1[\ite{\sat}{\tau} \leq \sat^*] \geq \delta + o(1) } \\
		       & \leq \bigO{T\variancebound} + \sum_{t=1}^{\infty} \sdep(\sat^*)^{\delta (t-1)} \gamma^{t-1} \variancebound.
	\end{align*}
	Thus, if $$\max_\pi J(\pi) \in \Omega \para{T\variancebound + \frac{\variancebound}{1 - \tilde f(x)^\delta \cdot \gamma}} = \Omega \para{\frac{\variancebound}{1 - \tilde f(x)^\delta \cdot \gamma}},$$ any policy where users are less than $\sat^*$ satisfied a $\delta$ of the time cannot be optimal.
\end{proof}

Before proceeding to the next step of the proof, we first characterize the payoff of modifying a policy to first pre-emptively increase user state.
\begin{lemma}
	\label{lemma:ads}
    Given a policy $\pi$, let $\pi'$ denote the policy where the app repeatedly shows a content $i^* \in \cI$ where $\EE{\effect_{i^*}} > 0$ until reaching some fixed user state $\sattarget$ at which it switches to running the policy $\pi$.
	Letting $C_1, C_2$ denote positive constants, the payoff of $\pi'$ is
	\begin{equation}
		\label{eq:reputation_lb}
		J(\pi') \geq (C_1 \gamma)^{ \frac{\sattarget }{\EE{\effect_{i^*}}}} \cdot V_\pi(\sattarget) -  C_2 \frac{\sattarget}{\EE{\effect_{i^*}}}.
	\end{equation}
\end{lemma}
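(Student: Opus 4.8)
\textbf{Proof proposal for \cref{lemma:ads}.}

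The plan is to directly unroll the objective for $\pi'$ by splitting the sum of discounted revenues into two phases: the initial ``warm-up'' phase in which the app repeatedly shows content $i^*$ to drive the user state from $0$ up to $\sattarget$, followed by the phase in which it executes $\pi$ starting from state $\sattarget$. First I would observe that, since $\EE{\effect_{i^*}} > 0$, the user state increases by $\EE{\effect_{i^*}}$ per interaction, so it takes $\sattarget / \EE{\effect_{i^*}}$ interactions to reach $\sattarget$ (I would treat this quantity as an integer, or take a ceiling, absorbing the rounding into the constants). During the warm-up phase, the revenue at each interaction is $\EE{R_{i^*}}$, which is a constant; each such interaction is discounted by at least $\gamma \tilde f(x)$ for the relevant intermediate states $x$, and since $\tilde f$ is monotone and bounded below by $\tilde f(0)$ (or more conservatively by some positive constant $C_1$) on the traversed range $[0, \sattarget]$, each warm-up term is discounted by at least $(C_1\gamma)$ per step. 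This gives a lower bound of $-C_2 \cdot \sattarget/\EE{\effect_{i^*}}$ on the total warm-up contribution when $\EE{R_{i^*}}$ happens to be negative, and a contribution of $0$ when it is non-negative; in either case the warm-up phase contributes at least $-C_2 \cdot \sattarget / \EE{\effect_{i^*}}$ for an appropriate constant $C_2 = \max\{0, -\EE{R_{i^*}}\}$ or simply $C_2 = \max_{i}|\EE{R_i}|$.

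Next I would handle the second phase. By \cref{theorem:cputime}, the objective value contributed from the point where $\pi$ begins (at user state $\sattarget$) is exactly $V_\pi(\sattarget)$, but this must be multiplied by the accumulated discount factor $\prod_{\tau} \gamma \tilde f(\ite x \tau)$ over the $\sattarget/\EE{\effect_{i^*}}$ warm-up interactions. Again using that $\tilde f$ is bounded below on $[0,\sattarget]$ by a positive constant $C_1$ (noting that $\tilde f$ is monotone non-decreasing and $\tilde f(0) > 0$ whenever $f(0) > 0$; if $f(0) = 0$ one uses instead the smallest positive value attained, or notes the trajectory can be shifted), this accumulated discount is at least $(C_1\gamma)^{\sattarget/\EE{\effect_{i^*}}}$. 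Combining the two phases yields exactly the claimed bound $J(\pi') \geq (C_1\gamma)^{\sattarget/\EE{\effect_{i^*}}} V_\pi(\sattarget) - C_2 \sattarget/\EE{\effect_{i^*}}$.

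The main obstacle is being careful about the case where $\tilde f$ (equivalently $f$) can be arbitrarily close to zero on part of the warm-up path — then the constant $C_1$ degrades. I expect this is handled either by the standing assumption that the relevant lifetime value is large (so the warm-up only needs to traverse a region where demand is bounded away from zero, since regions of near-zero demand are cheap to skip or are excluded by the hypothesis of the ambient theorem), or by choosing the starting reference state appropriately. A secondary minor obstacle is the integrality of $\sattarget/\EE{\effect_{i^*}}$ and the fact that the linear-setting content bounds $i \in [-K,K]$ may not permit an effect of exactly $\EE{\effect_{i^*}}$ every step — but since we only need a lower bound and the per-step effect and revenue are bounded, the ceiling and a constant-factor slack are harmless and can be folded into $C_1, C_2$. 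I would also remark that this lemma feeds into \cref{lemma:movingup} and ultimately \cref{theorem:alignment}, so the precise form of the constants is what matters downstream, not tightness.
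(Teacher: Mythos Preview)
Your two-phase decomposition matches the paper's, and the bookkeeping for the warm-up revenue and the accumulated discount has the right shape. The gap is that you treat the length of the warm-up phase as the deterministic number $\sattarget/\EE{E_{i^*}}$. This lemma lives in the general model of \cref{section:model}, not the linear setting: $E_{i^*}$ is a genuine random variable (hence the hypothesis reads $\EE{E_{i^*}}>0$ rather than $E_{i^*}>0$), so the number $T$ of interactions until the user state first reaches $\sattarget$ is a random stopping time, and the state can even dip below $0$ along the way. ``Absorbing the rounding into the constants'' does not address this. The paper handles it in two steps: it applies the optional stopping theorem to the martingale $Y_t=\ite{\sat}{t}-t\,\EE{E_{i^*}}$ to bound $\EE{T}$ by $\sattarget/\EE{E_{i^*}}$, and then uses Jensen's inequality on the convex map $t\mapsto (C_1\gamma)^t$ to pass from $\EE{(C_1\gamma)^{T}}$ to $(C_1\gamma)^{\EE{T}}$. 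Without these two ingredients the lower bound on the discount-times-$V_\pi(\sattarget)$ term does not follow.

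Your worry about $C_1$ is also resolved differently than you propose: because the random warm-up trajectory need not stay in $[0,\sattarget]$, one cannot lower-bound $\tilde f$ using only its values on that interval. The paper instead invokes a uniform lower bound $f(x)\geq C>0$ for all $x\in\reals$, which yields $\tilde f\geq C_1>0$ everywhere; that is the constant appearing in the statement.
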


\begin{proof}
    Given a transcript of app-user interactions $H$, let $T_H \asseq \min_{t \in \naturals} \bset{\ite{x}t + \ite{e}t \geq \sattarget}$ denote the timestep at which user state first passes the level of $\sattarget$.
    We then define a function $g$ that edits transcripts by setting $g(H) = \emptyset$ if $T = \emptyset$ and  $g(H) = \bset{(1, 0, \sum_{t=1}^{T-1} \ite et, \emptyset)} + \bset{(\ite st, \ite rt, \ite et, \ite it )}_{t \geq T}$ otherwise.
    We then define the policy $\pi'$ as $\pi'(H) = i^*$ if $H = \emptyset$ otherwise $\pi'(H) = \pi(g(H))$.

    We can decompose the payoff of the policy $\pi'$ into 
    \begin{align*}
        J(\pi') &= {\sum_{t=1}^{T-1} \EE{\para{\prod_{\tau=2}^t \gamma \tilde f(\ite{x}\tau)} r_t} } + \EE{{\para{\prod_{\tau=2}^T \gamma \tilde f(\ite{x}\tau)}} {\sum_{t=T}^\infty \para{\prod_{\tau=T}^t \gamma \tilde f(\ite{x}\tau)}r_t}}\\
        &= \underbrace{\sum_{t=1}^{T-1} \EE{\para{\prod_{\tau=2}^t \gamma \tilde f(\ite{x}\tau)} r_t} }_A + \underbrace{\EE{\para{\prod_{\tau=2}^T \gamma \tilde f(\ite{x}\tau)}}}_{B} \EE{\sum_{t=T}^\infty \para{\prod_{\tau=T}^t \gamma \tilde f(\ite{x}\tau)}r_t}\\
        &= {\sum_{t=1}^{T-1} \EE{\para{\prod_{\tau=2}^t \gamma \tilde f(\ite{x}\tau)} r_t} } + \EE{\para{\prod_{\tau=2}^T \gamma \tilde f(\ite{x}\tau)}} {V_\pi({\ite{x}T})}\\
        &\geq {\sum_{t=1}^{T-1} \EE{\para{\prod_{\tau=2}^t \gamma \tilde f(\ite{x}\tau)} r_t} } + \EE{\para{\prod_{\tau=2}^T \gamma \tilde f(\ite{x}\tau)}} V_\pi({\sattarget}),
    \end{align*}
    where the inequality follows because the value function is monotonically non-decreasing.
    We can interpret $A$ as the cost incurred from showing content $i^*$ in the initial demand-building phase and $B$ as the discount penalty incurred from spending time building user demand.

        We first observe that $T$---the timestep at which user state reaches the level of $\sattarget$---corresponds to the stopping time of a random walk.
        In particular, we know that $T$ is a stopping time for the filtration $\curlybrackets{\mathcal{F}_t}$ generated by the realized user experiences $\bset{\ite{e}t}_{t \in \naturals}$.
	We also know that the difference sequence $$Y_t \asseq (\ite{\sat}t - \ite{\sat}0) - t \cdot \EE{\effect_{i^*}}$$ is a martingale with respect to $\mathcal{F}_t$ from $t = 1, \dots, T$.
	Moreover, since the variance of user experiences are bounded by assumption, we know that $$\EE{\abs{\ite{\sat}{t} - \ite{\sat}{t-1} - \EE{\effect_{i^*}}}} = \EE{\abs{\ite{e}{t} - \EE{\effect_{i^*}}}} < \infty.$$
	If we cap the stopping time $T$ by some constant $n \in \integers$, which we will denote by $T \wedge n \asseq \min\bset{T, n}$, then we can also observe that $T \wedge n$ is also a stopping time for $\mathcal{F}$.
	Since $n$ is finite and thus $\EE{T \wedge n} < \infty$, we appeal to the optional stopping theorem to observe that $$\ite{\sat}0 = \EE{\ite{\sat}{T \wedge n}} - \EE{(T \wedge n) \cdot \effect_{i^*}}.$$
	  We can upper bound how much we are expected to overshoot the target user state   $\sattarget$ by $$\EE{\ite{\sat}{T \wedge n}} \leq \sattarget,$$
        which directly implies that $$\EE{T \wedge n} \leq  \sattarget  / \EE{\effect_{i^*}}.$$
	The monotone convergence theorem then implies that the expected stopping time is $\EE{T} = \frac{\sattarget}{\EE{\effect_{i^*}}}$.

	We now lower bound the $A$ summand, which corresponds to the revenue realized prior to timestep $T$.
	If the content $i^*$ does not result in negative revenue for the app creator, i.e., $\EE{\reward_{i^*}} \geq 0$, we can bound $A \geq 0$.
	Otherwise, if the content does cause negative revenue, i.e. $\EE{\reward_{i^*}} < 0$, then we can lower bound $A \geq \EE{T} \EE{\reward_{i^*}}$.
	Thus, we can lower bound, $A \geq \frac{\sattarget }{\EE{\effect_{i^*}}} \min\bset{0, \EE{\reward_{i^*}}}$, meaning the constant $C_2$ in the lemma statement can therefore be understood as $C_2 = \min \bset{0, \EE{\reward_{i^*}}}$.

        To bound the term $B$, we note that the probability of user engagement is bounded away from zero by some constant.
        Formally, there exists some $C > 0$ such that for all $x \in \reals$, $f(x) \geq C$ and thus also a $C_1 > 0$ such that for all $x \in \reals$, $\tilde f(x) \geq C_1$.
 We can thus bound $\prod_{\tau=2}^T \gamma \tilde f(\ite{x}\tau) \geq (\gamma C_1)^T$.
	By Jensen's inequality, $${B} = \EE{(\gamma C_1)^{T}} \geq (\gamma C_1)^{\EE{T}} \geq (\gamma C_1)^{\frac{\sattarget }{\EE{\effect_{i^*}}}}.$$
    That $J(\pi) \geq 0$ concludes our proof.
\end{proof}

We can always improve policies that keep user state within a small range and attain a high payoff.
\begin{lemma}
	\label{lemma:movingup}
	Consider any closed range of user states $\satrange$ (where $\under{\satrange} = \min \satrange$ and $\ov{\satrange} = \max \satrange$) and any policy $\pi$ with $J(\pi) > 0$, where user state lies within $\satrange$ for a $\delta > 0$ fraction of the time.
    The policy $\pi$ is suboptimal if
	\begin{align*}
		J(\pi) \in \Omega \para{\min_{\satimp \geq \max \satrange} 
			(\satimp - \under{\satrange}) \para{C + \frac{\log(1/\gamma)}{\delta \log(\tilde f(\satimp) / \tilde f(\ov{\satrange}))}}}.
	\end{align*}
\end{lemma}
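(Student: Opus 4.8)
The plan is to beat $\pi$ with a policy $\pi'$ that carries out essentially the same revenue-extraction behavior as $\pi$, but first pays a bounded one-time cost to relocate that behavior into a regime of higher user demand. If $\tilde f$ is constant on $[\ov{\satrange},\infty)$ the minimum in the statement is $+\infty$ and there is nothing to prove; otherwise fix the minimizing $\satimp\ge\ov{\satrange}$, a content $i^{*}$ with $\mu:=\EE{E_{i^{*}}}>0$ (such content exists in any linear setting, e.g.\ $i=2C_E-K$), and write $p:=\tilde f(\ov{\satrange})$, $q:=\tilde f(\satimp)>p$, $\Delta:=\satimp-\under{\satrange}>0$, and let $\bar K$ be a uniform bound on $|\EE{R_i}|$. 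Define the ``lifted'' policy $\pi^{\Delta}(x):=\pi(x-\Delta)$ and let $\pi'$ be the policy that plays $i^{*}$ repeatedly until the user state reaches $\Delta$ and thereafter runs $\pi^{\Delta}$.

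By \cref{lemma:ads}, applied with post-boost policy $\pi^{\Delta}$ and target state $\Delta$, $J(\pi')\ge (C_1\gamma)^{\Delta/\mu}V_{\pi^{\Delta}}(\Delta)-C_2\Delta/\mu$, so it suffices to show $V_{\pi^{\Delta}}(\Delta)$ is a substantial multiple of $J(\pi)$. The trajectory of $\pi^{\Delta}$ from state $\Delta$ is exactly the trajectory of $\pi$ from state $0$ with every user state raised by $\Delta$ (actions and revenues are identical), so \cref{theorem:cputime} gives $V_{\pi^{\Delta}}(\Delta)=\EE{\sum_{t}\ite r t\prod_{\tau=2}^{t}\gamma\tilde f(\Delta+\ite x\tau)}$ while $J(\pi)=\EE{\sum_{t}\ite r t\prod_{\tau=2}^{t}\gamma\tilde f(\ite x\tau)}$. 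Since $\tilde f$ is non-decreasing and $\Delta\ge0$, $\tilde f(\Delta+\ite x\tau)\ge\tilde f(\ite x\tau)$ always; moreover whenever $\ite x\tau\in\satrange$ we have $\tilde f(\Delta+\ite x\tau)\ge\tilde f(\satimp)=q$ and $\tilde f(\ite x\tau)\le\tilde f(\ov{\satrange})=p$, so such a step scales the discount product by at least $q/p>1$. Because $\pi$ is in $\satrange$ for a $\delta$-fraction of the time, at least $\delta(t-1)-O(1)$ of the indices $\tau\le t$ lie in $\satrange$, so the $t$-th discount product of $\pi^{\Delta}$ exceeds that of $\pi$ by a factor at least $(q/p)^{\delta(t-1)-O(1)}$. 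Comparing term by term (split revenues into positive and negative parts: the negative part degrades by at most $\sum_t\bar K\gamma^{t-1}=O(1/(1-\gamma))$ since each altered product is at most $\gamma^{t-1}$, and the positive-revenue mass beyond any cutoff $t_0$ is at least $J(\pi)-O(t_0)$ since the first $t_0$ terms carry mass at most $\bar K t_0$), we obtain, for every $t_0$,
$$V_{\pi^{\Delta}}(\Delta)\ \ge\ (q/p)^{\,\delta t_0-O(1)}\bigl(J(\pi)-O(t_0)\bigr)\ -\ O\bigl(1/(1-\gamma)\bigr).$$

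Now choose $t_0=\Theta\bigl(\tfrac{1+\Delta\log(1/\gamma)}{\delta\log(q/p)}\bigr)$, so that $(q/p)^{\delta t_0-O(1)}=4(C_1\gamma)^{-\Delta/\mu}$ and the amplification exactly absorbs the $(C_1\gamma)^{\Delta/\mu}$ discount penalty of the boosting phase. Substituting into the \cref{lemma:ads} bound yields $J(\pi')\ge 4\bigl(J(\pi)-O(t_0)\bigr)-O(1/(1-\gamma))-C_2\Delta/\mu$, hence $J(\pi')-J(\pi)\ge 3J(\pi)-O\bigl(t_0+\Delta+\tfrac1{1-\gamma}\bigr)$, which is strictly positive once $J(\pi)=\Omega\bigl(t_0+\Delta+\tfrac1{1-\gamma}\bigr)$. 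Absorbing the lower-order additive terms (those linear in $\Delta$, the $\tfrac1{1-\gamma}$ term, and factors depending only on content attributes, $\delta$, $\gamma$, $q/p$) into the constant $C$, this is exactly the hypothesis $J(\pi)\in\Omega\bigl(\Delta(C+\tfrac{\log(1/\gamma)}{\delta\log(q/p)})\bigr)$. So $\pi'$ witnesses the suboptimality of $\pi$, and minimizing over $\satimp$ gives the stated condition.

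The conceptual crux is the construction: $\pi$ need not be improved by altering how it trades revenue against engagement, but by paying a bounded up-front cost to move that whole trade-off into a higher-demand (lower-effective-discount) regime, with \cref{lemma:ads} guaranteeing the relocation is cheap. The delicate quantitative point is the balancing in the last step: the boosting phase costs a multiplicative $(C_1\gamma)^{\Delta/\mu}$ that decays exponentially in $\Delta$, whereas operating at demand $q$ instead of $p$ benefits only at the per-step rate $(q/p)^{\delta}$; reconciling these forces a ``warm-up'' of length $\Theta\bigl(\Delta\log(1/\gamma)/(\delta\log(q/p))\bigr)$ before the gain materializes, which is precisely where the $\log(1/\gamma)$ in the statement originates and why $J(\pi)$ must be that large. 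Handling the possibly negative revenues (which make the naive term-by-term comparison point the wrong way) and the $o(t)$ slack in ``$\delta$-fraction of the time'' are the remaining technical nuisances, but each contributes only a bounded additive correction.
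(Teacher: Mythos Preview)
Your proof follows the same route as the paper's: boost user state to $\Delta=\satimp-\under{\satrange}$ by repeatedly playing some $i^*$ with $\EE{E_{i^*}}>0$, then run the shifted policy $\pi(\cdot-\Delta)$; invoke \cref{lemma:ads} for the boosting cost, use that each step with $\ite{x}{\tau}\in\satrange$ amplifies the discount product by at least $q/p=\tilde f(\satimp)/\tilde f(\ov{\satrange})$, and choose a cutoff $t_0$ (the paper calls it $k$) so that $(q/p)^{\delta t_0}$ exactly cancels the multiplicative $(C_1\gamma)^{\Delta/\mu}$ penalty from the boost.

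The one substantive difference is how you handle possibly negative revenues in the amplification step. The paper asserts a monotonicity-in-$C_k$ argument (each amplification factor $C_\tau=\tilde f(\ite{x}{\tau}+\Delta)/\tilde f(\ite{x}{\tau})\ge 1$ can only help), which tacitly requires the tail values $V_\pi(\ite{x}{k})\ge 0$. You instead split $r_t=r_t^+-r_t^-$ and bound the negative part's contribution crudely by $\bar K/(1-\gamma)$. Your treatment is more explicit, but it leaves an additive $O(1/(1-\gamma))$ that you then fold into the constant $C$; that absorption is a slight stretch, since $\log(1/\gamma)$ appears explicitly in the stated bound and so $\gamma$ is presumably not being treated as a constant. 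In the paper's downstream use of this lemma (\cref{theorem:alignment}) a term of that shape appears separately anyway, so the discrepancy is cosmetic rather than a genuine gap.
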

\begin{proof}
	We will improve upon policy $\pi$ by defining a new policy $\pi'$.
 Fix any $\satimp \geq \max \satrange$.
	By assumption, there exists an content $i^*$ with positive user effect: $\EE{\effect_{i^*}} > 0$.
	The policy will show this content $i^*$ until user state is raised to a target level of user state of $\sattarget = \satimp - \under{\satrange}$.
	We will then execute the original policy $\pi$ as if we had never raised user state to $\sattarget$.
        
        Formally, given a transcript $H$, let $T_H \asseq \min_{t \in \naturals} \bset{\ite{x}t + \ite{e}t \geq \sattarget}$ denote the timestep at which user state first passes the level of $\sattarget$.
        We then define the transcript editing function $g$ where $g(H) = \emptyset$ if $T = \emptyset$ and otherwise $g(H) = \bset{(1, 0, 0, \emptyset)} + \bset{(\ite st, \ite rt, \ite et, \ite it)}_{t \geq T}$. 
        We then define the policy $\pi'$ as $\pi'(H) = i^*$ if $H = \emptyset$ otherwise $\pi'(H) = \pi(g(H))$.
        For the remainder of this proof, we use $\hat{\sat}_{\text{target}} = \ite xT$ to denote the user state that policy $\pi'$ achieves before switching to simulating $\pi$; note that $\hat{\sat}_{\text{target}} \geq \sattarget$.
        By \cref{lemma:ads}, we can lower bound the payoff of $\pi'$ by
	\begin{align*}
		\underbrace{C_1 \frac{\satimp - \under{\satrange}}{\EE{\effect_{i^*}}} }_{A}
		+ \underbrace{\para{K \gamma}^{\frac{\satimp - \under{\satrange}}{\EE{\effect_{i^*}}}}}_{B} \underbrace{ \EEs{\bset{(\ite st, \ite rt, \ite et, \ite it)}_t \sim \pi}{\sum_{t=2}^\infty \parantheses{\prod_{\tau=2}^t \sdep(\ite{\sat}{\tau} + \sattarget) \gamma }  r_t}}_{C},
	\end{align*}
        where 
        $K > 0$ is a constant that lower bounds the range of $\tilde f$.
        We now turn to lower bounding $C$.
        We introduce a sequence of constants $\bset{\ite{C}{\tau}}_{\tau \in \naturals}$, where we guarantee that $\ite{C}{\tau} \geq 1$ for all $\tau \in \naturals$.
        We can observe that, for all $k \in \naturals$,
        \begin{align*}
         &   \EEs{\bset{(\ite st, \ite rt, \ite et, \ite it)}_t \sim \pi}{\sum_{t=1}^\infty  \para{\prod_{\tau=2}^t \ite{C}{\tau} \sdep(\ite{\sat}{\tau}) \gamma} r_t}
          \\&  = \EEs{\bset{(\ite st, \ite rt, \ite et, \ite it)}_t \sim \pi}{\sum_{t=1}^{k-1}  \para{\prod_{\tau=2}^t \ite{C}{\tau} \sdep(\ite{\sat}{\tau}) \gamma} r_t + \para{\prod_{\tau=2}^{k-1} \ite{C}{\tau} \sdep(\ite{\sat}{\tau}) \gamma} C_k J_{\ite{x}k}(\pi)}
        \end{align*}
        is non-decreasing in $C_k$. %
        Now, we let $\ite{C}{\tau} = \frac{\tilde f(\ite{x}\tau + \hat{\sat}_{\text{target}} )}{\tilde f(\ite{x}\tau)}$ and will resort to two lower bounds.
        Since $f$ is monotonically non-decreasing, we immediately know that $\ite{C}{\tau} \geq 1$ for all $\tau \in \naturals$.
        Using the shorthand $\eta = \tilde{f}(\satimp) / \tilde f(\ov{\satrange})$ and recalling that $\tilde f$ is monotonically non-decreasing, we will also use the following lower bound for every $x \in \satrange$,
        \begin{align*}
            \tilde \engagement(\sat + \ite{x}{T-1} + \ite{e}{T-1})
            & \geq \tilde \engagement(\ite{\sat}{\tau} + \sattarget) & \text{($\ite{x}{T-1} + \ite{e}{T-1} \geq \satimp$)} \\
            & \geq \tilde \engagement(\satimp) & \text{($x \geq \under{\satrange}$)} \\
            & \geq \eta \tilde \engagement(\sat). & \text{($\ov{\satrange} \geq x$)}
        \end{align*}
        Recall that, by assumption, for every $\xi > 0$, there is a timestep $T'$ where all $k \geq T'$ satisfy ${\frac 1 k \sum_{t=1}^k 1[\ite{\sat}t \in \satrange]} \geq \delta$.
        Putting everything together, we conclude that for all $k \geq T'$,
        \begin{align*}
            \prod_{\tau=1}^k \ite{C}\tau& = \para{\prod_{\tau=1}^k (\ite{C}\tau)^{1[\ite{C}\tau \in \satrange]}} \para{\prod_{\tau=1}^k (\ite{C}\tau)^{1[\ite{C}\tau \notin \satrange]}}\\
          &  \geq \para{\prod_{\tau=1}^k \eta^{1[\ite{C}\tau \in \satrange]}} \para{\prod_{\tau=1}^k 1^{1[\ite{C}{\tau} \notin \satrange]}}
            \geq \eta^{k \delta}.
        \end{align*}
        We thus have that for any $k \geq T'$, taking an expectation over the trajectory of policy $\pi$,
        \begin{align*}
  C
  &          =
            \EEs{\bset{(\ite st, \ite rt, \ite et, \ite it)}_t}{\sum_{t=1}^\infty  \para{\prod_{\tau=2}^t \ite{C}{\tau} \sdep(\ite{\sat}{\tau}) \gamma} r_t}
\\&            \geq
            \EEs{\bset{(\ite st, \ite rt, \ite et, \ite it)}_t}{\sum_{t=1}^{k-1}  \para{\prod_{\tau=2}^t \sdep(\ite{\sat}{\tau}) \gamma} r_t}
            + \eta^{\delta k} \EEs{\bset{(\ite st, \ite rt, \ite et, \ite it)}_t}{\sum_{t=k}^\infty  \para{\prod_{\tau=2}^t \sdep(\ite{\sat}{\tau}) \gamma} r_t}
\\&            \geq
\eta^{\delta k} \EEs{\bset{(\ite st, \ite rt, \ite et, \ite it)}_t}{\sum_{t=1}^\infty  \para{\prod_{\tau=2}^t \sdep(\ite{\sat}{\tau}) \gamma} r_t} - k \max_{i \in \arms} \EE{R_i}
\\&            \geq
\eta^{\delta k} J(\pi) - k \max_{i \in \arms} \EE{R_i}.
        \end{align*}
        Simplifying our initial equality, we have
        \begin{align*}
            J(\pi') - J(\pi) & \geq 	{C_1 \frac{\satimp - \under{\satrange}}{\EE{\effect_{i^*}}} }
		+ \para{{\para{C_2 \gamma}^{\frac{\satimp - \under{\satrange}}{\EE{\effect_{i^*}}}}} \eta^{\delta k} - 1} J(\pi) - {\para{K \gamma}^{\parantheses{1 + \frac{\satimp - \under{\satrange}}{\EE{\effect_{i^*}}}}}} k \max_{i \in \arms} \EE{R_i} \\ & \geq C_1 \frac{\satimp - \under{\satrange}}{\EE{\effect_{i^*}}}
		+ \para{{\para{K \gamma}^{\frac{\satimp - \under{\satrange}}{\EE{\effect_{i^*}}}}} \eta^{\delta k} - 1} J(\pi) - k \max_{i \in \arms} \EE{R_i}
        \end{align*}
        where the second inequality uses the fact that $B \leq 1$.
        Choosing $$k \assignequals \frac1{ \delta \log(\eta)} \log\para{2 \para{K \gamma}^{-\frac{\satimp - \under{\satrange}}{\EE{\effect_{i^*}}}}} + T' = \frac 1 \delta \frac{\satimp - \under{\satrange}}{\EE{\effect_{i^*}}} \log(1 / K \gamma) + \frac 1 \delta \log(2) + T',$$
        we have for some appropriate constant $C$
        \begin{align*}
            J(\pi') - J(\pi)& \geq 	{C_1 \frac{\satimp - \under{\satrange}}{\EE{\effect_{i^*}}}}
		+  J(\pi) \\
            &+ \para{ \frac 1 { \delta \log(1/\eta)}  \frac{\satimp - \under{\satrange}}{\EE{\effect_{i^*}}} \log(1 / K \gamma) + \frac 1 \delta \log(2) + T'}  \max_{i \in \arms} \EE{R_i} \\
            & = J(\pi) + C (\satimp - \under{\satrange}) \para{C + \frac{\log(1/K\gamma)}{\delta \log(\eta)}}.
        \end{align*}
\end{proof}

The following fact says that we can combine the results of \cref{lemma:minimum_alignment} and \cref{lemma:ads} to assert our claim.
\begin{fact}
Every policy that is less than $x$ satisfied a $\delta$ fraction of the time is either within $[x', x]$ satisfied a $\delta/2$ fraction of the time or less than $x'$ satisfied a $\delta/2$ fraction of the time.
\end{fact}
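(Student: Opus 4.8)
The plan is to reduce the statement to a one-line pointwise decomposition of the empirical occupation frequencies, followed by a pigeonhole. Fix a simple policy $\pi$ and a threshold $x' < x$, and suppose $\pi$ keeps the user state below $x$ for a $\delta$-fraction of the time, i.e.\ (reading ``less than $x$'' as ``$\le x$'', to match \cref{definition:demand_range}) there is a constant $T$ with $\frac{1}{T'}\sum_{t=1}^{T'} 1[\ite{\sat}{t} \le x] \ge \delta$ for every $T' \ge T$, where $\ite{\sat}{t}$ denotes the user state in the $t$-th interaction. For each $t$ the events $\{\ite{\sat}{t} \in [x',x]\}$ and $\{\ite{\sat}{t} < x'\}$ partition the event $\{\ite{\sat}{t} \le x\}$ (using $x' \le x$), so as an identity of indicator variables
\[
1[\ite{\sat}{t} \le x] = 1[\ite{\sat}{t} \in [x',x]] + 1[\ite{\sat}{t} < x'].
\]
Summing over $t = 1, \dots, T'$ and dividing by $T'$ gives, for every $T' \ge T$,
\[
\frac{1}{T'}\sum_{t=1}^{T'} 1[\ite{\sat}{t} \in [x',x]] + \frac{1}{T'}\sum_{t=1}^{T'} 1[\ite{\sat}{t} < x'] = \frac{1}{T'}\sum_{t=1}^{T'} 1[\ite{\sat}{t} \le x] \ge \delta.
\]

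Since two nonnegative reals summing to at least $\delta$ cannot both be strictly smaller than $\delta/2$, this says that for every $T' \ge T$ at least one of the two empirical frequencies on the left is at least $\delta/2$ --- the combinatorial heart of the claim. To extract the stated two-way dichotomy, I would then split on whether $\pi$ keeps the user state below $x'$ for a $\delta/2$-fraction of the time. If it does, we are in the second alternative. If it does not, then by \cref{definition:demand_range} (at level $x'$, fraction $\delta/2$) the frequency $\frac{1}{T'}\sum_{t \le T'} 1[\ite{\sat}{t} < x']$ is strictly below $\delta/2$ for arbitrarily large $T'$, and at each such $T'$ the displayed identity forces $\frac{1}{T'}\sum_{t \le T'} 1[\ite{\sat}{t} \in [x',x]] \ge \delta/2$, placing $\pi$ in the first alternative.

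The one genuinely delicate point --- and where I expect to spend the most care --- is the quantifier order: the hypothesis is the strong (``eventually always'') form of \cref{definition:demand_range}, while the per-$T'$ pigeonhole, on its own, only returns one of the two alternatives for infinitely many $T'$ rather than for all large $T'$. The case split above is tailored to this: it produces the $[x',x]$ alternative in the ``for infinitely many $T'$'' form, which is how it is consumed in \cref{lemma:movingup}, and leaves \cref{lemma:minimum_alignment} to handle the complementary case at the level $x'$. Making this bookkeeping line up exactly with the hypotheses of \cref{lemma:movingup} and \cref{lemma:minimum_alignment} (in particular choosing $x'$ low enough that $\sdep(x')$ is small) is the only work beyond the elementary decomposition; the indicator identity itself is what does all the lifting.
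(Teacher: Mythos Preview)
Your approach is essentially identical to the paper's: both proofs set up the pointwise indicator decomposition $1[\ite{\sat}{t}\le x]=1[\ite{\sat}{t}\in[x',x]]+1[\ite{\sat}{t}<x']$ (the paper writes this as $T_{1,k}=T_{2,k}+T_{3,k}$) and then pigeonhole. Your write-up is in fact more careful than the paper's on the one nontrivial point you flag---the quantifier order---since the paper's proof simply asserts ``for all $k\ge T'$, $T_{1,k}\ge\delta$ and $T_{3,k}\le\delta/2$ and thus $T_{2,k}\ge\delta/2$'' without justifying why the negation of ``eventually always $\ge\delta/2$'' yields ``eventually always $\le\delta/2$'' rather than merely ``infinitely often $<\delta/2$''. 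So there is nothing to change in your argument; if anything, your explicit case split and acknowledgment that the $[x',x]$ alternative is delivered only in the ``infinitely often'' form is an improvement over the paper's treatment.
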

\begin{proof}
Fix any two choices of user states $x$ and $x'$ where $x > x'$ and $\delta > 0$.
Given a policy and $k \in \naturals$, let $T_{1,k}$ denote the fraction of the first $k$ timesteps where user state is at most $x$, let $T_{2,k}$ denote the fraction of the first $k$ timesteps where user state is less than $x'$, and let $T_{3,k}$ denote the fraction of the first $k$ timesteps where user state is within $[x', x]$.
We have that for all $k \in \naturals$, $T_{1,k} = T_{2, k} + T_{3, k}$.
Now, consider the set $S$ of all policies where user state is less than $x$ a $\delta$ fraction of the time and the set $S'$ of policies where user state lies in $[x', x]$ a $\delta/2$ fraction of the time.
That is, where after some constant $T'$, for all $k \geq T'$, $T_{1, k} \geq \delta$ and $T_{3, k} \leq  \delta/2$ and thus $T_{2, k} \geq \delta / 2$.
\end{proof}

We thus have that, for any choice of $\delta$ and $x > x'$, if
\begin{align}
\label{eq:non-trivial}
J(\pi) \in \Omega \para{\min_{\satimp \geq x} 
    (C + \satimp - x') \para{\frac{2 \log(1/K\gamma)}{\delta \log(\tilde f(\satimp) / \tilde f(x))}} + \frac{\max_{i \in \arms} \EE{R_i}}{1 - \tilde f(x')^{\delta/2} \cdot \gamma} },
\end{align}
then all policies where users are less than $x$ satisfied a $\delta$ fraction of the time are suboptimal.
\end{proof}


\begin{thebibliography}{RVBVDL21}

\bibitem[Ber95]{Bertsekas}
Dimitri~P. Bertsekas.
\newblock {\em Dynamic Programming and Optimal Control}.
\newblock Athena Scientific, 1st edition, 1995.

\bibitem[BK21]{ban2021personalized}
Gah-Yi Ban and N.~Bora Keskin.
\newblock Personalized dynamic pricing with machine learning: High-dimensional features and heterogeneous elasticity.
\newblock {\em Management Science}, 67(9):5549--5568, 2021.

\bibitem[BPCL{\etalchar{+}}22]{ben-porat_modeling_2022}
Omer Ben-Porat, Lee Cohen, Liu Leqi, Zachary~C. Lipton, and Yishay Mansour.
\newblock Modeling {Attrition} in {Recommender} {Systems} with {Departing} {Bandits}.
\newblock {\em Proceedings of the AAAI Conference on Artificial Intelligence}, 36(6):6072--6079, June 2022.
\newblock Number: 6.

\bibitem[BS07]{baucells_satiation_2007}
Manel Baucells and Rakesh~K. Sarin.
\newblock Satiation in {Discounted} {Utility}.
\newblock {\em Operations Research}, 55(1):170--181, February 2007.
\newblock Publisher: INFORMS.

\bibitem[CSZJ{\etalchar{+}}20]{cao_fatigue-aware_2020}
Junyu Cao, Wei Sun, Zuo-Jun, Shen, and Markus Ettl.
\newblock Fatigue-aware {Bandits} for {Dependent} {Click} {Models}, August 2020.
\newblock arXiv:2008.09733 [cs, stat].

\bibitem[Eva18]{evans2018death}
Benedict Evans.
\newblock The death of the newsfeed.
\newblock {\em Benedict Evans}, 2, 2018.

\bibitem[Fri49]{friedman}
Milton Friedman.
\newblock The marshallian demand curve.
\newblock {\em Journal of Political Economy}, 57(6):463--495, 1949.

\bibitem[Han84]{hanemann1984discrete}
W.~Michael Hanemann.
\newblock Discrete/continuous models of consumer demand.
\newblock {\em Econometrica}, pages 541--561, 1984.

\bibitem[HS20]{horwitz2020facebook}
Jeff Horwitz and Deepa Seetharaman.
\newblock Facebook executives shut down efforts to make the site less divisive.
\newblock {\em The Wall Street Journal}, 26, 2020.

\bibitem[IJL24]{10.1145/3589334.3645353}
Nicole Immorlica, Meena Jagadeesan, and Brendan Lucier.
\newblock Clickbait vs. quality: How engagement-based optimization shapes the content landscape in online platforms.
\newblock In {\em Proceedings of the ACM Web Conference 2024}, WWW '24, pages 36--45, New York, NY, USA, 2024. Association for Computing Machinery.

\bibitem[KI18]{kleinberg_recharging_2018}
Robert Kleinberg and Nicole Immorlica.
\newblock Recharging {Bandits}.
\newblock In {\em 2018 {IEEE} 59th {Annual} {Symposium} on {Foundations} of {Computer} {Science} ({FOCS})}, pages 309--319, October 2018.
\newblock ISSN: 2575-8454.

\bibitem[KMR22]{kleinberg_challenge_2022}
Jon Kleinberg, Sendhil Mullainathan, and Manish Raghavan.
\newblock The {Challenge} of {Understanding} {What} {Users} {Want}: {Inconsistent} {Preferences} and {Engagement} {Optimization}, June 2022.
\newblock arXiv:2202.11776 [cs].

\bibitem[KR91]{krishnamurthi1991empirical}
Lakshman Krishnamurthi and Sethuraman~P. Raj.
\newblock An empirical analysis of the relationship between brand loyalty and consumer price elasticity.
\newblock {\em Marketing Science}, 10(2):172--183, 1991.

\bibitem[LCCBGB22]{laforgue_last_2022}
Pierre Laforgue, Giulia Clerici, Nicolò Cesa-Bianchi, and Ran Gilad-Bachrach.
\newblock A {Last} {Switch} {Dependent} {Analysis} of {Satiation} and {Seasonality} in {Bandits}, March 2022.
\newblock arXiv:2110.11819 [cs].

\bibitem[Lev20]{levy2020facebook}
Steven Levy.
\newblock {\em Facebook: The Inside Story}.
\newblock Penguin UK, 2020.

\bibitem[LKKLM21]{Liu21:Rebounding}
Liu Leqi, Fatma Kilinc~Karzan, Zachary Lipton, and Alan Montgomery.
\newblock Rebounding bandits for modeling satiation effects.
\newblock In {\em Conference on Neural Information Processing Systems (NeurIPS)}, 2021.

\bibitem[LYZ22]{liu2022implications}
Yi~Liu, Pinar Yildirim, and Z~John Zhang.
\newblock Implications of revenue models and technology for content moderation strategies.
\newblock {\em Marketing Science}, 41(4):831--847, 2022.

\bibitem[Mar09]{marshall2009principles}
Alfred Marshall.
\newblock {\em Principles of economics: unabridged eighth edition}.
\newblock Cosimo, Inc., 2009.

\bibitem[Mar13]{marshall2013principles}
Alfred Marshall.
\newblock {\em Principles of Economics}.
\newblock Springer, 2013.

\bibitem[Mun20]{munn2020angry}
Luke Munn.
\newblock Angry by design: Toxic communication and technical architectures.
\newblock {\em Humanities and Social Sciences Communications}, 7(1):1--11, 2020.

\bibitem[Neu15]{neu15}
Gergely Neu.
\newblock Explore no more: Improved high-probability regret bounds for non-stochastic bandits.
\newblock In Corinna Cortes, Neil~D. Lawrence, Daniel~D. Lee, Masashi Sugiyama, and Roman Garnett, editors, {\em \nips{28}}, pages 3168--3176, 2015.

\bibitem[NJLS09]{nemirovski2009robust}
Arkadi Nemirovski, Anatoli Juditsky, Guanghui Lan, and Alexander Shapiro.
\newblock Robust stochastic approximation approach to stochastic programming.
\newblock {\em SIAM Journal on optimization}, 19(4):1574--1609, 2009.

\bibitem[O'B10]{OBRIEN2010344}
Heather~Lynn O'Brien.
\newblock The influence of hedonic and utilitarian motivations on user engagement: The case of online shopping experiences.
\newblock {\em Interacting with Computers}, 22(5):344--352, 2010.
\newblock Modelling user experience - An agenda for research and practice.

\bibitem[ORKD22]{OBRIEN2022107109}
Heather~L. O'Brien, Ido Roll, Andrea Kampen, and Nilou Davoudi.
\newblock Rethinking (dis)engagement in human-computer interaction.
\newblock {\em Computers in Human Behavior}, 128:107109, 2022.

\bibitem[OT08]{o2008user}
Heather~L O'Brien and Elaine~G Toms.
\newblock What is user engagement? a conceptual framework for defining user engagement with technology.
\newblock {\em Journal of the American society for Information Science and Technology}, 59(6):938--955, 2008.

\bibitem[PGBJ21]{pacchiano_stochastic_2021}
Aldo Pacchiano, Mohammad Ghavamzadeh, Peter Bartlett, and Heinrich Jiang.
\newblock Stochastic {Bandits} with {Linear} {Constraints}.
\newblock In {\em Proceedings of {The} 24th {International} {Conference} on {Artificial} {Intelligence} and {Statistics}}, pages 2827--2835. PMLR, March 2021.
\newblock ISSN: 2640-3498.

\bibitem[Put94]{puterman}
Martin~L. Puterman.
\newblock {\em Markov Decision Processes: Discrete Stochastic Dynamic Programming}.
\newblock John Wiley \& Sons, Inc., USA, 1st edition, 1994.

\bibitem[Raf23]{rafieian2023optimizing}
Omid Rafieian.
\newblock Optimizing user engagement through adaptive ad sequencing.
\newblock {\em Marketing Science}, 42(5):910--933, 2023.

\bibitem[RVBVDL21]{rathje2021out}
Steve Rathje, Jay~J. Van~Bavel, and Sander Van Der~Linden.
\newblock Out-group animosity drives engagement on social media.
\newblock {\em Proceedings of the National Academy of Sciences}, 118(26):e2024292118, 2021.

\bibitem[Top98]{topkis1998supermodularity}
Donald~M Topkis.
\newblock {\em Supermodularity and complementarity}.
\newblock Princeton university press, 1998.

\bibitem[VGM21]{viljoen2021design}
Salom{\'e} Viljoen, Jake Goldenfein, and Lee McGuigan.
\newblock Design choices: Mechanism design and platform capitalism.
\newblock {\em Big Data \& Society}, 8(2):20539517211034312, 2021.

\bibitem[YLY22]{yang_exploration_2022}
Zixian Yang, Xin Liu, and Lei Ying.
\newblock Exploration, {Exploitation}, and {Engagement} in {Multi}-{Armed} {Bandits} with {Abandonment}, May 2022.
\newblock arXiv:2205.13566 [cs].

\bibitem[ZCC22]{zhang_efficient_2022}
Hanrui Zhang, Yu~Cheng, and Vincent Conitzer.
\newblock Efficient algorithms for planning with participation constraints.
\newblock In David~M. Pennock, Ilya Segal, and Sven Seuken, editors, {\em {EC} '22: The 23rd {ACM} Conference on Economics and Computation, Boulder, CO, USA, July 11 - 15, 2022}, pages 1121--1140. {ACM}, 2022.

\bibitem[ZLXF24]{zhou2024higher}
Wei Zhou, Mingfeng Lin, Mo~Xiao, and Lu~Fang.
\newblock Higher precision is not always better: Search algorithm and consumer engagement.
\newblock {\em Management Science}, 2024.

\end{thebibliography}
\end{document}